\def\hb{\hbox to 10.7 cm{}}
\tikzset{>=stealth',args/.style={circle,draw=black},minimum size=10mm}
\newtheorem{example}{Example}
\newtheorem{definition}{Definition}
\newtheorem{lemma}{Lemma}
\newtheorem{theorem}{Theorem}
\newtheorem{proposition}{Proposition}
\newtheorem{corollary}{Corollary}
\newtheorem{remark}{Remark}
\newcommand{\exactcompliant}{exact\xspace}
\newcommand {\tup}[1]      {{\langle #1 \rangle}}
\newcommand{\Xb}{{\bf X}}
\newcommand{\Yb}{{\bf Y}}
\definecolor{dgreen}{rgb}{0.,0.6,0.}
\newcommand{\jesse}[1]{#1}
 \newcommand{\jss}[1]{\textcolor{blue!20!black}{#1}}
  \newcommand{\jessen}[1]{{\color{blue}#1}}
\newcommand{\HR}{\mathit{HD}}
\newcommand{\IC}{\mathit{IC}}
\newcommand{\HRc}{\mathcal{HD}}
\DeclareMathOperator{\lfp}{lfp}
\newcommand{\upclosure}[1]{{#1\!\!\uparrow}\xspace}
\newcommand{\downclosure}[1]{{#1\!\!\downarrow}\xspace}
\newcommand{\model}{mod}
\DeclareRobustCommand\sampleline[1]{%
  \tikz\draw[#1, thick,line width=1] (0,0) (0,\the\dimexpr\fontdimen22\textfont2\relax)
  -- (2em,\the\dimexpr\fontdimen22\textfont2\relax);%
    
}
\title{Non-Deterministic Approximation Fixpoint Theory \\ and Its Application in Disjunctive Logic Programming}
\author[1]{Jesse Heyninck}
\author[2]{Ofer Arieli}
\author[3]{Bart Bogaerts}
\affil[1]{Open Universiteit, the Netherlands}
\affil[2]{School of Computer Science, Tel-Aviv Academic College, Israel}
\affil[3]{Vrije Universiteit Brussel, Belgium}
\DeclareMathAlphabet\mathbfcal{OMS}{cmsy}{b}{n}
\begin{document}
\maketitle

\begin{abstract}
Approximation fixpoint theory (AFT) is an abstract and general algebraic framework for studying the semantics of nonmonotonic logics. 
It provides a unifying study of the semantics of different formalisms for nonmonotonic reasoning, such as logic programming, default logic 
and autoepistemic logic. In this paper, we extend AFT to dealing with {\em non-deterministic constructs\/} that allow to handle
indefinite information, represented e.g.\ by disjunctive formulas. This is done by generalizing the main constructions and corresponding 
results of AFT to non-deterministic operators, whose ranges are sets of elements rather than single elements. The applicability 
and usefulness of this generalization is  illustrated in the context of disjunctive logic programming.
\end{abstract}

\section{Introduction}\label{sec:intro}
Semantics of various formalisms for knowledge representation can often be described by fixpoints of corresponding operators. For example, in many logics theories 
of a set of formulas can be seen as fixpoints of the underlying consequence operator~\cite{tarski1936concept}. Likewise, in logic programming, default logic or formal 
argumentation, all the major semantics can be formulated as different types of fixpoints of the same operator (see~\cite{denecker2000approximations}). Such operators are usually non-monotonic, 
and so one cannot always be sure whether their fixpoints exist, and how they can be constructed. 

In order to deal with this `illusive nature' of the fixpoints, Denecker, Marek and Truszczy{\'n}ski~\cite{denecker2000approximations} introduced a method for \emph{approximating\/} 
each value $z$ of the underlying operator by a pair of elements $(x,y)$. These elements intuitively represent lower and upper bounds on $z$, and so a corresponding 
\emph{approximation operator\/} for the original, non-monotonic operator, is constructed. If the approximating operator that is obtained is precision-monotonic, intuitively meaning 
that more precise inputs of the operator give rise to more precise outputs, then by Tarski and Knaster's Fixpoint Theorem the approximating operator has fixpoints that can be constructively
computed, and which in turn approximate the fixpoints of the approximated operator, if such fixpoints exist. 

The usefulness of the algebraic theory that underlies the computation process described above was demonstrated on several knowledge representation formalisms, such as propositional 
logic programming~\cite{denecker2012approximation}, default logic~\cite{denecker2003uniform}, autoepistemic logic~\cite{denecker2003uniform}, abstract argumentation and abstract 
dialectical frameworks~\cite{strass2013approximating}, hybrid MKNF~\cite{liu2022alternating}, the graph description language SCHACL~\cite{bogaerts2021fixpoint}, and active 
integrity constraints~\cite{ai/BogaertsC18}, each one of which was shown to be an instantiation of this abstract theory of approximation. More precisely, it was shown that various 
semantics of the formalisms mentioned above correspond to the various fixpoints defined in~\cite{denecker2000approximations}. This means that approximation fixpoint theory 
(AFT, for short) captures the uniform principles underlying all these non-monotonic formalisms in a purely algebraic way. 

Besides its unifying capabilities, AFT also allows for a straightforward definition of semantics for new formalisms. Indeed, one merely has to define an approximation of the
operator of interest, and AFT then automatically gives rise to a family of semantics with several desirable properties. This potential of AFT for a straightforward derivation 
of semantics was demonstrated in e.g.\ logic programming, where it was used to define semantics for extensions of logic programs~\cite{antic2013hex,charalambidis2018approximation,pelov2007well}, in argumentation theory, where it was used to define semantics for weighted abstract dialectical 
frameworks (ADFs,~\cite{bogaerts2019weighted}), in autoepistemic logic, where it was used for defining distributed variants that are suitable for studying access control 
policies~\cite{vanhertum2016distributedautoepistemic}, and in second-order logic extended with non-monotone inductive definitions~\cite{dasseville2016compositional}.

Another benefit of AFT is that, due to its generality, it has proven useful to develop central concepts, such as strong equivalence~\cite{truszczynski2006strong},
groundedness~\cite{bogaerts2015grounded}, safe inductions~\cite{bogaerts2018safeInductions}, and stratification~\cite{vennekens2006splitting} for approximation operators in a purely algebraic way, which then allow to derive
results on these concepts for all the specific formalisms representable in AFT as straightforward corollaries.

So far, AFT has mainly been applied to \emph{deterministic\/} operators, i.e., operators which map single inputs to single outputs. This means that, while AFT is able to characterize 
semantics for normal logic programs (i.e., programs consisting of rules with single atomic formulas as their head), \emph{disjunctive\/} logic programs~\cite{minker2002disjunctive} 
(i.e., programs consisting of rules with disjunctions of atoms in their head) cannot be represented by it. The same holds for the representation of e.g.\ default logic versus disjunctive default 
logic~\cite{Gelfond_et-al_PKRR_1991}, abstract argumentation versus set-based abstract argumentation~\cite{nielsen2006generalization} and the generalization of abstract dialectical 
frameworks to~\emph{conditional\/} abstract dialectical frameworks~\cite{DBLP:conf/aaai/HeyninckTKRS22}.

Extending AFT to handle disjunctive information is therefore a desirable goal, as the latter has a central role in systems for knowledge representation and reasoning, and since
disjunctive reasoning capabilities provide an additional way of expressing uncertainty and indeterminism to many formalisms for non-monotonic reasoning. However,  the introduction of disjunctive reasoning often increases the computational complexity of formalisms and thus extends their modeling capabilities~\cite{eiter1993complexity}.
Perhaps due to this additional expressiveness, the integration of non-deterministic reasoning with non-monotonic reasoning (NMR) has often proven non-trivial, as witnessed e.g.\ 
by the large body of literature on disjunctive logic programming~\cite{lobo1992foundations,minker2002disjunctive}. The implementation of non-deterministic reasoning in NMR 
yielded the formulation of some (open) problems that are related to the combination of non-monotonic and disjunctive reasoning~\cite{beirlaen2017reasoning,beirlaen2018critical,bonevac2018defaulting}, or was restricted to limited  semantics available for the core formalism, as is the case for 
e.g.\ default logic~\cite{Gelfond_et-al_PKRR_1991}.

The goal of this work is to provide an adequate framework for modeling disjunctive reasoning in NMR. We do so by extending AFT to  handle {\em non-deterministic operators\/}. 
This idea was first introduced by Pelov and Truszczy{\'n}ski in~\cite{pelov2004semantics}, where some first results on two-valued semantics for disjunctive logic programs were 
provided. In this paper, we further extend AFT  for non-deterministic operators, which, among others, allows a generalization of the results of~\cite{pelov2004semantics} to the 
three-valued case. In particular, we define several interesting classes of approximating fixpoints and show their existence, constructability and consistency where it is possible. 
An application of this theory is demonstrated in the context of disjunctive logic programming. Furthermore, we show that  our theory is a conservative  generalization of the work 
in~\cite{denecker2000approximations} of AFT for deterministic operators,  in the sense that all the concepts introduced in this paper coincide with the deterministic counterparts when the operator at hand happens to be deterministic. 

The outcome of this work is therefore a comprehensive study of semantics for non-monotonic formalisms incorporating non-determinism.
Its application is demonstrated in this paper in the context of disjunctive logic programming. Specifically, the paper contains the following contributions:

\begin{enumerate}
     \item We define variants of both the \emph{Kripke-Kleene} and the \emph{stable/ well-founded} semantics, the \emph{interpretation} or \emph{fixpoint semantics\/} and \emph{state semantics\/}. 
              Interpretation semantics consist of single pairs of elements, and thus approximate of a single element. State semantics, on the other hand, consist of pairs of sets of elements, intuitively 
              viewed as a convex set, which approximates a set of elements.
     \item We show that the Kripke-Kleene state and well-founded state  (obtained as the least fixed point of the stable state operator) exist and are unique (Theorem \ref{theorem:ndso:fixpoint} and 
              Theorem~\ref{proposition:properties:of:well-founded:state}). 
     \item We show that the Kripke-Kleene state approximates any fixpoint of an approximation operator (Theorem~\ref{proposition:properties:of:well-founded:state}), 
              whereas the well-founded state approximates any stable fixpoint of an approximation operator (Theorem~\ref{proposition:properties:of:well-founded:state}). 
      In more detail, any fixpoint of an approximation operator respectively stable fixpoint  of an approximation operator is an element of the convex set represented by the Kripke-Kleene state respectively the well-founded state.
           
     \item We show that when restricting attention to deterministic operators, the theory reduces to deterministic AFT~\cite{denecker2000approximations} (Remark~\ref{remark:deterministic} and 
              Propositions~\ref{prop:deterministic:aft},~\ref{prop:KK-singletons} and~\ref{prop:stble:coincide:deterministic}).
     \item We show that, just like in deterministic AFT, stable fixpoints are fixpoints that are minimal with respect to the truth order (Proposition \ref{prop:stable:is:minimal:fp}). 
     \item We demonstrate the usefulness of our abstract framework by showing how all the major semantics for disjunctive logic programming can be characterized as fixpoints 
              of an approximation operator for disjunctive logic programming. In more detail, the weakly supported models~\cite{brass1995characterizations} are characterized as 
              fixpoints of the operator ${\cal IC}_{\cal P}$ (Theorem~\ref{theo:correspondence:supported}), the stable models can be characterized as the stable fixpoints of 
              ${\cal IC}_{\cal P}$ (Theorem~\ref{prop:stable:fixpoints:represent:stable:models}) and the well-founded semantics by Alc{\^a}ntara, Dam{\'a}sio and 
              Pereira~\cite{alcantara2005well} is strongly related to the well-founded state of ${\cal IC}_{\cal P}$ (Theorem~\ref{prop:well:founded:state:represents:alcantara:almost}).
\end{enumerate}

\paragraph*{Relation with previous work on non-deterministic approximation fixpoint theory}
This work extends and improves the work by Heyninck and Arieli \cite{DBLP:conf/kr/HeyninckA21}. %
The theory has been simplified on several accounts, among others since: (1)~we no longer require minimality of the elements of the range of a non-deterministic (approximation) 
operator, which leads to a significant decrease in the number of lattice-constructions needed, (2)~the state operator is now more general and can 
be defined on the basis of a non-deterministic approximation operator. Furthermore, intuitive explanations and illustrative examples are added throughout the paper. 
As a by-product, the simplified framework allows us to identify and correct a faulty statement on the $\leq_i$-monotonicity 
of the approximation operator for disjunctive logic programs, made by Heyninck and Arieli \cite{DBLP:conf/kr/HeyninckA21}  (see Remark~\ref{remark:minimality:in:the:op}). 

Both this paper and the work of Heyninck and Arieli~\cite{DBLP:conf/kr/HeyninckA21} were partially inspired by the work of Pelov and Truszczy{\'n}ski~\cite{pelov2004semantics} where, 
to the best of our knowledge, the idea of a non-deterministic operator was first introduced in approximation fixpoint theory. Pelov and Truszczy{\'n}ski~\cite{pelov2004semantics} studied only two-valued semantics, whereas 
we study the full range of semantics for AFT. Furthermore, Pelov and Truszczy{\'n}ski~\cite{pelov2004semantics} required minimality of the codomain of non-deterministic operators, which we do not require here.

\paragraph*{Outline of this paper}
This rest of this paper is organized as follows: In Section~\ref{sec:back:prelim} we recall the necessary background on disjunctive logic programming (Section~\ref{sec:LP}) 
and approximation fixpoint theory (Section~\ref{sec:AFT}). In Section~\ref{sec:nd:aft} we introduce non-deterministic operators and their approximation, and show some preliminary results 
on approximations of non-deterministic operators. In Section~\ref{sec:theory:of:ndao} we %
study these non-deterministic approximation operators, showing their consistency and introducing and studying their fixpoint, Kripke-Kleene interpretation and the 
Kripke-Kleene state semantics. In Section~\ref{sec:stable:semantics} we introduce and study the stable interpretation and state semantics, as well as the well-founded 
state semantics. Related work is discussed in Section~\ref{sec:related:work}, followed by a conclusion in Section~\ref{sec:conc}.

\section{Background and Preliminaries}
\label{sec:back:prelim}

In this section, we recall the necessary basics of approximation fixpoint theory (AFT) for deterministic operators. We start with a brief survey on disjunctive logic 
programming (DLP, Section~\ref{sec:LP}), which will serve to illustrate concepts and results of the general theory of non-deterministic AFT (Section~\ref{sec:AFT}).

\begin{remark}
Before proceeding, a note on notation: As we often have to move from the level of single elements to sets of elements, the paper is, by its nature, notationally heavy. 
We tried to keep the notational burden as light as possible by staying consistent in our notation of different types of elements. For the readers  convenience, we provide 
already at this stage a summary of the notations of different types of sets (Table~\ref{tab:set-notations}), the preorders (Table~\ref{tab:orders}) and the operators (Table~\ref{tab:operators}) that are used in this paper. 

\def\arraystretch{1,3}\tabcolsep=10pt
\begin{table}[htb]
\begin{tabular}{lll}
Elements & Notations & Example \\ \hline \hline
Elements of ${\cal L}$ & $x,y,\ldots$ & $x,y$\\
Sets of elements of ${\cal L}$ & $X,Y,\ldots$ & $\{x_1,y_1,x_2,y_2\}$ \\
Pairs of sets of elements of ${\cal L}$ & ${\bf X},{\bf Y},\ldots$ & $\{x_1,x_2,\ldots\}\times \{y_1,y_2,\ldots\}$\\
Sets of sets of elements of ${\cal L}$ &  ${\cal X},{\cal Y},\ldots$ & $\{\{x_1,x_2\},\{x_1\}\}$
\end{tabular}
\caption{List of the notations of different types of sets used in this paper} 
\label{tab:set-notations}
\end{table}

\begin{table}[htb]
\begin{tabular}{lll}
Preorder & Type & Definition \\ \hline \hline
\multicolumn{3}{c}{Element Orders} \\ \hline
$\leq$ & ${\cal L}$ & primitive \\
$\leq_i, \leq_t$ & ${\cal L}\times {\cal L}$ & bilattice orders (Definition~\ref{def:bilattice}) \\
$\leq_i$ & ${\cal L}^2\times {\cal L}^2$ & $(x_1,y_1) \leq_i (x_2,y_2)$ iff $x_1 \leq x_2$ and $y_1 \geq y_2$ \\
$\leq_t$ & ${\cal L}^2\times {\cal L}^2$ & $(x_1,y_1) \leq_t (x_2,y_2)$ iff $x_1 \leq x_2$ and $y_1 \leq y_2$ \\ \hline
\multicolumn{3}{c}{Set-based Orders} \\ \hline
$\preceq^S_L$ & $\wp({\cal L})\times \wp({\cal L})$ & $X \preceq^S_L Y$ iff for every $y\in Y$ there is an 
$x\in X$ s.t.\ $x\leq y$\\
$\preceq^H_L$ & $\wp({\cal L})\times \wp({\cal L})$ &  $X \preceq^H_L Y$ iff for every $x\in X$ there is an 
$y\in Y$ s.t.\ $x\leq y$ \\
$\preceq^A_i$ & $\wp({\cal L})^2\times\wp({\cal L})^2$ & $(X_1,Y_1)\preceq_i^A (X_2,Y_2)$ iff $X_1\preceq^S_L X_2$ and $Y_2\preceq^H_L Y_1$\\
\end{tabular}
\caption{List of the preorders used in this paper.}
\label{tab:orders}
\end{table}

\begin{table}[htb]
\begin{tabular}{lclll}
Operator & Notation& Type & Definition \\ \hline \hline
Non-deterministic operator & $O$ & ${\cal L}\mapsto \wp({\cal L})$ & Definition \ref{def:non-deterministic-operator} \\
Non-deterministic approximation operator & ${\cal O}$ & ${\cal L}^2\mapsto \wp({\cal L})\times\wp({\cal L})$ & Definition \ref{def:ndao}\\
Non-deterministic state approx.\ operator  & ${\cal O}'$& $\wp({\cal L}^2)\mapsto \wp({\cal L})\times\wp({\cal L})$ & Definition \ref{def:ndsao}\\ \hline
Stable operator & $S({\cal O})$ & ${\cal L}^2\mapsto \wp({\cal L})\times\wp({\cal L})$ & Definition \ref{def:stable:op}\\

\end{tabular}
\caption{List of the operators used in this paper.}
\label{tab:operators}
\end{table}

\end{remark}

\subsection{Disjunctive Logic Programming}
\label{sec:LP}

In what follows we consider a propositional\footnote{For simplicity we restrict ourselves to the propositional case.} language ${\mathfrak L}$, 
whose atomic formulas are denote by $p,q,r$ (possibly indexed), and that contains the the propositional constants ${\sf T}$ (representing truth), ${\sf F}$ (falsity), 
${\sf U}$ (unknown), and ${\sf C}$ (contradictory information). The connectives in  ${\mathfrak L}$ include negation $\neg$, conjunction $\wedge$, disjunction $\vee$, 
and implication $\leftarrow$. Formulas are denoted by $\phi$,$\psi$ (again, possibly indexed). Logic programs in ${\mathfrak L}$ may be divided to different kinds as follows:
\begin{itemize}
     \item A (propositional) {\em disjunctive logic program\/} ${\cal P}$ in ${\mathfrak L}$ (a dlp, for short) is a finite set of rules of the form 
             $\bigvee_{i=1}^n p_i~\leftarrow~\psi$,  where $\bigvee_{i=1}^n p_i$ (the rule's head) is a non-empty disjunction of atoms, and $\psi$ (the rule's body) 
             is a (propositional) formula.
     \item A rule is called {\em normal\/}, if its body is a conjunction of literals (i.e., atomic formulas or negated atoms), and its head is atomic. A program 
             is {\em normal\/} if it consists only of normal rules; It is {\em positive\/} if there are no negations in the rules' bodies. 
     \item We call a rule \emph{disjunctively normal} if its body is a conjunction of literals (and its head is a non-empty disjunction of atoms). 
              A program is called {\em disjunctively normal\/}, if it consists of disjunctively normal rules.
\end{itemize}
The set of atoms occurring in a logic program $\mathcal{P}$ is denoted ${\cal A}_{\cal P}$. In what follows, we will often leave the reference to the language $\mathfrak{L}$ 
of ${\cal P}$ implicit.

\medskip
The primary algebraic structure for giving semantics to logic programs in our setting is the four-valued structure ${\cal FOUR}$, shown in Figure~\ref{fig:four}. 
This structure was introduced by Belnap \cite{Be77a,Be77b} and later considered by Fitting~\cite{Fi91} and others in the context of logic programming. It is the simplest instance of a {\em bilattice\/} (Definition~\ref{def:bilattice}).\footnote{We refer to~\cite{Fi06,Fi20} for further details on bilattices and their applications in logic programming.}

\begin{figure}[hbt]
\begin{center}
\resizebox{0.4\textwidth}{!}{%
\begin{tikzpicture}[node distance=1cm, auto, >=latex, scale=0.5]
		\node (a) {};
		\node (b) [above of=a, yshift=3cm] {$\leq_i$};
		\node (c) [right of=a, xshift=3cm] {$\leq_t$};
		\draw[->] (a.center) -- (b);
		\draw[->] (a.center) -- (c);
		\tikzstyle{dot}= [circle, fill, minimum size=4pt,inner sep=0pt, outer sep=0pt]
		\node (third) [dot,above right of=a, node distance=1.2cm, xshift=1cm, label=below:{${\sf U}$}] {};
		\node (second) [dot,above left of=third, label=left:{{${\sf F}$}}] {};
		\node (fourth) [dot,above right of=third, label=right:{${\sf T}$}] {};
		\node (first) [dot, above right of= second,label=above:{{${\sf C}$}}] {};	

		\path[color=black] (second) edge (first);
		\path[color=black] (third) edge (second);
		\path[color=black] (fourth) edge (third);
		\path[color=black] (first) edge (fourth);
	\end{tikzpicture}}
	\end{center}
\caption{A four-valued bilattice}	
\label{fig:four}	
\end{figure}
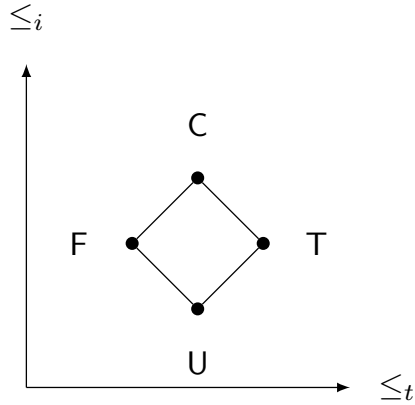

Each element of ${\cal FOUR}$ is associated with the propositional constant of ${\cal L}$ with the same notation. These elements are arranged in two lattice orders, 
$\leq_t$ and $\leq_i$, intuitively representing differences in the amount of {\em truth\/} and {\em  information\/} (respectively) that each element exhibits. 
According to this interpretation, ${\sf T}$ (respectively, ${\sf F}$) exhibits maximal (respectively, minimal) truth, while ${\sf C}$ (respectively, ${\sf U}$) represents 
`too much' (respectively, lack of) information. In what follows, we denote by $-$ the $\leq_t$-involution on ${\cal FOUR}$ (that is, $-{\sf F}={\sf T}$, $-{\sf T}={\sf F}$, 
$-{\sf U}={\sf U}$ and $-{\sf C}={\sf C}$). 

\medskip
A {\em four-valued interpretation} of a program ${\cal P}$ is a pair $(x,y)$, where $x \subseteq {\cal A}_{\cal P}$ is the set of the atoms that are assigned a value in 
$\{{\sf T},{\sf C}\}$ and $y \subseteq {\cal A}_{\cal P}$ is the set of atoms assigned a value in $\{{\sf T},{\sf U}\}$.\footnote{Somewhat skipping ahead, the intuition 
here is that $x$ (respectively, $y$) is a lower (respectively, an upper) approximation of the true atoms.}

Interpretations are compared by two order relations, corresponding to the two partial orders of ${\cal FOUR}$:  
\begin{enumerate}
     \item the \emph{information order\/} $\leq_i$, where $(x,y)\leq_i (w,z)$ iff $x\subseteq w$ and $z\subseteq y$, and 
     \item the \emph{truth order\/} $\leq_t$, where $(x,y)\leq_t (w,z)$ iff $x\subseteq w$ and $y\subseteq z$. 
\end{enumerate}
The information order represents differences in the ``precisions'' of the interpretations. Thus, the components of higher values according to this order represent 
tighter evaluations. The truth order represents increased `positive' evaluations. Truth assignments to complex formulas are then recursively defined as follows:
\begin{itemize}
\item $(x,y)(\jss{p})=
\begin{cases}
      {\sf T} & \text{ if } \jss{p} \in x \text{ and } \jss{p} \in y,  \\
      {\sf U} & \text{ if } \jss{p} \not\in x \text{ and }\jss{p} \in y,  \\
      {\sf F} & \text{ if } \jss{p} \not\in x \text{ and } \jss{p} \not\in y,  \\
      {\sf C} & \text{ if } \jss{p} \in x \text{ and } \jss{p} \not\in y. 
\end{cases}$ \smallskip
\item $(x,y)(\lnot \phi)=- (x,y)(\phi)$,  
\item $(x,y)(\psi \land \phi)=lub_{\leq_t}\{(x,y)(\phi),(x,y)(\psi)\}$, 
\item $(x,y)(\psi \lor \phi)= glb_{\leq_t}\{(x,y)(\phi),(x,y)(\psi)\}$. 
\end{itemize}

A four-valued interpretation of the form $(x,x)$ may be associated with a {\em two-valued\/} (or {\em total\/}) interpretation $x$, 
in which for an atom $p$, $x(p) = {\sf T}$ if $p \in x$ and $x(p) = {\sf F}$ otherwise. We say that $(x,y)$ is a {\em three-valued\/} 
(or {\em consistent\/}) interpretation, if $x \subseteq y$. Note that in consistent interpretations there are no ${\sf C}$-assignments.

\medskip
We now consider semantics for dlp's. First, given a two-valued interpretation, an extension to dlp's of the immediate consequence operator for normal 
programs \cite{EmdenK76} is defined as follows:

\begin{definition}%
\label{def:operator:disj:lp}
Given a dlp ${\cal P}$ and a two-valued interpretation $x$, we define:
\begin{itemize}
\item $\HR_{\cal P}(x)=\{\Delta\mid \bigvee\!\Delta\leftarrow \psi \in{\cal P} \text{ and } (x,x)(\psi) =  {\sf T}\}$. 
\item $\IC_{\cal P}(x)=\{y\subseteq \:\bigcup\!\HR_{\cal P}(x)  \mid \forall \Delta \in \HR_{\cal P}(x), \ y \cap \Delta \neq \emptyset \}$. 
\end{itemize}
 \end{definition}

Thus, $\IC_{\cal P}(x)$ consists of sets of atoms, each sets contains at least one representative from every disjuncts of a rule in ${\cal P}$ whose body is $x$-satisfied 
(i.e, a representative from each set $\Delta \in\HR_{\cal P}(x)$). In other words, $\IC_{\cal P}(x)$ consists of the two-valued interpretations that validate all disjunctions 
which are derivable from ${\cal P}$ given $x$. Denoting by $\wp({\cal S})$ the powerset of ${\cal S}$, $\IC_{\cal P}$ is an operator on the lattice 
$\tup{\wp({\cal A}_{\cal P}),\subseteq}$.\footnote{The operator $\IC_{\cal P}$ is a generalization of the immediate consequence operator from \cite[Definition 3.3]{fernandez1995bottom}, 
where the minimal sets of atoms in  $IC_{\cal P}(x)$ are considered. We will see below that this requirement of minimality is neither necessary nor desirable in the consequence operator.}

\begin{example}
\label{examp:IC-deterministic-case}
Consider the dlp ${\cal P}=\{p\lor q\leftarrow \}$. For any two-valued interpretation $x$, 
$\HR_{\cal P}(x)=\{\{p,q\}\}$, since $p\lor q\leftarrow \in {\cal P}$ and the body of this rule is an empty  conjunction and therefore true under any interpretation.
Thus, $\IC_{\cal P}(x)=\{\{p\},\{q\},\{p,q\}\}$ for any two-valued interpretation $x$. This intuitively reflects the fact that
either $p$ or $q$ has to be true to validate the head of $p\lor q\leftarrow$. 
\end{example}

\smallskip
Other semantics for dlp's, this time based on three-valued interpretations, are defined next:

\begin{definition}
\label{def:3-val-sem-dlp}
 Given a dlp ${\cal P}$ and a consistent interpretation $(x,y)$. We say that $(x,y)$ is:
\begin{itemize}
\item a \emph{(three--valued) model\/} of ${\cal P}$, if for every $\phi\leftarrow \psi \in {\cal P}$, $(x,y)(\phi)\geq_t (x,y)(\psi)$.
         We denote by $mod({\cal P})$ the set of the three-valued models of ${\cal P}$.
\item a \emph{weakly supported model\/} of ${\cal P}$, if it is a model of ${\cal P}$ and for every $p \in  y$,
         there is a rule $\bigvee\!\Delta\leftarrow \phi\in {\cal P}$ such that $p\in\Delta$ and $(x,y)(\phi)\geq_t (x,y)(p)$.
\item a \emph{supported model\/} of ${\cal P}$, if it is a model of ${\cal P}$ and for every $p\in{\cal A}_{\cal P}$ such that $(x,y)(p)={\sf T} \ [(x,y)(p)={\sf U}]$, there is  
        $\bigvee\!\Delta\leftarrow \phi\in {\cal P}$ such that $p \in\Delta$ and $(x,y)(\phi)={\sf T} \ [(x,y)(p)={\sf U}]$ and $\Delta\cap x = \{p\} \ [\Delta\cap y=\{p\}]$.
\end{itemize}
\end{definition}

The intuition behind the notions above is the following. An interpretation is a model of ${\cal P}$, if for each rule in ${\cal P}$ there is at least one atom whose truth value is 
$\leq_t$-greater or equal to the truth value of the rule's body. Thus, the truth values of the rules' heads in the models of ${\cal P}$ are $\leq_t$-greater or equal to the 
truth values of the rules' bodies. Weakly supported models require that for every atom that is true (respectively undecided), we can find a rule in whose head this atom 
occurs and for which the body is true (respectively undecided). In other words, every atom is supported by an ``activated'' rule. Supported models strengthen this requirement 
by requiring that for every atom that is true (respectively undecided), there is an ``activated'' rule in whose head the atom under consideration is the \emph{only\/} true 
(respectively undecided) atom.

The semantical notions of Definition~\ref{def:3-val-sem-dlp} are illustrated in Example~\ref{ex:semantic:notions} and Example~\ref{ex:semantic:notions-2} below. 
 
\begin{remark}
Two-valued supported and weakly supported models are defined in~\cite{brass1995characterizations}. Their generalization to the 3-valued case is,  to the best of our knowledge, novel.
An alternative but equivalent definition of a supported model $(x,y)$ is the following: $(x,y)$ is a model, and for every  $p\in {\cal A}_{\cal P}$ such that $(x,y)(p)\neq {\sf F}$, there 
is a rule $\bigvee\Delta\leftarrow \phi$ such that for every other $p'\in\Delta$, $(x,y)(\phi)\geq_t (x,y)(p) >_t (x,y)(p')$. 
\end{remark}

Another common way of providing semantics to dlp's is by Gelfond-Lifschitz reduct~\cite{gelfond1991classical}:

\begin{definition}
\label{def:GL-reduct}
 The GL-transformation $\frac{\cal P}{(x,y)}$ of a disjunctively normal dlp ${\cal P}$ with respect to a consistent interpretation $(x,y)$, is the positive
program obtained by replacing in every rule in ${\cal P}$ of the form
$$p_1\lor\ldots\lor p_n \leftarrow \bigwedge_{i=1}^m q_i\land \bigwedge_{j=1}^n \lnot r_j$$ 
any negated literal $\lnot r_i$ ($1\leq i\leq k$) by: 
(1)~${\sf F}$ if $(x,y)(r_i)={\sf T}$, (2)~${\sf T}$ if $(x,y)(r_i)={\sf F}$, and (3)~${\sf U}$ if $(x,y)(r_i)={\sf U}$.
In other words, replacing $\lnot r_i$ by $(x,y)(\lnot r_i)$.

An interpretation $(x,y)$ is a {\em three-valued stable model\/} of ${\cal P}$ iff it is a $\leq_t$-minimal model of $\frac{{\cal P}}{(x,y)}$.\footnote
{\label{footnote:2-val mod} If $x=y$, $(x,y)$ is called a {\em two-valued\/} stable model of ${\cal P}$. For normal logic programs, the \emph{well-founded model\/} 
is the $\leq_i$-minimal three-valued stable model, which is guaranteed to exist and be unique~\cite{przymusinski1990well,van1991well}.}
\end{definition}

\begin{example}
\label{ex:semantic:notions}
Consider the dlp ${\cal P} \ = \ \{p\leftarrow \lnot p; \quad q\leftarrow \lnot r;\quad r\leftarrow \lnot q;\quad q\lor r\leftarrow\}$.
\begin{itemize}
\item The following interpretations are the (consistent) models of ${\cal P}$:
\[
\begin{matrix}
(\{p,q,r\},\{p,q,r\}),& 
(\{p,r\},\{p,r\}),&
(\{q,r\},\{q,r\}),&
(\{r\},\{p,r\}),&
(\{q\},\{p,q\}),
\smallskip \\
(\{r\},\{p,q,r\}),&
(\{q\},\{p,q,r\}),&
(\{p,r\},\{p,q,r\}),&
(\{p,q\},\{p,q,r\}).
\end{matrix}
\]
Notice that $(\emptyset,\{p,q,r\})$ is \emph{not} a model of ${\cal P}$, since $(\emptyset,\{p,q,r\})(q\lor r)={\sf U} <_t (\emptyset,\{p,q,r\})(\top)$\footnote{We use $\top$ 
to denote the empty body in the rule $q\lor r\leftarrow$. Notice that $(x,y)(\top)={\sf T}$ for any $x,y\subseteq {\cal A}_{\cal P}$.},
thus this interpretation is not a model of $q\lor r\leftarrow$.
\item The following interpretations are the weakly supported models of ${\cal P}$:
\[
\begin{matrix}
(\{q\},\{p,q\}), & (\{r\},\{p,r\}).
\end{matrix}
\]
\item These interpretations are also supported and stable models of ${\cal P}$. Indeed, note for instance that:
\[\frac{{\cal P}}{(\{q\},\{p,q\})} \ = \ \{p\leftarrow {\sf U}' \quad q\leftarrow {\sf T}; \quad r\leftarrow {\sf F}; \quad q\lor r\leftarrow\}.\] 
The minimal (and, in this case also the unique) model of $\frac{{\cal P}}{(\{q\},\{p,q\})}$ is $(\{q\},\{p,q\})$ and thus this interpretation is stable.
\end{itemize}
\end{example}

\begin{example}
\label{ex:semantic:notions-2}
Consider the dlp ${\cal P} \ = \ \{p\lor q\leftarrow q\}.$ 
\begin{itemize}
\item Then the following interpretations are weakly supported models of ${\cal P}$:
\[
\begin{matrix}
(\emptyset,\emptyset),& 
(\emptyset,\{q\}),&
(\{q\},\{q\}),&
(\emptyset,\{p,q\}),&
(\{q\},\{p,q\}),&
(\{p,q\},\{p,q\})
\end{matrix}\]
\item Of these interpretations, only the following are supported:
\[
\begin{matrix}
(\emptyset,\emptyset),& 
(\emptyset,\{q\}),&
(\{q\},\{q\})
\end{matrix}\]
One can see that e.g.\ $(\{p,q\},\{p,q\})$ is \emph{not\/} weakly supported, as there is no rule for which $p$ is the only atom that is true and occurs in the head 
(as also $q$ occurs in $p\lor q\leftarrow q$ and is true according to  $(\{p,q\},\{p,q\})$).
\item The only stable model of ${\cal P}$ is $(\emptyset,\emptyset)$. This can be seen by observing that for any interpretation $(x,y)$ it holds that $\frac{\cal P}{(x,y)}={\cal P}$, 
and that the minimal model of ${\cal P}$ is $(\emptyset,\emptyset)$. 
\end{itemize}
\end{example}

\subsection{Approximation Fixpoint Theory}
\label{sec:AFT}

We now recall basic notions from approximation fixpoint theory (AFT), as described by Denecker, Marek and Truszczy{\'n}ski~\cite{denecker2000approximations}.  
As we have already noted, AFT introduces constructive techniques for approximating the fixpoints of an operator $O$ over a lattice $L= \tup{{\cal L},\leq}$. 
This is particularly useful when $O$ is non-monotonic (as is often the case in logic programming, default logic and abstract argumentation, and other disciplines
for non-monotonic reasoning in AI), in which case such operators are not guaranteed to even have a fixpoint, or a unique least fixpoint that can be constructively obtained.
AFT generalizes the principles for the construction of fixpoints to the non-monotonic setting, by working with \emph{approximations\/} of such operators on a {\em bilattice\/} \cite{AA96,Be77a,Be77b,Fi06,Fi20,Gi88}, constructed on the basis of $L$. 

\begin{definition}
\label{def:bilattice}
 Given a lattice $L = \tup{{\cal L},\leq}$, a {\em bilattice\/} is the structure $L^2 =\tup{{\cal L}^2,\leq_i,\leq_t}$, in which
${\cal L}^2 = {\cal L} \times {\cal L}$, and for every $x_1,y_1,x_2,y_2 \in {\cal L }$, \smallskip \\
$\bullet$ $(x_1,y_1) \leq_i (x_2,y_2)$ if $x_1 \leq x_2$ and $y_1 \geq y_2$, \smallskip \\
$\bullet$ $(x_1,y_1) \leq_t (x_2,y_2)$ if $x_1 \leq x_2$ and $y_1 \leq y_2$.\footnote{Recall that we use small letters to denote elements of lattice, capital letters to denote 
sets of elements, and capital calligraphic letters to denote sets of sets of elements (Table~\ref{tab:set-notations}).}
 \end{definition}
 
Bilattices of the form $L^2$ are used for defining operators that approximate operators on $L$.

An {\em approximating operator\/} ${\cal O}:{\cal L}^2\rightarrow {\cal L}^2$ of an operator $O:{\cal L}\rightarrow {\cal L}$ is an operator that maps every 
approximation $(x,y)$ of an element $z$ to an approximation $(x',y')$ of another element $O(z)$, thus approximating the behavior of the approximated operator $O$.  
Approximation operators may be viewed as combinations of two operators: $({\cal O}(.,.))_1$ and $({\cal O}(.,.))_2$ which calculate, respectively, a \emph{lower\/} and 
an \emph{upper\/} bounds for the value of $O$ (where, as usual, $(x,y)_1$ respectively $(x,y)_2$ represents the first respectively second component of $(x,y)$). 
To avoid clutter, we will also denote $({\cal O}(x,y))_1$ by ${\cal O}_l(x,y)$  and $({\cal O}(x,y))_2$ by ${\cal O}_u(x,y)$.

Two fundamental requirements on approximating operators are the following:
\begin{enumerate}
     \item {\sl $\leq_i$-monotonicity:} the values of an approximating operator should be more precise as its arguments are more precise, and 
     \item {\sl exactness:} exact arguments are mapped 
              to exact values.
\end{enumerate}              
These requirements result in the following definition:

\begin{definition}
\label{def:approx-notions}
Let $O:{\cal L}\rightarrow {\cal L}$ and ${\cal O}:{\cal L}^2\rightarrow {\cal L}^2$.
\begin{itemize}
 \item  ${\cal O}$ is {\em $\leq_i$-monotonic\/}, if when $(x_1,y_1)\leq_i(x_2,y_2)$, also ${\cal O}(x_1,y_1)\leq_i {\cal O}(x_2,y_2)$; 
           ${\cal O}$ is \emph{approximating\/}, if it is $\leq_i$-monotonic and for any $x\in {\cal L}$, ${\cal O}_l(x,x) = {\cal O}_u(x,x)$.\footnote
           {In some papers (e.g.,~\cite{denecker2000approximations}), an approximation operator is defined as a symmetric $\leq_i$-monotonic operator, 
           i.e.\ a $\leq_i$-monotonic operator s.t.\ for every $x,y\in {\cal L}$, ${\cal O}(x,y)=({\cal O}_l(x,y),{\cal O}_l(y,x))$ for some 
           ${\cal O}_l:{\cal L}^2\rightarrow {\cal L}$. However, the weaker condition we take here (taken from \cite{denecker2002ultimate}) is actually 
           sufficient for most results on AFT. \label{footnote:symetry} }
 \item ${\cal O}$ is an {\em approximation\/} of $O$, if it is $\leq_i$-monotonic and
         ${\cal O}$ \emph{extends} $O$, that is: ${\cal O}(x,x) = (O(x), O(x))$ (for every $x\in {\cal L}$).
\end{itemize}
\end{definition}

\begin{remark}
 One can define an approximating operator ${\cal O}$ without having to specify which operator $O$ 
it approximates, and indeed it will often be convenient to study approximating operators without having to refer to the approximated operator. 
However, one can easily obtain the operator $O$ that ${\cal O}$ approximates by 
letting: $O(x) = {\cal O}_l(x,x)$.
 \end{remark}

Another operator that has a central role in AFT and which is used for expressing the semantics of many non-monotonic formalisms is the 
\emph{stable operator\/}, defined next.

\begin{definition}
\label{def:stable-op}
For a complete lattice $L = \tup{{\cal L},\leq}$,  let ${\cal O}:{\cal L}^2 \rightarrow {\cal L}^2$ be an approximating operator.
We denote: ${\cal O}_{l}(\cdot,y) = \lambda x.{\cal O}_{l}(x,y)$ and ${\cal O}_{u}(x,\cdot) = \lambda y.{\cal O}_{u}(x,y)$,
i.e.: ${\cal O}_{l}(\cdot,y)(x) = {\cal O}_{l}(x,y)$ and ${\cal O}_{u}(x,\cdot)(y) = {\cal O}_{u}(x,y)$.
The \emph{stable operator for ${\cal O}$\/} is: $S({\cal O})(x,y)=(\mathit{lfp}({\cal O}_l(.,y)),\mathit{lfp}({\cal O}_u(x,.))$. 
\end{definition}

Stable operators capture the idea of minimizing truth, since for any $\leq_i$-monotonic operator ${\cal O}$ on ${\cal L}^2$,
the fixpoints of the stable operator $S({\cal O})$ are  $\leq_t$-minimal fixpoints of ${\cal O}$ \cite[Theorem~4]{denecker2000approximations}.
Altogether, the following semantic notions are obtained:

\begin{definition}
\label{def:AFT-operators}
 Given a complete lattice $L = \tup{{\cal L},\leq}$,  let ${\cal O}:{\cal L}^2 \rightarrow {\cal L}^2$ be an approximating operator. \jesse{Then:}
\begin{itemize} 
\item $(x,y)$ is a \emph{Kripke-Kleene fixpoint\/} of ${\cal O}$ if $(x,y) =\lfp_{\leq_i}({\cal O}(x,y))$.  
\item $(x,y)$ is a \emph{three-valued stable fixpoint\/} of ${\cal O}$ if $(x,y)= S({\cal O})(x,y) $.  
\item $(x,x)$ is a \emph{two-valued stable fixpoints\/} of ${\cal O}$ if $(x,x)= S({\cal O})(x,x)$. 
\item $(x,y)$ is the \emph{well-founded  fixpoint\/} of ${\cal O}$ if it is the $\leq_i$-minimal (3-valued) stable fixpoint of ${\cal O}$. 
\end{itemize}
\end{definition}

Denecker, Marek and Truszscy\'nski~\cite{denecker2000approximations} show that every approximation operator admits a unique $\leq_i$-minimal stable fixpoint.
Pelov, Denecker and Bruynooghe~\cite{pelov2007well} show that for normal logic programs, the fixpoints based on the four-valued immediate consequence operator
for a logic program give rise to the following correspondences: the three-valued stable models coincides with the three-valued semantics as defined by Przymusinski~\cite{przymusinski1990well}, 
the well-founded model coincides with the homonymous semantics~\cite{przymusinski1990well,van1991well}, and the two-valued stable models coincide with the two-valued 
(or total) stable models of a logic program.

\begin{example}
For a normal logic program ${\cal P}$, an approximation of the operator $\IC_{\cal P}$~\cite{pelov2007well} can be obtained by first constructing a lower bound operator 
as follows:\footnote{Notice that the lattice under consideration is $\langle 2^{{\cal A}_{\cal P}},\subseteq\rangle$, i.e., elements of the lattice are \emph{sets\/} of atoms.\label{footnote:sets:as:elements}}
\[ {\cal IC}^l_{\cal P}(x,y)= \{p\in {\cal A}_{\cal P}\mid p\leftarrow \phi\in {\cal P}, (x,y)(\phi)\geq_t {\sf C}\} \]
and then defining:
\[ {\cal IC}_{\cal P}(x,y)= ({\cal IC}_{\cal P}^l(x,y),{\cal IC}_{\cal P}^l(y,x)) \]
Notice that the upper bound ${\cal IC}_{\cal P}^u(y,x)$ is defined as ${\cal IC}_{\cal P}^l(y,x)$, i.e.\ ${\cal IC}_{\cal P}$ is a \emph{symmetric\/} operator (see also 
Footnote~\ref{footnote:symetry}). Pelov, Denecker and Bruynooghe~\cite{pelov2007well} have shown that this operator approximates $\IC_{\cal P}$ for normal logic 
programs ${\cal P}$.

We now illustrate the behaviour of this operator with the following logic program: 
\[ {\cal P} \ = \ \{ p\leftarrow \lnot q; \quad q\leftarrow \lnot p; \quad r\leftarrow r\}. \]
$\IC_{\cal P}$ is thus an operator over the lattice $\langle 2^{\{p,q,r\}},\subseteq\rangle$. Now,
\begin{itemize}
\item Concerning the approximation ${\cal IC}_{\cal P}$ of $\IC_{\cal P}$, it holds, e.g., that: 
    \begin{itemize}
        \item ${\cal IC}^l_{\cal P}(\emptyset,\{p,q,r\})= \emptyset \mbox{ as }(\emptyset,\{p,q,r\})(\lnot q)=(\emptyset,\{p,q,r\})(\lnot p)=(\emptyset,\{p,q,r\})(r)={\sf U}$.
        \item ${\cal IC}^l_{\cal P}(\{p,q,r\},\emptyset)=\{p,q,r\} \mbox{ as }(\{p,q,r\},\emptyset)(\lnot q)=(\{p,q,r\},\emptyset)(\lnot p)=(\{p,q,r\},\emptyset)(r)={\sf C}$.
    \end{itemize}
\item We now illustrate the stable operator. 
    \begin{itemize}
         \item It holds that $\lfp({\cal IC}_{\cal P}(.,\{p\})=\{p\}$, and therefore $S({\cal IC}^l_{\cal P})(\{p\}) = \{p\}$. 
         \item By symmetry of ${\cal IC}_{\cal P}$, $(\{p\},\{p\})$ is a stable fixpoint of ${\cal IC}_{\cal P}$. 
         \item Likewise, it can be observed that $(\emptyset,\{p,q\})$ and $(\{q\},\{q\})$ are stable fixpoints of ${\cal IC}_{\cal P}$. 
         \item It follows that  $(\emptyset,\{p,q\})$ is the well-founded fixpoint of ${\cal IC}_{\cal P}$.
    \end{itemize}
\end{itemize}    
We shall generalize this operator to the disjunctive case in Section~\ref{sec:nd:aft}.
\end{example}

\section{Non-Deterministic Operators and Approximations}
\label{sec:nd:aft}

In this section, we generalize approximation fixpoint theory to allow for non-deterministic operators. In Section~\ref{sec:non-deterministic:operators} we formally define 
non-deterministic operators and the necessary order-theoretic background. In Section~\ref{sec:ndaos} we then define non-deterministic approximation operators and 
show some basic results on these operators.

\subsection{Non-Deterministic Operators}\label{sec:non-deterministic:operators}

In order to characterize (two-valued) semantics for disjunctive logic programming, in~\cite{pelov2004semantics} Pelov and Truszczy{\'n}ski introduced 
the notion of non-deterministic operators and accordingly extended AFT to non-deterministic AFT. 

\begin{definition}%
\label{def:non-deterministic-operator}
A {\em non-deterministic operator on ${\cal L}$} is a function $O : {\cal L}\rightarrow \wp({\cal L}) \setminus \{\emptyset\}$. 
\end{definition}

Intuitively, a non-deterministic operator assigns to every element $x$ of ${\cal L}$ a (nonempty) set of {\em choices\/} $O(x)=\{y_1,y_2,\ldots\}$ which can 
be seen as equally plausible alternatives of the outcome warranted by $x$. Thus, non-deterministic operators allow for multiple options or choices in their output 
(reflected by their co-domain being $\wp({\cal L})$). Just like deterministic operators, it is required that every element $x$ is mapped to at least one choice 
(which might be the $\leq$-least element $\bot$, if it exists). As an example of non-determinism, consider an activated disjunctive rule $\bigvee\Delta\leftarrow \phi$ 
(i.e., a rule for which the body is true and its head is a disjunction), requiring that at least one among the disjuncts $\delta\in\Delta$ is true.

\begin{example}%
\label{example:operator:disj:lp} 
 The operator $\IC_{\cal P}$ from Definition~\ref{def:operator:disj:lp} is a non-deterministic operator on the lattice $\tup{\wp({\cal A}_{\cal P}),\subseteq}$.
\end{example}

As the ranges of non-deterministic operators are {\em sets\/} of lattice elements, one needs a way to compare them. Next, we recall two such relations,
known as the {\em Smyth order\/}~\cite{smyth1976powerdomains} and the {\em Hoare order\/} (used in the context of DLP in several works (see, 
e.g.,~\cite{alcantara2005well,fernandez1995bottom})).

\begin{definition}%
\label{def:Smith-preorder}
Let $L = \tup{{\cal L},\leq}$ be a lattice, and let $X,Y \in \wp({\cal L})$. Then:
\begin{itemize}
     \item $X \preceq^S_L Y$ if for every $y\in Y$ there is an $x\in X$ such that $x\leq y$.
     \item  $X \preceq^H_L Y$ if for every $x\in X$ there is a $y\in Y$ such that $x\leq y$.
\end{itemize}
\end{definition}

Here, the sets of lattice elements $X$ and $Y$ represent values of a non-deterministic operator (i.e., these are non-deterministic states). Thus, according to the intuition described above, 
each one is a set of {\em choices\/}. Accordingly, $\preceq^S_L$ means that for every choice $y$ in $Y$, there is a $\leq$-smaller choice $x$ in $X$. Likewise, $\preceq^H_L$ means 
that for every choice $x$ in $X$, there is a $\leq$-greater choice $y$ in $Y$. In other words, $\preceq^H_L$ and $\preceq^S_L$ allow to compare non-deterministic states on the basis 
of the $\leq$-relationship of their constituent elements in $L$. We will see below that $\preceq^S_L$ is well-suited to compare lower bounds, whereas $\preceq^H_L$ is well-suited 
to compare upper bounds.

\begin{remark}
\label{rem:properties:of:smyth}
Both $\preceq^S_L$ and $\preceq^H_L$ are preorders (i.e., reflexive and transitive) on $\wp({\cal L})$.
 \end{remark}

\subsection{Non-Deterministic Approximation Operators}
\label{sec:ndaos}

We now develop a notion of approximation of non-deterministic operators. Such an approximation generalizes the analogue approximation of deterministic operators explained in 
Section~\ref{sec:AFT}. The benefits of such an approximation will be demonstrated in Sections~\ref{sec:theory:of:ndao} and~\ref{sec:stable:semantics}.

Before describing the formal details, we explain the intuition behind the approximation. Recall that, in deterministic AFT, an operator over $L$ is approximated by an 
\emph{approximation operator\/} that specifies a lower bound and an upper bound of an approximated element. Thus, an approximation operator essentially consists 
of two operators over $L$, the a lower bound operator ${\cal O}_l$ and the upper bound operator ${\cal O}_u$. We generalize this idea to the non-deterministic case.
As in deterministic AFT, a non-deterministic approximation operator can be seen as consisting of a non-deterministic lower bound operator and an non-deterministic upper 
bound operator. A non-deterministic approximation ${\cal O}$ of a non-deterministic operator $O$, maps a pair $(x,y)$ (intuitively representing an approximation of a 
single value $z$) to a pair of sets $X,Y\subseteq {\cal L}$ (intuitively representing sets of lower bounds $X$ and upper bounds $Y$ on the non-deterministic choices $O(z)$).
Thus, an approximation operator ${\cal O}$ is of the type ${\cal L}^2\rightarrow \wp({\cal L})\times\wp({\cal L})$.

As in the deterministic case, it is natural to assume two formal properties of the approximating operators (recall Definition~\ref{def:approx-notions}).
Below, we adjust the requirements in Definition~\ref{def:approx-notions} to the non-deterministic case, using the order relations in Definition~\ref{def:Smith-preorder}: 
\begin{enumerate}
     \item {\sl Exactness}: if $(x,y)$ is an approximation of $z$, every non-deterministic choice $z'\in O(z)$ should have at least one lower bound in 
              $x'\in {\cal O}_l(x,y)$  and at least one upper bound $y'\in {\cal O}_u(x,y)$. In other words, ${\cal O}_l(x,y)\preceq^S_L O(z)$
              and $O(z)\preceq^H_L {\cal O}_u(x,y)$.
              Informally, every choice in $O(z)$ is in between some lower bound and upper bound of ${\cal O}(x,y)$. In the extreme case where the h
              argument is an exact pair $(x,x)$, this means that ${\cal O}(x,x)=O(x,x)\times O(x,x)$  and so, for an exact pair $(x,x)$, both the lower and upper bound operator 
              coincide with the approximated operator $O$. Thus, ${\cal O}(x,x)$ represents a single set of choices. We shall use this as a defining condition: 
              for exact pairs, the lower and upper bound coincide. Intuitively, exact inputs give rise to exact (but non-deterministic) outputs. We will call such an operator \emph{\exactcompliant}.
    \item {\sl Monotonicity}: just like in the deterministic case, a non-deterministic approximation operator should be expected to be monotonic w.r.t.\ 
             the information ordering: more precise inputs give rise to more precise outputs. To make this notion formally precise, we need a way to compare the precision 
             of pairs of sets (interpreted as a set of lower bounds and a set of upper bounds). A set of lower bounds $X_1$ is more precise than a set of lower bounds 
             $X_2$ if, for every lower bound $x_1$ in the more precise set $X_1$, there is a less precise (i.e.\ $\leq$-smaller) lower bound $x_2$ in the less precise set $X_2$. 
             Thus: $X_2\preceq^S_L X_1$. Likewise, a set of upper bounds $Y_1$ is more precise than 
             a set of upper bounds $Y_2$ if, for every upper bound $y_1$ in the more precise set $Y_1$, there is a less precise (i.e.\ $\leq$-higher) upper bound $y_2$ 
             in the less precise set $Y_2$. Thus: $Y_1\preceq^H_L Y_2$. Altogether, a pair of bounds $(X_1,Y_1)$ is more precise than a second pair of bounds $(X_2,Y_2)$ 
             if the lower bounds are compared w.r.t.\ the Smyth-order $\preceq^S_L$ and the upper bounds are compared w.r.t.\ the Hoare-order $\preceq^H_L$.
             This idea was defined by Alc\^antara, Dam\'asio and Moniz Pereira~\cite{alcantara2005well} on pairs of sets, and is generalized here to an algebraic setting:

             \begin{definition}%
              Given some $X_1,X_2,Y_1,Y_2\subseteq {\cal L}$, $X_1\times Y_1 \preceq^A_i X_2\times Y_2$ iff $X_1\preceq^S_L X_2$ and $Y_2\preceq^H_L Y_1$. 
             \end{definition}
\end{enumerate}

We are now ready to define a \emph{non-deterministic approximation operator\/}: it is an operator ${\cal O}:{\cal L}^2\rightarrow \wp({\cal L})\times \wp({\cal L}) $ assigning to  every pair 
$(x,y)$ a set of lower bounds and upper bounds, that is $\preceq^A_i$-monotonic for which exact inputs give rise to exact outputs. Formally:

\begin{definition}
\label{def:ndao}
Let $L=\tup{{\cal L},\leq}$ be a lattice. An operator ${\cal O}:{\cal L}^2\rightarrow \wp({\cal L}){\setminus\emptyset}\times \wp({\cal L}){\setminus\emptyset}$ 
is called a {\em non-deterministic approximating operator\/} (ndao, for short), if satisfies the following properties:
\begin{itemize}
    \item ${\cal O}$ is $\preceq^A_i$-monotonic. 
    \item ${\cal O}$ is \emph{\exactcompliant}, i.e., for every $x\in {\cal L}$, ${\cal O}(x,x)={\cal O}_l(x,x)\times {\cal O}_\jessen{l}(x,x)$.\footnote{Recall that we 
             denote by ${\cal O}_l$  the operator defined by ${\cal O}_l(x,y)={\cal O}(x,y)_1$, and likewise by ${\cal O}_u$ the operator defined by  ${\cal O}_u(x,y)={\cal O}(x,y)_2$.}
\end{itemize}
\end{definition}

We also say that a ndao ${\cal O}$ is \emph{an approximation} of the non-deterministic operator, defined as $O(x)={\cal O}_l(x,x)$ (for every $x \in {\cal L}$)

\begin{remark}
\label{remark:deterministic}
We shall show below (Proposition~\ref{prop:deterministic:aft}) that Definition~\ref{def:ndao} extends Definition~\ref{def:approx-notions}: when an ndao is deterministic, 
in the sense that ${\cal O}(x,y)$ is singleton for every $x,y\in {\cal L}$, an ndao reduces to an approximation operator.
\end{remark}

\begin{remark}
It is sometime useful to assume the following property of ndaos:
${\cal O}$ is {\em symmetric\/}, if ${\cal O}_l(x,y)={\cal O}_u(y,x)$ for any $x,y\subseteq {\cal L}$.(Or, equivalently, if ${\cal O}_u(x,y)={\cal O}_l(y,x)$ for every every 
$x,y \in {\cal L}$). Notice that symmetric operators are \exactcompliant.
Just like in deterministic AFT, the assumption of symmetry is \emph{not} essential and we do not assume it unless specific results require it (see also Foonote~\ref{footnote:symetry}).
\end{remark}

We now give an example of an ndao in the context of disjunctive logic programming.
The operator ${\cal IC}_{\cal P}$ is constructed on the basis of the operators $\HRc_{\cal P}^l$ and $\HRc_{\cal P}^u$, which intuitively constitute a lower bound and 
and an upper bound on the activated heads. In more detail, $\HRc_{\cal P}^l(x,y)$ contains all the heads of rules whose body is at least (according to $\geq_t$) contradictory, i.e., 
whose body is ${\sf T}$ or ${\sf C}$. Likewise, $\HRc_{\cal P}^u(x,y)$ contains heads of rules whose body is  at least (according to $\geq_t$)
undecided, i.e., whose body is ${\sf T}$ or ${\sf U}$. The lower (respectively upper) bounds are then constructed by taking all sets of atoms containing only atoms in at least one 
of the heads in $\HRc_{\cal P}^l(x,y)$ (respectively $\HRc_{\cal P}^u(x,y)$), and containing at least one atom for every head in $\HRc_{\cal P}^l(x,y)$ (respectively $\HRc_{\cal P}^u(x,y)$).

\begin{definition}
\label{def:IC_P-ndo}
For a dlp ${\cal P}$ and an interpretation $(x,y)$, we define:
\begin{itemize}
  \item $\HRc^l_{\cal P}(x,y) = \{ \Delta \mid \bigvee\!\Delta \leftarrow \phi\in {\cal P}, (x,y)(\phi)\geq_t {\sf C}\}$, 
  \item $\HRc^u_{\cal P}(x,y) = \{ \Delta \mid \bigvee\!\Delta \leftarrow \phi\in {\cal P}, (x,y)(\phi)\geq_t {\sf U}\}$, 
  \item ${\cal IC}^l_{\cal P}(x,y)=\{x_1\subseteq \bigcup\HRc^l_{\cal P}(x,y) \mid \forall \Delta\in \HRc^l_{\cal P}(x,y), \ x_1 \cap \Delta \neq \emptyset \}$, 
  \item ${\cal IC}^u_{\cal P}(x,y)=\{y_1\subseteq \bigcup\HRc^u_{\cal P}(x,y) \mid \forall \Delta\in \HRc^u_{\cal P}(x,y), \ y_1 \cap \Delta \neq \emptyset \}$, 
  \item ${\cal IC}_{\cal P}(x,y)=({\cal IC}^l_{\cal P}(x,y), {\cal IC}^u_{\cal P}(x,y))$. 
\end{itemize}
\end{definition}

\begin{example}
\label{example:operator:disj:lp-2}
The operator ${\cal IC}_{\cal P}$  is an approximation of the non-deterministic operator 
$IC_{\cal P}$ in Example~\ref{example:operator:disj:lp} (and Definition~\ref{def:operator:disj:lp}). Furthermore, as we show next, it is a symmetric operator.
\end{example}

\begin{proposition}
${\cal IC}_{\cal P}$ is a symmetric ndao that approximates $IC_{\cal P}$.
\end{proposition}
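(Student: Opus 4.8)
The plan is to verify the three assertions in turn --- symmetry, exactcompliance together with the approximation property, and $\preceq^A_i$-monotonicity --- treating the last as the real work. The linchpin for the first two is a single structural fact about truth evaluation: for every formula $\phi$, the value $(y,x)(\phi)$ is obtained from $(x,y)(\phi)$ by the $\leq_i$-involution of ${\cal FOUR}$ that swaps ${\sf U}$ and ${\sf C}$ while fixing ${\sf T}$ and ${\sf F}$. For atoms this is immediate from the four-case table defining $(x,y)(p)$; for compound formulas it follows by induction, once one checks that this swap is an automorphism of $\langle{\cal FOUR},\leq_t\rangle$ that commutes with the $\leq_t$-involution $-$, hence with the connectives $\wedge$, $\vee$, $\neg$. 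Since a value is $\geq_t {\sf C}$ (i.e.\ lies in $\{{\sf C},{\sf T}\}$) exactly when its image under this swap is $\geq_t {\sf U}$ (i.e.\ lies in $\{{\sf U},{\sf T}\}$), I would conclude $\HRc^l_{\cal P}(x,y)=\HRc^u_{\cal P}(y,x)$; and as ${\cal IC}^l_{\cal P}$ and ${\cal IC}^u_{\cal P}$ are defined by the \emph{identical} hitting-set condition on their respective head-collections, this yields ${\cal IC}^l_{\cal P}(x,y)={\cal IC}^u_{\cal P}(y,x)$, i.e.\ symmetry, which in particular gives exactcompliance by taking $y=x$. For the approximation claim I would note that on an exact pair $(x,x)$ every formula evaluates into $\{{\sf T},{\sf F}\}$, so $(x,x)(\phi)\geq_t{\sf C}$ collapses to $(x,x)(\phi)={\sf T}$; hence $\HRc^l_{\cal P}(x,x)=\HR_{\cal P}(x)$ and therefore ${\cal IC}^l_{\cal P}(x,x)=\IC_{\cal P}(x)$, which is exactly the operator that ${\cal IC}_{\cal P}$ is declared to approximate.

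The substance is $\preceq^A_i$-monotonicity. I would start from the standard fact that truth evaluation is $\leq_i$-monotone in its interpretation argument: since on ${\cal FOUR}$ the operations $glb_{\leq_t}$, $lub_{\leq_t}$ and $-$ are all $\leq_i$-monotone, induction gives $(x_1,y_1)(\phi)\leq_i(x_2,y_2)(\phi)$ whenever $(x_1,y_1)\leq_i(x_2,y_2)$. Reading off the $\leq_i$-order, in which ${\sf C}$ is top and ${\sf U}$ is bottom, I obtain two inclusions of head-collections: the predicate ``$\geq_t{\sf C}$'' is upward closed under $\leq_i$, so $\HRc^l_{\cal P}(x_1,y_1)\subseteq\HRc^l_{\cal P}(x_2,y_2)$, whereas ``$\geq_t{\sf U}$'' is downward closed under $\leq_i$, so $\HRc^u_{\cal P}(x_2,y_2)\subseteq\HRc^u_{\cal P}(x_1,y_1)$. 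Thus increasing precision enlarges the lower head-collection and shrinks the upper one.

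It then remains to turn these two inclusions into the two halves of $\preceq^A_i$. Writing $H_1\subseteq H_2$ for the lower collections and recalling that ${\cal IC}^l_{\cal P}$ returns \emph{all} subsets of $\bigcup H_i$ meeting every $\Delta\in H_i$, for the Smyth half I would take any hitting set $x_2'$ of $H_2$ and set $x_1'=x_2'\cap\bigcup H_1$; since $H_1\subseteq H_2$ and each head is nonempty, $x_1'$ still meets every $\Delta\in H_1$ and $x_1'\subseteq x_2'$, giving ${\cal IC}^l_{\cal P}(x_1,y_1)\preceq^S_L{\cal IC}^l_{\cal P}(x_2,y_2)$. Dually, writing $G_2\subseteq G_1$ for the upper collections, for the Hoare half I would take any hitting set $y_2'$ of $G_2$ and enlarge it to $y_1'=\bigcup G_1$ (or, more economically, add one atom per so-far-unmet head of $G_1$); then $y_2'\subseteq y_1'$ and $y_1'$ meets every $\Delta\in G_1$, giving ${\cal IC}^u_{\cal P}(x_2,y_2)\preceq^H_L{\cal IC}^u_{\cal P}(x_1,y_1)$. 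Together these are precisely ${\cal IC}_{\cal P}(x_1,y_1)\preceq^A_i{\cal IC}_{\cal P}(x_2,y_2)$. I expect the only delicate point to lie exactly here: the witnesses $x_1'=x_2'\cap\bigcup H_1$ and $y_1'=\bigcup G_1$ are in general \emph{not} $\subseteq$-minimal hitting sets, so the argument goes through only because ${\cal IC}^l_{\cal P}$ and ${\cal IC}^u_{\cal P}$ collect all hitting sets rather than merely the minimal ones --- this is exactly the minimality requirement that had to be dropped, and whose retention would break $\leq_i$-monotonicity (cf.\ Remark~\ref{remark:minimality:in:the:op}). Everything else is routine bookkeeping on ${\cal FOUR}$.
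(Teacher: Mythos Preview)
Your proof is correct and follows essentially the same route as the paper's: both establish symmetry via the observation that swapping the two components of an interpretation swaps ${\sf U}$ and ${\sf C}$ in the truth value (the paper states this as Lemma~\ref{lemma:symmetry:of:ic}, you frame it as the $\leq_i$-involution being a $\leq_t$-automorphism commuting with $-$), and both derive the head-collection inclusions from $\leq_i$-monotonicity of evaluation. Your treatment is actually more complete on one point: the paper asserts without argument that $\HRc^l_{\cal P}(x_1,y_1)\subseteq\HRc^l_{\cal P}(x_2,y_2)$ ``immediately'' yields $\preceq^A_i$-monotonicity, whereas you explicitly construct the Smyth and Hoare witnesses ($x_1'=x_2'\cap\bigcup H_1$ and $y_1'=\bigcup G_1$) and correctly note that this step relies on the hitting-set families not being restricted to minimal elements.
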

\begin{proof}
It is clear that for any $x\in {\cal L}$, $\HRc_{\cal P}^l(x,x)=\HRc_{\cal P}^{u}(x,x)=\HR_{\cal P}(x,x)$ (as for any $\phi$, $(x,x)(\phi)\in \{{\sf T},{\sf F}\}$). 
Thus, ${\cal IC}_{\cal P}$ approximates $IC_{\cal P}$ and is \exactcompliant. We now show that it is $\preceq^A_i$-monotonic. For this, consider some 
$(x_1,y_1)\leq_i (x_2,y_2)$. We show by induction on $\phi$ that if $(x_1,y_1)(\phi)\geq_t {\sf C}$ then $(x_2,y_2)(\phi)\geq_t {\sf C}$. 
The base case is clear as $\phi\in x_1$ and $x_1\subseteq x_2$ implies $\phi\in x_2$. For the inductive case, notice that the cases for $\phi=\phi_1\land \phi_2$ 
and $\phi=\phi_1\lor \phi_2$ follow immediately from the inductive hypothesis. 
Suppose now thar $\phi=\lnot \phi_1$. $(x_1,y_1)(\lnot \phi_1)\geq_t{\sf C}$ means that $\phi_1\not\in y_1$. Since $y_2\subseteq y_1$, also $\phi_1\not\in y_2$.

We now show $\preceq^A_i$-monotonicity, which follows immediately from $\HRc_{\cal P}^l(x_1,y_1)\subseteq \HRc_{\cal P}^l(x_2,y_2)$. To see the latter, suppose that
$\Delta\in \HRc_{\cal P}^l(x_1,y_1)$, i.e.\ for some $\bigvee\Delta\leftarrow \phi$, $(x_1,y_1)(\phi)\geq_t {\sf C}$. Then $(x_2,y_2)(\phi)\geq_t{\sf C}$ and thus 
$\Delta\in \HRc_{\cal P}^l(x_2,y_2)$.

We finally show that ${\cal IC}_{\cal P}$ is symmetric. For this, we show the following lemma:

\begin{lemma}
\label{lemma:symmetry:of:ic}
For any $x,y\subseteq {\cal A}_{\cal P}$:
\begin{enumerate}
\item $(x,y)(\phi)= {\sf T}$ iff $(y,x)(\phi)= {\sf T}$, 
\item $(x,y)(\phi)= {\sf F}$ iff $(y,x)(\phi)= {\sf F}$, 
\item $(x,y)(\phi)= {\sf C}$ iff $(y,x)(\phi)= {\sf U}$.
\end{enumerate}
\end{lemma}

\begin{proof}
We show the third item, the first two items are shown similarly. We show this by induction on the structure of $\phi$. For the base case, 
let $\phi \in {\cal A}_{\cal P}$. Then $(x,y)(\phi)={\sf C}$ iff $\phi\in x\setminus y$, and, since $(y,x)(\phi)= {\sf U}$ iff $\phi\in x\setminus y$, the base case is proven.
For the inductive case, notice that the cases where $\phi=\phi_1\lor \phi_2$ or $\phi=\phi_1\land \phi_2$ follow immediately from the inductive hypothesis. 
Suppose that $\phi=\lnot \phi_1$. Since $(x,y)(\lnot\phi_1)={\sf C}$ iff $(x,y)(\phi_1)={\sf C}$, and $(y,x)(\lnot \phi_1)={\sf U}$ iff $(y,x)(\phi_1)={\sf U}$, we obtain
by the induction hypothesis that $(x,y)(\lnot\phi_1)={\sf C}$ iff $(y,x)(\lnot\phi_1)={\sf U}$.
\end{proof}
From Lemma~\ref{lemma:symmetry:of:ic} it follows that $\HRc_{\cal P}^l(x,y)=\HRc_{\cal P}^u(y,x)$ and thus ${\cal IC}_{\cal P}^l(x,y)={\cal IC}_{\cal P}^u(y,x)$.
\end{proof}

We now illustrate the operator ${\cal IC}_{\cal P}$ using two simple disjunctive logic program:

\begin{example}
\label{example:operator:disj:lp-2-b}
Consider the following dlp: ${\cal P}=\{ p\lor q\leftarrow\lnot q\}$.
\begin{description}
\item [] The corresponding operator ${\cal IC}^l_{\cal P}$ behaves as follows:
\begin{itemize}
\item For any interpretation $(x,y)$ for which $q\in x$, $\HRc^l_{\cal P}(x,y)=\emptyset$ and thus ${\cal IC}^l_{\cal P}(x,y)=\{\emptyset\}$.
\item For any interpretation $(x,y)$ for which $q\not\in x$, $\HR^l_{\cal P}(x,y)=\{\{p,q\}\}$ and thus ${\cal IC}^l_{\cal P}(x,y)=\{\{p\},\{q\},\{p,q\}\}$.
\end{itemize}
\item [] Since ${\cal IC}_{\cal P}^l(x,y)={\cal IC}_{\cal P}^u(y,x)$ (by Lemma~\ref{lemma:symmetry:of:ic}), this means that ${\cal IC}_{\cal P}$ behaves as follows:
\begin{itemize}
\item For any $(x,y)$ with $q\not\in x$ and $q\not\in y$, ${\cal IC}_{\cal P}(x,y)=\{\{p\},\{q\},\{p,q\}\}\times \{\{p\},\{q\},\{p,q\}\}$,
\item For any $(x,y)$ with $q\not\in x$ and $q\in y$, ${\cal IC}_{\cal P}(x,y)=\{\emptyset\}\times \{\{\{p\},\{q\},\{p,q\}\}$,
\item For any $(x,y)$ with $q\in x$ and $q\not\in y$, ${\cal IC}_{\cal P}(x,y)=\{\{p\},\{q\},\{p,q\}\}\times \{\emptyset\}$, and
\item For any $(x,y)$ with $q\not\in x$ and $q\in y$, ${\cal IC}_{\cal P}(x,y)=\{(\emptyset,\emptyset)\}$.
\end{itemize}
\end{description}
\end{example}

\begin{example} %
Consider the dlp from Example~\ref{ex:semantic:notions}. Then ${\cal IC}^l_{\cal P}$ behaves as follows (for arbitrary $y\subseteq {\cal A}_{\cal P}$):
\def\arraystretch{1,3}\tabcolsep=10pt
\begin{center}
\begin{tabular}{ll}
$x$& ${\cal IC}^l_{\cal P}(x,y)$ \\ \hline
$\emptyset$ & $\{\{p,q,r\}\}$
\\ $\{p\}$& $\{\{q,r\}\}$
\\ $\{q\}$& $\{\{p,q\},\{p,q,r\}\}$
\\ $\{r\}$& $\{\{p,r\},\{p,q,r\}\}$
\\ $\{p,q\}$& $\{\{q\},\{q,r\}\}$
\\ $\{p,r\}$& $\{\{r\},\{q,r\}\}$
\\ $\{q,r\}$& $\{\{p,q\},\{p,r\},\{p,q,r\}\}$
\\ $\{p,q,r\}$& $\{\{q\},\{r\},\{q,r\}\}$
\end{tabular}
\end{center}
Again, by the symmetry of ${\cal IC}_{\cal P}$, the behaviour of ${\cal IC}_{\cal P}$ can be easily derived on the basis of ${\cal IC}^l_{\cal P}$.
\end{example}

\begin{remark}
\label{remark:minimality:in:the:op}
In the literature (e.g., \cite{antic2013hex,pelov2004semantics}),  similar non-deterministic four-valued operators have been defined to characterize 
the semantics of disjunctive logic programs, inspired by deterministic approximation fixpoint theory. In some of these operators  minimality of the 
image of the operator was built-in. In our setting, this is defined as follows:
\begin{itemize}
     \item ${\cal IC}^{m,l}_{\cal P}(x,y)=\min_\subseteq(\{v \mid \forall \Delta\in \HRc^l_{\cal P}(x,y), \ v \cap \Delta \neq \emptyset \})$,\footnote{Recall that 
              $\min_\subseteq(X)=\{x\in X\mid \not\exists y\in X: y\subset x\}$.}
     \item ${\cal IC}_{\cal P}^{m}(x,y)=({\cal IC}^{m,l}_{\cal P}(x,y), {\cal IC}^{m,l}_{\cal P}(y,x))$. 
\end{itemize}             
However, there are some issues with this approach, e.g., that the operator ${\cal IC}_{\cal P}$ is \emph{not} $\preceq^A_i$-monotonic. To see this, consider the 
program ${\cal P}=\{a\lor b\leftarrow; \ \  a\leftarrow c\}$ and the two interpretations $(\emptyset,\{a,b,c\})$ and $(\emptyset,\{a,b\})$. Then we have:
\begin{itemize}
\item ${\cal IC}^{m,l}_{\cal P}(\emptyset,\{a,b,c\})=\{\{a\},\{b\}\}$ and  ${\cal IC}^{m,l}_{\cal P}(\{a,b,c\},\emptyset)=\{\{a\}\}$ 
         (the latter since $(\{a,b,c\},\emptyset)(c)={\sf C}$ and thus $\HRc^l_{\cal P}(\{a,b,c\},\emptyset)=\{\{a,b\},\{a\}\}$). 
\item ${\cal IC}^{m,l}_{\cal P}(\emptyset,\{a,b\})=\{\{a\},\{b\}\}$ and  ${\cal IC}^{m,l}_{\cal P}(\{a,b\},\emptyset)=\{\{a\},\{b\}\}$ 
        (the latter since $(\{a,b,c\},\emptyset)(c)={\sf F}$ and thus $\HRc^l_{\cal P}(\{a,b\},\emptyset)=\{\{a,b\}\}$). 
\end{itemize}
It follows that ${\cal IC}^{m,l}_{\cal P}(\{a,b\},\emptyset) \not\preceq^H_L {\cal IC}^{m,l}_{\cal P}(\{a,b,c\},\emptyset)$ i.e.,
$\{\{a\},\{b\}\} \not\preceq^H_L \{\{a\}\}$, since $\{b\} \not\subseteq \{a\}$. Hence, 
${\cal IC}^{m,u}_{\cal P}(\emptyset,\{a,b\}) = {\cal IC}^{m,l}_{\cal P}(\{a,b\},\emptyset) \not\preceq^H_L 
 {\cal IC}^{m,l}_{\cal P}(\{a,b,c\},\emptyset) = {\cal IC}^{m,u}_{\cal P}(\emptyset,\{a,b,c\})$. 

Thus, $(\emptyset,\{a,b,c\}) \leq_i (\emptyset,\{a,b\})$, yet ${\cal IC}^{m}_{\cal P}(\emptyset,\{a,b,c\}) \not\preceq^A_i {\cal IC}^{m}_{\cal P}(\emptyset,\{a,b\})$,
 and so ${\cal IC}^{m}_{\cal P}$ is not $\preceq^A_i$-monotonic.
 
Note that this is a counter-example to a wrong claim made in~\cite[Example~2]{DBLP:conf/kr/HeyninckA21}, about ${\cal IC}^m_{\cal P}$ as defined above being a $\preceq^A_i$-monotonic operator.

In the work of Pelov and Truszczy{\'n}ski~\cite{pelov2004semantics}, and of Anti\'c, Eiter and Fink \cite{antic2013hex}, $\preceq^A_i$-monotonicity is not studied, 
and only $\preceq^S_L$-monotonicity of the lower bound operator is shown and used (i.e., if $(x_1,y_1)\leq_i (x_2,y_2)$ then 
${\cal IC}^{m,l}(x_1,y_1)\preceq^S_L {\cal IC}^{m,l}(x_2,y_2)$). Thus, the above counter-example does not invalidate this claim (i.e., this counter-example does not 
show that ${\cal IC}^{m,l}$ is not $\preceq^S_L$-monotonic) .For two-valued stable semantics, this suffices, but as we shall see in what follows, when moving to three- 
and four-valued semantics, full $\preceq^A_i$-monotonicity is needed.

Altogether, this example shows that requiring minimality in the non-deterministic approximation operator leads to undesirable behavior. 
This is perhaps not surprising, as in deterministic approximation theory minimization is also not ensured by the operator itself, but by taking 
the stable fixpoints of an operator. In our work, minimization is not demanded in the definitions of the operators, but rather is achieved in the 
definitions of stable operators and fixpoints. And indeed,  we will be able to show that this works well, as stable fixpoints will be shown to be 
$\leq_t$-minimal fixpoints of an ndao (see Proposition~\ref{prop:stable:is:minimal:fp}). 
\end{remark}

\begin{remark}
\label{rem:convex-set}
A pair of sets can alternatively be viewed as a {\em convex set\/}.  A convex set is a set $X\subseteq {\cal L}$ that contains no ``holes'', i.e., for 
any $x,y\in X$, if $x\leq z \leq y$ then also $z\in X$. We can then view a pair of sets as as a convex set by viewing the two sets as a lower and an 
upper bound of a convex set. In that case, $\preceq^A_i$ reduces to comparing convex sets in terms of subset relations. 
This representation will play an important role in what we call the \emph{state semantics\/} (see Section~\ref{sec:kripke:kleene:state}). 
\end{remark}

\begin{remark}
A third way (in addition to those in the previous remarks) of viewing pairs of sets (which we conceived of as lower and upper bounds) is as  set of pairs, i.e., 
a set of pairs of lower and upper bounds. This is, of course, done by taking all combinations of lower and upper bounds. Also in that case, the order $\preceq^A_i$ 
makes intuitive sense. In more detail, it boils down to comparing the resulting sets of pairs using $\leq_i$ and the Smyth-ordering.
\end{remark}

\begin{definition}
\label{def:S-i-order}
Given some $\Xb,\Yb\subseteq {\cal L}^2$, $\Xb\preceq^S_i\Yb$ if for every $(y_1,y_2)\in\Yb$, there is some $(x_1,x_2)\in\Xb$ s.t.\ $(x_1,x_2)\leq_i (y_1,y_2)$. 
\end{definition}

Intuitively,  if $(x_2,y_2)$ is more precise than  $(x_1,y_1)$, then each interval in ${\cal O}(x_2,y_2)$ should be at least as precise as at least one interval in ${\cal O}(x_1,y_1)$. 
In other words, on more precise inputs, the produced intervals become more precise than (some) interval produced on the less precise input.
Thus, whereas $\preceq^A_i$ allows for comparison of a set of lower bounds and a set of upper bounds, $\preceq^S_i$ allows for the comparison of two sets of pairs.
The order $\preceq^A_i$ over pairs of sets is equivalent to $\preceq^S_i$ over pairs obtained on the basis of a pair of sets:

\begin{lemma}
\label{prop:smyth:iff:alcantara}
Let some $X_1,X_2,Y_1,Y_2\subseteq {\cal L}$ be given. Then $(X_1, Y_1) \preceq^A_i (X_2, Y_2)$ iff $X_1\times Y_1 \preceq^S_i X_2\times Y_2$.
\end{lemma}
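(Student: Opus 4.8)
The plan is to prove both implications by simply unfolding the three orders involved and exploiting the fact that $\leq_i$ on ${\cal L}^2$ is \emph{componentwise}: the first component is compared by $\leq$ and the second by $\geq$. Concretely, $(X_1,Y_1)\preceq^A_i (X_2,Y_2)$ says exactly that $X_1\preceq^S_L X_2$ (for every $a_2\in X_2$ some $a_1\in X_1$ has $a_1\leq a_2$) together with $Y_2\preceq^H_L Y_1$ (for every $b_2\in Y_2$ some $b_1\in Y_1$ has $b_2\leq b_1$). On the other side, $X_1\times Y_1\preceq^S_i X_2\times Y_2$ says that for every $(a_2,b_2)$ with $a_2\in X_2$ and $b_2\in Y_2$ there is $(a_1,b_1)$ with $a_1\in X_1$, $b_1\in Y_1$, and $(a_1,b_1)\leq_i(a_2,b_2)$, i.e.\ $a_1\leq a_2$ and $b_1\geq b_2$. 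The whole statement is then essentially a pairing/projection argument.

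For the direction from $\preceq^A_i$ to $\preceq^S_i$, I would take an arbitrary element $(a_2,b_2)\in X_2\times Y_2$, use $X_1\preceq^S_L X_2$ applied to $a_2$ to obtain $a_1\in X_1$ with $a_1\leq a_2$, and independently use $Y_2\preceq^H_L Y_1$ applied to $b_2$ to obtain $b_1\in Y_1$ with $b_2\leq b_1$. Pairing these witnesses gives $(a_1,b_1)\in X_1\times Y_1$ with $(a_1,b_1)\leq_i(a_2,b_2)$, which is exactly what $\preceq^S_i$ demands. The observation that makes this immediate is that the two witnesses can be chosen \emph{independently}, precisely because $\leq_i$ decouples into the two coordinates.

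For the converse I would project. To get $X_1\preceq^S_L X_2$, fix $a_2\in X_2$, pick any $b_2\in Y_2$, form $(a_2,b_2)\in X_2\times Y_2$, and feed it to $\preceq^S_i$: the resulting witness $(a_1,b_1)$ then has $a_1\in X_1$ with $a_1\leq a_2$ (we simply discard the second coordinate $b_1$). Symmetrically, to get $Y_2\preceq^H_L Y_1$, fix $b_2\in Y_2$, pick any $a_2\in X_2$, and read off $b_1\in Y_1$ with $b_2\leq b_1$ from the witness, discarding $a_1$. Note that the $X_1$-witness may depend on the auxiliary choice of $b_2$, but this is irrelevant since only its existence is needed.

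The only point requiring care — and hence the main (minor) obstacle — is that this projection step needs the \emph{complementary} set to be nonempty: to test a given $a_2$ one must actually be able to exhibit a pair in $X_2\times Y_2$, which requires $Y_2\neq\emptyset$, and dually the $Y$-condition requires $X_2\neq\emptyset$. This is harmless in the intended application, since by Definition~\ref{def:ndao} the components of an ndao's output are always nonempty; I would either invoke this standing nonemptiness assumption or note explicitly that for empty $X_2$ or $Y_2$ the product $X_2\times Y_2$ collapses and the equivalence is to be read under that convention. Apart from this edge case, the lemma is a pure unfolding of definitions with no genuine combinatorial content.
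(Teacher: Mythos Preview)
Your proposal is correct and follows essentially the same approach as the paper's own proof: both directions are handled by unfolding the definitions, pairing independently chosen witnesses for $\Rightarrow$, and projecting a witness obtained from an auxiliary pair for $\Leftarrow$. Your explicit remark about the nonemptiness of $X_2$ and $Y_2$ in the converse direction is a point the paper's proof leaves implicit.
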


\begin{proof}
$[\Rightarrow]$: Suppose that $X_1\times Y_1 \preceq^A_i X_2\times Y_2$ and consider some $(x_2,y_2)\in X_2\times Y_2$. Since $X_1\times Y_1 \preceq^A_i X_2\times Y_2$, 
there is some $x_1\in X_1$ s.t.\ $x_1\leq x_2$ and there is some $y_1\in Y_1$ s.t.\ $y_2\leq y_1$. Thus, there is
an $(x_1,y_1)\in X_1\times Y_1$ such that $(x_1,y_1)\leq_i (x_2,y_2)$, and so  $X_1\times Y_1 \preceq^S_i X_2\times Y_2$. 

\smallskip\noindent
$[\Leftarrow]$: Suppose that $X_1\times Y_1 \preceq^S_i X_2\times Y_2$ and consider some $y_2\in Y_2$. (The case for $x_1\in X_2$ is analogous). Then for every 
$x\in X_2$, $(x,y_2)\in X_2\times Y_2$ and, since $X_1\times Y_1 \preceq^S_i X_2\times Y_2$, there is some $(x_1,y_1)\in X_1\times Y_1$ s.t.\ $(x_1,y_1)\leq_i (x,y_2)$, 
which implies that $y_2\leq y_1$. Thus, there is some $y_1\in Y_1$ s.t.\ $y_2\leq y_1$, and so $Y_2\preceq^H_L Y_1$. The proof that $X_1\preceq^S_L X_2$ is
similar, and so $X_1\times Y_1 \preceq^A_i X_2\times Y_2$.
\end{proof}

By Lemma~\ref{prop:smyth:iff:alcantara} it immediately follows that an operator ${\cal L}^2\rightarrow \wp({\cal L})\times  \wp({\cal L})$ is 
$\preceq^A_i$-monotonic if and only if the corresponding operator ${\cal L}^2\rightarrow \wp({\cal L}^2)$ (obtained by taking the Cartesian product of the lower and 
upper bound) is $\preceq^S_i$-monotonic.

Next, we show that when an ndao is deterministic, in the sense that ${\cal O}(x,y)$ is a singleton for every $x,y\in {\cal L}$, an ndao reduces to an approximation operator. In other words, our notion of an ndao is a faithful generalization of a deterministic approximation operator.

\begin{proposition}
\label{prop:deterministic:aft}
Let an ndao ${\cal O}:{\cal L}^2\rightarrow \wp({\cal L})\times \wp({\cal L})$ be given s.t.\ ${\cal O}(x,y)$ is a pair of singleton sets for every $x,y\in{\cal L}$. Then ${\cal O}^{\sf AFT}$ 
defined by ${\cal O}^{\sf AFT}(x,y)=(w,z)$ where ${\cal O}(x,y)=(\{w\},\{z\})$, is an approximation operator. 
\end{proposition}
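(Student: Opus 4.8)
The plan is to verify directly the two defining conditions of an \exactcompliant approximation operator from Definition~\ref{def:approx-notions}: that ${\cal O}^{\sf AFT}$ is $\leq_i$-monotonic, and that it is \exactcompliant in the sense ${\cal O}^{\sf AFT}_l(x,x)={\cal O}^{\sf AFT}_u(x,x)$ for every $x\in{\cal L}$. Throughout I write ${\cal O}(x,y)=(\{w\},\{z\})$, which is legitimate because the hypothesis guarantees both components are singletons, so $w$ and $z$ are uniquely determined; then by construction ${\cal O}^{\sf AFT}(x,y)=(w,z)$, with ${\cal O}^{\sf AFT}_l(x,y)=w$ and ${\cal O}^{\sf AFT}_u(x,y)=z$. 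The whole argument reduces to the observation that, restricted to singleton sets, the set-based orders $\preceq^S_L$ and $\preceq^H_L$ of Definition~\ref{def:Smith-preorder} collapse to the underlying order $\leq$ on ${\cal L}$.

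For exactness I would invoke the \exactcompliant property of the ndao ${\cal O}$: for every $x$ we have ${\cal O}(x,x)={\cal O}_l(x,x)\times{\cal O}_l(x,x)$, so in particular ${\cal O}_u(x,x)={\cal O}_l(x,x)$. Writing ${\cal O}(x,x)=(\{w\},\{z\})$, this equality of the two components forces $\{z\}=\{w\}$, hence $w=z$, which is exactly ${\cal O}^{\sf AFT}_l(x,x)={\cal O}^{\sf AFT}_u(x,x)$.

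For $\leq_i$-monotonicity, suppose $(x_1,y_1)\leq_i(x_2,y_2)$ and write ${\cal O}(x_i,y_i)=(\{w_i\},\{z_i\})$ for $i\in\{1,2\}$. By the $\preceq^A_i$-monotonicity of the ndao we get $(\{w_1\},\{z_1\})\preceq^A_i(\{w_2\},\{z_2\})$, which by definition means $\{w_1\}\preceq^S_L\{w_2\}$ and $\{z_2\}\preceq^H_L\{z_1\}$. Unpacking Definition~\ref{def:Smith-preorder} on singletons, $\{w_1\}\preceq^S_L\{w_2\}$ yields $w_1\leq w_2$, while $\{z_2\}\preceq^H_L\{z_1\}$ yields $z_2\leq z_1$. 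These are precisely the two conditions $w_1\leq w_2$ and $z_1\geq z_2$ that, by Definition~\ref{def:bilattice}, characterize $(w_1,z_1)\leq_i(w_2,z_2)$, i.e.\ ${\cal O}^{\sf AFT}(x_1,y_1)\leq_i{\cal O}^{\sf AFT}(x_2,y_2)$.

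I expect no genuine obstacle: the statement is essentially bookkeeping, confirming that the singleton case of each set-based notion degenerates to its element-wise counterpart. The only spot demanding care is the direction of the Hoare order in the upper-bound coordinate, since $\leq_i$ reverses $\leq$ in its second component; one must check that $\{z_2\}\preceq^H_L\{z_1\}$ translates into $z_2\leq z_1$ (equivalently $z_1\geq z_2$) and not its converse, as this is where a direction slip would break the argument.
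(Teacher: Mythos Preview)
Your proof is correct and follows essentially the same approach as the paper: verify exactness from the \exactcompliant property of ${\cal O}$, and verify $\leq_i$-monotonicity by observing that the set-based orders collapse to $\leq$ on singletons. The only cosmetic difference is that the paper routes the monotonicity step through Lemma~\ref{prop:smyth:iff:alcantara} (converting $\preceq^A_i$ to $\preceq^S_i$ before specializing to singletons), whereas you unpack $\preceq^A_i$ directly into its Smyth and Hoare components; your path is slightly more direct but the substance is the same.
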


\begin{proof}
We have to show that ${\cal O}$ satisfies $\leq_i$-monotonicity and for every $x\in{\cal L}$, ${\cal O}^{\sf AFT}_l(x,x)={\cal O}^{\sf AFT}_u(x,x)$ (according to Definition \ref{def:approx-notions}).
We first show $\leq_i$-monotonicity. Suppose $(x_1,y_1)\leq_i (x_2,y_2)$. Then by $\preceq^A_i$-monotonicity of ${\cal O}$ and Lemma~\ref{prop:smyth:iff:alcantara}, ${\cal O}(x_1,y_1)\preceq^S_i {\cal O}(x_2,y_2)$. Thus, for every $(w_2,z_2)\in{\cal O}(x_2,y_2)$, there is some $(w_1,z_1)\in{\cal O}(x_1,y_1)$ such that $(w_1,z_1)\leq_i (w_2,z_2)$. Since both ${\cal O}(x_1,y_1)$ and ${\cal O}(x_2,y_2)$ are pairs of singleton sets, we obtain ${\cal O}(x_1,y_1)\leq_i {\cal O}(x_2,y_2)$.
For the second condition, notice that ${\cal O}_u(x,x)={\cal O}_l(x,x)$ in view of ${\cal O}$ being \exactcompliant. This implies that ${\cal O}^{\sf AFT}(x,x)=({\cal O}^{\sf AFT}_u(x,x),{\cal O}^{\sf AFT}_l(x,x))$.
\end{proof}

The following lemma shows that an ndao is composed of a $\preceq^S_L$-monotonic lower-bound operator and a $\preceq^S_L$-anti-monotonic upper-bound operator:

\begin{lemma}
\label{lemma:mon:op:comp:of:mon:and:antimon}
An operator ${\cal O}:{\cal L}^2\rightarrow \wp({\cal L})\times \wp({\cal L})$
is $\preceq^A_i$-monotonic iff for every $x,y \in {\cal L}$, ${\cal O}_l(\cdot,y)$ is $\preceq^S_L$-monotonic, ${\cal O}_l(x,\cdot)$ is $\preceq^S_L$-anti monotonic,  ${\cal O}_u(x,\cdot)$ is $\preceq^H_L$ monotonic and ${\cal O}_u(\cdot,y)$ is $\preceq^H_L$-anti monotonic. 
\end{lemma}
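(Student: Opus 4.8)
The plan is to prove both directions by reducing the two-variable $\preceq^A_i$-monotonicity condition to one-variable comparisons, exploiting the fact that $\leq_i$ orders the first coordinate by $\leq$ but the second coordinate by $\geq$, while $\preceq^A_i$ compares lower bounds by $\preceq^S_L$ and upper bounds by the \emph{reversed} $\preceq^H_L$. Throughout, I would use freely that $\preceq^S_L$ and $\preceq^H_L$ are preorders (Remark~\ref{rem:properties:of:smyth}), in particular that they are transitive.

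For the direction $[\Rightarrow]$, assume ${\cal O}$ is $\preceq^A_i$-monotonic and obtain each of the four one-variable properties by instantiating the hypothesis at two pairs differing in a single coordinate. For $\preceq^S_L$-monotonicity of ${\cal O}_l(\cdot,y)$: given $x_1\leq x_2$, note $(x_1,y)\leq_i (x_2,y)$, so ${\cal O}(x_1,y)\preceq^A_i {\cal O}(x_2,y)$, whose first component is exactly ${\cal O}_l(x_1,y)\preceq^S_L {\cal O}_l(x_2,y)$. For $\preceq^S_L$-anti-monotonicity of ${\cal O}_l(x,\cdot)$: given $y_1\leq y_2$, note $(x,y_2)\leq_i (x,y_1)$ (the second coordinate being compared by $\geq$), whence ${\cal O}_l(x,y_2)\preceq^S_L {\cal O}_l(x,y_1)$. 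The two statements about ${\cal O}_u$ follow identically, now reading off the second component of $\preceq^A_i$, which reverses the roles: $(x_1,y)\leq_i (x_2,y)$ yields ${\cal O}_u(x_2,y)\preceq^H_L {\cal O}_u(x_1,y)$ (anti-monotonicity in the first argument), and $(x,y_2)\leq_i (x,y_1)$ yields ${\cal O}_u(x,y_1)\preceq^H_L {\cal O}_u(x,y_2)$ (monotonicity in the second argument).

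For the direction $[\Leftarrow]$, assume the four one-variable properties and take any $(x_1,y_1)\leq_i (x_2,y_2)$, i.e.\ $x_1\leq x_2$ and $y_2\leq y_1$. I would factor each required comparison through the intermediate pair obtained by changing one coordinate at a time. For the lower bounds, chain ${\cal O}_l(x_1,y_1)\preceq^S_L {\cal O}_l(x_2,y_1)$ (monotonicity in the first argument, using $x_1\leq x_2$) with ${\cal O}_l(x_2,y_1)\preceq^S_L {\cal O}_l(x_2,y_2)$ (anti-monotonicity in the second argument, using $y_2\leq y_1$), and conclude ${\cal O}_l(x_1,y_1)\preceq^S_L {\cal O}_l(x_2,y_2)$ by transitivity of $\preceq^S_L$. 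Symmetrically, for the upper bounds chain ${\cal O}_u(x_2,y_2)\preceq^H_L {\cal O}_u(x_1,y_2)$ (anti-monotonicity in the first argument) with ${\cal O}_u(x_1,y_2)\preceq^H_L {\cal O}_u(x_1,y_1)$ (monotonicity in the second argument) to obtain ${\cal O}_u(x_2,y_2)\preceq^H_L {\cal O}_u(x_1,y_1)$ by transitivity of $\preceq^H_L$. Together these two chains are precisely the definition of ${\cal O}(x_1,y_1)\preceq^A_i {\cal O}(x_2,y_2)$.

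The step demanding the most care is simply getting the variances right: the $\leq_i$-ordering flips the second coordinate, $\preceq^A_i$ uses $\preceq^S_L$ on the first component but the reversed $\preceq^H_L$ on the second, and the two arguments play monotone versus anti-monotone roles in opposite ways for ${\cal O}_l$ and ${\cal O}_u$. Once these four sign conventions are pinned down, both directions are routine bookkeeping, with transitivity of the two preorders doing the only substantive work in $[\Leftarrow]$.
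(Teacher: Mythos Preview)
Your proof is correct and follows essentially the same approach as the paper's: both directions are handled by freezing one coordinate at a time, and the $[\Leftarrow]$ direction chains two one-variable steps via transitivity of the preorders. The only cosmetic difference is that you route the lower-bound chain through the intermediate point $(x_2,y_1)$ whereas the paper routes it through $(x_1,y_2)$; both choices work equally well.
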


\begin{proof}
$[\Rightarrow]$: We first show the $\preceq^S_L$-anti monotonicity of ${\cal O}_l(x,\cdot)$. 
Consider some $y',y\in {\cal L}$ s.t.\ $y'\leq y$. Then $(x,y)\leq_i (x,y')$ and thus ${\cal O}(x,y)\preceq^A_i {\cal O}(x,y')$, which in particular means that ${\cal O}_l(x,y)\preceq^S_L {\cal O}_l(x,y')$.

We now show the case for ${\cal O}_u(x,.)$. Consider some $x,y',y\in {\cal L}$ s.t.\ $y'\leq y$. Then $(x,y)\leq_i (x,y')$ and thus ${\cal O}(x,y)\preceq^A_i {\cal O}(x,y')$. This means that ${\cal O}_u(x,y)\preceq^H_L {\cal O}_u(x,y')$.

$\preceq^S_L$ monotonicity of ${\cal O}_l(\cdot,y)$ and $\preceq^H_L$-monotonicity of ${\cal O}_u(\cdot,y)$ are shown similarly. 

\smallskip\noindent$[\Leftarrow]$: Suppose that for every $x,y \in {\cal L}$, (1)~${\cal O}_l(\cdot,y)$ is $\preceq^S_L$-monotonic, 
(2)~${\cal O}_l(x,\cdot)$ is $\preceq^S_L$-anti monotonic, (3)~${\cal O}_u(x,\cdot)$ is $\preceq^H_L$ monotonic, and 
(4)~${\cal O}_u(\cdot,y)$ is $\preceq^H_L$-anti monotonic. Consider now some $x_1,x_2,y_1,y_2\in {\cal L}$ s.t.\ $x_1\leq x_2$ and $y_2\leq y_1$, i.e.\ $(x_1,y_1)\leq_i (x_2,y_2)$. With (1), ${\cal O}_l(x_1,y_2)\preceq^S_L {\cal O}_l(x_2,y_2)$. With (2), ${\cal O}_l(x_1,y_1)\preceq^S_L {\cal O}_l(x_1,y_2)$. With transitivity, ${\cal O}_l(x_1,y_1)\preceq^S_L {\cal O}_l(x_2,y_2)$. The case for the upper bound (namely,  ${\cal O}_u(x_2,y_2)\preceq^H_L {\cal O}_u(x_1,y_1)$) is similar.
 \end{proof}
 
\begin{remark}
\label{rem:mon-in-sym-operators}
By Lemma \ref{lemma:mon:op:comp:of:mon:and:antimon}, we see that that for a symmetric operator ${\cal O}$, ${\cal O}_l(\cdot,z)$ is both $\preceq^S_L$-monotonic and $\preceq^H_L$-monotonic, and ${\cal O}_l(z,\cdot)$ is both $\preceq^S_L$-anti monotonic and $\preceq^H_L$-anti monotonic. This follows immediately from the fact that since ${\cal O}$ is symmetric, ${\cal O}_l(x,y)={\cal O}_u(y,x)$ for any $x,y\in {\cal L}$. 
 \end{remark}

The last remark means that for symmetric operators, the $\preceq^A_i$-monotonicity reduces to a far simpler order, obtained by comparing both the lower and upper bounds as follows:  
one set $X$ is smaller than another set $Y$, if for every $x\in X$ there is a $\leq$-larger element in $Y$, and for every 
$y\in Y$ there is  a $\leq$-smaller element in $X$. 

It seems that the larger part of useful non-deterministic approximation operators are symmetric. However, there are also useful non-symmetric ndaos. 
In the remainder of this section, we provide an example of a non-symmetric ndao that approximates $\IC_{\cal P}$. This operator is inspired by the ultimate 
semantics for normal (non-disjunctive) logic programs introduced by Denecker, Marek and Truszczy\'nski~\cite{denecker2002ultimate}. First, we recall the 
ultimate semantics for normal (non-disjunctive) logic programs:

\begin{definition}%
\label{def:ultimate:operator:DMT}
Given a normal logic program ${\cal P}$, we define:\:\footnote{We use the abbreviation {\sf DMT} for Denecker, Marek and Truszczy{\'n}ski to denote this operator, 
as to not overburden the use of ${\cal IC}^{\cal U}_{\cal P}$. Indeed, we will later see that the ultimate operator for non-disjunctive logic programs generalizes to an 
ndao that is different from the ultimate operator ${\cal IC}^{\cal U}_{\cal P}$.}
\[{\cal IC}_{\cal P}^{{\sf DMT},l}(x,y)=\bigcap_{x\subseteq z\subseteq y} \{\alpha\mid \alpha\leftarrow \phi\in {\cal P} \mbox{ and }z(\phi)={\sf T} \},\]
\[{\cal IC}_{\cal P}^{{\sf DMT},u}(x,y)=\bigcup_{x\subseteq z\subseteq y} \{\alpha\mid \alpha\leftarrow \phi\in {\cal P} \mbox{ and }z(\phi)={\sf T}\}.\]
The {\em ultimate approximation operator\/} is then defined in~\cite{denecker2002ultimate} by:
\[{\cal IC}_{\cal P}^{{\sf DMT}}(x,y)=({\cal IC}_{\cal P}^{{\sf DMT},l}(x,y),\:{\cal IC}_{\cal P}^{{\sf DMT},u}(x,y)).\]
\end{definition}

To generalize this operator to an ndao, we proceed as follows: we start by generalizing the idea behind ${\cal IC}_{\cal P}^{{\sf DMT},l}$ to an operator gathering the heads 
of rules that are true in every interpretation $z$ in the interval $[x,y]$:
\[{\cal HD}^{{\sf DMT},l}_{\cal P}(x,y)=\bigcap_{x\subseteq z\subseteq y} \HR_{\cal P}(z).\] 
The immediate consequence operator is then defined as usual, that is: by taking all interpretations that only contain atoms in ${\cal HD}^{{\sf DMT},l}_{\cal P}(x,y)$ and contain at least one member of every head $\Delta\in {\cal HD}^{{\sf DMT},l}_{\cal P}(x,y)$:
\[{\cal IC}_{\cal P}^{{\sf DMT},l}(x,y)=\{z\subseteq \bigcup{\cal HD}^{{\sf DMT},l}_{\cal P}(x,y) \mid \forall \Delta\in{\cal HD}^{{\sf DMT},l}_{\cal P}(x,y)\neq\emptyset: z\cap \Delta\neq \emptyset\}.\] 
Notice that for a non-disjunctive program ${\cal P}$, $ {\cal HD}^{{\sf DMT},l}_{\cal P}(x,y)={\cal IC}_{\cal P}^{{\sf DMT},l}(x,y)={\cal IC}_{\cal P}^{{\sf DMT},l}(x,y)$.
The upper bound operator is constructed entirely analogously, but now the heads of rules with bodies that are true in at least one interpretation in $[x,y]$ are gathered:
\[{\cal HD}^{{\sf DMT},u}_{\cal P}(x,y)=\bigcup_{x\subseteq z\subseteq y} \HR_{\cal P}(z)\}\] 
${\cal IC}_{\cal P}^{{\sf DMT},u}$ is defined in an identical way to ${\cal IC}_{\cal P}^{{\sf DMT},l}$, by just replacing 
${\cal HD}^{{\sf DMT},l}_{\cal P}(x,y)$ by ${\cal HD}^{{\sf DMT},u}_{\cal P}(x,y)$. Finally, the ${\sf DMT}$-ndao is defined as:
\[{\cal IC}_{\cal P}^{{\sf DMT}}(x,y)={\cal IC}_{\cal P}^{{\sf DMT},l}(x,y)\times {\cal IC}_{\cal P}^{{\sf DMT},u}(x,y).\]

We observe that ${\cal IC}_{\cal P}^{{\sf DMT}}$ is an ndao that approximates $IC_{\cal P}$:

\begin{proposition}
For any disjunctive logic program ${\cal P}$, ${\cal IC}_{\cal P}^{{\sf DMT}}$ is an ndao that approximates $IC_{\cal P}$.
\end{proposition}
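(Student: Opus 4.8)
The plan is to verify directly the two defining conditions of an ndao from Definition~\ref{def:ndao}—exact-compliance and $\preceq^A_i$-monotonicity—together with the approximation claim ${\cal O}_l(x,x)=IC_{\cal P}(x)$, reducing the monotonicity check to Lemma~\ref{lemma:mon:op:comp:of:mon:and:antimon}. The bookkeeping device throughout is the \emph{transversal} construction: for a set of heads ${\cal H}\subseteq\wp({\cal A}_{\cal P})$, write $T({\cal H})=\{z\subseteq\bigcup{\cal H}\mid \forall\Delta\in{\cal H},\ z\cap\Delta\neq\emptyset\}$, so that ${\cal IC}_{\cal P}^{{\sf DMT},l}(x,y)=T({\cal HD}^{{\sf DMT},l}_{\cal P}(x,y))$ and ${\cal IC}_{\cal P}^{{\sf DMT},u}(x,y)=T({\cal HD}^{{\sf DMT},u}_{\cal P}(x,y))$. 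Since every head $\Delta$ is a nonempty disjunction, $\bigcup{\cal H}\in T({\cal H})$, so $T({\cal H})$ is never empty and the codomain requirement of Definition~\ref{def:ndao} is met.

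I would dispatch exactness and approximation first, as they are immediate. For the singleton interval $[x,x]=\{x\}$ both ${\cal HD}^{{\sf DMT},l}_{\cal P}(x,x)$ and ${\cal HD}^{{\sf DMT},u}_{\cal P}(x,x)$ collapse to $\HR_{\cal P}(x)$, whence ${\cal IC}_{\cal P}^{{\sf DMT},l}(x,x)={\cal IC}_{\cal P}^{{\sf DMT},u}(x,x)=T(\HR_{\cal P}(x))$; comparing with Definition~\ref{def:operator:disj:lp}, this common value is exactly $IC_{\cal P}(x)$. This yields both ${\cal O}(x,x)={\cal O}_l(x,x)\times{\cal O}_l(x,x)$ (exact-compliance) and ${\cal O}_l(x,x)=IC_{\cal P}(x)$ (approximation).

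For monotonicity I would isolate two independent facts. First, a purely combinatorial lemma on $T$: if ${\cal H}_1\subseteq{\cal H}_2$ then both $T({\cal H}_1)\preceq^S_L T({\cal H}_2)$ and $T({\cal H}_1)\preceq^H_L T({\cal H}_2)$. The Smyth direction is witnessed by sending $z_2\in T({\cal H}_2)$ to $z_2\cap\bigcup{\cal H}_1$, which still hits every $\Delta\in{\cal H}_1$ and is contained in $z_2$; the Hoare direction is witnessed by extending $z_1\in T({\cal H}_1)$ with one chosen element from each head of ${\cal H}_2$ that $z_1$ misses, producing a superset lying in $T({\cal H}_2)$. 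Second, a monotonicity-of-index-set observation: whenever $x_1\subseteq x_2$ and $y_2\subseteq y_1$, the interval $[x_2,y_2]$ is contained in $[x_1,y_1]$, so intersecting over the smaller index set can only enlarge ${\cal HD}^{{\sf DMT},l}_{\cal P}$ while unioning over it can only shrink ${\cal HD}^{{\sf DMT},u}_{\cal P}$; concretely, ${\cal HD}^{{\sf DMT},l}_{\cal P}(x_1,y_1)\subseteq{\cal HD}^{{\sf DMT},l}_{\cal P}(x_2,y_2)$ and ${\cal HD}^{{\sf DMT},u}_{\cal P}(x_2,y_2)\subseteq{\cal HD}^{{\sf DMT},u}_{\cal P}(x_1,y_1)$.

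Composing these two facts delivers the four hypotheses of Lemma~\ref{lemma:mon:op:comp:of:mon:and:antimon}. Holding $y$ fixed and growing $x$ shrinks the interval, enlarges the lower head-set, and via the $T$-lemma makes ${\cal O}_l(\cdot,y)$ $\preceq^S_L$-monotone; holding $x$ fixed and growing $y$ enlarges the interval, shrinks the lower head-set, giving $\preceq^S_L$-anti-monotonicity of ${\cal O}_l(x,\cdot)$; the analogous arguments for the union-based ${\cal HD}^{{\sf DMT},u}_{\cal P}$ together with the Hoare comparison give $\preceq^H_L$-monotonicity of ${\cal O}_u(x,\cdot)$ and $\preceq^H_L$-anti-monotonicity of ${\cal O}_u(\cdot,y)$. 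Lemma~\ref{lemma:mon:op:comp:of:mon:and:antimon} then yields $\preceq^A_i$-monotonicity, completing the verification. I expect the transversal lemma—in particular its Hoare half, where a given transversal must be \emph{extended} rather than trimmed—to be the one genuinely non-routine step; the index-set inclusions and the exactness check are formal.
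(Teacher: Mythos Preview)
Your proof is correct and follows essentially the same strategy as the paper: reduce $\preceq^A_i$-monotonicity to set-inclusions on the head-sets ${\cal HD}^{{\sf DMT},l}_{\cal P}$ and ${\cal HD}^{{\sf DMT},u}_{\cal P}$ via the interval containment $[x_2,y_2]\subseteq[x_1,y_1]$, and observe that exactness/approximation is immediate from $[x,x]=\{x\}$.

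The difference is one of explicitness. The paper stops at the ${\cal HD}$ inclusions and asserts the Smyth/Hoare orderings without further comment (in fact, as written it states these orderings on ${\cal HD}$ rather than on ${\cal IC}$, leaving the passage to the transversal sets implicit). You instead isolate and prove the transversal lemma---${\cal H}_1\subseteq{\cal H}_2$ implies $T({\cal H}_1)\preceq^S_L T({\cal H}_2)$ and $T({\cal H}_1)\preceq^H_L T({\cal H}_2)$---which is precisely the bridge needed to pass from head-set inclusions to the orderings on ${\cal IC}^{{\sf DMT}}_{\cal P}$. Your route through Lemma~\ref{lemma:mon:op:comp:of:mon:and:antimon} (checking the four componentwise monotonicities separately) is a clean decomposition that the paper does not use; the paper works directly with a single consistent pair $(x_1,y_1)\leq_i(x_2,y_2)$. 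Both are fine, but your version is more self-contained and arguably fills a small gap in the paper's presentation.
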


\begin{proof}
It is clear that ${\cal IC}_{\cal P}^{{\sf DMT}}$ approximates $IC_{\cal P}$, as ${\cal HD}^{{\sf DMT},l}_{\cal P}(x,x)={\cal HD}^{{\sf DMT},u}_{\cal P}(x,x)=IC^{\sf DMT}_{\cal P}(x,x)$ 
for any $x\subseteq {\cal A}_{\cal P}$. We now show it is $\preceq^A_i$-monotonic. Consider some $x_1\subseteq x_2\subseteq y_2\subseteq y_1$. 
We show that ${\cal HD}_{\cal P}^{{\sf DMT},l}(x_1,y_1)\subseteq {\cal HD}_{\cal P}^{{\sf DMT},l}(x_2,y_2)$ and 
${\cal HD}_{\cal P}^{{\sf DMT},u}(x_2,y_2)\subseteq {\cal HD}_{\cal P}^{{\sf DMT},u}(x_1,y_1)$, which immediately implies that
${\cal HD}^{{\sf DMT},l}_{\cal P}(x_1,y_1)\preceq^S_L {\cal HD}^{{\sf DMT},l}_{\cal P}(x_2,y_2)$ and 
${\cal HD}^{{\sf DMT},u}_{\cal P}(x_2,y_2)\preceq^H_L {\cal HD}^{{\sf DMT},u}_{\cal P}(x_1,y_1)$. 
To see that ${\cal HD}_{\cal P}^{{\sf DMT},l}(x_1,y_1)\subseteq {\cal HD}_{\cal P}^{{\sf DMT},l}(x_2,y_2)$, consider some $\Delta\in {\cal HD}_{\cal P}^{{\sf DMT},l}(x_1,y_1)$. 
Then for every $z\in [x_1,y_2]$, there is some $\bigvee\Delta\leftarrow \phi$ s.t.\ $z(\phi)={\sf T}$. Since $[x_2,y_2]\subseteq [x_1,y_1]$, also 
$\Delta\in {\cal HD}_{\cal P}^{{\sf DMT},l}(x_2,y_2)$. The other claim is analogous.
\end{proof}

Notice that the operators ${\cal HD}^{{\sf DMT},l}_{\cal P}(x,y)$ and ${\cal HD}^{{\sf DMT},u}_{\cal P}(x,y)$ are only defined for consistent interpretations $(x,y)$, 
and thus ${\cal IC}_{\cal P}^{\sf DMT}$ it is not a symmetric operator.

\begin{example}
Consider again the program ${\cal P}=\{p\lor q\leftarrow \lnot q\}$ from Example~\ref{example:operator:disj:lp-2-b}. Then: 
\begin{description}
\item [] ${\cal IC}^{{\sf DMT},l}_{\cal P}$ behaves as follows:
\begin{itemize}
\item If $q\in y$ then ${\cal HD}^{{\sf DMT},l}_{\cal P}(x,y)=\emptyset$ and thus ${\cal IC}^{{\sf DMT},l}_{\cal P}(x,y)=\emptyset$.
\item If $q\not\in y$ then ${\cal HD}^{{\sf DMT},l}_{\cal P}(x,y)=\{\{p,q\}\}$ and thus ${\cal IC}^{{\sf DMT},l}_{\cal P}(x,y)=\{\{p\},\{q\},\{p,q\}\}$.
\end{itemize}
\item [] ${\cal IC}^{{\sf DMT},u}_{\cal P}$ behaves as follows:
\begin{itemize}
\item If $q\in x$ then ${\cal HD}^{{\sf DMT},u}_{\cal P}(x,y)=\emptyset$ and thus ${\cal IC}^{{\sf DMT},u}_{\cal P}(x,y)=\emptyset$.
\item If $q\not\in x$ then ${\cal HD}^{{\sf DMT},u}_{\cal P}(x,y)=\{\{p,q\}\}$ and thus ${\cal IC}^{{\sf DMT},u}_{\cal P}(x,y)=\{\{p\},\{q\},\{p,q\}\}$.
\end{itemize}
\end{description}
\end{example}

\section{Theory of Non-Deterministic AFT}
\label{sec:theory:of:ndao}

We now develop a general theory of approximation of non-deterministic operators. First, in Section~\ref{sec:ndao-consistency} we show
that the notion of consistency from deterministic AFT~\cite{denecker2000approximations} 
can be generalized to the non-deterministic setting and holds for any ndao. Then, in Section~\ref{sec:kripke:kleene} we introduce fixpoint semantics for ndaos. 
It is shown that in general such a semantics does not preserve the uniqueness or existence properties from the deterministic setting. 
In Section~\ref{sec:kripke:kleene:state} we consider Kripke-Kleene states that do not have these shortcomings.

\subsection{Consistency of Approximations }
\label{sec:ndao-consistency}

Recall that a \emph{pair} $(x,y)$ is consistent if $x\leq y$, i.e.,\ if it approximates at least one element $z$ ($x\leq z\leq y$). Consistency of a deterministic approximating 
\emph{operator\/} means that a consistent input, i.e.\ an input that approximates at least one element, gives rise to a consistent output, i.e.\ an output that approximates 
at least one element. For deterministic operators, consistency of any approximating operator is guaranteed by Proposition~9 in~\cite{denecker2000approximations}. 

This intuition is generalized straightforwardly to non-deterministic approximation operators by requiring that whenever the operator is applied to a consistent pair, 
there is at least one lower bound which is smaller than at least one upper bound, i.e., there is at least one element approximated by the operator. Formally, this 
comes down to the following definition:

\begin{definition}
Given a lattice $L=\tup{{\cal L},\leq}$ and an ndao ${\cal O}$ on ${\cal L}^2$, we say that
${\cal O}$ is \emph{consistent\/} if for every $x,y\in {\cal L}$ with $x\leq y$, there is some $(w,z)\in{\cal O}(x,y)$ with $w\leq z$.\footnote{To avoid clutter, we abuse 
the notation and write $(w,z)\in {\cal O}(x,y)$ to denote that $w\in {\cal O}_l(x,y)$ and $z\in{\cal O}_u(x,y)$.} 
\end{definition}

We now show the consistency of every ndao. 

\begin{proposition}
\label{prop:weak:consistency}
Any ndao ${\cal O}$ is consistent.
\end{proposition}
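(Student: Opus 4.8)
The plan is to reduce consistency of ${\cal O}$ to its behaviour on the \emph{exact} pair $(x,x)$, exploiting both defining properties of an ndao: being \exactcompliant and being $\preceq^A_i$-monotonic. The crucial observation is that a consistent pair $(x,y)$ always lies $\leq_i$-below the exact pair $(x,x)$. Indeed, $(x,y)\leq_i (x,x)$ unfolds to $x\leq x$ together with $y\geq x$, both of which hold precisely because $x\leq y$. (By symmetry, $(x,y)\leq_i (y,y)$ would serve equally well, so the argument is not biased towards either endpoint.)

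First I would apply $\preceq^A_i$-monotonicity to $(x,y)\leq_i (x,x)$ to obtain ${\cal O}(x,y)\preceq^A_i {\cal O}(x,x)$. Next, using that ${\cal O}$ is \exactcompliant, I would rewrite ${\cal O}(x,x)$ as $A\times A$, where $A={\cal O}_l(x,x)$. Unfolding the definition of $\preceq^A_i$ (recall that $(X_1,Y_1)\preceq^A_i (X_2,Y_2)$ iff $X_1\preceq^S_L X_2$ and $Y_2\preceq^H_L Y_1$) then yields two facts simultaneously: the lower bounds satisfy ${\cal O}_l(x,y)\preceq^S_L A$ in the Smyth order, and the upper bounds satisfy $A\preceq^H_L {\cal O}_u(x,y)$ in the Hoare order.

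The final step is a sandwich argument centred on a single witness from the exact output. Since the codomain of ${\cal O}$ is $\wp({\cal L})\setminus\{\emptyset\}$ in each component, the set $A={\cal O}_l(x,x)$ is nonempty, so I may fix some $a\in A$. The Hoare comparison $A\preceq^H_L {\cal O}_u(x,y)$ then supplies an upper bound $z\in {\cal O}_u(x,y)$ with $a\leq z$, while the Smyth comparison ${\cal O}_l(x,y)\preceq^S_L A$ supplies a lower bound $w\in {\cal O}_l(x,y)$ with $w\leq a$. By transitivity of $\leq$ we get $w\leq a\leq z$, hence $w\leq z$, and $(w,z)\in {\cal O}(x,y)$ in the sense of the abuse of notation introduced in the footnote, which is exactly the consistency witness required.

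The only point demanding care -- and hence the main obstacle, such as it is -- is keeping the two set-orders straight. One must invoke $\preceq^S_L$ in its ``for every element of the right-hand set there is a $\leq$-smaller element of the left-hand set'' reading in order to descend from $a$ to $w$, and $\preceq^H_L$ in its ``for every element of the left-hand set there is a $\leq$-larger element of the right-hand set'' reading in order to ascend from $a$ to $z$. It is exactly this pairing of directions, which is precisely what $\preceq^A_i$ bundles together, that allows a \emph{single} exact witness $a$ to be squeezed between a genuine lower bound and a genuine upper bound of ${\cal O}(x,y)$; the nonemptiness of $A$ guaranteed by the ndao codomain is the small additional ingredient that makes the witness available.
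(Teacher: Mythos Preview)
Your proof is correct and follows essentially the same approach as the paper: compare the consistent pair $(x,y)$ to the exact pair $(x,x)$ via $\leq_i$, use $\preceq^A_i$-monotonicity and exactness, and sandwich a witness from the diagonal output between a lower and an upper bound. The only cosmetic difference is that the paper invokes Lemma~\ref{prop:smyth:iff:alcantara} to rephrase $\preceq^A_i$ as $\preceq^S_i$ and then extracts the pair $(z_1,z_2)\leq_i(w,w)$ in one step, whereas you unfold $\preceq^A_i$ directly into its Smyth and Hoare components and extract $w$ and $z$ separately; the underlying argument is identical.
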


\begin{proof}
Consider some $x,y\in {\cal L}$ s.t.\ $x\leq y$. Then clearly, $(x,y)\leq_i (x,x)$ and thus, with $\preceq^A_i$-monotonicity of ${\cal O}$ and Lemma~\ref{prop:smyth:iff:alcantara}, 
${\cal O}(x,y)\preceq^S_i{\cal O}(x,x)$. Since ${\cal O}(x,x)={\cal O}_l(x,x)\times{\cal O}_{u}(x,x)$  (in view of the exactness of ${\cal O}$), for any $w \in {\cal O}_l(x,x)$, 
we have that $(w,w)\in {\cal O}(x,x)$. Thus, since ${\cal O}(x,y)\preceq^S_i{\cal O}(x,x)$, there is some $(z_1,z_2)\in {\cal O}(x,y)$ s.t.\ $(z_1,z_2)\leq_i (w,w)$, i.e.\ $z_1\leq w\leq z_2$.
\end{proof}

For symmetric operators, a stronger notion of consistency holds, namely for every $x\leq y$, it holds that: ${\cal O}_l(x,y)\preceq^S_L {\cal O}_u(x,y)$ 
and ${\cal O}_l(x,y)\preceq^H_L {\cal O}_u(x,y)$. Intuitively, this means that for \emph{every} upper bound we can find a lower bound below the upper bound 
in question, and likewise, for every lower bound we can find an upper bound above the lower bound in question. Thus, in symmetric operators, 
\emph{every\/} lower bound respectively upper bound approximates an element.

\begin{proposition}
Let a symmetric ndao ${\cal O}$ be given. Then for every $x\leq y$, ${\cal O}_l(x,y)\preceq^S_L {\cal O}_u(x,y)$ and ${\cal O}_l(x,y)\preceq^H_L {\cal O}_u(x,y)$.
\end{proposition}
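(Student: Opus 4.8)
The plan is to reduce both inequalities to pure statements about the lower-bound operator ${\cal O}_l$ of the symmetric ndao, and then to establish them by varying one coordinate at a time and chaining preorder comparisons. First I would use symmetry to rewrite the right-hand side: since ${\cal O}$ is symmetric, ${\cal O}_u(x,y)={\cal O}_l(y,x)$, so the $\preceq^H_L$-claim becomes ${\cal O}_l(x,y)\preceq^H_L {\cal O}_l(y,x)$ and the $\preceq^S_L$-claim becomes ${\cal O}_l(x,y)\preceq^S_L {\cal O}_l(y,x)$. Both now compare two values of a single operator, which is what makes the monotonicity machinery applicable.

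Next I would connect ${\cal O}_l(x,y)$ to ${\cal O}_l(y,x)$ through the intermediate point ${\cal O}_l(y,y)$, changing one coordinate per step. From $x\leq y$, Remark~\ref{rem:mon-in-sym-operators} tells us that for a symmetric operator ${\cal O}_l(\cdot,y)$ is both $\preceq^S_L$- and $\preceq^H_L$-monotonic, giving ${\cal O}_l(x,y)\preceq^H_L {\cal O}_l(y,y)$ and ${\cal O}_l(x,y)\preceq^S_L {\cal O}_l(y,y)$; and that ${\cal O}_l(y,\cdot)$ is both $\preceq^S_L$- and $\preceq^H_L$-anti-monotonic, giving (again from $x\leq y$) ${\cal O}_l(y,y)\preceq^H_L {\cal O}_l(y,x)$ and ${\cal O}_l(y,y)\preceq^S_L {\cal O}_l(y,x)$. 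Using the transitivity of $\preceq^H_L$ and $\preceq^S_L$ (Remark~\ref{rem:properties:of:smyth}) and substituting ${\cal O}_l(y,x)={\cal O}_u(x,y)$ then yields exactly ${\cal O}_l(x,y)\preceq^H_L {\cal O}_u(x,y)$ and ${\cal O}_l(x,y)\preceq^S_L {\cal O}_u(x,y)$.

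The step I expect to be the real crux is recognizing at the outset that symmetry is essential rather than merely convenient. The naive route through exactness and $\preceq^A_i$-monotonicity alone fails: from $(x,y)\leq_i(x,x)$ one gets ${\cal O}_l(x,y)\preceq^S_L {\cal O}_l(x,x)$ and ${\cal O}_u(x,x)\preceq^H_L {\cal O}_u(x,y)$ with ${\cal O}_l(x,x)={\cal O}_u(x,x)$, but these two facts sit in different preorders ($\preceq^S_L$ and $\preceq^H_L$) and do not compose, so they cannot be chained into a single-order comparison between ${\cal O}_l(x,y)$ and ${\cal O}_u(x,y)$. What rescues the argument is precisely that for symmetric operators ${\cal O}_l(\cdot,z)$ carries \emph{both} monotonicities and ${\cal O}_l(z,\cdot)$ \emph{both} anti-monotonicities (Remark~\ref{rem:mon-in-sym-operators}); this collapses the mismatched orders onto the single operator ${\cal O}_l$, which can then be tracked simultaneously under $\preceq^S_L$ and $\preceq^H_L$. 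Once this is in place, the two-step chaining above is routine, and no separate appeal to consistency (Proposition~\ref{prop:weak:consistency}) is required beyond what is already folded into the symmetric monotonicity facts.
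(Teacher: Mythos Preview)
Your proposal is correct and follows essentially the same approach as the paper: use symmetry to rewrite ${\cal O}_u(x,y)$ as ${\cal O}_l(y,x)$, then chain two coordinate-wise (anti-)monotonicity steps of ${\cal O}_l$ through an exact intermediate point, once in each preorder. The only cosmetic difference is that the paper pivots through ${\cal O}_l(x,x)$ (using $(x,y)\leq_i(x,x)$ and $(x,x)\leq_i(y,x)$) rather than through ${\cal O}_l(y,y)$ as you do; both chains are valid and your discussion of why the non-symmetric route cannot be completed is accurate.
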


\begin{proof}
Since $x\leq y$, it holds that $(x,y)\leq_i (x,x)$. Thus, ${\cal O}_l(x,y)\preceq^S_L {\cal O}_l(x,x)$ and (since ${\cal O}_u(x,y)={\cal O}_l(y,x)$), 
${\cal O}_u(x,x)\preceq^S_L {\cal O}_u(x,y)$. Since ${\cal O}_l(x,x)={\cal O}_u(x,x)$, we obtain ${\cal O}_l(x,y)\preceq^S_L {\cal O}_u(x,y)$. The other case is similar. 
\end{proof}

Notice that for non-symmetric operators, the above result might not hold, as we only know that ${\cal O}_l(x,y)\preceq^S_L {\cal O}_l(x,x)$ and 
${\cal O}_u(x,x)\preceq^H_L {\cal O}_u(x,y)$. This is demonstrated by the following example:

\begin{example}
Consider an ndao ${\cal O}$ over ${\cal FOUR}$, defined as follows:

\[ \begin{tabular}{lccl}
\ & ${\cal O}_l(x,y)$ & ${\cal O}_u(x,y)$ \\ \hline
when $x=y$ & $\{{\sf T}\}$ & $\{{\sf T}\}$ \\
when $x\neq y$ & $\{{\sf F},{\sf T}\}$ & $\{{\sf T}\}$
 \end{tabular} \]

It can be easily observed that ${\cal O}$ is a non-deterministic approximation operator: Exactness is clear, and, as $\{{\sf F},{\sf T}\}\preceq^S_L \{{\sf T}\}$, 
$\preceq^A_i$-monotonicity is also immediate. However, for the consistent pair $({\sf U},{\sf T})$, ${\cal O}_l({\sf U},{\sf T})=\{{\sf F},{\sf T}\}\not\preceq^H_L 
{\cal O}_u({\sf U},{\sf T})= \{{\sf T}\}$. 
\end{example}

\subsection{Fixpoint Semantics and Kripke-Kleene Interpretations}
\label{sec:kripke:kleene}

As its name suggests, a primary goal of AFT is to provide fixpoints of operators and their approximations.
In the context of deterministic AFT, fixpoints of an approximation operator are approximations for which 
applying the approximation operator to the lower and upper bound $(x,y)$ give rise to exactly the same upper and lower bound. Furthermore, Denecker, Marek and Truszscy\'nski~\cite{DMT07} 
show that a unique $\leq_i$-least consistent fixpoint of an approximating operator ${\cal O}$ exists and can be constructed by iterating ${\cal O}$, starting from $(\bot,\top)$. This fixpoint 
is termed the \emph{Kripke-Kleene\/} fixpoint. In this section, we look at fixpoint semantics for non-deterministic approximation operators, and show that existence and uniqueness of a 
$\leq_i$-least consistent fixpoint of an ndao are not preserved in the non-deterministic setting. Nevertheless, we will show the usefulness of such fixpoints for, e.g., representing the weakly 
supported semantics.

Recall that an ndao generates, on the basis of a lower bound $x$ and an upper bound $y$, a set of lower bounds $\{x_1,x_2,\ldots\}$ and a set of upper bounds $\{y_1,y_2,\ldots\}$. We can then generalize the notion of fixpoints of deterministic approximation operators by stating that $(x,y)$ is a fixpoint of the ndao ${\cal O}$ if the ``input'' lower bound $x$ and the ``input'' upper bound $y$ are among the ``output'' lower bounds respectively ``output'' upper bounds generated on the basis of $(x,y)$. 

The idea underlying the Kripke-Kleene fixpoint as being minimally informative can then be directly taken over from the deterministic setting and imposed by definition.
Accordingly, we can define fixpoints, and Kripke-Kleene interpretations, of an ndao ${\cal O}$ as follows:

\begin{definition}
Given an ndao ${\cal O}$ over $L=\langle {\cal L},\leq \rangle$ and some $x,y\in{\cal L}$:
\begin{itemize}
\item $(x,y)$ is a \emph{fixpoint\/} of ${\cal O}$, if $(x,y)\in {\cal O}_l(x,y)\times {\cal O}_u(x,y)$ (or, somewhat abusing the of notation, if $(x,y)\in {\cal O}(x,y)$),
\item $(x,y)$ is a \emph{Kripke-Kleene interpretation\/} of ${\cal O}$, if it is $\leq_i$-minimal among the fixpoints of ${\cal O}$. 
\end{itemize}
\end{definition}

Thus, Kripke-Kleene interpretations retain the type of Kripke-Kleene semantics in deterministic AFT, namely pairs of single elements $(x,y)$. Intuitively, these interpretations
represent approximations $(x,y)$ of elements such that, when making the `right choices' within the set ${\cal O}(x,y)$, ${\cal O}$ allows to derive exactly the same lower and upper bound.

The next example shows that uniqueness is  no longer guaranteed in the non-deterministic case.

\begin{example}
\label{sec:application:to:dlp:ex}
Consider the dlp ${\cal P}=\{p\lor q\leftarrow\}$ from Example~\ref{examp:IC-deterministic-case}. There are {\em  three\/} $\leq_i$-minimal consistent fixpoints 
of ${\cal IC}_{\cal P}$(Definition~\ref{def:IC_P-ndo}): $(\{p\},\{p\})$, $(\{q\},\{q\})$ and $(\{p,q\},\{p,q\})$.
This is easy to verify as ${\cal IC}_{\cal P}(x,y)=\{\{p\},\{q\},\{p,q\}\}\times \{\{p\},\{q\},\{p,q\}\}$ for any $x,y\subseteq \{p,q\}$.
Also, this program has three weakly supported models: $\{p\}$, $\{q\}$ and $\{p,q\}$. We shall see in Theorem~\ref{theo:correspondence:supported} that this is no coincidence. 
It is thus not surprising that there exists no $\leq_i$-minimal consistent fixpoint, as disjunctive logic programs allow for more than one ($\leq_i$-minimal) weakly supported model.
\end{example}
 
The following example shows that existence of a consistent fixpoint of an ndao is not guaranteed either:

\begin{example} 
Since we are interested in a consistent fixpoint, it suffices to restrict our attention to the consistent pairs. 
Consider an operator ${\cal O}$ over the billatice constructed on the powerset of $\{p,q\}$ and defined as follows:
\begin{eqnarray*}
{\cal O}(\emptyset,\{p,q\})&=&(\{\emptyset\},\{\{p\},\{q\}\})\\ %
{\cal O}(\{p\},\{p\})&=& (\{q\},\{q\})\\
{\cal O}(\{q\},\{q\})&=& (\{p\},\{p\})\\
{\cal O}(x,y) &=&(\{\{p\},\{q\}\}, \{\{p\},\{q\}\}) \quad \forall(x,y)\not\in \{(\{p\},\{p\}),(\{q\},\{q\})\} 
\end{eqnarray*}
It is easily observed that ${\cal O}$ is $\leq_i$-monotonic. Since ${\cal O}_l(x,x)={\cal O}_u(x,x)$ for any $x\subseteq \{p,q\}$, it is also \exactcompliant.
This operator can be visualized as follows:

\begin{center}
\begin{tikzpicture}[scale=0.85]
\tikzset{edge/.style = {->,> = latex'}}

\node[draw, scale=0.8] (empty) at (0,0) {$(\emptyset,\{p,q\})$};
\node[draw, scale=0.8] (a) at (-4.5,2) {$(\{p\},\{p,q\})$};
\node[draw, scale=0.8] (b) at (-1.5,2) {$(\{q\},\{p,q\})$};
\node[draw, scale=0.8] (na) at (1.5,2) {$(\emptyset,\{p\})$};
\node[draw, scale=0.8] (nb) at (4.5,2) {$(\emptyset,\{q\})$};
\node[draw, scale=0.8] (ab) at (-4.5,4) {$(\{p,q\},\{p,q\})$};
\node[draw, scale=0.8] (nab) at (-1.5,4) {$(\{p\},\{p\})$};
\node[draw, scale=0.8] (anb) at (1.5,4) {$(\{q\},\{q\})$};
\node[draw, scale=0.8] (nanb) at (4.5,4) {$(\emptyset,\emptyset)$};

\draw[edge, bend right=10, line width=0.5] (empty) to (na);
\draw[edge, bend right=10, line width=0.5] (empty) to (nb);

\draw[edge,  line width=0.5] (na) to (anb);
\draw[edge, line width=0.5, bend left=10] (na) to (nab);
\draw[edge,  line width=0.5, bend left=10] (nb) to (anb);
\draw[edge, line width=0.5] (nb) to (nab);

\draw[edge,  line width=0.5] (a) to (anb);
\draw[edge, line width=0.5] (a) to (nab);
\draw[edge,  line width=0.5, bend right=10] (b) to (anb);
\draw[edge, line width=0.5] (b) to (nab);

\draw[edge, line width=0.5] (nanb) to (anb);
\draw[edge, line width=0.5] (ab) to (nab);
\draw[edge, bend right, line width=0.5] (nanb) to (nab);
\draw[edge, bend left, line width=0.5] (ab) to (anb);

\draw[edge, line width=0.5] (anb) to (nab);
\draw[edge, line width=0.5] (nab) to (anb);

\draw[edge, -, dotted] (empty) to (nb);
\draw[edge,-, dotted] (empty) to (na);
\draw[edge,-, dotted] (empty) to (a);
\draw[edge,-, dotted] (empty) to (b);
\draw[edge,-, dotted] (a) to (ab);
\draw[edge,-, dotted] (a) to (nab);
\draw[edge,-, dotted] (b) to (ab);
\draw[edge,-, dotted] (b) to (anb);
\draw[edge,-, dotted] (na) to (nab);
\draw[edge,-, dotted] (na) to (nanb);
\draw[edge,-, dotted] (nb) to (nanb);
\draw[edge,-, dotted] (nb) to (anb);

\node[scale=0.8] (empty) at (current bounding box.south west) {
\begin{tabular}{l}
$\sampleline{->}$: application of ${\cal O}$.\\
$\sampleline{-,dotted}$: $\leq_i$.
\end{tabular}
};

\node[scale=0.8] (empty) at (current bounding box.south east) {
\phantom{\begin{tabular}{l}
$\sampleline{->}$: application of ${\cal O}$.\\
$\sampleline{-,dotted}$: $\leq_i$.
\end{tabular}}
};
\end{tikzpicture}
\end{center}
It can be verified that this operator is $\preceq^A_i$-monotonic, yet it admits no consistent fixpoint. 
\end{example}
 
Although some properties of fixpoints of approximation operators do not carry over from the deterministic to the non-deterministic setting, 
fixpoints of a non-deterministic operator ${\cal O}$ have been studied in the literature. The following theorem shows that
in the context of disjunctive logic programming, the fixpoints of the operator ${\cal IC}_{\cal P}$ characterize the weakly supported  models of ${\cal P}$. 
It thus provides a first representation of semantics of logic programs that  are not covered by (fixpoints of) deterministic AFT. 
 
\begin{theorem}
\label{theo:correspondence:supported}
Given a dlp ${\cal P}$ and a consistent interpretation $(x,y)\in (\wp({\cal A}_{\cal P}))^2$, it holds that $(x,y)$ is a weakly supported model of ${\cal P}$
iff $(x,y)\in {\cal IC}_{\cal P}(x,y)$.
\end{theorem}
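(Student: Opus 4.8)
The plan is to unfold both sides into purely set-theoretic conditions on $x$, $y$ and the activated heads, and then match them clause by clause. The first thing I would record is a \emph{consistency lemma}: since $(x,y)$ is consistent ($x\subseteq y$), no subformula of a rule ever receives the value ${\sf C}$. Atoms never get ${\sf C}$ because that requires $p\in x\setminus y$, and a routine induction on $\phi$ (using that $-$ maps $\{{\sf F},{\sf U},{\sf T}\}$ into itself, and that $\mathrm{lub}_{\leq_t}$ and $\mathrm{glb}_{\leq_t}$ keep the chain ${\sf F}\leq_t{\sf U}\leq_t{\sf T}$ closed) extends this to every $\phi$. The payoff is that for any rule body $\phi$ one has $(x,y)(\phi)\geq_t{\sf C}$ iff $(x,y)(\phi)={\sf T}$, and $(x,y)(\phi)\geq_t{\sf U}$ iff $(x,y)(\phi)\in\{{\sf U},{\sf T}\}$. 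Hence I may read $\HRc^l_{\cal P}(x,y)$ as the set of heads of ${\sf T}$-bodied rules and $\HRc^u_{\cal P}(x,y)$ as the set of heads of rules whose body is ${\sf T}$ or ${\sf U}$, with $\HRc^l_{\cal P}(x,y)\subseteq\HRc^u_{\cal P}(x,y)$.

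Next I would compute the value of a disjunctive head $\bigvee\Delta$ under $(x,y)$ in set-theoretic terms: because the disjuncts' values lie in the chain ${\sf F}\leq_t{\sf U}\leq_t{\sf T}$, we get $(x,y)(\bigvee\Delta)={\sf T}$ iff $\Delta\cap x\neq\emptyset$, $(x,y)(\bigvee\Delta)={\sf F}$ iff $\Delta\cap y=\emptyset$, and $(x,y)(\bigvee\Delta)={\sf U}$ otherwise. With this, the model condition ``$(x,y)(\bigvee\Delta)\geq_t(x,y)(\phi)$ for every rule'' splits on the body value: ${\sf T}$-bodied rules force $\Delta\cap x\neq\emptyset$, ${\sf U}$-bodied rules force $\Delta\cap y\neq\emptyset$, and ${\sf F}$-bodied rules impose nothing (there is no fifth case, by the consistency lemma). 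Since $\HRc^l_{\cal P}(x,y)\subseteq\HRc^u_{\cal P}(x,y)$ and $x\subseteq y$, these collapse exactly to the two clauses (i)~$\Delta\cap x\neq\emptyset$ for all $\Delta\in\HRc^l_{\cal P}(x,y)$, and (ii)~$\Delta\cap y\neq\emptyset$ for all $\Delta\in\HRc^u_{\cal P}(x,y)$ — the forward implication of (ii) for ${\sf T}$-bodied heads using $x\subseteq y$. These are precisely the ``intersection'' clauses in the definitions of ${\cal IC}^l_{\cal P}(x,y)$ and ${\cal IC}^u_{\cal P}(x,y)$.

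I would then treat the weak-support clause in the same spirit, splitting on the value of $p\in y$. If $p\in x$ (value ${\sf T}$), the clause asks for a rule with $p$ in its head and body $\geq_t{\sf T}$, i.e.\ $p\in\bigcup\HRc^l_{\cal P}(x,y)$; if $p\in y\setminus x$ (value ${\sf U}$), it asks for $p$ in the head of a rule with body $\geq_t{\sf U}$, i.e.\ $p\in\bigcup\HRc^u_{\cal P}(x,y)$. Using $x\subseteq y$ and $\bigcup\HRc^l_{\cal P}(x,y)\subseteq\bigcup\HRc^u_{\cal P}(x,y)$ once more, this is equivalent to $x\subseteq\bigcup\HRc^l_{\cal P}(x,y)$ together with $y\subseteq\bigcup\HRc^u_{\cal P}(x,y)$ — exactly the ``subset-of-union'' clauses witnessing $x\in{\cal IC}^l_{\cal P}(x,y)$ and $y\in{\cal IC}^u_{\cal P}(x,y)$.

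Finally I would assemble the two directions. Conjoining the model clauses (i),(ii) with the support clauses yields exactly the four defining conditions of $x\in{\cal IC}^l_{\cal P}(x,y)$ and $y\in{\cal IC}^u_{\cal P}(x,y)$, i.e.\ $(x,y)\in{\cal IC}_{\cal P}(x,y)$; reading each equivalence backward gives the converse. I expect the only delicate points to be the bookkeeping that keeps the lower-bound clauses attached to $x$ and $\HRc^l_{\cal P}$ and the upper-bound clauses attached to $y$ and $\HRc^u_{\cal P}$, and the repeated appeals to $x\subseteq y$ that merge the ${\sf T}$- and ${\sf U}$-cases; the consistency lemma of the first paragraph is what guarantees no ${\sf C}$-case can spoil these case splits.
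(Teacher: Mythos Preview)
Your proposal is correct and follows essentially the same strategy as the paper: unfold both ``weakly supported model'' and ``$(x,y)\in\mathcal{IC}_{\cal P}(x,y)$'' into the four set-theoretic clauses (the two intersection clauses and the two subset-of-union clauses) and match them.

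The one mild difference is in the auxiliary lemma used to handle the upper-bound side. You go through a \emph{consistency lemma} (no subformula takes value ${\sf C}$ when $x\subseteq y$), which lets you read $\geq_t{\sf C}$ as $={\sf T}$ and $\geq_t{\sf U}$ as $\in\{{\sf U},{\sf T}\}$ directly. The paper instead handles the $\HRc^u_{\cal P}(x,y)$ clauses via the symmetry $\HRc^u_{\cal P}(x,y)=\HRc^l_{\cal P}(y,x)$ together with a lemma relating $(y,x)(\phi)\in\{{\sf T},{\sf C}\}$ to $(x,y)(\phi)\in\{{\sf T},{\sf U}\}$. Your route is more self-contained (it does not invoke the symmetry of the operator) and makes the bookkeeping you flagged---keeping the lower clauses with $x,\HRc^l$ and the upper clauses with $y,\HRc^u$---more transparent; the paper's route reuses machinery it already has in place for showing $\mathcal{IC}_{\cal P}$ is symmetric. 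Either way the remaining steps (the computation of $(x,y)(\bigvee\Delta)$ on the chain ${\sf F}\leq_t{\sf U}\leq_t{\sf T}$, and the use of $x\subseteq y$ and $\HRc^l\subseteq\HRc^u$ to merge the ${\sf T}$- and ${\sf U}$-cases) line up with the paper's argument.
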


\begin{proof}
\noindent
$[\Rightarrow]$ Suppose that $(x,y)$ is weakly supported. 
We first show that for every $\Delta\in \HRc^l_{\cal P}(x,y)$, $\Delta\cap x\neq \emptyset$. 
Indeed, let $\Delta\in \HRc^l_{\cal P}(x,y)$, i.e.\ $\bigvee\!\Delta\leftarrow \phi \in {\cal P}$ and $(x,y)(\phi)={\sf T}$ (notice that since $(x,y)$ is consistent, $(x,y)(\phi)\neq {\sf C}$). 
Since $(x,y)$ is a model of ${\cal P}$, $(x,y)(\bigvee\!\Delta)\geq_t {\sf T}$, and so $\Delta\cap x\neq\emptyset$.

We now show that $y\cap \HRc^u_{\cal P}(x,y)\neq\emptyset$. Suppose for this that $(y,x)(\phi)\in \{{\sf C},{\sf T}\}$ for some $\bigvee\!\Delta\leftarrow\phi\in{\cal P}$.
We first show the following lemma:

\begin{lemma}
\label{lemma:values:of:formulas:switch}
For any formula $\phi$ and any $x, y\subseteq {\cal A}_{\cal P}$, $(y,x)(\phi)\in \{{\sf T},{\sf C}\}$ implies that $(x,y)(\phi)\in \{{\sf T},{\sf U}\}$ and 
$(y,x)(\phi)\in \{{\sf F},{\sf C}\}$ implies that $(x,y)(\phi)\in \{{\sf F},{\sf U}\}$.
\end{lemma}

\begin{proof}
We show this by induction on the structure  of $\phi$.
For the base case, suppose that $\phi\in{\cal A}_{\cal P}$. If $(y,x)(\phi)\in \{{\sf T},{\sf C}\}$, then $\phi\in y$ and thus $(x,y)(\phi)\in \{{\sf T},{\sf U}\}$. 
Likewise, $(y,x)(\phi)\in \{{\sf F},{\sf C}\}$ means that $\phi\not\in x$ and thus  $(x,y)(\phi)\in \{{\sf F},{\sf U}\}$.
For the inductive case, suppose that the claim holds for $\phi$ and $\psi$. If  $(y,x)(\lnot \phi)\in \{{\sf T},{\sf C}\}$, then 
$(y,x)(\phi)\in \{{\sf F},{\sf C}\}$, and with the inductive hypothesis, $(x,y)(\phi)\in \{{\sf F},{\sf U}\}$, which implies that $(y,x)(\lnot \phi)\in \{{\sf T},{\sf U}\}$. 
The other cases are similar.
\end{proof}

By Lemma~\ref{lemma:values:of:formulas:switch} and since $\Delta\in \HRc^{u}_{\cal P}(x,y)$ implies there is some $\bigvee\!\Delta\leftarrow \phi\in{\cal P}$ with 
$(y,x)(\phi)\in \{{\sf T},{\sf C}\}$, it follows that for every $\Delta\in \HRc^{u}_{\cal P}(x,y)$, $\Delta\cap y\neq \emptyset$. %

It remains to be shown is that $x\subseteq \bigcup\HRc^l_{\cal P}(x,y)$, and  $y\subseteq \bigcup\HRc^l_{\cal P}(y,x)=\bigcup\HRc^u_{\cal P}(x,y)$, 
which is immediate from the fact that since $(x,y)$ is weakly supported, for every atom that is true (respectively undecided) we can find a rule whose body is true 
(respectively undecided) that has this atom in the head.

\medskip\noindent
$[\Leftarrow]$ Suppose that $(x,y)\in {\cal IC}_{\cal P}(x,y)$. We first show that $(x,y)$ is a model of ${\cal P}$. Indeed, suppose that for 
$\bigvee\!\Delta\leftarrow\phi\in{\cal P}$, $(x,y)(\phi)={\sf T}$. Then $\Delta\in \HRc_{\cal P}^l(x,y)$ and thus 
(since $(x,y)\in {\cal IC}_{\cal P}(x,y)$), 
$\Delta\cap x\neq \emptyset$, i.e., $(x,y)(\bigvee\!\Delta)={\sf T}$. The case for $(x,y)(\phi)={\sf U}$ is similar and the case for $(x,y)(\phi)={\sf F}$ is trivial. 
We now show that $(x,y)$ is  weakly supported. 
Indeed, consider some $p\in x$, and suppose towards a contradiction that there is no $\bigvee\!\Delta\leftarrow\phi\in{\cal P}$ s.t.\ 
$(x,y)(\phi)={\sf T}$. But then $p\not\in \bigcup{\cal HD}^{l}_{\cal P}(x,y)$ and thus we have a contradiction to $x\subseteq \bigcup{\cal HD}^{l}_{\cal P}(x,y)$ 
(which we know since $x\in {\cal IC}_{\cal P}^l(x,y)$). The proof for $p\in y$ is similar. 
\end{proof}
 
We now turn to the representation of supported models. As supported models allow for the truth of less atoms than the weakly supported models, one might conjecture 
that supported models are the $\leq_t$-minimal fixpoints of ${\cal IC}_{\cal P}$, but this does not hold, not even for positive or normal programs:

\begin{example}
Take ${\cal P}=\{p\leftarrow r; \ r\leftarrow r\}$. Then $(\{p,r\},\{p,r\})$ and $(\emptyset,\emptyset)$ are both supported models of ${\cal P}$, 
but $(\{p,r\},\{p,r\})$ is not $\leq_t$-minimal.
\end{example}

The supported models can actually be characterized as the fixpoints of ${\cal IC}_{\cal P}^m$ as defined in Remark~\ref{remark:minimality:in:the:op}. 
We refer to~\cite{DBLP:conf/kr/HeyninckA21} for more details on how this can be done.
However, this operator is not $\preceq^A_i$-monotonic (as shown in Remark ~\ref{remark:minimality:in:the:op}).
Next, we show that supported models of a dlp ${\cal P}$ may also be characterized by the fixpoints of ${\cal IC}_{\cal P}$,
but this time together with $\subseteq$-minimization.

\begin{theorem}
Given a dlp ${\cal P}$ and a consistent interpretation $(x,y)\in \wp({\cal A}_{\cal P})^2$. Then $(x,y)$ is a supported model of ${\cal P}$
iff $x\in\min_{\subseteq}( {\cal IC}^l_{\cal P}(x,y))$ and $y\in\min_{\subseteq}( {\cal IC}^u_{\cal P}(x,y))$.
\end{theorem}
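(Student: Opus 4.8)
The plan is to reduce the statement to a purely combinatorial fact about minimal transversals (hitting sets) and then read off the two minimality conditions from the clauses of the supported-model definition. First I would record the elementary lemma: for a family $\mathcal{H}$ of nonempty sets and a transversal $S\subseteq\bigcup\mathcal{H}$ (i.e.\ $S\cap\Delta\neq\emptyset$ for all $\Delta\in\mathcal{H}$), $S$ is $\subseteq$-minimal among transversals iff every $p\in S$ is \emph{critical}, meaning there is $\Delta\in\mathcal{H}$ with $S\cap\Delta=\{p\}$. The forward direction removes $p$ and uses minimality; the backward direction notes that deleting any $p$ destroys the hit of its critical $\Delta$. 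Since $(x,y)$ is consistent, $(x,y)(\phi)\geq_t{\sf C}$ collapses to $(x,y)(\phi)={\sf T}$, so $\HRc^l_{\cal P}(x,y)$ collects the heads of rules with body ${\sf T}$ and $\HRc^u_{\cal P}(x,y)$ those with body in $\{{\sf U},{\sf T}\}$; the conditions $x\in\min_\subseteq({\cal IC}^l_{\cal P}(x,y))$ and $y\in\min_\subseteq({\cal IC}^u_{\cal P}(x,y))$ become exactly ``$x$ is a minimal transversal of $\HRc^l_{\cal P}(x,y)$'' and ``$y$ is a minimal transversal of $\HRc^u_{\cal P}(x,y)$''.

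Next I would dispose of the common part using the previous theorem. On either side we are dealing with a weakly supported model: if $x$ and $y$ are transversals contained in the respective unions, then $(x,y)\in{\cal IC}_{\cal P}(x,y)$, so Theorem~\ref{theo:correspondence:supported} gives that $(x,y)$ is a model; and conversely every supported model is weakly supported, hence a fixpoint of ${\cal IC}_{\cal P}$, which supplies the transversal and containment facts. Thus in both directions modelhood and the underlying hitting-set properties come for free, and only the \emph{criticality} conditions remain to be matched with the two support clauses.

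For the direction from the operator condition to supportedness the matching is clean, and the model property does the real work. Criticality of $x$ in $\HRc^l_{\cal P}(x,y)$ says every true atom $p\in x$ has a rule with body ${\sf T}$ and $\Delta\cap x=\{p\}$, which is precisely the ${\sf T}$-clause. Criticality of $y$ in $\HRc^u_{\cal P}(x,y)$ gives, for each undecided $p\in y\setminus x$, a head $\Delta\in\HRc^u_{\cal P}(x,y)$ with $\Delta\cap y=\{p\}$; since then only $p$ contributes to $(x,y)(\bigvee\Delta)$, the head evaluates to ${\sf U}$, so modelhood forces the body to be $\leq_t{\sf U}$, and combined with $\Delta\in\HRc^u_{\cal P}(x,y)$ the body is exactly ${\sf U}$ --- matching the ${\sf U}$-clause.

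The hard part is the converse, specifically the $\subseteq$-minimality of the \emph{upper} component at \emph{true} atoms. Minimality of $y$ in $\HRc^u_{\cal P}(x,y)$ demands that every $p\in y$, including those with $p\in x$ (value ${\sf T}$), be the unique $y$-representative of some activated head, whereas the supported-model definition only asks a true atom to be the unique $x$-representative of a body-${\sf T}$ head. I expect this reconciliation to be the crux: one must argue that, on a model, a head witnessing $\Delta\cap x=\{p\}$ can be turned into (or replaced by) a head witnessing $\Delta'\cap y=\{p\}$, presumably by exploiting the support available for any undecided atom co-occurring in $\Delta$ together with modelhood, and possibly the symmetry of ${\cal IC}_{\cal P}$ via Lemma~\ref{lemma:symmetry:of:ic}. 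This is exactly the delicate interaction that fails for the minimality-built-in operator ${\cal IC}^m_{\cal P}$ of Remark~\ref{remark:minimality:in:the:op}, so I would scrutinise this step most carefully and treat it as where the genuine content of the theorem lies.
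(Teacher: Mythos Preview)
Your instinct to scrutinise the $\subseteq$-minimality of $y$ at \emph{true} atoms is exactly right, but the step cannot be repaired: the theorem as stated is false, and the paper's ``analogously'' hides a genuine error. Consider ${\cal P}=\{p\lor q\leftarrow;\ q\leftarrow q\}$ and the consistent interpretation $(x,y)=(\{p\},\{p,q\})$. This is a supported model: for the true atom $p$, the rule $p\lor q\leftarrow$ has body ${\sf T}$ and $\{p,q\}\cap x=\{p\}$; for the undecided atom $q$, the rule $q\leftarrow q$ has body ${\sf U}$ and $\{q\}\cap y=\{q\}$. However $\HRc^u_{\cal P}(x,y)=\{\{p,q\},\{q\}\}$, so ${\cal IC}^u_{\cal P}(x,y)=\{\{q\},\{p,q\}\}$ and $\min_\subseteq({\cal IC}^u_{\cal P}(x,y))=\{\{q\}\}$, which does not contain $y=\{p,q\}$.

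The failure is precisely the one you isolated: for a true atom $\alpha\in x$, the supported-model definition only supplies a head $\Delta$ with $\Delta\cap x=\{\alpha\}$, and nothing prevents $\Delta$ from meeting $y\setminus x$ (here $q\in\{p,q\}\cap(y\setminus x)$), so $\alpha$ need not be critical for $y$ in $\HRc^u_{\cal P}(x,y)$. Your hoped-for repair---replacing $\Delta$ by some $\Delta'\in\HRc^u_{\cal P}(x,y)$ with $\Delta'\cap y=\{\alpha\}$---is exactly what the counterexample rules out: no rule has $p$ alone in its head relative to $y$. The rest of your plan (the minimal-transversal lemma, the reduction to Theorem~\ref{theo:correspondence:supported}, and in particular your argument for the $\Leftarrow$ direction, where modelhood pins the body of the witnessing rule for a ${\sf U}$-atom down to exactly ${\sf U}$) is correct and in fact cleaner than the paper's version.
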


\begin{proof}
\noindent
$[\Rightarrow]$ Suppose that $(x,y)$ is a supported model of ${\cal P}$ and consider some  $\Delta\in \HRc^l_{\cal P}(x,y)$. Since $(x,y)$ is in particular weakly supported, by 
Theorem~\ref{theo:correspondence:supported} it follows that $\Delta\cap x\neq \emptyset$. 
We show that $x\in \min_\subseteq(\{ v\mid v\cap \Delta\neq\emptyset \text{ for every }\Delta\in\HRc^l_{\cal P}(x,y)\}$. 
Indeed, suppose towards a contradiction that there is some $x'\in \{ v\mid v\cap \Delta\neq\emptyset \text{ for every }\Delta\in\HRc^l_{\cal P}(x,y)\}$ such that $x'\subsetneq x$. 
Let $\alpha\in x\setminus x'$. Then, since $(x,y)$ is supported, there is a $\bigvee\!\Delta\leftarrow \phi \in {\cal P}$ such that $\Delta\cap x=\{\alpha\}$ and 
$(x,y)(\phi)={\sf T}$. But then $x'\not \in {\cal IC}^l_{\cal P}(x,y)$, in a contradiction to our assumption that $x'\in \min_\subseteq({\cal IC}^l{\cal P}(x,y))$. Analogously, we can show that 
$y\in \min_\subseteq(\{ v\mid v\cap \Delta\neq\emptyset \text{ for every }\Delta\in\HRc^u_{\cal P}(x,y)\}$.

\smallskip\noindent $[\Leftarrow]$ Suppose that $x\in\min_{\subseteq}( {\cal IC}^l_{\cal P}(x,y))$ and $y\in\min_{\subseteq}( {\cal IC}^u_{\cal P}(x,y))$. 
By Theorem~\ref{theo:correspondence:supported} it follows that $(x,y)$ is weakly supported and thus a model of ${\cal P}$. 

We now show that $(x,y)$ is supported. Indeed, let $\alpha\in \Delta \cap x$. Suppose first that there is no $\bigvee\!\Delta\leftarrow\phi\in{\cal P}$ s.t.\ 
$(x,y)(\phi)={\sf T}$. Then $x\not\in min_{\subseteq}(\{v\mid v\cap \Delta\neq \emptyset\text{ for every }\Delta\in\HRc^l_{\cal P}(x,y)\}$, since there is some 
$x' \subseteq x\setminus \{\alpha\}$ such that $x' \in min_{\subseteq} (\{v\mid v\cap \Delta\neq \emptyset\text{ for every }\Delta\in\HRc^l_{\cal P}(x,y)\}$. 
Similarly for $(x,y)(\phi)={\sf U}$. The proof of the second condition in the definition of supported models is similar.
\end{proof}

To summarize the results in this section, we conclude that in contradistinction to deterministic AFT, a $\leq_i$-minimal fixpoint of an ndao is neither guaranteed to be 
unique nor guaranteed to exist. Nevertheless, fixpoint semantics allow for an operator-based characterization of weakly supported and supported models of disjunctive logic programs.
In the next section we introduce the state semantics which is guaranteed to exist and be unique.

\subsection{Kripke-Kleene States}
\label{sec:kripke:kleene:state}

In the previous section, we saw that a (unique) $\leq_i$-minimal fixpoint of an ndao is not guaranteed to exist. This is perhaps not surprising,  as the application of an ndao 
to a pair $(x,y)$ gives rise to a set of lower and upper bounds instead of a single lower and a single upper bound. This means that the method for constructing a 
$\leq_i$-minimal fixpoint of a deterministic approximation fixpoint operator ${\cal O}_{\sf det}$ over a complete lattice by iteratively constructing more and more precise 
approximations by the converging sequence
\[(\bot,\top)\leq_i {\cal O}_{\sf det}(\bot,\top)\leq_i {\cal O}_{\sf det}^2(\bot,\top)\leq_i \ldots\] 
(where $\bot$ and $\top$ respectively represent the minimal and the maximal lattice element) 
is not well-defined, as an ndao ${\cal O}$ cannot be applied to the pair of sets ${\cal O}^i(\bot,\top)$.  However, we can circumvent this deficit by looking at 
operators which allow for pairs of sets in their input. Intuitively, we start with a set of lower bounds $\{x_1,x_2,\ldots\}$ and a set of upper bounds $\{y_1,y_2,\ldots\}$, 
and construct a new, more precise set of lower respectively upper bounds on the basis of them. Thus, instead of approximating a single element $z$ by a 
pair of elements $(x,y)$, we are now approximating a set of elements  $\{z_1,z_2,\ldots\}$ by a pair of sets of elements $\{x_1,x_2,\ldots\}$ and $\{y_1,y_2,\ldots\}$. 
An approximation of such a set of elements can then be seen as a {\em convex set\/} (Remarks~\ref{rem:convex-set}), bounded below by the lower bounds 
$\{x_1,x_2,\ldots\}$ and bounded above by the upper bounds $\{y_1,y_2,\ldots\}$. We first recall some basic notions and notations concerning convex sets.

\subsubsection{Preliminaries on Convex Sets}

Recall from Remarks~\ref{rem:convex-set} that a convex set is a set without ``holes'', that is, a set $X$ such that if two elements $x$ and $y$ are in $X$, then also any 
element between $x$ and $y$ is  in $X$.
Such as set can be viewed as consisting of all elements in between  lower bounds and an upper bounds. A convex set can be obtained by the upwards closure of their 
lower bound and the downwards closure of their upper bound, as defined next:

\begin{definition}
Given a lattice $L=\langle {\cal L},\leq\rangle$ and an element $x\in {\cal L}$, then:
\begin{itemize}
\item the \emph{upwards closure\/} of $x$ is $\upclosure{x}:=\{y\in {\cal L}\mid x\leq y\}$.
\item the \emph{downwards closure\/} of $x$ is $\downclosure{x}:=\{y\in {\cal L}\mid x\geq y\}$.
\end{itemize}
We lift this to sets of elements $X\subseteq {\cal L}$ as follows:
\begin{itemize}
\item the \emph{upwards closure\/} of $X$ is $\upclosure{X}:=\bigcup_{x\in X}\upclosure{x}$,
\item the \emph{downwards closure\/} of $X$ is  $\downclosure{X}:=\bigcup_{x\in X}\downclosure{x}$.
\end{itemize}
A set is \emph{upwards [downwards] closed\/}, if $X=\upclosure{X}[X=\downclosure{X}]$.
We denote the set of upwards closed (respectively downwards closed) subsets of $\mathcal{L}$ by $\wp_\uparrow({\cal L})$ (respectively $\wp_\downarrow({\cal L})$).
\end{definition}

It can be shown that every set of elements is $\preceq^S_L$-equivalent to its upwards closure and $\preceq^H_L$-equivalent to its downwards closure:

\begin{lemma}
\label{lem:S-H-equivalence}
Given a lattice $L=\langle {\cal L},\leq\rangle$ and a set  $X\subseteq {\cal L}$, it holds that:
\begin{enumerate}
\item $\upclosure{X}\preceq^S_L X$ and $ X \preceq^S_L \upclosure{X}$, and
\item  $\downclosure{X}\preceq^H_L X$ and $ X \preceq^H_L  \downclosure{X}$.
\end{enumerate}
\end{lemma}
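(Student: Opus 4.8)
The plan is to unfold the definitions of the Smyth and Hoare preorders (Definition~\ref{def:Smith-preorder}) together with the definitions of the upward and downward closures, and to verify each of the four comparisons directly. Two observations drive everything. First, by reflexivity of $\leq$ we have $X\subseteq\upclosure{X}$ and $X\subseteq\downclosure{X}$, since each $x\in X$ lies in its own closure (as $x\leq x$). Second, the defining membership condition of a closure already hands us the required comparison: $w\in\upclosure{X}$ means $x\leq w$ for some $x\in X$, and dually $w\in\downclosure{X}$ means $w\leq x$ for some $x\in X$.

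For item~1, consider first $\upclosure{X}\preceq^S_L X$. By definition of $\preceq^S_L$ I must produce, for each $x\in X$, some $w\in\upclosure{X}$ with $w\leq x$; I would simply take $w=x$, which lies in $\upclosure{X}$ by the first observation, so that $w\leq x$ holds trivially. For the converse $X\preceq^S_L\upclosure{X}$, I must produce, for each $w\in\upclosure{X}$, some $x\in X$ with $x\leq w$; here the witness is precisely the $x\in X$ whose upward closure contains $w$, supplied by the second observation.

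Item~2 is the order-dual of item~1 and is handled symmetrically, now using the Hoare order $\preceq^H_L$. For $\downclosure{X}\preceq^H_L X$ I must find, for each $w\in\downclosure{X}$, some $x\in X$ with $w\leq x$, which is again the defining witness of membership in $\downclosure{X}$. For $X\preceq^H_L\downclosure{X}$ I must find, for each $x\in X$, some $w\in\downclosure{X}$ with $x\leq w$, and $w=x$ works since $X\subseteq\downclosure{X}$.

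There is no substantial obstacle here: the statement follows directly from unwinding the definitions. The only point requiring (minor) care is keeping track of which direction of $\leq$ is demanded by $\preceq^S_L$ versus $\preceq^H_L$, and correspondingly over which set (the element $x$ itself, or the witness extracted from the closure) the existential quantifier ranges in each of the four cases.
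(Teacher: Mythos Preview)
Your proposal is correct and follows essentially the same approach as the paper: both arguments use the containment $X\subseteq\upclosure{X}$ (respectively $X\subseteq\downclosure{X}$) to obtain one direction via the identity witness, and unfold the defining property of the closure to obtain the other. Your write-up is in fact slightly more explicit than the paper's, which simply notes that supersets are automatically $\preceq^S_L$-below (respectively $\preceq^H_L$-above) their subsets.
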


\begin{proof}
Item~1.\ Since $\upclosure{X}\supseteq  X$, it immediately holds that $ \upclosure{X}\preceq^S_L  X$. We now show $ X \preceq^S_L   \upclosure{X}$. 
Consider some $x\in \upclosure{X}$. This means there is some $y\in  X$ s.t.\ $y\leq x$. 

Item~2.\ Since $\downclosure{X}\supseteq X$, it immediately holds that $ X \preceq^H_L \downclosure{X}$. 
We now show $  \downclosure{X}\preceq^H_L  X$. Consider some $x\in \downclosure{X}$. By the definition of $\downclosure{X}$,
there is some $y\in {\cal X}$ s.t.\ $x\leq y$. 
\end{proof}

The last lemma gives further motivation to using $\preceq^S_L$ for comparing lower bounds and 
$\preceq^H_L$ for comparing upper bounds. Indeed, if $x\in X$ is a lower bound for $z$, then any $x\leq y$ is also a good candidate for being (or better, approximating) $z$,
and likewise, if $x\in X$ is an upper bound for $z$, then any $y\leq x$ is also a good candidate for being (or approximating) $z$.

Lemma~\ref{lem:S-H-equivalence} also means that we can restrict attention to upwards closed sets when using $\preceq^S_L$ and 
to downwards closed sets when using $\preceq^H_L$, without losing any information.

The sets $\wp_\uparrow({\cal L})$ and $\wp_\downarrow({\cal L})$ admit greatest lower bounds under $\preceq^S_L$ and least upper bounds under $\preceq^H_L$
(respectively):

\begin{lemma}
Let $\mathcal{X}\subseteq \wp_\uparrow({\cal L})$ be given. Then $\bigcup{\cal X}$ is the greatest lower bound (under $\preceq^S_L$) of $\mathcal{X}$ 
and $\bigcap{\cal X}$ is the least upper bound (under $\preceq^S_L$) of $\mathcal{X}$.%
\footnote{Recall that we use small letters to denote elements of lattice, capital letters to denote sets of elements, and capital calligraphic letters to denote sets of sets of elements 
(Table~\ref{tab:set-notations}).} 
\end{lemma}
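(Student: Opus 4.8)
The plan is to first reduce $\preceq^S_L$ on $\wp_\uparrow({\cal L})$ to a containment relation. Concretely, I would begin by observing that for two \emph{upward closed} sets $X$ and $Y$ one has $X \preceq^S_L Y$ if and only if $Y \subseteq X$: the implication $Y\subseteq X \Rightarrow X\preceq^S_L Y$ is immediate, since each $y\in Y$ already lies in $X$ and $y\le y$, while the converse uses upward closure, because a witness $x\in X$ with $x\le y$ forces $y\in X$. This recasts the two claims as statements about infima and suprema for reverse inclusion, and it also shows that $\bigcup{\cal X}$ and $\bigcap{\cal X}$ are themselves upward closed (arbitrary unions and intersections of upward closed sets are upward closed), so that they are legitimate elements of the poset $\langle\wp_\uparrow({\cal L}),\preceq^S_L\rangle$.

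For the greatest--lower--bound claim I would argue directly, as this half needs no appeal to upward closure. To see that $\bigcup{\cal X}$ is a lower bound, fix $X\in{\cal X}$ and any $y\in X$; then $y\in\bigcup{\cal X}$ and $y\le y$, so $\bigcup{\cal X}\preceq^S_L X$. To see it is the greatest, let $Z$ be any lower bound, i.e.\ $Z\preceq^S_L X$ for every $X\in{\cal X}$, and take $w\in\bigcup{\cal X}$; then $w\in X$ for some $X$, and $Z\preceq^S_L X$ supplies some $z\in Z$ with $z\le w$, whence $Z\preceq^S_L\bigcup{\cal X}$, as required.

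The least--upper--bound claim is the delicate part, and I expect its \emph{least} direction to be the main obstacle. The easy half is that $\bigcap{\cal X}$ is an upper bound: for each $X\in{\cal X}$ and each $w\in\bigcap{\cal X}$ we have $w\in X$ (since $\bigcap{\cal X}\subseteq X$) and $w\le w$, so $X\preceq^S_L\bigcap{\cal X}$. The harder half is that every upper bound $Z$ satisfies $\bigcap{\cal X}\preceq^S_L Z$. A naive attempt stalls here: from $X\preceq^S_L Z$ we only obtain, for each $w\in Z$ and each $X$, \emph{some} witness $x_X\in X$ with $x_X\le w$, and these witnesses may differ across the members $X$, so there is no obvious element of $\bigcap{\cal X}$ lying below $w$. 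This is exactly where upward closure rescues the argument: since $x_X\le w$ and each $X$ is upward closed, $w$ itself belongs to every $X$, hence $w\in\bigcap{\cal X}$, and taking $w$ as its own witness yields $\bigcap{\cal X}\preceq^S_L Z$. Finally I would note that both formulas stay correct in the degenerate case ${\cal X}=\emptyset$ under the usual conventions $\bigcup\emptyset=\emptyset$ and $\bigcap\emptyset={\cal L}$, which are respectively the greatest and least elements of $\langle\wp_\uparrow({\cal L}),\preceq^S_L\rangle$.
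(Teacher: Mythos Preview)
Your proof is correct and follows essentially the same approach as the paper's own proof: both verify that the union and intersection are upward closed, check the lower/upper bound property directly, and for the ``least'' direction of the lub use upward closure of each $X\in\mathcal{X}$ to conclude that every $w\in Z$ already lies in $\bigcap\mathcal{X}$. Your preliminary observation that $X\preceq^S_L Y\iff Y\subseteq X$ on $\wp_\uparrow(\mathcal{L})$ is a clean way to package the argument (the paper uses this implicitly but does not isolate it), and your treatment of the degenerate case $\mathcal{X}=\emptyset$ is a small bonus the paper omits.
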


\begin{proof}
We show that $\bigcup{\cal X}$ is the greatest lower bound by showing the following claims:
\begin{itemize}
\item $\bigcup{\cal X}\in \wp_\uparrow({\cal L})$. To see this, consider some $x\in \bigcup{\cal X}$. Then there is some $X\in {\cal X}$ s.t.\ $x\in X$, and thus since 
$X\in\wp_\uparrow({\cal L})$, every $y\in X$ s.t.\ $x \leq y$ is also in $X$, and therefore in $\bigcup{\cal X}$. 
\item $\bigcup{\cal X}\preceq^S_L X$ for any $X\in {\cal X}$. This immediately follows from the fact that for any 
$x\in X$, $x\in \bigcup{\cal X}$.
\item For any $Z\in  \wp_\uparrow({\cal L})$ s.t.\ $Z\preceq^S_L X$ for every $X\in {\cal X}$, $Z\preceq^S_L \bigcup{\cal X}$. For this, consider some $Z\in  \wp_\uparrow({\cal L})$ s.t.\ $Z\preceq^S_L X$ for every $X\in {\cal X}$ and consider some $x\in \bigcup{\cal X}$. Since $x\in X$ for some $X\in {\cal X}$, there is some $z\in Z$ s.t.\ $z\leq x$. Thus, for every $x\in \bigcup{\cal X}$, there is some $z\in Z$ s.t.\ $z\leq x$, which implies that $Z\preceq^S_L \bigcup{\cal X}$.
\end{itemize}

We show that $\bigcap{\cal X}$ is the least upper bound by showing the following claims: 
\begin{itemize}
\item $\bigcap{\cal X}\in \wp_\uparrow({\cal L})$. To see this, consider some $x\in \bigcap{\cal X}$. Then for every $X\in {\cal X}$, it holds that $x\in X$, and thus since 
$X\in\wp_\uparrow({\cal L})$, every $y\in X$ s.t.\ $x\leq y$ is also in $X$, and therefore (since this holds for every $X\in {\cal X}$), $y$ is also in $\bigcap{\cal X}$. 
\item $X \preceq^S_L \bigcap{\cal X}$ for any $X\in {\cal X}$. This follows from the fact that for any $x\in \bigcap{\cal X}$, $x\in X$ for every $X\in{\cal X}$.
\item For any $Z\in  \wp_\uparrow({\cal L})$ s.t.\ $X\preceq^S_L Z$ for every $X\in {\cal X}$, $ \bigcap{\cal X}\preceq^S_L Z$. For this, consider some $Z\in  \wp_\uparrow({\cal L})$ s.t.\ $X\preceq^S_L Z$ for every $X\in {\cal X}$ and consider some $z\in Z$. Since $X\preceq^S_L Z$, there is some $x\in X$ s.t.\ $x\leq z$. Since $X$ is upwards closed, $z\in X$. 
Notice that this holds for any $X\in {\cal X}$, and thus $z\in \bigcap{\cal X}$, which means that $Z\subseteq \bigcap{\cal X}$ and thus $ \bigcap{\cal X}\preceq^S_L Z$. \qedhere
\end{itemize}
\end{proof}

The next lemma is proven like the previous one.

\begin{lemma}
Let some $\mathcal{X}\subseteq \wp_\downarrow({\cal L})$ be given. Then $\bigcap{\cal X}$ is the greatest lower bound (under $\preceq^H_L$) of $\mathcal{X}$ and $\bigcup{\cal X}$ is the least upper bound (under $\preceq^H_L$) of $\mathcal{X}$.
\end{lemma}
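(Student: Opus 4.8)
The plan is to observe that this statement is the exact order-theoretic dual of the preceding lemma, and to replay that proof under the dictionary that swaps the Smyth order $\preceq^S_L$ for the Hoare order $\preceq^H_L$, union for intersection, and upward-closed for downward-closed sets. Concretely, I would verify the three defining conditions of a greatest lower bound for $\bigcap{\cal X}$ and, dually, the three conditions of a least upper bound for $\bigcup{\cal X}$.

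For the greatest-lower-bound part, I would first check $\bigcap{\cal X}\in\wp_\downarrow({\cal L})$: if $x\in\bigcap{\cal X}$ then $x\in X$ for every $X\in{\cal X}$, and since each $X$ is downward closed, any $y\leq x$ lies in every $X$, hence in $\bigcap{\cal X}$. Next, $\bigcap{\cal X}\preceq^H_L X$ for each $X\in{\cal X}$ is immediate from $\bigcap{\cal X}\subseteq X$ (inclusion trivially gives the Hoare relation, witnessing each element by itself). Finally, for greatestness I would take any downward-closed lower bound $Z$ (so $Z\preceq^H_L X$ for all $X$) and show $Z\preceq^H_L\bigcap{\cal X}$: given $z\in Z$, for each $X$ there is some $x\in X$ with $z\leq x$, and downward-closedness of $X$ forces $z\in X$; as this holds for all $X$ we get $z\in\bigcap{\cal X}$, so in fact $Z\subseteq\bigcap{\cal X}$ and the Hoare inequality follows.

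The least-upper-bound part for $\bigcup{\cal X}$ is entirely symmetric: $\bigcup{\cal X}$ is downward closed because membership of any element comes from a single $X\in{\cal X}$ which is downward closed; it is an upper bound because $X\subseteq\bigcup{\cal X}$ gives $X\preceq^H_L\bigcup{\cal X}$; and it is least because any downward-closed $Z$ dominating every $X$ dominates each $w\in\bigcup{\cal X}$ through the particular $X$ that contains $w$, i.e.\ there is $z\in Z$ with $w\leq z$.

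I do not expect a genuine obstacle here, since the argument is mechanical. The one place where the hypothesis ${\cal X}\subseteq\wp_\downarrow({\cal L})$ does real work, and therefore the step to state carefully, is the greatestness argument, where downward-closedness of each $X$ is exactly what converts ``$z\leq x$ for some $x\in X$'' into ``$z\in X$.'' It is worth noting the conceptual reason the whole lemma goes through so smoothly: restricted to downward-closed sets, $\preceq^H_L$ collapses to ordinary inclusion (for $X,Y\in\wp_\downarrow({\cal L})$ one has $X\preceq^H_L Y$ iff $X\subseteq Y$), so the claim amounts to saying that $\bigcap$ and $\bigcup$ are the meet and join of the inclusion lattice of downward-closed sets, and the three-bullet verification above is simply the unfolding of this fact.
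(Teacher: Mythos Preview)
Your proposal is correct and matches the paper's approach exactly: the paper simply states ``The next lemma is proven like the previous one,'' i.e., by dualizing the argument for upward-closed sets under $\preceq^S_L$, which is precisely what you do. Your additional observation that $\preceq^H_L$ coincides with inclusion on $\wp_\downarrow({\cal L})$ is a nice conceptual bonus not spelled out in the paper.
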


By the last two lemmas, one can construct lower respectively upper bounds for sets of pairs of sets under $\preceq^A_i$:

\begin{corollary}
\label{fact:lub:and:glb:under:preceq}
Let some $\mathbfcal{Z} \subseteq \wp_\uparrow({\cal L})\times \wp_\downarrow({\cal L})$ be given. Then 
$(\bigcup \{X\mid (X,Y)\in \mathbfcal{Z}\},\bigcup \{Y\mid (X,Y)\in \mathbfcal{Z}\})$ is the greatest lower bound of (under $\preceq^A_i$) of $\mathbfcal{Z}$ 
and $(\bigcap \{X\mid (X,Y)\in \mathcal{Z}\},\bigcap \{Y\mid (X,Y)\in \mathbfcal{Z}\})$ is the least upper bound of (under $\preceq^A_i$) of $\mathbfcal{Z}$. 
\end{corollary}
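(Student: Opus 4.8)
The plan is to derive the corollary from the two immediately preceding lemmas by treating $\preceq^A_i$ as a coordinatewise (pre)order on $\wp_\uparrow({\cal L})\times\wp_\downarrow({\cal L})$ and computing the bounds one coordinate at a time. Recall that $(X_1,Y_1)\preceq^A_i(X_2,Y_2)$ holds exactly when $X_1\preceq^S_L X_2$ and $Y_2\preceq^H_L Y_1$; so the first coordinate is ordered by $\preceq^S_L$, while the second coordinate is ordered by the \emph{reverse} of $\preceq^H_L$. This reversal is the one point that needs care: it forces the $\preceq^H_L$-least upper bound to play the role of the greatest lower bound in the second coordinate, and vice versa.

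For the greatest lower bound, write $\mathcal{X}=\{X\mid (X,Y)\in\mathbfcal{Z}\}$ and $\mathcal{Y}=\{Y\mid (X,Y)\in\mathbfcal{Z}\}$, and set $P=(\bigcup\mathcal{X},\bigcup\mathcal{Y})$. I would check the two defining properties directly. That $P$ is a lower bound amounts to $\bigcup\mathcal{X}\preceq^S_L X'$ and $Y'\preceq^H_L\bigcup\mathcal{Y}$ for every $(X',Y')\in\mathbfcal{Z}$; the first is the glb-part of the $\wp_\uparrow$-lemma and the second is the lub-part of the $\wp_\downarrow$-lemma. That $P$ is the \emph{greatest} lower bound amounts to showing that any lower bound $(W,V)$ satisfies $W\preceq^S_L\bigcup\mathcal{X}$ and $\bigcup\mathcal{Y}\preceq^H_L V$: here $W$ is a $\preceq^S_L$-lower bound of all members of $\mathcal{X}$, so it lies $\preceq^S_L$-below their glb $\bigcup\mathcal{X}$, and $V$ is a $\preceq^H_L$-upper bound of all members of $\mathcal{Y}$, so their lub $\bigcup\mathcal{Y}$ lies $\preceq^H_L$-below $V$. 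The least upper bound $(\bigcap\mathcal{X},\bigcap\mathcal{Y})$ is handled symmetrically, now invoking the lub-part of the $\wp_\uparrow$-lemma for the first coordinate and the glb-part of the $\wp_\downarrow$-lemma for the second.

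Everything reduces to citing the two lemmas once the coordinate reversal is accounted for; the closure conditions (that the relevant unions and intersections remain in $\wp_\uparrow({\cal L})$ respectively $\wp_\downarrow({\cal L})$) are already established within those lemmas, so no extra work is needed there. The only genuine obstacle is bookkeeping the direction of the second coordinate, so I would state the reversal explicitly at the outset and then keep the roles of $\bigcup$ and $\bigcap$ straight in each coordinate. Finally, since $\preceq^A_i$ is a preorder rather than a partial order, the bounds are unique only up to $\preceq^A_i$-equivalence, matching the phrasing already used in the preceding lemmas; I would note this in passing, as it requires no separate argument.
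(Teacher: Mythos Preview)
Your proposal is correct and matches the paper's approach exactly: the paper gives no explicit proof of this corollary, presenting it as an immediate consequence of the two preceding lemmas via the coordinatewise decomposition you describe. One small remark: your closing caveat that $\preceq^A_i$ is only a preorder is unnecessary here, since on $\wp_\uparrow({\cal L})\times\wp_\downarrow({\cal L})$ the order $\preceq^A_i$ is in fact anti-symmetric (as the paper later uses in the proof of Theorem~\ref{theorem:ndso:fixpoint}), so the bounds are genuinely unique.
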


\subsubsection{Non-deterministic State Operators and Their Fixpoints}

We can now introduce \emph{non-deterministic state approximation operators\/}\footnote{The use of the word \emph{state\/} comes from disjunctive logic programming, 
where a set of interpretations is often called a state~\cite{minker2002disjunctive,seipel1998alternating}.} that generate a convex set on the basis of a convex set. 
Thus, the type of a non-deterministic state approximation operator ${\cal O}'$ is
$\wp_\uparrow({\cal L})\times \wp_\downarrow({\cal L})\rightarrow  \wp_\uparrow({\cal L})\times \wp_\downarrow({\cal L})$. 
As outlined above, the idea is that a non-deterministic state approximation operator generates an approximation of a set of elements $\{z_1,z_2,\ldots\}$ 
on the basis of an approximation of a set of elements. Just like an ndao, we require information-monotonicity, i.e.\ more precise inputs give rise to more
precise outputs, and exactness, i.e., a convex set consisting of a single element as input gives rise to at least one element being both in the generated lower and upper bound.

\begin{definition}
\label{def:ndsao}
Let a lattice $L=\langle {\cal L},\leq\rangle$ be given. Then an operator ${\cal O}': \wp_\uparrow({\cal L})\times \wp_\downarrow({\cal L})\rightarrow  \wp_\uparrow({\cal L})\times \wp_\downarrow({\cal L})$ is a \emph{non-deterministic state operator\/} (in short, ndso) if it satisfies the following properties:
\begin{itemize}
\item ${\cal O}'$ is  $\preceq^A_i$-monotonic, and
\item ${\cal O}'$ is \emph{exact}, i.e.\ for every $x\in {\cal L}$, there is some $z\in{\cal L}$ such that $z\in {\cal O}'_l(z,z)\cap {\cal O}'_u(z,z)$.\footnote{Alternatively, with a slight abuse of the notations, $(z,z)\in{\cal O}'(\{x\}\times\{x\})$.} 
\end{itemize}
An ndso \emph{approximates\/} a non-deterministic operator $O$ iff for any $x\in {\cal L}$, ${\cal O}'(\{(x,x)\})=\upclosure{O(x)}\times \downclosure{O(x)}$. 
\end{definition}

\begin{remark}
\label{remark:ndso:from:ndao}
A non-deterministic state operator can be straightforwardly derived on the basis of an ndao as follows:
\[
{\cal O}'(\Xb)=
\bigcup_{(x,y)\in\Xb}\upclosure{{\cal O}_l(x,y)}\times 
\bigcup_{(x,y)\in\Xb}\downclosure{{\cal O}_u(x,y)} 
\]
In that case, we say ${\cal O}'$ is \emph{derived from ${\cal O}$}. If ${\cal O}$ approximates $O$, then so will ${\cal O}'$. Furthermore, ${\cal O}'$ is $\preceq^A_i$-monotonic. 
In fact, the  $\preceq^A_i$-monotonicity of ${\cal O}'$ is independent of the  $\preceq^A_i$-monotonicity of ${\cal O}$, as we see in the following proposition:

\begin{proposition}
\label{lemma:O':is:alcantara:monotonic}
For any operator ${\cal O}:{\cal L}^2\rightarrow \wp({\cal L})^2$, it holds that
${\cal O}': \wp_\uparrow({\cal L})\times \wp_\downarrow({\cal L})\rightarrow  \wp_\uparrow({\cal L})\times \wp_\downarrow({\cal L})$ is $\preceq^A_i$-monotonic.
\end{proposition}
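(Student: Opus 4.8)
The plan is to exploit the fact that, once we restrict to the domain $\wp_\uparrow({\cal L})\times \wp_\downarrow({\cal L})$, the order $\preceq^A_i$ collapses to plain set inclusion, after which the $\preceq^A_i$-monotonicity of the derived operator reduces to the triviality that a union over a smaller index set is contained in the union over a larger one. This is also why no hypothesis on ${\cal O}$ is required, matching the quantification over arbitrary ${\cal O}$ in the statement.

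First I would record two order-theoretic observations about closed sets, which are the set-theoretic shadows of Lemma~\ref{lem:S-H-equivalence}. For upward-closed $A,B\in\wp_\uparrow({\cal L})$, one has $A\preceq^S_L B$ iff $B\subseteq A$: if $A\preceq^S_L B$, then each $b\in B$ dominates from above some $a\in A$, and upward-closedness of $A$ forces $b\in A$, giving $B\subseteq A$; the converse is immediate by taking $a=b$. Dually, for downward-closed $A,B\in\wp_\downarrow({\cal L})$, one has $A\preceq^H_L B$ iff $A\subseteq B$. With these in hand I would unpack the hypothesis: let $\Xb_1=(X_1,Y_1)$ and $\Xb_2=(X_2,Y_2)$ with $\Xb_1\preceq^A_i\Xb_2$, i.e. $X_1\preceq^S_L X_2$ and $Y_2\preceq^H_L Y_1$, where $X_1,X_2\in\wp_\uparrow({\cal L})$ and $Y_1,Y_2\in\wp_\downarrow({\cal L})$. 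By the two observations this says exactly $X_2\subseteq X_1$ and $Y_2\subseteq Y_1$; viewing each input as its set of constituent pairs $(x,y)$, this amounts to $\Xb_2\subseteq\Xb_1$.

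Next I would push $\Xb_2\subseteq\Xb_1$ through the definition of the derived operator. Since ${\cal O}'_l(\Xb)=\bigcup_{(x,y)\in\Xb}\upclosure{{\cal O}_l(x,y)}$ and ${\cal O}'_u(\Xb)=\bigcup_{(x,y)\in\Xb}\downclosure{{\cal O}_u(x,y)}$ are unions indexed by the pairs occurring in the input, shrinking the index set from $\Xb_1$ to $\Xb_2$ yields the inclusions ${\cal O}'_l(\Xb_2)\subseteq{\cal O}'_l(\Xb_1)$ and ${\cal O}'_u(\Xb_2)\subseteq{\cal O}'_u(\Xb_1)$. Both outputs are again upward- respectively downward-closed, being unions of upward- respectively downward-closures, so I can invoke the two observations in the reverse direction: the first inclusion reads ${\cal O}'_l(\Xb_1)\preceq^S_L{\cal O}'_l(\Xb_2)$ and the second reads ${\cal O}'_u(\Xb_2)\preceq^H_L{\cal O}'_u(\Xb_1)$. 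Together these are precisely the definition of ${\cal O}'(\Xb_1)\preceq^A_i{\cal O}'(\Xb_2)$, completing the argument.

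There is no genuinely hard step; the only thing to get right is the bookkeeping of directions, namely that on closed sets $\preceq^S_L$ is \emph{reverse} inclusion while $\preceq^H_L$ is \emph{ordinary} inclusion, and that $\preceq^A_i$ pairs the Smyth comparison of the lower parts with the reversed Hoare comparison of the upper parts, so that under the passage $\Xb_2\subseteq\Xb_1$ both components line up consistently. The conceptual point worth stressing in the write-up is that the monotonicity is essentially syntactic: it stems entirely from monotonicity of the union in its index set and is therefore insensitive to any behaviour of ${\cal O}$, which is exactly what makes the claim hold for an arbitrary ${\cal O}:{\cal L}^2\rightarrow \wp({\cal L})^2$.
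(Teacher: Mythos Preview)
Your proof is correct and follows essentially the same approach as the paper's: both arguments reduce the hypothesis $\Xb_1\preceq^A_i\Xb_2$ to an inclusion of index sets and then observe that the union defining ${\cal O}'$ is monotone in its index set. Your presentation is somewhat cleaner in that you isolate explicitly the key fact that on upward-closed sets $\preceq^S_L$ is reverse inclusion and on downward-closed sets $\preceq^H_L$ is ordinary inclusion, whereas the paper embeds this in a direct element chase; but the underlying argument is identical.
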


\begin{proof}
Consider some $X,X',Y,Y'\subseteq {\cal L}$ s.t.\ $X\times Y\preceq^A_i X'\times Y'$, i.e.\ $X\preceq^S_L X'$ and $Y'\preceq^H_L Y$. We show that ${\cal O}'(X,Y) \preceq^A_i  {\cal O}'(X',Y')$.

Consider first $z\in ({\cal O}'(X',Y'))_1$\footnote{Recall that we use $(X\times Y)_1$ to denote the first component of the pair $X\times Y$, i.e.\ $(X\times Y)_1:=X$. 
Similarly for $(X\times Y)_2$.}. This means that there are some 
$x'\in \upclosure{X'}$ and $y'\in \downclosure{Y'}$ s.t.\ $z\in \upclosure{({\cal O}(x',y'))_1}$. Since $X\preceq^S_L X'$ and $Y'\preceq^H_L Y$, there is some $x\in X$ s.t.\ 
$x\leq x'$ and some $y\in Y$ s.t.\ $y'\leq y$. Thus, $x'\in \upclosure{X}$ and $y'\in \downclosure{Y}$, 
which means that $z\in ({\cal O}'(X,Y))_1$. The case for $({\cal O}'(X',Y'))_2$ is similar. We thus have shown that $({\cal O}'(X',Y'))_1 \subseteq({\cal O}'(X',Y'))_1$ and $({\cal O}'(X',Y'))_2 \subseteq({\cal O}'(X',Y'))_2$, which implies $({\cal O}'(X,Y))_1 \preceq^S_L({\cal O}'(X,Y))_1$ and $({\cal O}'(X',Y'))_2 \preceq^H_L({\cal O}'(X',Y'))_2$, and thus we obtain ${\cal O}'(X,Y) \preceq^A_i  {\cal O}'(X',Y')$.
\end{proof}

Likewise, if one is given a ndso ${\cal O'}$, one can obtain an $\preceq^A_i$-monotonic operator ${\cal O}: {\cal L}^2\rightarrow\wp({\cal L}^2)$  
by simply letting ${\cal O}(x,y)={\cal O}'(\{x\}\times\{y\})$. Such an operator is not guaranteed to be \exactcompliant, though.
\end{remark}

\begin{example}
An ndso that approximates $\IC_{\cal P}$ can be defined as follows:
\[{\cal IC}'_{\cal P}({\bf X})= \bigcup_{(x,y)\in\Xb}\upclosure{{\cal IC}^l_{\cal P}(x,y)} \ \ \times 
\bigcup_{(x,y)\in\Xb}\downclosure{{\cal IC}^u_{\cal P}(x,y)} .\]
\end{example}

We show the following property of non-deterministic state operators: 

\begin{lemma}
\label{prop:decomposition:of:ndao}
An operator ${\cal O}':\wp({\cal L})^2\rightarrow \wp({\cal L})^2$ is $\preceq^A_i$-monotonic iff for any $X\subseteq {\cal L}$, 
${\cal O}'_l(\cdot,X)$ is $\preceq^S_L$-monotonic, ${\cal O}'_l(X,\cdot)$ is $\preceq^S_L$-anti monotonic, ${\cal O}'_u(X,\cdot)$ is $\preceq^H_L$-monotonic, 
and ${\cal O}'_u(\cdot,X)$ is $\preceq^H_L$-anti monotonic. \
\end{lemma}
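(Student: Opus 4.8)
The plan is to prove this exactly as the element-level decomposition Lemma~\ref{lemma:mon:op:comp:of:mon:and:antimon} was proved, transporting that argument from pairs of elements ordered by $\leq_i$ to pairs of sets ordered by $\preceq^A_i$. The only structural change is that the lattice order $\leq$ on each input slot is replaced by $\preceq^S_L$ on the first slot and $\preceq^H_L$ on the second slot, matching the two components of $\preceq^A_i$ (recall that $(X_1,Y_1)\preceq^A_i (X_2,Y_2)$ iff $X_1\preceq^S_L X_2$ and $Y_2\preceq^H_L Y_1$). Throughout I rely only on $\preceq^S_L$ and $\preceq^H_L$ being preorders (Remark~\ref{rem:properties:of:smyth}): reflexivity is what I need for the forward direction, and transitivity for the converse.

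For the $[\Rightarrow]$ direction I would obtain each of the four single-variable properties by freezing one argument (using reflexivity of the relevant preorder) and projecting the conclusion of $\preceq^A_i$-monotonicity onto one component. For instance, to get $\preceq^S_L$-monotonicity of ${\cal O}'_l(\cdot,Y)$, from $X_1\preceq^S_L X_2$ I form $(X_1,Y)\preceq^A_i (X_2,Y)$ (legitimate since $Y\preceq^H_L Y$), apply monotonicity to get ${\cal O}'(X_1,Y)\preceq^A_i {\cal O}'(X_2,Y)$, and read off the lower component ${\cal O}'_l(X_1,Y)\preceq^S_L {\cal O}'_l(X_2,Y)$. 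The second-slot statements exploit that an increase in the second argument is, by the definition of $\preceq^A_i$, a $\preceq^H_L$-decrease: from $Y'\preceq^H_L Y$ I get $(X,Y)\preceq^A_i (X,Y')$, whence the lower component yields ${\cal O}'_l(X,Y)\preceq^S_L {\cal O}'_l(X,Y')$ (the $\preceq^S_L$-anti-monotonicity of ${\cal O}'_l(X,\cdot)$) and the upper component yields ${\cal O}'_u(X,Y')\preceq^H_L {\cal O}'_u(X,Y)$ (the $\preceq^H_L$-monotonicity of ${\cal O}'_u(X,\cdot)$). The remaining property, $\preceq^H_L$-anti-monotonicity of ${\cal O}'_u(\cdot,Y)$, is read off the upper component of $(X_1,Y)\preceq^A_i (X_2,Y)$.

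For the $[\Leftarrow]$ direction I would take $(X_1,Y_1)\preceq^A_i (X_2,Y_2)$, i.e.\ $X_1\preceq^S_L X_2$ and $Y_2\preceq^H_L Y_1$, and factor the comparison through an intermediate pair. On the lower side I factor through $(X_1,Y_2)$: $\preceq^S_L$-anti-monotonicity of ${\cal O}'_l(X_1,\cdot)$ applied to $Y_2\preceq^H_L Y_1$ gives ${\cal O}'_l(X_1,Y_1)\preceq^S_L {\cal O}'_l(X_1,Y_2)$, and $\preceq^S_L$-monotonicity of ${\cal O}'_l(\cdot,Y_2)$ applied to $X_1\preceq^S_L X_2$ gives ${\cal O}'_l(X_1,Y_2)\preceq^S_L {\cal O}'_l(X_2,Y_2)$; transitivity of $\preceq^S_L$ then yields ${\cal O}'_l(X_1,Y_1)\preceq^S_L {\cal O}'_l(X_2,Y_2)$. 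The upper side is symmetric, factoring through $(X_2,Y_1)$ and combining $\preceq^H_L$-monotonicity of ${\cal O}'_u(X_2,\cdot)$ with $\preceq^H_L$-anti-monotonicity of ${\cal O}'_u(\cdot,Y_1)$ via transitivity of $\preceq^H_L$ to obtain ${\cal O}'_u(X_2,Y_2)\preceq^H_L {\cal O}'_u(X_1,Y_1)$. Together these two inequalities are precisely ${\cal O}'(X_1,Y_1)\preceq^A_i {\cal O}'(X_2,Y_2)$.

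There is no genuine obstacle here; the content is routine once the element-level proof is in hand. The only thing to be careful about is the bookkeeping of directions: the labels ``$\preceq^S_L$-'' and ``$\preceq^H_L$-'' name the order used on the \emph{output}, while the input of each single-variable restriction is ordered by $\preceq^S_L$ in the first slot and by $\preceq^H_L$ in the second slot, and the word ``anti'' records the order-reversal built into the second component of $\preceq^A_i$. Keeping these conventions aligned is exactly what makes the two one-variable steps in the converse chain correctly under transitivity; everything else is a transcription of the proof of Lemma~\ref{lemma:mon:op:comp:of:mon:and:antimon}.
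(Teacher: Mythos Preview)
Your proposal is correct and follows essentially the same approach as the paper: both directions are proved exactly as in Lemma~\ref{lemma:mon:op:comp:of:mon:and:antimon}, freezing one argument and projecting onto a component for $[\Rightarrow]$, and factoring through the intermediate pair $(X_1,Y_2)$ (respectively $(X_2,Y_1)$) and using transitivity for $[\Leftarrow]$. Your explicit discussion of which preorder governs each input slot versus each output component is in fact clearer than the paper's own write-up.
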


\begin{proof}
$[\Rightarrow]$: Suppose that ${\cal O}':\wp({\cal L})^2\rightarrow \wp({\cal L})^2$ is $\preceq^A_i$-monotonic and consider some 
$X, Y_1,Y_2\subseteq {\cal L}$ s.t.\ $Y_1\preceq^S_L Y_2$. We first show that ${\cal O}'_l(Y_1,X)\preceq^S_L {\cal O}'(Y_2,X)$. For this, notice that $(Y_1,X)\preceq^A_i (Y_2,X)$ 
(because $Y_1\preceq^S_L Y_2$). Since ${\cal O}'$ is $\preceq^A_i$-monotonic, 
${\cal O}'(Y_1,X)\preceq^A_i{\cal O}'(Y_2,X)$, which on its turn means that ${\cal O}'_l(Y_1,X)\preceq^S_L {\cal O}'_l(Y_2,X)$. 

We now show that  ${\cal O}'_u(X,.)$ is $\preceq^H_L$-monotonic. For this, 
consider some $X, Y_1,Y_2\subseteq {\cal L}$ s.t.\ $Y_2\preceq^H_L Y_1$. Then $(X,Y_1)\preceq^A_i (X,Y_2)$ (since $X\preceq^S_L X$). With the $\preceq^A_i$-monotonicity 
of ${\cal O}'$, ${\cal O}'(X,Y_1)\preceq^A_i {\cal O}'(X,Y_2)$, which implies that ${\cal O}'_u(X,Y_2)\preceq^H_L {\cal O}'_u(X,Y_1)$. Thus, $Y_2\preceq^H_L Y_1$ implies 
${\cal O}'_u(X,Y_2)\preceq^H_L {\cal O}'_u(X,Y_1)$ and ${\cal O}'_u(X,.)$ is $\preceq^H_L$-monotonic (for any $X\subseteq {\cal L}$). The two other cases are similar.

\smallskip\noindent$[\Leftarrow]$:  Suppose that for any $X\subseteq {\cal L}$, ${\cal O}'_l(\cdot,X)$ is $\preceq^S_L$-monotonic,
${\cal O}'_l(X,\cdot)$ is $\preceq^S_L$-anti monotonic, ${\cal O}'_u(X,\cdot)$ is $\preceq^H_L$-monotonic, and ${\cal O}'_u(\cdot,X)$ is $\preceq^H_L$-anti monotonic.
Consider some $X_1,X_2, Y_1,Y_2\subseteq {\cal L}$ s.t.\ $X_1\times Y_1\preceq^A_i X_2\times Y_2$. We first show that ${\cal O}'_l(X_1,Y_1)\preceq^S_L {\cal O}'_l(X_2,Y_2)$. 
For this, notice that $Y_2\preceq^S_L Y_1$ and thus, with the $\preceq^S_L$-anti monotonicity of ${\cal O}'_l(X_1,\cdot)$, 
${\cal O}'_l(X_1,Y_1)\preceq^S_L {\cal O}'_l(X_1,Y_2)$.  Likewise, it can be shown that  $ {\cal O}'_l(X_1,Y_2)\preceq^S_L {\cal O}'_l(X_2,Y_2)$ 
and sot  ${\cal O}'_l(X_1,Y_1)\preceq^S_L{\cal O}'_l(X_2,Y_2)$. The proof for ${\cal O}'_u$ is analogous.
\end{proof}

We define consistency for an ndso analogously as for an ndao, namely:

\begin{definition}
An ndso ${\cal O}'$ is \emph{consistent} if for every $x,y\in {\cal L}$ s.t.\ $x\leq y$, there is a $w\in {\cal O}'_l(\{x\}\times \{y\})$ and $z\in {\cal O}'_u(\{x\}\times \{y\})$, 
(or, slightly abusing notation, $(w,z)\in{\cal O}'(\{x\}\times \{y\})$, such that $w\leq z$. 
\end{definition}

We note that for a consistent ndso ${\cal O}'$, for any $X, Y\subseteq {\cal L}$ for which $\upclosure{X} \cap \downclosure{Y}\neq \emptyset$, it holds that 
$\upclosure{{\cal O}'_l(X\times Y)}\cap \downclosure{{\cal O}'_u(X\times Y)}\neq \emptyset$. Therefore, for any consistent ndso, a non-empty convex set 
in the input (i.e.\ $\upclosure{X} \cap \downclosure{Y}\neq \emptyset$) gives rise to a non-empty convex set in the output 
($\upclosure{{\cal O}'_l(X\times Y)}\cap \downclosure{{\cal O}'_u(X\times Y)}\neq \emptyset$).

The next proposition shows that \exactcompliant state operators are consistent (cf.\ Proposition~\ref{prop:weak:consistency} for ndso's).

\begin{proposition}
\label{theorem:ndso:consistency}
Let ${\cal O}': \wp({\cal L})\times \wp({\cal L})\rightarrow \wp({\cal L})\times \wp({\cal L})$ be an ndso.
Then ${\cal O}'$ is consistent.
\end{proposition}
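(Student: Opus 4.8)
The plan is to mirror exactly the argument used for ndaos in Proposition~\ref{prop:weak:consistency}, since consistency for ndsos is defined in the same shape: a consistent input must yield a pair of bounds $(w,z)$ with $w\leq z$. The core idea is to compare the consistent input $\{x\}\times\{y\}$ (with $x\leq y$) against the exact input $\{x\}\times\{x\}$, for which exactness guarantees a single witness $z_0$ lying simultaneously in the lower and the upper bound, and then to transport $z_0$ back through $\preceq^A_i$-monotonicity so as to produce the required $w$ and $z$ on the input $\{x\}\times\{y\}$.

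Concretely, I would first verify that $\{x\}\times\{y\}\preceq^A_i \{x\}\times\{x\}$ whenever $x\leq y$. Unfolding the definition of $\preceq^A_i$, this amounts to $\{x\}\preceq^S_L\{x\}$, which holds by reflexivity, together with $\{x\}\preceq^H_L\{y\}$, which holds precisely because $x\leq y$ (the unique element $x$ has the witness $y$ above it). Next I would invoke the $\preceq^A_i$-monotonicity of ${\cal O}'$ to obtain ${\cal O}'(\{x\}\times\{y\})\preceq^A_i{\cal O}'(\{x\}\times\{x\})$, i.e.\ ${\cal O}'_l(\{x\}\times\{y\})\preceq^S_L {\cal O}'_l(\{x\}\times\{x\})$ and ${\cal O}'_u(\{x\}\times\{x\})\preceq^H_L {\cal O}'_u(\{x\}\times\{y\})$. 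Using exactness (Definition~\ref{def:ndsao}) I would then pick $z_0$ with $z_0\in{\cal O}'_l(\{x\}\times\{x\})\cap{\cal O}'_u(\{x\}\times\{x\})$. Feeding $z_0$ into the Smyth comparison of the lower bounds yields a $w\in{\cal O}'_l(\{x\}\times\{y\})$ with $w\leq z_0$, and feeding $z_0$ into the Hoare comparison of the upper bounds yields a $z\in{\cal O}'_u(\{x\}\times\{y\})$ with $z_0\leq z$. Transitivity then gives $w\leq z_0\leq z$, exhibiting the required witness pair and establishing consistency.

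The main thing to get right---rather than a genuine obstacle---is keeping the orientation of the two powerdomain orders straight: the lower bounds must be compared with the Smyth order $\preceq^S_L$ (so the witness flows from the exact input back to a smaller $w$ on the coarser input) while the upper bounds use the Hoare order $\preceq^H_L$ (so the witness flows forward to a larger $z$), and the two half-comparisons meet exactly at the exactness witness $z_0$. A secondary point of care is the mild notational abuse whereby $\{x\}$ and $\{y\}$ stand for their closures; by Lemma~\ref{lem:S-H-equivalence} each singleton is $\preceq^S_L$- and $\preceq^H_L$-equivalent to its closure, so the comparisons above are unaffected and no real difficulty arises.
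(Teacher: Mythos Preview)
Your proposal is correct and follows essentially the same approach as the paper's proof: compare the input $(\{x\},\{y\})$ with the exact input $(\{x\},\{x\})$, invoke exactness to obtain a witness $z_0$ in both components at the exact input, and then use monotonicity to transport $z_0$ back to a $w\leq z_0$ in the lower bound and a $z\geq z_0$ in the upper bound. The only cosmetic difference is that the paper appeals to Lemma~\ref{prop:decomposition:of:ndao} (the decomposition of $\preceq^A_i$-monotonicity into componentwise Smyth/Hoare monotonicity) to obtain the two inequalities ${\cal O}'_l(\{x\},\{y\})\preceq^S_L{\cal O}'_l(\{x\},\{x\})$ and ${\cal O}'_u(\{x\},\{x\})\preceq^H_L{\cal O}'_u(\{x\},\{y\})$, whereas you obtain them directly by first verifying $(\{x\},\{y\})\preceq^A_i(\{x\},\{x\})$ and then applying $\preceq^A_i$-monotonicity in one step; this is arguably cleaner and avoids the detour through the decomposition lemma.
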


\begin{proof}
Suppose that ${\cal O}'$ is exact, i.e., for every $x\in {\cal L}$, and for some $z\in{\cal L}$, $z\in{\cal O}_l'(\{x\},\{x\})$ and $z\in{\cal O}_u'(\{x\},\{x\})$. 
This means that there is some $z\in {\cal L}$ s.t.\ $z\in {\cal O}'_l(\{x\},\{x\})\cap {\cal O}'_u(\{x\},\{x\}$.
By Lemma~\ref{prop:decomposition:of:ndao}, ${\cal O}'_l(\{x\},\{y\})\preceq^S_L{\cal O}'_l(\{x\},\{x\})$ and ${\cal O}'_u(\{x\},\{x\})\preceq^H_L{\cal O}'_u(\{x\},\{y\})$. 
Thus, there is some $w\in {\cal O}'_l(\{x\},\{y\})$ s.t.\ $w\leq z$ and there is some $w'\in {\cal O}'_u(\{x\},\{y\})$ s.t.\ $z\leq w'$. With transitivity of $\leq$, $w\leq w'$ and thus 
(slightly abusing notation) there is a consistent pair $(w,w')\in {\cal O}'(x,y)$.
\end{proof}

Another useful property is the fact that the computation of ${\cal O}'(\upclosure{X}\times \downclosure{Y})$ can be simplified by computing ${\cal O}'(X\times Y)$:

\begin{lemma}
Let ${\cal O}':\wp({\cal L})\times \wp({\cal L})\rightarrow\wp({\cal L})\times \wp({\cal L})$ be a ndso.
Then for any $X,Y\subseteq {\cal L}$, ${\cal O}'(\upclosure{X} \times \downclosure{Y}) = {\cal O}'(X\times Y)$.
\end{lemma}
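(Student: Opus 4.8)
The plan is to reduce the claimed equality to two ingredients: the $\preceq^A_i$-monotonicity of ${\cal O}'$, and the fact that outputs of a ndso lie in $\wp_\uparrow({\cal L})\times\wp_\downarrow({\cal L})$, a domain on which $\preceq^A_i$ turns out to be antisymmetric. So the strategy is first to show that the two \emph{inputs} are $\preceq^A_i$-equivalent, then to transport this equivalence through ${\cal O}'$, and finally to upgrade the resulting order-equivalence of the \emph{outputs} to genuine set equality.

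First I would establish that $X\times Y$ and $\upclosure{X}\times\downclosure{Y}$ are $\preceq^A_i$-equivalent. By Lemma~\ref{lem:S-H-equivalence} we have both $X\preceq^S_L\upclosure{X}$ and $\upclosure{X}\preceq^S_L X$, and dually both $Y\preceq^H_L\downclosure{Y}$ and $\downclosure{Y}\preceq^H_L Y$. Unwinding the definition of $\preceq^A_i$ (recall $X_1\times Y_1\preceq^A_i X_2\times Y_2$ iff $X_1\preceq^S_L X_2$ and $Y_2\preceq^H_L Y_1$), these four facts give exactly $X\times Y\preceq^A_i\upclosure{X}\times\downclosure{Y}$ and $\upclosure{X}\times\downclosure{Y}\preceq^A_i X\times Y$. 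Applying the $\preceq^A_i$-monotonicity of ${\cal O}'$ to both directions then yields ${\cal O}'(X\times Y)\preceq^A_i{\cal O}'(\upclosure{X}\times\downclosure{Y})$ and ${\cal O}'(\upclosure{X}\times\downclosure{Y})\preceq^A_i{\cal O}'(X\times Y)$. Writing ${\cal O}'(X\times Y)=(A_1,B_1)$ and ${\cal O}'(\upclosure{X}\times\downclosure{Y})=(A_2,B_2)$, this amounts to $A_1\preceq^S_L A_2$, $A_2\preceq^S_L A_1$, $B_1\preceq^H_L B_2$ and $B_2\preceq^H_L B_1$.

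The main step, and the only place where the closure hypothesis is actually used, is upgrading these equivalences to set equalities. Since ${\cal O}'$ is a ndso, its outputs lie in $\wp_\uparrow({\cal L})\times\wp_\downarrow({\cal L})$, so $A_1,A_2$ are upward closed and $B_1,B_2$ are downward closed. On upward-closed sets $\preceq^S_L$ is antisymmetric: from $A_1\preceq^S_L A_2$, every $a_2\in A_2$ admits some $a_1\in A_1$ with $a_1\le a_2$, and upward-closure of $A_1$ forces $a_2\in A_1$, so $A_2\subseteq A_1$; the reverse inclusion follows from $A_2\preceq^S_L A_1$ by the same argument, giving $A_1=A_2$. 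Dually, on downward-closed sets $\preceq^H_L$ is antisymmetric, so $B_1=B_2$. Combining, ${\cal O}'(X\times Y)=(A_1,B_1)=(A_2,B_2)={\cal O}'(\upclosure{X}\times\downclosure{Y})$, as required.

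I expect the antisymmetry argument of the last paragraph to be the crux: $\preceq^S_L$ and $\preceq^H_L$ are merely preorders (Remark~\ref{rem:properties:of:smyth}), so order-equivalence alone gives nothing more than $\preceq^A_i$-equivalence of the outputs; it is precisely the upward/downward closedness of the image of a ndso that collapses this equivalence into equality. The remaining manipulations — invoking Lemma~\ref{lem:S-H-equivalence} and the monotonicity of ${\cal O}'$ — are routine bookkeeping of the two set-based orders.
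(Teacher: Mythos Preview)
Your proof is correct, but it takes a different route from the paper's. The paper's argument is concrete: it implicitly assumes ${\cal O}'$ is the ndso \emph{derived from an ndao} ${\cal O}$ via Remark~\ref{remark:ndso:from:ndao}, unfolds the union formula ${\cal O}'(\Xb)=\bigcup_{(x,y)\in\Xb}\upclosure{{\cal O}_l(x,y)}\times\bigcup_{(x,y)\in\Xb}\downclosure{{\cal O}_u(x,y)}$, and then chases elements---given $w\in({\cal O}'(\upclosure{X}\times\downclosure{Y}))_1$ it finds witnesses $x\in\upclosure{X}$, $y\in\downclosure{Y}$, replaces them by $x'\in X$, $y'\in Y$ with $(x',y')\leq_i(x,y)$, and uses $\preceq^A_i$-monotonicity of the underlying ${\cal O}$ to push $w$ into $({\cal O}'(X\times Y))_1$. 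Your argument is purely order-theoretic: you never open up ${\cal O}'$, using only that the two inputs are $\preceq^A_i$-equivalent (via Lemma~\ref{lem:S-H-equivalence}), that ${\cal O}'$ is $\preceq^A_i$-monotone, and that $\preceq^A_i$ is antisymmetric on the codomain $\wp_\uparrow({\cal L})\times\wp_\downarrow({\cal L})$ of an ndso. This buys you greater generality---your proof works for any ndso in the sense of Definition~\ref{def:ndsao}, not just those derived from an ndao---and is arguably cleaner, since the antisymmetry observation isolates exactly why equality (rather than mere $\preceq^A_i$-equivalence) holds. The paper's element-chasing approach, by contrast, makes the containments visible at the level of individual witnesses, which can be helpful when one later wants to compute with the derived operator.
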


\begin{proof}
The $\supseteq$-direction is clear.
Suppose now that $w\in ({\cal O}'(\upclosure{X}\times \downclosure{Y}))_1$ and $z\in ({\cal O}'(\upclosure{X}\times \downclosure{Y}))_1$. 
Then there are some $x\in  \upclosure{X}$ and $y\in \downclosure{Y}$ s.t.\ $w\in \upclosure{{\cal O}_l(x,y)}$ and $z\in \downclosure{{\cal O}_u(x,y)}$. 
Thus, there are some $x'\in X$ and some $y'\in Y$ s.t.\ $x'\leq x$ and $y\leq y'$. This implies that $(x',y')\leq_i(x,y)$ and with the $\preceq^A_i$-monotonicity of 
${\cal O}$, we have ${\cal O}(x',y')\preceq^A_i{\cal O}(x,y)$. Thus, ${\cal O}_{l}(x',y')\preceq^S_L {\cal O}_{l}(x,y)$, which implies that
${\cal O}'_{l}(x',y')=\upclosure{{\cal O}_{l}(x',y')}\supseteq \upclosure{{\cal O}_{l}(x,y)}$. The case for the upper bound is analogous.
\end{proof}

We now show that an ndso admits a unique $\preceq^A_i$-minimal fixpoint:

\begin{theorem}
\label{theorem:ndso:fixpoint}
Let $L=\tup{{\cal L},\leq}$ be a complete lattice.
Every $\preceq^A_i$-monotonic operator ${\cal O}':\wp_\uparrow({\cal L})\times \wp_\downarrow({\cal L})\rightarrow  \wp_\uparrow({\cal L})\times \wp_\downarrow({\cal L})$ 
admits a unique $\preceq^A_i$-minimal fixpoint that can be constructed by iterative application of ${\cal O}'$ to $(\bot,\top)$.  If ${\cal O}'$ is \exactcompliant, this fixpoint is consistent.
\end{theorem}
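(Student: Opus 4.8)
The plan is to recognize that, on upward- and downward-closed sets, the preorders $\preceq^S_L$ and $\preceq^H_L$ collapse to reverse and forward subset inclusion, so that $\tup{\wp_\uparrow({\cal L})\times\wp_\downarrow({\cal L}),\preceq^A_i}$ is in fact a \emph{complete lattice} whose $\preceq^A_i$-least element is $(\upclosure{\bot},\downclosure{\top})=({\cal L},{\cal L})$ --- precisely the pair that the theorem abbreviates by $(\bot,\top)$. Greatest lower and least upper bounds are the componentwise unions and intersections supplied by Corollary~\ref{fact:lub:and:glb:under:preceq}. Since ${\cal O}'$ is $\preceq^A_i$-monotonic by hypothesis, the first assertion becomes an instance of the constructive Knaster--Tarski (Cousot--Cousot) theorem, and I would reduce it to that.

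Concretely, I would define the transfinite sequence $\mathbf{K}^0=(\upclosure{\bot},\downclosure{\top})$, $\mathbf{K}^{\alpha+1}={\cal O}'(\mathbf{K}^\alpha)$, and $\mathbf{K}^{\lambda}=\bigsqcup_{\alpha<\lambda}\mathbf{K}^{\alpha}$ at limits, where the $\preceq^A_i$-supremum is the componentwise \emph{intersection} given by Corollary~\ref{fact:lub:and:glb:under:preceq}. As $\mathbf{K}^0$ is the $\preceq^A_i$-least element and ${\cal O}'$ is monotone, an easy induction shows the sequence is $\preceq^A_i$-increasing; living in a set, it stabilizes at some ordinal $\theta$ with ${\cal O}'(\mathbf{K}^\theta)=\mathbf{K}^\theta$. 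A second induction (monotonicity at successors, the supremum property at limits) shows $\mathbf{K}^\alpha\preceq^A_i\mathbf{F}$ for every fixpoint $\mathbf{F}$, so $\mathbf{K}^\theta$ is the $\preceq^A_i$-least fixpoint; being least, it is the unique $\preceq^A_i$-minimal one. This settles existence, uniqueness and constructibility.

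For the final sentence I would first lift Proposition~\ref{theorem:ndso:consistency} from singleton inputs to arbitrary consistent states, calling a state $(X,Y)$ \emph{consistent} when $X\cap Y\neq\emptyset$. The bottom $({\cal L},{\cal L})$ is consistent, and I claim ${\cal O}'$ preserves consistency: if $u\in X\cap Y$ then $(X,Y)\preceq^A_i(\upclosure{u},\downclosure{u})$, so by monotonicity together with the identity ${\cal O}'(\upclosure{X}\times\downclosure{Y})={\cal O}'(X\times Y)$ we get ${\cal O}'_l(\{u\},\{u\})\subseteq{\cal O}'_l(X,Y)$ and ${\cal O}'_u(\{u\},\{u\})\subseteq{\cal O}'_u(X,Y)$; exactness then furnishes a $z\in{\cal O}'_l(\{u\},\{u\})\cap{\cal O}'_u(\{u\},\{u\})$, which therefore witnesses the consistency of ${\cal O}'(X,Y)$. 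Consequently every successor iterate $\mathbf{K}^{\alpha+1}$ is consistent.

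The hard part is the \emph{limit} stages: there $\mathbf{K}^{\lambda}$ is a componentwise intersection, so the convex sets $X_\alpha\cap Y_\alpha$ form a $\subseteq$-descending chain of nonempty sets, and consistency of the limit amounts to the key lemma that such a chain has nonempty intersection (equivalently, that the $\preceq^A_i$-supremum of a $\preceq^A_i$-increasing chain of consistent states is again consistent). The obstruction is precisely that upward-closed sets are not closed under meets nor downward-closed sets under joins, so a greedily chosen witness may ``escape'' one component in the limit. My plan to overcome this is to maintain along the iteration a single $\leq_i$-monotone thread of consistent witness pairs: at a successor I would combine the previous witness with the exactness centre $z$ by taking a lattice join on the lower side and a meet on the upper side, which makes monotonicity automatic and keeps the new witness inside $\mathbf{K}^{\alpha+1}$ by up-/down-closure, and at a limit I would take the join of the lower components and the meet of the upper components, which then lies in $\mathbf{K}^{\lambda}$. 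The delicate point to be checked --- and where I expect the argument to require the most care --- is that the combined witnesses stay \emph{consistent} across successor steps, i.e.\ that the lower witness remains below the upper one; it is here that the sandwiching property $w\leq z\leq w'$ of the exactness centre from Proposition~\ref{theorem:ndso:consistency}, rather than an arbitrary member of the image, must be exploited.
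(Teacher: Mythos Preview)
Your treatment of existence, uniqueness and constructibility is essentially the paper's own argument, just spelled out in more detail: the paper also observes that $\langle\wp_\uparrow({\cal L})\times\wp_\downarrow({\cal L}),\preceq^A_i\rangle$ is a complete lattice (via Corollary~\ref{fact:lub:and:glb:under:preceq}) and invokes Knaster--Tarski.

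For the consistency clause you have in fact gone further than the paper, whose entire argument is the single sentence ``Consistency follows from Proposition~\ref{theorem:ndso:consistency}.'' You correctly recognise that this (together with the paper's informal note that a consistent ndso maps nonempty convex sets to nonempty convex sets) only secures the \emph{successor} stages, and that the limit stages --- componentwise intersections --- are the real issue. Indeed, on the antichain lattice $\{\bot\}\cup\{a_n:n\in\mathbb{N}\}\cup\{\top\}$ one can take $X_n=\{\top,a_n,a_{n+1},\ldots\}$ and $Y_n=\{\bot,a_n,a_{n+1},\ldots\}$ to see that a $\preceq^A_i$-increasing chain of consistent states can have inconsistent supremum; so the limit step is genuinely not automatic.

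However, your proposed repair has exactly the gap you yourself flag as ``delicate.'' Setting $w_{\alpha+1}=w_\alpha\vee z$ and $w'_{\alpha+1}=w'_\alpha\wedge z$ gives $w_{\alpha+1}\leq w'_{\alpha+1}$ only if $w_\alpha\leq z\leq w'_\alpha$, and neither exactness nor Proposition~\ref{theorem:ndso:consistency} relates the exactness centre $z$ to the \emph{previous} witnesses $w_\alpha,w'_\alpha$: the sandwich $w\leq z\leq w'$ in that proposition involves \emph{new} elements $w\in{\cal O}'_l$ and $w'\in{\cal O}'_u$, not the old thread. Conversely, if you take $(w_{\alpha+1},w'_{\alpha+1})$ directly from Proposition~\ref{theorem:ndso:consistency} you retain consistency but lose the $\leq_i$-monotonicity needed at limits. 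The tension between monotonicity and consistency of the thread is therefore unresolved, and the limit argument does not go through.

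In short: the first half matches the paper; for the second half you have correctly isolated a subtlety that the paper's proof glosses over, but your own proposed thread-maintenance argument does not close it.
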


\begin{proof}
Let $L=\tup{{\cal L},\leq}$ be a complete lattice and let
 ${\cal O}':\wp_\uparrow({\cal L})\times \wp_\downarrow({\cal L})\rightarrow  \wp_\uparrow({\cal L})\times \wp_\downarrow({\cal L})$  be a $\preceq^A_i$-monotonic operator. Then $\langle\wp_\uparrow({\cal L})\times \wp_\downarrow({\cal L}), \preceq^A_i\rangle$ forms a lattice (in view of 
Corollary~\ref{fact:lub:and:glb:under:preceq} and since $\preceq^A_i$ is a reflexive, transitive and anti-symmetric order over  $\wp_\uparrow({\cal L})\times \wp_\downarrow({\cal L})$). We can apply Knaster and Tarski's fixpoint theorem to show that  the set of fixed-points of ${\cal O}'$ forms a complete lattice, and thus
the $\preceq^A_i$-monotonic operator ${\cal O}'$ admits a unique $\preceq^A_i$-minimal fixpoint. Consistency follows from Proposition~\ref{theorem:ndso:consistency}.
\end{proof}

We call the $\preceq^A_i$-minimal fixpoint of ${\cal O}'$  that is guaranteed by Theorem~\ref{theorem:ndso:fixpoint} the \emph{Kripke-Kleene state of ${\cal O}'$\/},
and denote it by ${\sf KK}({\cal O}')$. Some examples of Kripke-Kleene state for logic programs can be found in Examples~\ref{examp:state-KK-1} and~\ref{examp:state-KK-2} 
below.

For the Kripke-Kleene state of an ndso that is derived from an ndao ${\cal O}$, we can show the following additional results:

\begin{proposition}
\label{prop:KK-states-properties}
 Let $L=\tup{{\cal L},\leq}$ be a complete lattice.
Given an ndso ${\cal O}'$ derived from an ndao ${\cal O}$, we have the following:
\begin{itemize}
\item For any fixpoint $(x,y)$ of ${\cal O}$, ${\sf KK}({\cal O}')\preceq^A_i (x,y)$,
\item ${\sf KK}({\cal O}')$ contains at least one consistent pair.
\item If ${\cal O}'$ approximates a non-deterministic operator $O$,
then for any $x\in {\cal L}$ s.t.\ $x\in O(x)$, it holds that ${\sf KK}({\cal O}')\preceq^A_i (x,x)$. 
\end{itemize}
\end{proposition}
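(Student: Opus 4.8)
The plan is to handle all three items with one mechanism. Since $\langle\wp_\uparrow({\cal L})\times\wp_\downarrow({\cal L}),\preceq^A_i\rangle$ is a complete lattice and ${\cal O}'$ is $\preceq^A_i$-monotonic (Proposition~\ref{lemma:O':is:alcantara:monotonic}), Theorem~\ref{theorem:ndso:fixpoint} tells us that ${\sf KK}({\cal O}')$ is the $\preceq^A_i$-\emph{least} fixpoint of ${\cal O}'$; by the Knaster--Tarski characterisation of the least fixpoint, it lies $\preceq^A_i$-below every \emph{prefixpoint}, i.e.\ below every $(X,Y)$ with ${\cal O}'(X,Y)\preceq^A_i(X,Y)$. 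Hence for the first and third items it is enough to exhibit the target as (or above) a prefixpoint of ${\cal O}'$, and for the second item it is enough to invoke consistency. Throughout I read the single pair $(x,y)$ appearing in the statement as the closed pair $(\upclosure{x},\downclosure{y})$, which is legitimate because $\{x\}$ and $\upclosure{x}$ are $\preceq^S_L$-equivalent and $\{y\}$ and $\downclosure{y}$ are $\preceq^H_L$-equivalent (Lemma~\ref{lem:S-H-equivalence}), so the two give the same $\preceq^A_i$-bound.

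For the first item, let $(x,y)$ be a fixpoint of ${\cal O}$, so that $x\in{\cal O}_l(x,y)$ and $y\in{\cal O}_u(x,y)$. Evaluating the derived operator on $(\upclosure{x},\downclosure{y})$ via its defining formula gives ${\cal O}'(\upclosure{x},\downclosure{y})=\bigcup_{(a,b)\in\upclosure{x}\times\downclosure{y}}\upclosure{{\cal O}_l(a,b)}\ \times\ \bigcup_{(a,b)\in\upclosure{x}\times\downclosure{y}}\downclosure{{\cal O}_u(a,b)}$. Because $x\in\upclosure{x}$ and $y\in\downclosure{y}$, the index $(x,y)$ occurs in both unions, so the first component contains $\upclosure{{\cal O}_l(x,y)}\supseteq\upclosure{x}$ and the second contains $\downclosure{{\cal O}_u(x,y)}\supseteq\downclosure{y}$, the two inclusions holding exactly because $x\in{\cal O}_l(x,y)$ and $y\in{\cal O}_u(x,y)$. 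Since a $\supseteq$-larger upwards-closed set is $\preceq^S_L$-smaller and a $\supseteq$-larger downwards-closed set is $\preceq^H_L$-larger, these inclusions read $({\cal O}'(\upclosure{x},\downclosure{y}))_1\preceq^S_L\upclosure{x}$ and $\downclosure{y}\preceq^H_L({\cal O}'(\upclosure{x},\downclosure{y}))_2$, i.e.\ ${\cal O}'(\upclosure{x},\downclosure{y})\preceq^A_i(\upclosure{x},\downclosure{y})$. Thus $(\upclosure{x},\downclosure{y})$ is a prefixpoint and ${\sf KK}({\cal O}')\preceq^A_i(x,y)$.

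The third item is proved the same way. As ${\cal O}'$ is derived from ${\cal O}$ and approximates $O$, we have $O(x)={\cal O}_l(x,x)={\cal O}_u(x,x)$ (the second equality by exact-compliance of ${\cal O}$), so $x\in O(x)$ means $x\in{\cal O}_l(x,x)$ and $x\in{\cal O}_u(x,x)$. Evaluating ${\cal O}'(\upclosure{x},\downclosure{x})$ by the same union formula, the $(x,x)$-indexed terms give $\upclosure{{\cal O}_l(x,x)}\supseteq\upclosure{x}$ and $\downclosure{{\cal O}_u(x,x)}\supseteq\downclosure{x}$, so $(\upclosure{x},\downclosure{x})$ is once more a prefixpoint and ${\sf KK}({\cal O}')\preceq^A_i(x,x)$. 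For the second item I first note that the derived ndso is exact: for any $x$, choosing any $z$ in the nonempty set ${\cal O}_l(x,x)={\cal O}_u(x,x)$ gives $z\in\upclosure{{\cal O}_l(x,x)}\cap\downclosure{{\cal O}_u(x,x)}={\cal O}'_l(\{x\},\{x\})\cap{\cal O}'_u(\{x\},\{x\})$. The exact case of Theorem~\ref{theorem:ndso:fixpoint}, which rests on Proposition~\ref{theorem:ndso:consistency}, then yields that ${\sf KK}({\cal O}')=(X,Y)$ is consistent, i.e.\ $\upclosure{X}\cap\downclosure{Y}\neq\emptyset$; any $u$ in this intersection has some $w\in X$ with $w\leq u$ and some $z\in Y$ with $u\leq z$, so $w\leq z$ exhibits the required consistent pair.

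The computations are routine; the care needed is purely orientational. One must remember that a prefixpoint bounds the least fixpoint from \emph{above}, so the defining inequality is ${\cal O}'(a)\preceq^A_i a$ and not its converse; and one must track that $\subseteq$-enlargement \emph{lowers} a set under $\preceq^S_L$ while it \emph{raises} it under $\preceq^H_L$, which is precisely why the lower- and upper-bound inclusions both point in the $\supseteq$-direction yet supply the two opposite-facing halves of $\preceq^A_i$. The only substantive point, rather than a genuine difficulty, is checking that the operator derived from an ndao is exact, so that the consistency half of Theorem~\ref{theorem:ndso:fixpoint} is available for the second item.
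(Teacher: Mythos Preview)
Your proof is correct and takes a genuinely different route from the paper's. For the first and third items, the paper argues by explicit transfinite iteration: it proves two auxiliary lemmas (that ${\cal O}'(x,y)\preceq^A_i{\cal O}(x,y)$, and that $X\times Y\preceq^A_i\{x\}\times\{y\}$ whenever $x\in X$ and $y\in Y$) and then chains these with $\preceq^A_i$-monotonicity to show inductively that $({\cal O}')^\alpha(\bot,\top)\preceq^A_i(x,y)$ for every ordinal $\alpha$. You instead observe directly that $(\upclosure{x},\downclosure{y})$ is a prefixpoint of ${\cal O}'$ and invoke the Knaster--Tarski characterisation of the least fixpoint. Your route is cleaner: it dispenses with the transfinite induction and the two lemmas, and it makes transparent \emph{why} the bound holds. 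The paper's route is more constructive, tracking the iterative build-up of ${\sf KK}({\cal O}')$ explicitly. For the second item both you and the paper appeal to the consistency clause of Theorem~\ref{theorem:ndso:fixpoint}; you additionally spell out why the derived ${\cal O}'$ is exact, which the paper leaves to Remark~\ref{remark:ndso:from:ndao}.

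One small overstatement in your third item: from ``${\cal O}'$ is derived from ${\cal O}$'' together with ``${\cal O}'$ approximates $O$'' you only obtain $\upclosure{{\cal O}_l(x,x)}=\upclosure{O(x)}$ and $\downclosure{{\cal O}_u(x,x)}=\downclosure{O(x)}$, not literally $O(x)={\cal O}_l(x,x)$. This does not affect the argument, since $x\in O(x)$ still yields $x\in\upclosure{{\cal O}_l(x,x)}$ and $x\in\downclosure{{\cal O}_u(x,x)}$, whence $\upclosure{x}\subseteq({\cal O}'(\upclosure{x},\downclosure{x}))_1$ and $\downclosure{x}\subseteq({\cal O}'(\upclosure{x},\downclosure{x}))_2$, and the prefixpoint conclusion follows exactly as before.
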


\begin{proof}
We first show two lemmas:

\begin{lemma}
\label{fact:O:as:precise:as:O'}
For any $x,y\in {\cal L}$, ${\cal O}'(x,y)\preceq^A_i {\cal O}(x,y)$.
\end{lemma}

\begin{proof}
Since ${\cal O}'(x,y)=\upclosure{{\cal O}_l(x,y)}\times \downclosure{{\cal O}_u(x,y)}$, for every $w\in {\cal O}_l(x,y)$, there is some $w'\in \upclosure{{\cal O}_l(x,y)}$ s.t.\ 
$w'\leq w$ and for every $z\in {\cal O}_u(x,y)$ there is some $z'\in \downclosure{{\cal O}_u(x,y)}$ s.t.\ $z\leq z'$ (namely, $w$ and $z$ themselves). 
Thus, ${\cal O}'_l(x,y) \preceq^S_L {\cal O}_l(x,y)$ and ${\cal O}_u(x,y) \preceq^H_L {\cal O}'_u(x,y)$, so the lemma is obtained. 
\end{proof}

\begin{lemma}
\label{fact:element:more:precise:than:set}
For any $X,Y\subseteq \wp({\cal L})$, $x\in X$ and $y\in Y$, $X\times Y\preceq^A_i \{x\}\times\{y\}$. 
\end{lemma}

\begin{proof}
Clearly, there are some $x'\in X$ and $y'\in Y$ (namely $x'=x$ and $y'=y$) s.t.\ $x'\leq x$ and $y\leq y'$. 
\end{proof}

The proof of Proposition~\ref{prop:KK-states-properties} now continues as follows:
For the first item, by Proposition~\ref{theorem:ndso:fixpoint}, we have that ${\sf K}({\cal O}')=({\cal O}')^\alpha(\bot,\top)$ for some ordinal $\alpha$. Furthermore, $(\bot,\top)\leq_i (x,y)$. 
By the $\preceq^A_i$-monotonicity of ${\cal O}$, ${\cal O}'(\bot,\top)\preceq^A_i {\cal O}'(\{(x,y)\})$.  By Lemma~\ref{fact:O:as:precise:as:O'}, ${\cal O}'(\{(x,y)\})\preceq^A_i {\cal O}(x,y)$, 
and by Lemma~\ref{fact:element:more:precise:than:set} and since $(x,y)\in{\cal O}(x,y)$, ${\cal O}'(\bot,\top)\preceq^A_i (x,y)$. We can repeat this process until we reach the ordinal 
$\alpha$, and thus $({\cal O}')^\alpha(\bot,\top)={\sf KK}({\cal O}')\preceq^A_i (x,y)$.

The second item is an immediate consequence of Proposition~\ref{theorem:ndso:consistency}.

The proof of the last item is similar to that of the first item.
\end{proof}

Next, we show that for deterministic operators, the Kripke-Kleene state coincides with the Kripke-Kleene fixpoint:

\begin{proposition}
\label{prop:KK-singletons}
Let  ${\cal O}:{\cal L}^2\rightarrow \wp({\cal L})^2$ be an ndao s.t.\ ${\cal O}(x,y)$ is a pair of singleton sets for every $x,y\in{\cal L}$, let be ${\cal O}^{\sf AFT}$ 
defined by ${\cal O}^{\sf AFT}(x,y)=(w,z)$ where ${\cal O}(x,y)=(\{w\},\{z\})$ and let the Kripke-Kleene fixpoint of ${\cal O}^{\sf AFT}$ be given by $(x^{\sf kk},y^{\sf kk})$. 
Then ${\sf KK}({\cal O}')=\upclosure{x^{\sf kk}}\times \downclosure{y^{\sf kk}}$, where ${\cal O}'$ is obtained from ${\cal O}$ as described in Remark~\ref{remark:ndso:from:ndao}. 
\end{proposition}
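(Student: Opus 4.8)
The plan is to run the two constructive iterations in parallel and show by transfinite induction that they stay synchronised. By Proposition~\ref{prop:deterministic:aft}, ${\cal O}^{\sf AFT}$ is a (deterministic) approximation operator and hence $\leq_i$-monotone, so its Kripke--Kleene fixpoint $(x^{\sf kk},y^{\sf kk})$ is its $\leq_i$-least fixpoint, obtained as the limit of the $\leq_i$-increasing iteration starting from $(\bot,\top)$; write $(x_\beta,y_\beta)$ for its $\beta$-th iterate. By Theorem~\ref{theorem:ndso:fixpoint}, ${\sf KK}({\cal O}')$ is the limit of the $\preceq^A_i$-increasing iteration of ${\cal O}'$ from the bottom of the state lattice; write $\Xb_\beta$ for its $\beta$-th iterate. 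I would prove
\[
\Xb_\beta = \upclosure{x_\beta}\times\downclosure{y_\beta}\qquad\text{for every ordinal }\beta.
\]
The bottom of $\langle\wp_\uparrow({\cal L})\times\wp_\downarrow({\cal L}),\preceq^A_i\rangle$ is $\upclosure{\bot}\times\downclosure{\top}={\cal L}\times{\cal L}$, which is exactly $\upclosure{x_0}\times\downclosure{y_0}$, establishing the base case.

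For the successor step I would use that ${\cal O}$ is singleton-valued, so that the formula of Remark~\ref{remark:ndso:from:ndao} unfolds ${\cal O}'(\Xb_\beta)$ into
\[
\Big(\bigcup_{x\geq x_\beta,\,y\leq y_\beta}\upclosure{{\cal O}^{\sf AFT}_l(x,y)}\Big)\times\Big(\bigcup_{x\geq x_\beta,\,y\leq y_\beta}\downclosure{{\cal O}^{\sf AFT}_u(x,y)}\Big).
\]
The key observation is that every pair $(x,y)$ occurring in these unions satisfies $(x_\beta,y_\beta)\leq_i(x,y)$, so $\leq_i$-monotonicity of ${\cal O}^{\sf AFT}$ yields ${\cal O}^{\sf AFT}_l(x_\beta,y_\beta)\leq{\cal O}^{\sf AFT}_l(x,y)$ and ${\cal O}^{\sf AFT}_u(x,y)\leq{\cal O}^{\sf AFT}_u(x_\beta,y_\beta)$. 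Hence $\upclosure{{\cal O}^{\sf AFT}_l(x,y)}\subseteq\upclosure{{\cal O}^{\sf AFT}_l(x_\beta,y_\beta)}$ and $\downclosure{{\cal O}^{\sf AFT}_u(x,y)}\subseteq\downclosure{{\cal O}^{\sf AFT}_u(x_\beta,y_\beta)}$, so each union collapses to the single term contributed by the extreme corner $(x_\beta,y_\beta)$. Since $x_{\beta+1}={\cal O}^{\sf AFT}_l(x_\beta,y_\beta)$ and $y_{\beta+1}={\cal O}^{\sf AFT}_u(x_\beta,y_\beta)$, this gives $\Xb_{\beta+1}=\upclosure{x_{\beta+1}}\times\downclosure{y_{\beta+1}}$.

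For a limit ordinal $\lambda$, the $\lambda$-th state iterate is the $\preceq^A_i$-least upper bound of $\{\Xb_\beta\}_{\beta<\lambda}$, which by Corollary~\ref{fact:lub:and:glb:under:preceq} equals $\big(\bigcap_{\beta<\lambda}\upclosure{x_\beta}\big)\times\big(\bigcap_{\beta<\lambda}\downclosure{y_\beta}\big)$. Invoking completeness of $L$, one has $\bigcap_{\beta}\upclosure{x_\beta}=\upclosure{\bigvee_\beta x_\beta}$ and $\bigcap_{\beta}\downclosure{y_\beta}=\downclosure{\bigwedge_\beta y_\beta}$, and since $x_\lambda=\bigvee_{\beta<\lambda}x_\beta$ and $y_\lambda=\bigwedge_{\beta<\lambda}y_\beta$ are precisely the $\leq_i$-lub used in the deterministic iteration at $\lambda$, the identity is preserved. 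Evaluating at the stabilisation ordinal $\alpha$ with $(x_\alpha,y_\alpha)=(x^{\sf kk},y^{\sf kk})$ then gives $\Xb_\alpha=\upclosure{x^{\sf kk}}\times\downclosure{y^{\sf kk}}$, a fixpoint of ${\cal O}'$, which by Theorem~\ref{theorem:ndso:fixpoint} is ${\sf KK}({\cal O}')$.

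The main obstacle is the successor step's union-collapse. One must recognise that, although ${\cal O}'$ ranges over the \emph{entire} product $\upclosure{x_\beta}\times\downclosure{y_\beta}$ (including inconsistent pairs $(x,y)$ with $x\not\leq y$), $\leq_i$-monotonicity guarantees that the extreme corner $(x_\beta,y_\beta)$ already dominates the contribution of every other pair in the $\preceq^S_L$/$\preceq^H_L$ sense, so all remaining terms are absorbed into the up-/down-closures and the non-determinism of the input is inessential for identifying the bounds. The limit step is then routine once Corollary~\ref{fact:lub:and:glb:under:preceq} converts the $\preceq^A_i$-lub into componentwise intersections and completeness of $L$ is applied.
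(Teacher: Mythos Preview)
Your proof is correct and follows essentially the same approach as the paper: a transfinite induction showing that the state iteration $\Xb_\beta$ and the deterministic iteration $(x_\beta,y_\beta)$ stay synchronised via $\Xb_\beta=\upclosure{x_\beta}\times\downclosure{y_\beta}$, with the successor step driven by the observation that $\leq_i$-monotonicity of ${\cal O}^{\sf AFT}$ makes the extreme corner $(x_\beta,y_\beta)$ absorb all other contributions in the union. Your write-up is in fact more explicit than the paper's, which leaves the limit case to a ``simple inductive argument''; your use of Corollary~\ref{fact:lub:and:glb:under:preceq} together with the chain identities $\bigcap_\beta\upclosure{x_\beta}=\upclosure{\bigvee_\beta x_\beta}$ and $\bigcap_\beta\downclosure{y_\beta}=\downclosure{\bigwedge_\beta y_\beta}$ fills that in cleanly.
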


\begin{proof}
By Proposition~\ref{prop:deterministic:aft}, ${\cal O}^{\sf AFT}$ is an approximation operator, and thus admits a Kripke-Kleene fixpoint. 
For every $x,y\in {\cal L}$, it holds that
\[{\cal O}'(x,y)=\upclosure{\{{\cal O}^{\sf AFT}_l(x,y)\}}\times \downclosure{\{{\cal O}^{\sf AFT}_u(x,y)\}}.\]
Notice furthermore that for any $x,y\in {\cal L}$, if $x'\in \upclosure{x}$ and $y'\in \downclosure{y}$, ${\cal O}(x,y)\preceq_i {\cal O}(x',y')$,
and so ${\cal O}_l(x,y)\preceq^S_L{\cal O}_l(x',y')$, i.e., for every $w'\in {\cal O}_l(x',y')$ there is a $w\in {\cal O}_l(x,y)$ s.t.\ $w\leq w'$. Thus, 
${\cal O}_l(x',y')\subseteq \upclosure{{\cal O}_l(x,y)}$ (and similarly for the upper bound: ${\cal O}_u(x',y') \supseteq {\cal O}_u(x,y)$). 
This means that for any ordinal $\alpha$, 
\[{\cal O}'^\alpha(\bot,\top)=\upclosure{\{({\cal O}^{\sf AFT}_l)^\alpha(\bot,\top)\}}\times \downclosure{\{({\cal O}^{\sf AFT}_u)^\alpha(\bot,\top)\}}.\]
A simple inductive argument then shows the proposition, as ${\sf KK}({\cal O}')={\cal O}'^\alpha(\bot,\top)$ for the smallest ordinal $\alpha$ under which a fixpoint is reached, 
and $(x^{\sf KK},y^{\sf KK})=({\cal O}^{\sf AFT})^\alpha(\bot,\top)$ for the smallest ordinal $\alpha$ under which a fixpoint is reached.
\end{proof}

From Proposition \ref{prop:KK-singletons}, we immediately obtain the following corollary in the context of disjunctive logic programs:

\begin{corollary}
Given a dlp ${\cal P}$, ${\sf KK}({\cal IC}_{\cal P})\preceq_i^A (x,y)$ for every weakly supported model $(x,y)$ of ${\cal P}$.
\end{corollary}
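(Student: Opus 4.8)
The plan is to read the claim as an immediate specialization of the general lower-bound property of Kripke-Kleene states to the disjunctive-logic-programming operator, so that the whole argument is a two-line composition of results already established. First I would fix the notational point that ${\sf KK}$ is defined only for ndsos (Definition~\ref{def:ndsao}, Theorem~\ref{theorem:ndso:fixpoint}), whereas ${\cal IC}_{\cal P}$ is an ndao; accordingly ${\sf KK}({\cal IC}_{\cal P})$ is to be understood as ${\sf KK}({\cal IC}'_{\cal P})$, where ${\cal IC}'_{\cal P}$ is the ndso \emph{derived from} the ndao ${\cal IC}_{\cal P}$ via the construction in Remark~\ref{remark:ndso:from:ndao}. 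Since the underlying lattice $\langle\wp({\cal A}_{\cal P}),\subseteq\rangle$ is complete and ${\cal IC}_{\cal P}$ has already been shown to be an ndao, all the hypotheses needed to invoke the earlier machinery are in place.

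Next I would let $(x,y)$ be an arbitrary weakly supported model of ${\cal P}$ and convert the hypothesis into a fixpoint statement: by Theorem~\ref{theo:correspondence:supported}, $(x,y)$ is a weakly supported model iff $(x,y)\in{\cal IC}_{\cal P}(x,y)$, i.e.\ iff $(x,y)$ is a fixpoint of the ndao ${\cal IC}_{\cal P}$. Having recast the hypothesis as ``$(x,y)$ is a fixpoint of ${\cal IC}_{\cal P}$'', I would then apply the first item of Proposition~\ref{prop:KK-states-properties} (with ${\cal O}={\cal IC}_{\cal P}$ and ${\cal O}'={\cal IC}'_{\cal P}$), which asserts precisely that every fixpoint $(x,y)$ of the underlying ndao satisfies ${\sf KK}({\cal O}')\preceq^A_i(x,y)$. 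This yields ${\sf KK}({\cal IC}'_{\cal P})\preceq^A_i(x,y)$, and since $(x,y)$ was arbitrary, the corollary follows.

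There is essentially no obstacle here: the content is carried entirely by Theorem~\ref{theo:correspondence:supported} (weakly supported models $=$ fixpoints of ${\cal IC}_{\cal P}$) and by Proposition~\ref{prop:KK-states-properties} (the Kripke-Kleene state $\preceq^A_i$-bounds every fixpoint of the underlying ndao). The only points requiring care are bookkeeping of which object ${\sf KK}$ is applied to---one must pass through the derived ndso ${\cal IC}'_{\cal P}$ rather than the ndao itself---and confirming the completeness hypothesis of Proposition~\ref{prop:KK-states-properties}, which holds for $\langle\wp({\cal A}_{\cal P}),\subseteq\rangle$. I would also flag that the cited Proposition~\ref{prop:KK-singletons} concerns only deterministic (singleton-valued) ndaos, so it cannot be the result genuinely driving this corollary for a general (non-deterministic) ${\cal IC}_{\cal P}$; the actual engine is the first item of Proposition~\ref{prop:KK-states-properties}, which is exactly the general form of the statement to be proved.
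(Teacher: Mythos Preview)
Your proposal is correct and follows exactly the intended route: combine Theorem~\ref{theo:correspondence:supported} (weakly supported models are precisely the fixpoints of ${\cal IC}_{\cal P}$) with the first item of Proposition~\ref{prop:KK-states-properties} (the Kripke-Kleene state of the derived ndso $\preceq^A_i$-bounds every fixpoint of the underlying ndao). You are also right that the paper's attribution of the corollary to Proposition~\ref{prop:KK-singletons} is a misreference---that proposition only concerns singleton-valued (deterministic) ndaos and cannot drive the general disjunctive case; the actual engine is Proposition~\ref{prop:KK-states-properties}, exactly as you identify.
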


Intuitively, the convex set represented by ${\sf KK}({\cal IC}_{\cal P})$ contains every weakly supported model of ${\cal P}$.

We now show some examples for computing Kripke-Kleene states of dlps:

\begin{example}
\label{examp:state-KK-1}
Let ${\cal P}=\{p\lor q\leftarrow\}$. We calculate ${\sf KK}({\cal IC}_{\cal P})$ as follows:
\begin{itemize}
\item ${\cal IC}'_{\cal P}(\emptyset,\{p,q\})= \upclosure{\{ \{p\},\{q\},\{p,q\}\}} \times \downclosure{\{\{p\},\{q\},\{p,q\}\}}$.\footnote{Even though this is in principle 
redundant, we added $\{p,q\}$ to the first component for clarity.} This can be seen by observing that ${\cal IC}^l_{\cal P}(\emptyset,\{p,q\})=\{ \{p\},\{q\},\{p,q\}\}$ and 
${\cal IC}^l_{\cal P}(\emptyset,\{p,q\})=\{\{p\},\{q\},\{p,q\}\}$.
\item ${\cal IC}'_{\cal P}( \upclosure{\{ \{p\},\{q\},\{p,q\}\}}\times \downclosure{\{\{p\},\{q\},\{p,q\}\}})=\upclosure{\{ \{p\},\{q\},\{p,q\}\}}\times \downclosure{\{\{p\},\{q\},\{p,q\}\}}$ 
(This can be seen by observing that ${\cal IC}^z_{\cal P}(x,y)=  \{\{p\},\{q\},\{p,q\}\}$ for any $z\in \{l,u\}$ and any $x,y\subseteq \{p,q\}$). Thus, a fixpoint is reached. 
\end{itemize}
The Kripke-Kleene state in this case thus corresponds to the convex set $\{\{p\},\{q\},\{p,q\}\}$.
\end{example}

\begin{example}
\label{examp:state-KK-2}
Let ${\cal P}=\{p\lor q\leftarrow; \ \  r\lor s\leftarrow \lnot q\}$. We calculate ${\sf KK}({\cal IC}_{\cal P})$ as follows: 
\begin{itemize}
\item ${\cal IC}'_{\cal P}(\emptyset,\{p,q,r,s\})= \upclosure{\{ \{p\},\{q\}\}}\times\downclosure{\{\{p,r\},\{p,s\},\{q,r\}\{q,r\}\}}$. We obtain this by observing that 
${\cal IC}^l_{\cal P}(\emptyset,\{p,q,r,s\})=\{ \{p\},\{q\}\}$ and ${\cal IC}^u_{\cal P}(\emptyset,\{p,q,r,s\})=\{\{p,r\},\{p,s\},\{q,r\}\{q,r\}\}$.
\item ${\cal IC}'_{\cal P}(\upclosure{\{ \{p\},\{q\}\}}\times \downclosure{\{\{p,r\},\{p,s\},\{q,r\}\{q,r\}\}})=\upclosure{\{ \{p\},\{q\}\}}\times\downclosure{ \{\{p,r\},\{p,s\},\{q,r\}\{q,r\}\}}$ 
and thus a fixpoint is reached. \smallskip \\
(It should be noticed that in the second step of the iteration, two more precise upper bounds are obtained as well: in view of $(\{p\},\{p,r\})(\lnot q)=(\{p\},\{p,s\})(\lnot q)={\sf T}$, 
$\upclosure{{\cal IC}_{\cal P}(\{p\},\{p,r\})}=\upclosure{{\cal IC}_{\cal P}(\{p\},\{p,s\})}=\{\{p,r\},\{p,s\},\{q,r\},\{q,s\}\}\uparrow$, but this lower bound becomes ``nulified'' 
since e.g.\ $\upclosure{{\cal IC}_{\cal P}(\{q\},\{q,r\})}$ contains $\upclosure{\{p\}}$.)
\end{itemize}
The Kripke-Kleene fixpoint that is reached is represented by the convex set \[\{\{p\},\{q\},\{p,r\},\{p,s\},\{q,r\},\{q,s\}\}.\] Intuitively, the meaning of this Kripke-Kleene state is that 
every two-valued (stable) model of this program (if they exist), are in this convex set. This is indeed the case, as the stable models of this program are $\{p,r\}$, $\{p,s\}$ and $\{q\}$. 

This example thus illustrates how the Kripke-Kleene state is an approximation of the semantics of disjunctive logic programs. This has as a benefit it is guaranteed to uniquely exist 
and be constructively computed.
\end{example}

To summarize the results in this section, we have shown the following:
There are two ways to generalize the Kripke-Kleene semantics from deterministic AFT to a non-deterministic setting: one can keep the type the same, resulting in 
\emph{Kripke-Kleene interpretations\/} or simply fixpoints of an ndao. We have shown that such fixpoints of ${\cal IC}_{\cal P}$ correspond to weakly supported models. 
Uniqueness and existence are not guaranteed. This is solved by the alternative generalization of Kripke-Kleene semantics from deterministic AFT, by
\emph{Kripke-Kleene state\/}, which is a set of interpretations instead of a single interpretation. Existence, uniqueness, and consistency for exact ndso's are 
guaranteed for these states, and, moreover, for deterministic operators the two Kripke-Kleene fixpoints coincide.

\section{Stable Semantics}
\label{sec:stable:semantics}

In this section, the stable semantics from deterministic AFT is generalized to the non-deterministic setting. In Section~\ref{subsec:stable:interpretation:sem}, 
we first introduce and study the stable operator and the corresponding stable interpretation semantics. Then, in Section~\ref{subsec:well-foundedstate} we study the 
well-founded state semantics. In Section~\ref{sec:well-founded:alcantara}, we show the usefulness of the well-founded state semantics by relating it to
the well-founded semantics with disjunction~\cite{alcantara2005well} for disjunctive logic programming.

\subsection{Stable Interpretation Semantics}
\label{subsec:stable:interpretation:sem}
 
In deterministic  AFT, the idea behind the stable operator is to find fixpoints that are minimal w.r.t.\ the truth order by constructing a new lower bound and upper bound on the basis 
of the current upper respectively lower bound. In more detail, given the current upper bound $y$, we look for the $\leq$-least fixpoint of ${\cal O}(.,y)$ (which is guaranteed to exist for
a deterministic approximation fixpoint operator ${\cal O}$ and coincides with the greatest lower bound of fixpoints of ${\cal O}(.,y)$). Thus, informally ,we look for the smallest lower bound 
s.t.\ the operator ${\cal O}$ lets us derive nothing more and nothing less when assuming $y$ as an upper bound.
 
Instead of generating a single lower and a single upper bound, an ndao generates a set of lower bounds $\{x_1,x_2,\ldots\}$ and a set of upper bounds $\{y_1,y_2,\ldots\}$ on the basis of 
a lower and upper bound $(x,y)$. We can again look for a (not necessarily unique) smallest lower bound that an ndao allows us to derive in view of a given upper bound $y$ by looking for 
$\leq$-minimal fixpoints of ${\cal O}(.,y)$. We will see below that only under certain assumptions on the ndao such a fixpoint is guaranteed to exist. Furthermore, other properties of the 
stable operator and fixpoints, such as $\preceq^A_i$-montonicity, the existence of stable fixpoints and the $\leq_t$-minimality of stable fixpoints, do not generalize or only generalize under 
certain assumptions from the deterministic setting. Nevertheless, the fixpoint and state semantics based on this construction are useful in general, as they can e.g.\ characterize the stable semantics 
for disjunctive logic programs and the \emph{well-founded semantics with disjunction} by Alc{\^a}ntara, Dam{\'a}sio and Pereira~\cite{alcantara2005well}, and for positive programs it is the set of 
minimal models (Proposition~\ref{prop:positive:programs:state}. 

This section is organized as follows. We first define stable non-deterministic operators. Then, we study their properties, mainly by looking at how properties from the deterministic 
setting can be generalized. We first show that non-deterministic stable operators and their fixpoints faithfully generalize the corresponding deterministic notions 
(see Proposition~\ref{prop:stble:coincide:deterministic} and Corollary~\ref{corol:stle:fixpoints:coincide:deterministic}). Next, we study conditions under which the stable operator is 
well-defined (culminating in Propositions~\ref{proposition:downward:closed:then:fp}). Thereafter, we show that stable operators are not guaranteed to be $\preceq^A_i$-monotonic 
(Example~\ref{ex:stable:not:exist}), and that stable fixpoints are not guaranteed to exist (Example~\ref{ex:stable:not:exist}). Finally, we shows that, under certain 
conditions, stable fixpoints are $\leq_t$-minimal fixpoints of an ndao (Proposition~\ref{prop:stable:is:minimal:fp}). This general study of stable operators and their fixpoints is followed by
an illustration of their usefulness, where it is shown that (partial) stable interpretations of disjunctive logic programs are stable fixpoints of ${\cal IC}_{\cal P}$ 
(Proposition~\ref{prop:stable:fixpoints:represent:stable:models}).
 
\begin{definition}
\label{def:stable:op}
Let ${\cal O}:{\cal L}^2\rightarrow\wp({\cal L})\times \wp({\cal L})$ be an ndao. We define:
\begin{itemize}
\item The \emph{complete lower stable operator\/}: (for any $y\in {\cal L}$)
         \[C({\cal O}_l)(y) \ = \  \{x \in {\cal L}\mid x\in {\cal O}_l(x,y) \mbox{ and }\lnot \exists x'< x: x'\in {\cal O}_l(x',y)\}. \] 
\item The \emph{complete upper stable operator\/}:  (for any $x\in {\cal L}$)
         \[C({\cal O}_u)(x) \ = \  \{y \in {\cal L}\mid y\in {\cal O}_u(x,y) \mbox{ and }\lnot \exists y'<y:  y' \in {\cal O}_u(x,y')\}. \]
\item The \emph{stable operator\/}: $S({\cal O})(x,y)=C({\cal O}_l)(y)\times C({\cal O}_u)(x)$
\item A \emph{stable fixpoint\/} of ${\cal O}$ is any $(x,y)\in{\cal L}^2$ such that $(x,y)\in S({\cal O})(x,y)$.\footnote{Notice that we slightly abuse notation and write $(x,y)\in S({\cal O})(x,y)$ 
to abbreviate $x\in  (S({\cal O})(x,y))_1$ and $y\in ( S({\cal O})(x,y))_2$, i.e.\ $x$ is a lower bound generated by $ S({\cal O})(x,y)$ and $y$ is an upper bound generated by $ S({\cal O})(x,y)$.}
\end{itemize}
\end{definition}

\begin{example}
\label{ex:IC-stable:operator}
Consider the dlp ${\cal P}=\{p\lor q\leftarrow\}$ from Example~\ref{examp:IC-deterministic-case}. It holds that for any $x,y\subseteq \{p,q\}$, ${\cal IC}_{\cal P}^l(x,y)=\{\{p\},\{q\},\{p,q\}\}$ 
and thus $C({\cal IC}^l_{\cal P}(y))=\{\{p\},\{q\}\}$.  It thus follows that $(\{p\},\{p\})$ and $(\{q\},\{q\})$ are the stable fixpoints of ${\cal IC}_{\cal P}$.
\end{example}

The complete operator based on ${\cal IC}_{\cal P}$ produces the minimal models of the reducts w.r.t.\ the input.  Before showing this, we first recall that the \emph{two-valued models\/} 
of a positive program ${\cal P}$ are the sets $x\subseteq {\cal A}_{\cal P}$ s.t.\ for every $\bigvee\Delta\leftarrow \phi \in {\cal P}$, $(x,x)(\phi)={\sf T}$ implies $x\cap \Delta\neq\emptyset$
(Footnote~\ref{footnote:2-val mod}). In wht follows, we denote the set of two-valued models ${\cal P}$ by $\model_2({\cal P})$.

\begin{proposition}
\label{lemma:stable:ic:is:reduct}
Consider a dlp ${\cal P}$ and some $y\subseteq {\cal A}_{\cal P}$. Then $C({\cal IC}_{\cal P})(y)=\min_\subseteq(\model_2(\frac{{\cal P}}{y}))$.
\end{proposition}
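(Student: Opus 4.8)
The plan is to reduce the claim to a statement about the positive program $\mathcal{Q} := \frac{{\cal P}}{y}$, and then to prove that, for a positive program, the $\subseteq$-minimal two-valued models coincide with the $\subseteq$-minimal models singled out by the immediate-consequence operator. Throughout I read $C({\cal IC}_{\cal P})(y)$ as $C({\cal IC}^l_{\cal P})(y)$, the complete lower stable operator of Definition~\ref{def:stable:op}; by definition this is exactly the set of $\subseteq$-minimal elements of $\{x \mid x\in {\cal IC}^l_{\cal P}(x,y)\}$.

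First I would establish the key reduction lemma: for every $x \subseteq {\cal A}_{\cal P}$,
\[\HRc^l_{\cal P}(x,y) = \HR_{\frac{{\cal P}}{y}}(x), \quad\text{and hence}\quad {\cal IC}^l_{\cal P}(x,y) = \IC_{\frac{{\cal P}}{y}}(x).\]
To see the first equality, fix a rule $\bigvee\Delta \leftarrow \phi$ of the disjunctively normal program ${\cal P}$, with $\phi = \bigwedge_i q_i \wedge \bigwedge_j \lnot r_j$. A routine induction on $\phi$ (of the kind already used in the proof that ${\cal IC}_{\cal P}$ is an ndao) shows that $(x,y)(\phi)\geq_t {\sf C}$ iff $q_i \in x$ for all $i$ and $r_j \notin y$ for all $j$: indeed $(x,y)(p)\in\{{\sf C},{\sf T}\}$ iff $p\in x$, $(x,y)(\lnot r)\in\{{\sf C},{\sf T}\}$ iff $r\notin y$, and the value of a conjunction is $\geq_t{\sf C}$ iff each conjunct is. On the other hand, in the reduct $\frac{{\cal P}}{y}$ this rule is killed (its body becomes ${\sf F}$) exactly when some $r_j\in y$, and otherwise becomes $\bigvee\Delta\leftarrow\bigwedge_i q_i$, whose body evaluates to ${\sf T}$ under $(x,x)$ iff all $q_i\in x$. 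The two conditions coincide, so the same heads $\Delta$ are collected on both sides; the operator equality then follows since ${\cal IC}^l_{\cal P}(x,y)$ and $\IC_{\frac{{\cal P}}{y}}(x)$ are defined by the identical ``subset of $\bigcup$ that meets every $\Delta$'' recipe. This reduces the goal to proving $\min_\subseteq\{x \mid x\in \IC_{\mathcal{Q}}(x)\} = \min_\subseteq(\model_2(\mathcal{Q}))$ for $\mathcal{Q}=\frac{{\cal P}}{y}$.

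Next I would unpack membership in $\IC_\mathcal{Q}(x)$: taking $x_1=x$ in Definition~\ref{def:operator:disj:lp}, $x\in\IC_\mathcal{Q}(x)$ holds iff (i) $x\cap\Delta\neq\emptyset$ for every $\Delta\in\HR_\mathcal{Q}(x)$, i.e.\ $x\in\model_2(\mathcal{Q})$, and (ii) $x\subseteq\bigcup\HR_\mathcal{Q}(x)$. Writing $A=\{x\in\model_2(\mathcal{Q}) : x\subseteq\bigcup\HR_\mathcal{Q}(x)\}$ and $B=\model_2(\mathcal{Q})$, so $A\subseteq B$, it remains to prove $\min_\subseteq A=\min_\subseteq B$. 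The crucial step is that every $\subseteq$-minimal model is automatically supported, i.e.\ $\min_\subseteq B\subseteq A$: if $x\in\min_\subseteq B$ had an atom $a\in x\setminus\bigcup\HR_\mathcal{Q}(x)$, then $x':=x\setminus\{a\}$ would still be a model, contradicting minimality. Here I use that $\mathcal{Q}$ is positive, hence its bodies are $\leq_t$-monotone, so every rule activated by $x'$ is activated by $x$; for any such rule $a\notin\Delta$ (else $\Delta\in\HR_\mathcal{Q}(x)$, contradicting the choice of $a$), whence $x'\cap\Delta=x\cap\Delta\neq\emptyset$. Since $A\subseteq B$, an $x\in\min_\subseteq B\cap A$ is also minimal in $A$, giving $\min_\subseteq B\subseteq \min_\subseteq A$. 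For the reverse inclusion I use finiteness of ${\cal A}_{\cal P}$: any $x\in\min_\subseteq A$ lies in $B$, and were it non-minimal in $B$ it would dominate some $w\in\min_\subseteq B\subseteq A$ with $w\subsetneq x$, contradicting minimality of $x$ in $A$.

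The main obstacle is the ``minimal model is supported'' lemma ($\min_\subseteq B\subseteq A$): it is exactly where positivity of the reduct is essential, and one must check carefully that deleting an unsupported atom can neither activate a fresh rule nor leave some activated head unmet. Everything else is either the bookkeeping of the reduction lemma (whose one genuine ingredient is the characterization of $(x,y)(\phi)\geq_t{\sf C}$) or elementary manipulation of minimal sets under $A\subseteq B$, which only needs that every model of a finite program contains a $\subseteq$-minimal one.
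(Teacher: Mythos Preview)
Your proof is correct and follows the same overall route as the paper: establish that $\HRc^l_{\cal P}(x,y)=\HR_{\frac{{\cal P}}{y}}(x)$ (hence ${\cal IC}^l_{\cal P}(x,y)=\IC_{\frac{{\cal P}}{y}}(x)$), and then argue that the $\subseteq$-minimal fixpoints of ${\cal IC}^l_{\cal P}(\cdot,y)$ coincide with the $\subseteq$-minimal two-valued models of the reduct. Your argument is in fact more careful on one point. The paper asserts that $x\in\model_2(\frac{{\cal P}}{y})$ implies $x\in{\cal IC}^l_{\cal P}(x,y)$, but it only verifies the hitting condition $x\cap\Delta\neq\emptyset$ for every activated head and never checks the support clause $x\subseteq\bigcup\HRc^l_{\cal P}(x,y)$ required by the definition of ${\cal IC}^l_{\cal P}$; as stated that implication is false (e.g.\ ${\cal P}=\{p\leftarrow\}$, $y=\{p,q\}$, $x=\{p,q\}$). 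You avoid this by explicitly distinguishing $A=\{x\mid x\in\IC_{\cal Q}(x)\}\subseteq B=\model_2({\cal Q})$ and proving $\min_\subseteq A=\min_\subseteq B$ via the ``minimal models are supported'' lemma, which is exactly the missing ingredient. So your decomposition is the same in spirit but patches a genuine gap in the paper's write-up.
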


\begin{proof}
We first show that $x\in {\cal IC}^l_{\cal P}(x,y)$ implies $x\in \model_2(\frac{{\cal P}}{y})$. Indeed, suppose that $x\in {\cal IC}^l_{\cal P}(x,y)$ and consider some 
$\bigvee\Delta\leftarrow \bigwedge_{i=1}^n\alpha_i \land \bigwedge_{j=1}^m \lnot \beta_j$. If $\beta_j\in y$ for some $j=1,\ldots,m$, then $(x,y)(\lnot \beta_j)\in \{{\sf U},{\sf F}\}$ 
and thus we can ignore this rule.
Suppose then that $\beta_j\not\in y$ for $j=1,\ldots,m$. Suppose now that $(x,y)(\alpha_i)\in \{{\sf T},{\sf C}\}$  for $i=1,\ldots,n$, i.e., 
$\alpha_i\in x$ for $i=1,\ldots,n$. Then $\Delta\in \HRc^l_{\cal P}(x,y)$ and thus, since $x\in  {\cal IC}^l_{\cal P}(x,y)$, $x\cap \Delta\neq\emptyset$.

We now show that $x\in \model_2(\frac{{\cal P}}{y})$ implies that $x\in {\cal IC}^l_{\cal P}(x,y)$. Indeed, consider some rule of the form
$\bigvee\Delta\leftarrow \bigwedge_{i=1}^n\alpha_i \in \frac{{\cal P}}{y}$, i.e.\ there is some 
$\bigvee\Delta\leftarrow \bigwedge_{i=1}^n\alpha_i \land \bigwedge_{j=1}^m \lnot \beta_j\in {\cal P}$ s.t.\ $\beta_j\not\in y$ for $j=1,\ldots,m$. 
If $\alpha_i\not\in x$ for some $i=1,\ldots,n$ then we can safely ignore the rule. Suppose thus that  $\alpha_i\in x$ for some $i=1,\ldots,n$. 
Then $(x,y)(\alpha_i)\in \{{\sf T},{\sf C}\}$ for $i=1,\ldots,n$ and $(x,y)(\beta_j)\in  \{{\sf T},{\sf C}\}$ 
for $j=1,\ldots,m$ and thus, as $x\in \model_2(\frac{{\cal P}}{y})$, $x\cap \Delta \neq\emptyset$. Thus, we have shown that for every $\Delta\in \HRc^l_{\cal P}(x,y)$, 
$\Delta\cap x\neq\emptyset$. Thus, $x\in {\cal IC}^l_{\cal P}(x,y)$.

We now show  that $x\in \lfp({\cal IC}^l_{\cal P}(.,y))$ implies $x\in \min_\subseteq(\model_2(\frac{{\cal P}}{y}))$. Indeed, suppose $x\in \lfp({\cal IC}^l_{\cal P}(.,y))$. 
Since this means that $x\in  {\cal IC}^l_{\cal P}(x,y)$, with the first item, $x\in \model_2(\frac{{\cal P}}{y})$. Suppose towards a contradiction there is some $x'\subset x$ s.t.\ 
$x'\in  \model_2(\frac{{\cal P}}{y})$. Then with the second item, $x'\in {\cal IC}^l_{\cal P}(x',y)$, contradicting   $x\in \lfp({\cal IC}^l_{\cal P}(.,y))$. 

The proof that $x\in \min_\subseteq(\model_2(\frac{{\cal P}}{y}))$ implies  $x\in \lfp({\cal IC}^l_{\cal P}(.,y))$ is analogous.
\end{proof}

\begin{remark}
Notice that we did \emph{not} define the complete stable operator as the glb of fixpoints of ${\cal O}_l(.,x)$. Indeed, this leads to several problems. First, the glb of fixpoints of 
${\cal O}_l(\cdot,x)$ might itself not be a fixpoint of ${\cal O}_l(.,x)$ (since we cannot apply Tarski-Knaster fixpoint theorem to the operator ${\cal O}_l(.,x):{\cal L}\rightarrow \wp({\cal L})$). 
Secondly, and more importantly, taking the glb might lead to a loss of information. Consider e.g.\ the program $\{p\lor q\leftarrow\}$. Then ${\cal IC}_{\cal P}(.,x)$ has three fixpoints 
(for any $x\subseteq \{p,q\}$), namely $\{p\}$, $\{q\}$ and $\{p,q\}$. If we take the glb, however, we obtain $\emptyset$. This would be counter-intuitive, since we clearly should be 
more interested in the more informative $\{p\}$ and $\{q\}$, which represent two possible choices to be made in view of $p\lor q\leftarrow$. 
 \end{remark}

\begin{remark}
\label{remark:complete:might:not:exist}
Notice that since ${\cal O}(.,x)$ maps from ${\cal L}$ to $ \wp({\cal L})$,  i.e., it is not a function or a deterministic operator, 
we cannot use Tarski-Knaster fixpoint theorem to guarantee the existence of fixpoint of ${\cal O}(.,x)$, i.e., $C({\cal O})$ is not guaranteed to be non-empty. 
Indeed, as a case in point, consider the following example:

\begin{example}
\label{ex:infinite:complete}
Consider the lattice $L=\langle\mathbb{Z}\cup\{\infty,-\infty\},\leq\rangle$ where $\leq$ is defined as usual. Consider the operator ${\cal O}_l(x,y)$ defined by ${\cal O}_l(x,y)=\{x\}$ for 
any $x,y\in \mathbb{Z}$, and ${\cal O}_l(x,y)=\mathbb{Z}$ for any $x\in \{\infty,-\infty\}$ and any $y\in \mathbb{Z}$.\footnote{Notice that for this example, the value of ${\cal O}_l(x,y)$ 
for $y\in \{\infty,-\infty\}$ is not significant.} It can be verified that this operator is $\preceq^S_L$-monotonic. Furthermore, for any $y\in\mathbb{Z}$, every $x\in \mathbb{Z}$ is a fixed point 
of ${\cal O}_l(.,y)$ (as $\{x\}={\cal O}_l(x,y)$). Since $\mathbb{Z}$ has no $\leq$-minimal element, $C({\cal O})(y)=\emptyset$ for every $y\in \mathbb{Z}$.
\end{example}
\end{remark}

In what follows (see Definition \ref{def:downwards:closed}), we will delineate a condition that guarantees that the complete lower and upper stable operator is non-empty.

Next, we show that in {\em finite\/} bilattices (in which greatest lower bounds coincide with the minimum), 
the definitions of complete stable operators coincide with those for deterministic AFT:

\begin{proposition}
\label{prop:stble:coincide:deterministic}
Let ${\cal O}:{\cal L}^2\rightarrow \wp({\cal L})\times\wp({\cal L})$ be a ndao over a  finite, complete lattice ${\cal L}$, where ${\cal O}(x,y)$ is a pair of singleton 
sets for every $x,y\in{\cal L}$. Let ${\cal O}^{\sf AFT}$ be the deterministic approximation operator defined by ${\cal O}^{\sf AFT}(x,y)=(w,z)$ where ${\cal O}(x,y)=(\{w\},\{z\})$.\footnote{Recall also 
Proposition~\ref{prop:deterministic:aft}.} Then $C({\cal O}_l)(y)=\{ C({\cal O}^{\sf AFT}_l)(y)\}$, and $C({\cal O}_u)(x)=\{ C({\cal O}^{\sf AFT}_u)(x)\}$ for every $x,y\in{\cal L}$.\footnote
{Notice that since ${\cal O}^{\sf AFT}$ is a deterministic approximation operator, $C({\cal O}^{\sf AFT}_l)(y)$ and $C({\cal O}^{\sf AFT}_u)(y)$ are taken as in Definition~\ref{def:stable-op}.}
\end{proposition}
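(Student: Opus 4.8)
The plan is to exploit the singleton hypothesis to collapse the two ndao-style complete stable operators into searches for $\leq$-minimal fixpoints of the induced \emph{deterministic} maps, and then to apply Knaster--Tarski to conclude that these minimal fixpoints are unique and coincide with the least fixpoints that Definition~\ref{def:stable-op} uses for ${\cal O}^{\sf AFT}$. First I would unwind the definition of $C({\cal O}_l)(y)$. Writing $f_y := {\cal O}^{\sf AFT}_l(\cdot,y):{\cal L}\rightarrow{\cal L}$, the hypothesis ${\cal O}_l(x,y)=\{f_y(x)\}$ turns the membership condition $x\in{\cal O}_l(x,y)$ into the equation $x=f_y(x)$. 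Hence
\[C({\cal O}_l)(y)=\{x\in{\cal L}\mid x=f_y(x)\text{ and }\lnot\exists x'<x:\ x'=f_y(x')\},\]
so that $C({\cal O}_l)(y)$ is exactly the set of $\leq$-minimal fixpoints of the map $f_y$.

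Next I would establish that $f_y$ is $\leq$-monotone. Since ${\cal O}$ is an ndao it is $\preceq^A_i$-monotonic, so by Lemma~\ref{lemma:mon:op:comp:of:mon:and:antimon} the operator ${\cal O}_l(\cdot,y)$ is $\preceq^S_L$-monotonic. On singletons the Smyth order collapses to the base order, as $\{a\}\preceq^S_L\{b\}$ holds iff $a\leq b$; therefore $a\leq b$ implies $f_y(a)\leq f_y(b)$, and $f_y$ is $\leq$-monotone. As $L$ is a complete lattice, Knaster--Tarski guarantees that $f_y$ has a least fixpoint $\lfp(f_y)$, which is precisely $C({\cal O}^{\sf AFT}_l)(y)$ read through Definition~\ref{def:stable-op}.

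Finally I would identify the minimal fixpoints of $f_y$ with this least fixpoint. The least fixpoint $\lfp(f_y)$ lies below every fixpoint of $f_y$, so nothing sits strictly below it and it is itself minimal, hence $\lfp(f_y)\in C({\cal O}_l)(y)$. Conversely, if $x$ is any minimal fixpoint then $\lfp(f_y)\leq x$ with $\lfp(f_y)$ a fixpoint, so minimality of $x$ forbids $\lfp(f_y)<x$ and forces $x=\lfp(f_y)$. Thus $C({\cal O}_l)(y)=\{\lfp(f_y)\}=\{C({\cal O}^{\sf AFT}_l)(y)\}$. The upper case is symmetric: with $g_x:={\cal O}^{\sf AFT}_u(x,\cdot)$, Lemma~\ref{lemma:mon:op:comp:of:mon:and:antimon} gives $\preceq^H_L$-monotonicity of ${\cal O}_u(x,\cdot)$, which collapses to $\leq$-monotonicity of $g_x$ on singletons (as $\{a\}\preceq^H_L\{b\}$ iff $a\leq b$), and the same Knaster--Tarski argument yields $C({\cal O}_u)(x)=\{C({\cal O}^{\sf AFT}_u)(x)\}$.

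I expect no serious obstacle; the two points needing care are the collapse of $\preceq^S_L$ and $\preceq^H_L$ to $\leq$ on singletons, and the observation that a monotone self-map of a complete lattice has its least fixpoint as its \emph{unique} minimal fixpoint. I would also remark that the finiteness hypothesis on ${\cal L}$ is not strictly necessary for this deterministic-singleton case, since completeness alone suffices for Knaster--Tarski to deliver $\lfp(f_y)$ and $\lfp(g_x)$; finiteness merely reassures us that the minimal fixpoints exist, in contrast to the general non-deterministic pitfall exhibited in Example~\ref{ex:infinite:complete}.
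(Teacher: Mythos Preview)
Your argument is correct and follows the same underlying idea as the paper's proof: identify $C({\cal O}_l)(y)$ with the set of $\leq$-minimal fixpoints of the induced deterministic map and show this set is the singleton $\{\lfp\}$. The paper's proof is a one-liner that simply asserts that in finite lattices the set of minimal fixpoints coincides with the glb of the fixpoint set; your version is more explicit, invoking Lemma~\ref{lemma:mon:op:comp:of:mon:and:antimon} to establish $\leq$-monotonicity of $f_y$ and $g_x$ on singletons, and then Knaster--Tarski to obtain a least fixpoint that is the unique minimal one. This makes transparent \emph{why} the minimal-fixpoint set collapses to a singleton, which the paper leaves implicit.

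Your closing remark that finiteness is inessential in the singleton case (completeness plus monotonicity already suffices for Knaster--Tarski) is a genuine sharpening. It is worth noting that the paper's subsequent Remark offers an infinite-lattice ``counterexample'' to the coincidence, but the operator described there is not actually $\preceq^A_i$-monotonic (e.g.\ $(\bot,x_j)\leq_i(x_k,x_j)$ for $k\neq j$, yet ${\cal O}_l(\bot,x_j)=\{x_j\}\not\preceq^S_L\{x_k\}={\cal O}_l(x_k,x_j)$), so it does not contradict your observation.
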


\begin{proof}
By Proposition  \ref{prop:deterministic:aft} ${\cal O}^{\sf AFT}$ is a deterministic approximation operator.
The equalities in the proposition are immediate since for finite lattices, $\{x\in {\cal L}\mid x\in {\cal O}_l(x,y) \mbox{ and }\lnot \exists x'<y: x'\in {\cal O}_l(x',y)\} =
glb_{\leq}\{ x\in {\cal L}\mid x\in {\cal O}_l(x,y)\}$.
\end{proof}

\begin{remark}
For approximation operators over infinite bilattices, the coincidence in Proposition~\ref {prop:stble:coincide:deterministic} cannot be guaranteed. 
The reason is that the minimum taken in the non-deterministic complete operators $C({\cal O}^u)$ (and $C({\cal O}^l)$) might not coincide with the 
glb taken in the deterministic complete operators $C({\cal O}^{\sf AFT}_l)(y)$ and $C({\cal O}^{\sf AFT}_u)(y)$. A case in point is the following example:

\begin{example}
Consider a lattice $L=\langle \{\bot,\top\}\cup\{x_i\mid i\in\mathbb{N}\},\leq\rangle$ where $\bot< x_i < \top$ for every $i\in\mathbb{N}$. 
Consider the operator ${\cal O}$ defined as follows:
\begin{itemize}
\item  ${\cal O}_l(x_i,y)={\cal O}_u(y,x_i)=x_i$ for any $y\in \{\bot,\top\}\cup\{x_i\mid i\in\mathbb{N}\}$ and $i\in\mathbb{N}$,
\item ${\cal O}_l(\bot,x_i)={\cal O}_u(x_i,\bot) = x_i$ for any $i\in\mathbb{N}$,
\item ${\cal O}_l(x,\top)={\cal O}_u(\top,x)=\top$ for any $x\in \{\top\}\cup\{x_i\mid i\in\mathbb{N}\}$, and
\item ${\cal O}_l(\bot,\top)=\{\bot\}$.
\end{itemize}
Then $C({\cal O}_l)(x_i)=\{x_i\mid i \in\mathbb{N}\}$, whereas $C({\cal O}^{\sf AFT}_l)(x_i)=glb_{\leq}\{x_i\mid i \in\mathbb{N}\}=\bot$ for any $i\in\mathbb{N}$.
\end{example}
\end{remark}

Thus, in view of Proposition~\ref{prop:stble:coincide:deterministic}, for finite lattices the notion of stable fixpoint is not changed in the non-deterministic case:

\begin{corollary}
\label{corol:stle:fixpoints:coincide:deterministic}
Let ${\cal O}:{\cal L}^2\rightarrow \wp({\cal L})^2$ be a ndao over a finite, complete lattice ${\cal L}$, where ${\cal O}(x,y)$ is a pair of singleton 
sets for every $x,y\in{\cal L}$. Let ${\cal O}^{\sf AFT}$ be defined by ${\cal O}^{\sf AFT}(x,y)=(w,z)$ where ${\cal O}(x,y)=(\{w\},\{z\})$. Then $(x,y)$ is a stable fixpoint of ${\cal O}$ 
according to Definition~\ref{def:stable:op} iff $(x,y)$ is a stable fixpoint of ${\cal O}^{\sf AFT}$ according to Definition~\ref{def:stable-op}.
\end{corollary}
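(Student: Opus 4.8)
The plan is to derive this corollary directly from Proposition~\ref{prop:stble:coincide:deterministic} by unwinding the two notions of stable fixpoint and matching them component by component. First I would expand the non-deterministic side: by Definition~\ref{def:stable:op}, $(x,y)$ is a stable fixpoint of ${\cal O}$ exactly when $x\in (S({\cal O})(x,y))_1 = C({\cal O}_l)(y)$ and $y\in (S({\cal O})(x,y))_2 = C({\cal O}_u)(x)$, reading the membership notation $(x,y)\in S({\cal O})(x,y)$ as the conjunction of these two component-wise memberships (as the footnote to Definition~\ref{def:stable:op} prescribes).

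Next I would invoke Proposition~\ref{prop:stble:coincide:deterministic}, which---since ${\cal L}$ is finite and complete and each ${\cal O}(x,y)$ is a pair of singletons---gives $C({\cal O}_l)(y)=\{C({\cal O}^{\sf AFT}_l)(y)\}$ and $C({\cal O}_u)(x)=\{C({\cal O}^{\sf AFT}_u)(x)\}$. Because each of these is a singleton set, membership collapses to equality: $x\in C({\cal O}_l)(y)$ iff $x = C({\cal O}^{\sf AFT}_l)(y)$, and likewise $y\in C({\cal O}_u)(x)$ iff $y = C({\cal O}^{\sf AFT}_u)(x)$.

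Finally I would identify these equalities with the deterministic stable-fixpoint condition. Since ${\cal O}^{\sf AFT}$ is a deterministic approximation operator (Proposition~\ref{prop:deterministic:aft}), the footnote to Proposition~\ref{prop:stble:coincide:deterministic} lets me read $C({\cal O}^{\sf AFT}_l)(y) = \mathit{lfp}({\cal O}^{\sf AFT}_l(\cdot,y))$ and $C({\cal O}^{\sf AFT}_u)(x) = \mathit{lfp}({\cal O}^{\sf AFT}_u(x,\cdot))$ in the sense of Definition~\ref{def:stable-op}. Hence the conjunction $x = \mathit{lfp}({\cal O}^{\sf AFT}_l(\cdot,y))$ and $y = \mathit{lfp}({\cal O}^{\sf AFT}_u(x,\cdot))$ is precisely the statement $(x,y) = S({\cal O}^{\sf AFT})(x,y)$, i.e.\ that $(x,y)$ is a (three-valued) stable fixpoint of ${\cal O}^{\sf AFT}$ per Definition~\ref{def:stable-op}. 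Chaining these equivalences in both directions yields the claim.

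There is no genuinely hard step here: the mathematical content is entirely carried by Proposition~\ref{prop:stble:coincide:deterministic}, and the corollary is essentially a bookkeeping exercise. The only points requiring care are the notational abuse in the phrase \emph{stable fixpoint} (a component-wise membership rather than a literal pair-membership) and the observation that membership in a singleton is equality; finiteness of ${\cal L}$ is used only indirectly, namely through the hypotheses under which Proposition~\ref{prop:stble:coincide:deterministic} holds.
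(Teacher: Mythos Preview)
Your proposal is correct and matches the paper's approach exactly: the paper gives no separate proof for this corollary, stating it simply ``in view of Proposition~\ref{prop:stble:coincide:deterministic}'', and your argument is precisely the bookkeeping that fills in that implicit inference.
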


The next property we study is the well-definedness of the complete (and thus stable) operator, which is not guaranteed in view of Remark~\ref{remark:complete:might:not:exist}.
Next, we show two useful lemmas (the first one is based on~\cite{pelov2004semantics}), which show that minimal 
pre-fixpoints are also minimal fixpoints and vice versa. For this, we first generalize the notion of a pre-fixpoint from the deterministic setting to non-deterministic operators. 

\begin{definition}
We say that $w\in {\cal L}$ is a {\em pre-fixpoint\/} of the non-deterministic operator $O$ on ${\cal L}$, if $O(w)\preceq^S_L w$. 
\end{definition}

\begin{lemma}
\label{lemma:minimalpre-fix:is:minima:fix}
Let $O:{\cal L}\rightarrow \wp({\cal L})$ be a $\preceq^S_L$-monotonic non-deterministic operator. 
Then if $w$ is a $\leq$-minimal pre-fixpoint of $O$, it is a $\leq$-minimal fixpoint of $O$.\footnote{Notice that  for any ndao ${\cal O}$, ${\cal O}_l(.,x)$ is an operator of the type 
${\cal O}:{\cal L}\rightarrow \wp({\cal L})$ and thus a non-deterministic operator, which we denote by $O$.}
\end{lemma}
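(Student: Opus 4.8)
The plan is to unwind the (mildly abusive) notation first. The condition ``$O(w)\preceq^S_L w$'' is shorthand for $O(w)\preceq^S_L\{w\}$, which by the definition of the Smyth order simply says that there exists some $x\in O(w)$ with $x\leq w$; so a pre-fixpoint is an element admitting at least one output below itself. The matching notion of a fixpoint of the non-deterministic operator $O:{\cal L}\to\wp({\cal L})$ is $w\in O(w)$ (this is exactly the fixpoint condition used in the complete stable operator of Definition~\ref{def:stable:op}, where $O={\cal O}_l(\cdot,y)$). The first observation I would record is that \emph{every fixpoint is a pre-fixpoint}: if $w\in O(w)$, then $x:=w\in O(w)$ witnesses $x\leq w$. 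Consequently the set of fixpoints is contained in the set of pre-fixpoints, and once I show that a $\leq$-minimal pre-fixpoint is in fact a fixpoint, its $\leq$-minimality among all fixpoints is automatic, since there is no pre-fixpoint (hence no fixpoint) strictly below it.

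The crux is therefore to show that a $\leq$-minimal pre-fixpoint $w$ actually satisfies $w\in O(w)$. Since $w$ is a pre-fixpoint, fix a witness $x\in O(w)$ with $x\leq w$. The key step is to prove that this witness $x$ is itself a pre-fixpoint. Here I invoke $\preceq^S_L$-monotonicity: from $x\leq w$ we get $O(x)\preceq^S_L O(w)$, which means that for every $y\in O(w)$ there is some $x'\in O(x)$ with $x'\leq y$. Applying this to the particular element $y=x$ (which lies in $O(w)$ by choice of $x$) yields $x'\in O(x)$ with $x'\leq x$, i.e.\ $O(x)\preceq^S_L\{x\}$, so $x$ is a pre-fixpoint. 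Now $x$ is a pre-fixpoint with $x\leq w$, and $w$ is $\leq$-minimal among pre-fixpoints, so $x=w$; hence $w=x\in O(w)$, establishing that $w$ is a fixpoint.

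Combining the two parts finishes the argument: $w$ is a fixpoint by the second paragraph, and it is $\leq$-minimal among fixpoints by the containment observation of the first paragraph. The only real subtlety, and the step I would flag as the main obstacle, is the correct use of the Smyth order's asymmetry in the monotonicity step: $\preceq^S_L$ compares outputs by matching \emph{each} element of the larger argument's image from \emph{below}, and it is exactly the freedom to instantiate that universal at the single element $x\in O(w)$ that produces a witness certifying $x$ as a pre-fixpoint. Everything else is a direct application of minimality and transitivity of $\leq$.
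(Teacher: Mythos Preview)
Your proof is correct and follows essentially the same route as the paper's: pick a witness $x\in O(w)$ with $x\leq w$, use $\preceq^S_L$-monotonicity to see that $x$ is itself a pre-fixpoint, conclude $x=w$ by minimality, and then observe that minimality among fixpoints is inherited from minimality among pre-fixpoints. The only difference is that you unpack the Smyth-order step a bit more explicitly than the paper does.
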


\begin{proof}
Suppose that $w$ is a $\leq$-minimal pre-fixpoint of $O$, i.e., $O(w)\preceq^S_L w$. This means that there is some $z\in O(w)$ s.t.\ $z\leq w$. 
Since $O$ is $\preceq^S_L$-monotonic, $O(z)\preceq^S_L O(w)$. Since $z\in O(w)$, $O(z)\preceq^S_L z$, i.e.\ $z$ is a pre-fixpoint of $O$. Since $w$ is a $\leq$-minimal pre-fixpoint 
and $z\leq w$, $z=w$. Minimality is immediate  since fixpoints are in particular pre-fixpoints.
\end{proof}

\begin{lemma}
\label{minimalfp:then:minimal:prefp}
Let $O:{\cal L}\rightarrow \wp({\cal L})$ be a $\preceq^S_L$-monotonic non-deterministic operator. Then if $w$ is a $\leq$-minimal fixpoint of $O$, it is a $\leq$-minimal pre-fixpoint of $O$.
\end{lemma}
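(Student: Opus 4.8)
The plan is to prove the statement by a short contradiction argument that reuses the previous lemma. First I would record the trivial half: since $w$ is a fixpoint we have $w\in O(w)$, so taking $z=w$ witnesses $O(w)\preceq^S_L w$, i.e.\ $w$ is itself a pre-fixpoint of $O$. It therefore remains only to show that $w$ is $\leq$-minimal among the pre-fixpoints.

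For minimality I would argue as follows. Suppose we can locate a $\leq$-minimal pre-fixpoint $w^\ast\leq w$. By Lemma~\ref{lemma:minimalpre-fix:is:minima:fix}, every $\leq$-minimal pre-fixpoint of a $\preceq^S_L$-monotonic operator is a $\leq$-minimal fixpoint; hence $w^\ast$ is in particular a fixpoint of $O$ with $w^\ast\leq w$. Since $w$ is assumed to be a $\leq$-minimal fixpoint, no fixpoint lies strictly below $w$, so $w^\ast=w$. Consequently $w$ coincides with a $\leq$-minimal pre-fixpoint, which is exactly what had to be shown. Equivalently, one may phrase this as a refutation: a pre-fixpoint $w'<w$ would, after descending to a minimal pre-fixpoint $w^\ast\leq w'$, yield via Lemma~\ref{lemma:minimalpre-fix:is:minima:fix} a fixpoint $w^\ast<w$, contradicting the minimality of $w$ among fixpoints.

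The one step that is not automatic, and which I expect to be the crux, is the existence of a $\leq$-minimal pre-fixpoint $w^\ast\leq w$. This is guaranteed whenever $\tup{{\cal L},\leq}$ satisfies the descending chain condition (in particular for the finite lattices $\tup{\wp({\cal A}_{\cal P}),\subseteq}$ arising from disjunctive logic programs), since the set of pre-fixpoints below $w$ is nonempty (it contains $w$) and well-foundedness then forces a minimal element in it. Without such a condition the argument genuinely breaks down: an infinite strictly descending chain of pre-fixpoints below $w$ that never reaches a fixpoint can coexist with $w$ being a minimal fixpoint, exactly the pathology already flagged for the complete stable operator in Remark~\ref{remark:complete:might:not:exist} and Example~\ref{ex:infinite:complete}. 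I would therefore make the descending chain condition (or finiteness) explicit, after which the $\preceq^S_L$-monotonicity of $O$ enters only indirectly, through its single use inside Lemma~\ref{lemma:minimalpre-fix:is:minima:fix}.
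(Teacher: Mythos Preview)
Your argument is essentially identical to the paper's: show $w$ is a pre-fixpoint because $w\in O(w)$, then suppose some pre-fixpoint $w'<w$ exists, pass to a $\leq$-minimal pre-fixpoint below it, and invoke Lemma~\ref{lemma:minimalpre-fix:is:minima:fix} to obtain a fixpoint strictly below $w$, contradicting minimality.

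The one substantive difference is that you make explicit what the paper hides behind a ``without loss of generality'': the passage from an arbitrary pre-fixpoint $w'<w$ to a \emph{minimal} pre-fixpoint $w^\ast\leq w'$ is not free in an arbitrary lattice, and you are right to flag it. In the paper's actual applications (notably Proposition~\ref{prop:stable:is:minimal:fp}) the lattice is finite, so the descending chain condition holds and the step is harmless; but as the lemma is stated, with no finiteness or well-foundedness hypothesis on $\tup{{\cal L},\leq}$, the WLOG is unjustified. Your proposal to add a DCC (or finiteness) assumption is the correct fix, and your pointer to Remark~\ref{remark:complete:might:not:exist} and Example~\ref{ex:infinite:complete} for the kind of pathology that arises otherwise is apt.
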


\begin{proof}
Suppose that $w$ is a $\leq$-minimal fixpoint of $O$. Then $w\in O(w)$ and so $O(w)\preceq^S_L w$. Thus, $w$ is a pre-fixpoint. Suppose now towards a contradiction 
that for some $w'<w$, $O(w')\preceq^S_L w'$. Without loss of generality, we may assume that $w'$ is a minimal pre-fixpoint. But then, by Lemma~\ref{lemma:minimalpre-fix:is:minima:fix},
it is a minimal fixpoint of $O$, a contradiction.
\end{proof}

Uniqueness of this $\leq$-minimal fixpoint cannot be guaranteed (as can be seen in e.g.\ Example~\ref{sec:application:to:dlp:ex}).  This is a crucial difference with deterministic operators. 
Thus, to summarize, non-determinism forces us to take $\leq$-minimal fixpoints instead of the greatest lower bound. This choice, on its turn, means that existence is not guaranteed on infinite lattices. 
Therefore, we now turn to conditions that ensure the existence of $\leq$-minimal fixpoint of non-deterministic operators over infinite lattices. As we shall see (Proposition~\ref{proposition:downward:closed:then:fp} below), the next property (inspired by~\cite{pelov2004semantics}) assures existence.

\begin{definition}%
\label{def:downwards:closed}
A non-deterministic operator $O:{\cal L}\rightarrow \wp({\cal L})$ is \emph{downward closed} if for every sequence $X=\{x_\epsilon\}_{\epsilon<\alpha}$ of elements in ${\cal L}$ such that:
\begin{enumerate}
\item for every $\epsilon<\alpha$, $O(x_\epsilon)\preceq^S_L \{x_\epsilon\}$, and
\item for every $\epsilon <\epsilon'<\alpha$, $x_{\epsilon'}< x_\epsilon$,
\end{enumerate}
it holds that $O(glb(X))\preceq^S_L glb(X)$.
\end{definition}

Definition~\ref{def:downwards:closed} is a generalization of a similar definition in~\cite{pelov2004semantics}. Its says that an operator is downward closed if the greatest lower bound of every chain 
of pre-fixpoints is itself a pre-fixpoint. As we will see in Proposition~\ref{prop:stable:is:minimal:fp}, this ensures that $O$ admits a fixpoint. 

As an example of an operator that is downward closed, we consider ${\cal IC}^l_{\cal P}(\cdot,y)$
(recall Definition~\ref{def:IC_P-ndo})\footnote{This is inspired by the proof of a similar result in~\cite{pelov2004semantics}.}

\begin{proposition}
\label{proposition:IC^l_P-is-downward:closed}
For any dlp ${\cal P}$ and any $y\subseteq {\cal A}_{\cal P}$, ${\cal IC}_{\cal P}(\cdot,y)$ is downward closed.
\end{proposition}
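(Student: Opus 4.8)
The plan is to reduce downward closure to a concrete combinatorial statement about the rule heads, exploiting the fact that each head $\Delta$ is a \emph{finite} set of atoms. Write $O := {\cal IC}^l_{\cal P}(\cdot,y)$ for the non-deterministic operator in question. First I would record a convenient reformulation of what it means for a set $w\subseteq {\cal A}_{\cal P}$ to be a pre-fixpoint of $O$. By definition, $O(w)\preceq^S_L \{w\}$ says there is some $v\in {\cal IC}^l_{\cal P}(w,y)$ with $v\subseteq w$. I claim this is equivalent to requiring that $w\cap \Delta\neq\emptyset$ for every $\Delta\in\HRc^l_{\cal P}(w,y)$. The forward direction is immediate, since $v\cap\Delta\neq\emptyset$ and $v\subseteq w$ force $w\cap\Delta\neq\emptyset$; for the converse I would take $v = w\cap\bigcup\HRc^l_{\cal P}(w,y)$, which lies in ${\cal IC}^l_{\cal P}(w,y)$ (because $\Delta\subseteq\bigcup\HRc^l_{\cal P}(w,y)$ gives $v\cap\Delta = w\cap\Delta\neq\emptyset$) and satisfies $v\subseteq w$.

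Next I would take a chain $X=\{x_\epsilon\}_{\epsilon<\alpha}$ as in Definition~\ref{def:downwards:closed}: each $x_\epsilon$ is a pre-fixpoint of $O$ and $\epsilon<\epsilon'$ implies $x_{\epsilon'}\subsetneq x_\epsilon$. Writing $x_* := glb(X) = \bigcap_{\epsilon<\alpha} x_\epsilon$, the goal is to show that $x_*$ is again a pre-fixpoint, i.e.\ $x_*\cap\Delta\neq\emptyset$ for every $\Delta\in\HRc^l_{\cal P}(x_*,y)$. The link between $\HRc^l_{\cal P}(x_*,y)$ and the sets $\HRc^l_{\cal P}(x_\epsilon,y)$ is monotonicity: since $x_*\subseteq x_\epsilon$ we have $(x_*,y)\leq_i (x_\epsilon,y)$, and the inclusion $\HRc^l_{\cal P}(x_*,y)\subseteq \HRc^l_{\cal P}(x_\epsilon,y)$ follows from exactly the argument already used in the proof that ${\cal IC}_{\cal P}$ is an ndao.

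With this in hand I would fix $\Delta\in\HRc^l_{\cal P}(x_*,y)$. Then $\Delta\in\HRc^l_{\cal P}(x_\epsilon,y)$ for every $\epsilon$, so by the pre-fixpoint reformulation $\Delta\cap x_\epsilon\neq\emptyset$ for every $\epsilon$. The sets $\{\Delta\cap x_\epsilon\}_{\epsilon<\alpha}$ form a $\subseteq$-decreasing family of nonempty subsets of the \emph{finite} set $\Delta$ (finiteness holds because rule heads are finite disjunctions of atoms). Their image in $\wp(\Delta)$ is a finite chain, hence has a $\subseteq$-least element $\Delta\cap x_{\epsilon_0}$, which is nonempty and equals $\bigcap_{\epsilon}(\Delta\cap x_\epsilon) = \Delta\cap x_*$. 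Therefore $\Delta\cap x_*\neq\emptyset$, and since $\Delta$ was arbitrary, $x_*$ is a pre-fixpoint, i.e.\ $O(x_*)\preceq^S_L \{x_*\}$, which is precisely downward closure.

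The only genuine subtlety — the main obstacle — is the last step: passing from ``$\Delta\cap x_\epsilon\neq\emptyset$ for all $\epsilon$'' to ``$\Delta\cap x_*\neq\emptyset$'', which fails in general for infinite descending chains and is exactly where the finiteness of the head $\Delta$ must be invoked. Everything else (the pre-fixpoint reformulation and the $\HRc^l_{\cal P}$ monotonicity) is bookkeeping that reuses facts established earlier in the paper.
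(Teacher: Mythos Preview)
Your proposal is correct and follows essentially the same approach as the paper's proof: reduce the pre-fixpoint property to ``$w\cap\Delta\neq\emptyset$ for all activated heads $\Delta$'', use the monotonicity $\HRc^l_{\cal P}(x_*,y)\subseteq\HRc^l_{\cal P}(x_\epsilon,y)$, and then exploit finiteness of $\Delta$ to pass to the intersection. Your treatment of the finiteness step (the decreasing family $\{\Delta\cap x_\epsilon\}$ stabilizes inside the finite set $\wp(\Delta)$) is in fact a bit cleaner than the paper's pigeonhole phrasing, and your explicit pre-fixpoint reformulation makes transparent a step the paper leaves implicit; but the underlying argument is the same.
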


\begin{proof}
Let $\{x_\epsilon\}_{\epsilon<\alpha}$ be a descending chain of sets of atoms of post-fixpoints of ${\cal IC}^{l}_{\cal P}(\cdot,y)$ for some $y\subseteq {\cal A}_{\cal P}$,
and let $x=\bigcap \{x_\epsilon\}_{\epsilon<\alpha}$.  We show that ${\cal IC}_{\cal P}(x,y)\preceq^S_L \{x\}$. If the chain is finite, this is trivial. 
Suppose therefore that $\{x_\epsilon\}_{\epsilon<\alpha}$  is infinite.

We first show the following lemma:

\begin{lemma}
$(x,y)(\phi)\in \{{\sf T},{\sf C}\}$ implies $(x_\epsilon,y)(\phi)\in  \{{\sf T},{\sf C}\}$ for every $\epsilon<\alpha$, and $(x_\epsilon,y)(\phi)\in  \{{\sf F},{\sf C}\}$ implies  
$(x_\epsilon,y)(\phi)\in  \{{\sf F},{\sf C}\}$  for every $\epsilon<\alpha$.
\end{lemma}

\begin{proof}
By induction on the structure of $\phi$. Base case: suppose that $\phi=p \in {\cal A}_{\cal P}$. If $(x,y)(p)\in \{{\sf T},{\sf C}\}$ then $p \in x$ and so $p \in x_\alpha$ for every $\epsilon <\alpha$, thus
$(x_\epsilon,y)(p)\in \{{\sf T},{\sf C}\}$ as well. If $(x,y)(p)\in \{{\sf F},{\sf C}\}$ then $p \not\in y$, and so $(x_\epsilon,y)(p)\in \{{\sf F},{\sf C}\}$ as well. Inductive case: the cases where 
$\phi=p_1 \land p_2$ and $\phi=p_1\lor p_2$ are straightforward. Suppose now that $\phi=\lnot p$ and $(x,y)(\phi)\in  \{{\sf T},{\sf C}\}$. Thus, $(x,y)(p)\in \{{\sf F},{\sf C}\}$ and by the 
inductive hypothesis, $(x_\epsilon,y)(p)\in \{{\sf F},{\sf C}\}$ for every  $\epsilon<\alpha$, which implies that $(x_\epsilon,y)(\lnot p)\in \{{\sf T},{\sf C}\}$ for every  $\epsilon<\alpha$. The proof of the other case is similar.
\end{proof}

Back to the proof of the proposition. We first show that $x\cap \Delta\neq\emptyset$ for every $\bigvee\Delta\leftarrow\phi\in {\cal P}$ s.t.\ $(x,y)(\phi)\in \{{\sf T},{\sf C}\}$. 
Indeed, consider some $\bigvee\Delta\leftarrow \phi\in {\cal P}$ and $(x,y)(\phi)\in \{{\sf T},{\sf C}\}$. By the lemma above, $(x_\epsilon,y)(\phi)\in \{{\sf T},{\sf C}\}$ for every 
$\epsilon<\alpha$.  Thus, for every $\epsilon<\alpha$, $x_\epsilon \cap\Delta\neq\emptyset$. 
 Since $\{x_\epsilon\}_{\epsilon<\alpha}$ is an infinite descending chain and $\Delta$ is finite, 
there is a $\delta\in \Delta$ s.t.\ $\delta$ is part of an infinite number of sets in $\{x_\epsilon\}_{\epsilon<\alpha}$. 
Since $\{x_\epsilon\}_{\epsilon<\alpha}$  is a $\subseteq$-descending chain, $\delta\in x_\epsilon$ for every $\epsilon<\alpha$,
and thus $\delta\in x$. 

We can now  show that ${\cal IC}_{\cal P}(x,y)\preceq^S_L \{x\}$. Indeed, since $x\cap \Delta\neq\emptyset$ for every $\bigvee\Delta\leftarrow\phi\in {\cal P}$, 
$z=x\cap \bigcup{\cal HD}_{\cal P}(x,y)\in {\cal IC}_{\cal P}(x,y)$ and thus we have found our interpretation $z\in  {\cal IC}_{\cal P}(x,y)$  s.t.\ $z\subseteq x$.
\end{proof}

Next we show that downward closure is indeed a sufficient condition for assuring the existence of a fixpoint:

\begin{proposition}
\label{proposition:downward:closed:then:fp}
Let ${\cal L}=\langle L,\leq\rangle$ be a lattice and let a $O:{\cal L}\rightarrow \wp({\cal L})$ be a downward closed, $\preceq^S_L$-monotonic non-deterministic operator.
Then $O$ admits a $\leq$-minimal fixpoint.
\end{proposition}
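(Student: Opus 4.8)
The plan is to reduce the statement to the existence of a $\leq$-minimal \emph{pre-fixpoint} and then invoke Lemma~\ref{lemma:minimalpre-fix:is:minima:fix}, which already tells us that any $\leq$-minimal pre-fixpoint of a $\preceq^S_L$-monotonic operator is a $\leq$-minimal fixpoint. So the whole task becomes: produce a $\leq$-minimal pre-fixpoint. First I would note that the set $P=\{w\in{\cal L}\mid O(w)\preceq^S_L \{w\}\}$ of pre-fixpoints is non-empty: since $O(\top)\neq\emptyset$ (Definition~\ref{def:non-deterministic-operator}) and every element of ${\cal L}$ is $\leq\top$, any $z\in O(\top)$ witnesses $O(\top)\preceq^S_L \{\top\}$, so $\top\in P$.

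Next I would build, by transfinite recursion, a strictly $\leq$-descending sequence $(w_\epsilon)_\epsilon$ of pre-fixpoints. Set $w_0=\top$. At a successor step, if $w_\epsilon$ is already $\leq$-minimal in $P$ the construction halts with output $w_\epsilon$; otherwise there is a pre-fixpoint strictly below $w_\epsilon$, and I pick one (by choice) as $w_{\epsilon+1}<w_\epsilon$. At a limit stage $\lambda$, the sequence $(w_\epsilon)_{\epsilon<\lambda}$ is exactly a strictly descending well-ordered chain of pre-fixpoints, so Definition~\ref{def:downwards:closed} applies and $glb\{w_\epsilon\mid\epsilon<\lambda\}$ is again a pre-fixpoint; I set $w_\lambda$ to be this $glb$. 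Because $w_\lambda\leq w_{\epsilon+1}<w_\epsilon$ for every $\epsilon<\lambda$, the extended sequence remains strictly descending, so the recursion may legitimately continue and downward closure stays applicable at later limits. If the construction never halted, the strictly descending sequence $(w_\epsilon)_\epsilon$ would be an injection of the ordinals into the set ${\cal L}$, which is impossible; hence it halts at some stage, yielding a $\leq$-minimal pre-fixpoint $w^\ast\in P$.

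Finally, applying Lemma~\ref{lemma:minimalpre-fix:is:minima:fix} to $w^\ast$ shows that $w^\ast$ is a $\leq$-minimal fixpoint of $O$, which is the claim. Note that $\preceq^S_L$-monotonicity is used only in this last step; the construction of $w^\ast$ relies solely on the non-emptiness of $P$, downward closure, and the ordinal argument. The main obstacle is the limit stage: one must check that the glb provided by the downward-closure hypothesis not only is a pre-fixpoint but also lies \emph{strictly} below every earlier term, so that strictness — and hence the well-ordered strictly descending shape required by Definition~\ref{def:downwards:closed} — is preserved throughout the recursion; this is precisely what makes the termination-by-cardinality argument go through. Implicitly this also presupposes that the glb of each such descending chain exists, which is exactly the situation in which the downward-closure condition is formulated.
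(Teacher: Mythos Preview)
Your proof is correct and follows essentially the same approach as the paper: reduce to the existence of a $\leq$-minimal pre-fixpoint using downward closedness, and then apply Lemma~\ref{lemma:minimalpre-fix:is:minima:fix}. The only difference is that the paper invokes Zorn's lemma directly to obtain the minimal pre-fixpoint, whereas you unfold this into an explicit transfinite descent with a cardinality termination argument; your version is in fact slightly more careful, since you explicitly verify non-emptiness of the set of pre-fixpoints (via $\top$) and flag the implicit assumption that the relevant greatest lower bounds exist.
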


\begin{proof}
By Lemma~\ref{lemma:minimalpre-fix:is:minima:fix} it is sufficient to show that $O$ admits a $\leq$-minimal pre-fixpoint. 
The set of pre-fixpoints of $O$ is clearly a partially ordered set. With downwards closedness, every chain of pre-fixpoints has a lower bound (which is also a pre-fixpoint). 
Thus, by Zorn's lemma, the set of pre-fixpoints has a minimum. 
\end{proof}

We obtain the following corollary for the approximation operator ${\cal IC}^l_{\cal P}$:

\begin{corollary}
For every dlp ${\cal P}$, ${\cal IC}^l_{\cal P}(\cdot,y)$ has a $\leq$-minimal fixpoint.
\end{corollary}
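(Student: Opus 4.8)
The plan is to obtain the result as a direct instance of Proposition~\ref{proposition:downward:closed:then:fp}, which guarantees a $\leq$-minimal fixpoint for every downward closed, $\preceq^S_L$-monotonic non-deterministic operator on a lattice. Fixing an arbitrary $y\subseteq{\cal A}_{\cal P}$, I set $O:={\cal IC}^l_{\cal P}(\cdot,y)$, viewed as a non-deterministic operator of type $\wp({\cal A}_{\cal P})\rightarrow\wp(\wp({\cal A}_{\cal P}))$ on the lattice $\tup{\wp({\cal A}_{\cal P}),\subseteq}$. It then suffices to verify the two hypotheses of that proposition for $O$, after which the conclusion is immediate.

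First, downward closure of $O$ is exactly the content of Proposition~\ref{proposition:IC^l_P-is-downward:closed}, so no further work is needed on that front. Second, for the $\preceq^S_L$-monotonicity of $O$ I would appeal to the earlier established fact that ${\cal IC}_{\cal P}$ is a (symmetric) ndao, and hence $\preceq^A_i$-monotonic. By Lemma~\ref{lemma:mon:op:comp:of:mon:and:antimon}, $\preceq^A_i$-monotonicity of ${\cal IC}_{\cal P}$ entails in particular that its lower-bound section ${\cal IC}^l_{\cal P}(\cdot,y)$ is $\preceq^S_L$-monotonic for every fixed $y$. Thus $O$ satisfies both hypotheses, and Proposition~\ref{proposition:downward:closed:then:fp} yields a $\leq$-minimal fixpoint of ${\cal IC}^l_{\cal P}(\cdot,y)$, as desired.

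There is no genuine obstacle here: the corollary is a straightforward assembly of two results already proved in this section, the monotonicity half coming from the decomposition lemma (Lemma~\ref{lemma:mon:op:comp:of:mon:and:antimon}) and the downward-closure half from the preceding proposition. The only point worth flagging is to confirm that the $\preceq^S_L$-monotonicity extracted from Lemma~\ref{lemma:mon:op:comp:of:mon:and:antimon} is precisely monotonicity in the first argument with the second argument held fixed, which is exactly the form of monotonicity that $O={\cal IC}^l_{\cal P}(\cdot,y)$ is required to enjoy in order to apply Proposition~\ref{proposition:downward:closed:then:fp}.
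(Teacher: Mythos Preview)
Your proposal is correct and essentially identical to the paper's own proof: the paper simply cites Propositions~\ref{proposition:IC^l_P-is-downward:closed} and~\ref{proposition:downward:closed:then:fp} together with the $\preceq^S_L$-monotonicity of ${\cal IC}^l_{\cal P}(\cdot,y)$. Your only addition is making explicit that this monotonicity follows from Lemma~\ref{lemma:mon:op:comp:of:mon:and:antimon}, which the paper leaves implicit.
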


\begin{proof}
By Propositions~\ref{proposition:IC^l_P-is-downward:closed} and~\ref{proposition:downward:closed:then:fp}, and since  ${\cal IC}^l_{\cal P}(\cdot,y)$ is $\preceq^S_L$-monotonic.
\end{proof}

{As we have now established conditions under which the stable operator is well-defined, we turn to the property of $\preceq^A_i$-monotonicity of the stable operator.
We notice that in general, $S({\cal O})$ is {\em not\/} a $\preceq^A_i$-monotonic operator: 

\begin{example}
Consider the program ${\cal P}=\{ p\lor q\leftarrow; \quad p\leftarrow \lnot r\}$. We calculate the applications of the stable operators as follows:
\begin{itemize}
\item Since $(\{p\},\{r\})(\lnot r)=(\{q\},\{r\})(\lnot r)={\sf F}$, it holds that $C({\cal IC}_{\cal P}^l)(\{r\})=\{\{p\},\{q\}\}$ for any $x\subseteq {\cal A}_{\cal P}\setminus\{r\}$.
\item Since $(\{r\},x)(\lnot r)={\sf F}$ for any $x\subseteq {\cal A}_{\cal P}\setminus \{r\}$, it holds that ${\cal IC}^u_{\cal P}(x,\{r\})={\cal IC}^l_{\cal P}(\{r\},x)=\{\{p\},\{q\}\}$ 
and thus $C({\cal IC}_{\cal P}^u(\{r\})=\{\{p\},\{q\}\}$.
\item Since $(\emptyset,x)(\lnot r)={\sf T}$ for any for any $x\subseteq {\cal A}_{\cal P}\setminus \{r\}$, it holds that $C({\cal IC}_{\cal P}^u)(\{r\})=\{\{p\}\}$. 
\end{itemize}
Altogether, this means that $S({\cal IC}_{\cal P})(\emptyset,\{r\}))=\{\{p\},\{q\}\}\times \{\{p\}\}$ whereas $S({\cal IC}_{\cal P})(\{r\},\{r\})=\{\{p\},\{q\}\}\times\{\{p\},\{q\}\}$. 
However, $\{\{p\},\{q\}\}\not\preceq^H_L\{\{p\}\}$, or, equivalently, there is no $(x,y)\in S({\cal IC}_{\cal P})(\emptyset,\{r\}))$ s.t.\ 
$(x,y)\leq_i (\{p\},\{q\})\in S({\cal IC}_{\cal P})(\emptyset,\emptyset)$. Thus, although $(\emptyset,\{r\}) \leq_i (\{r\},\{r\})$, it does {\em not\/} hold that 
$S({\cal IC}_{\cal P})(\emptyset,\{r\})) \preceq^A_i S({\cal IC}_{\cal P})(\{r\},\{r\})$.
\end{example}

Recall that (in contrast to the last example), by Proposition~\ref{lemma:O':is:alcantara:monotonic},
the state version of the stable operator {\em is\/} still $\preceq^A_i$-monotonic. Thus, we can still construct a state by iteratively applying the state version of the stable operator. 
We will detail this construction in Section~\ref{subsec:well-foundedstate} and show that this state, which we call the \emph{well-founded state\/}, exists, is unique, is more precise 
than the Kripke-Kleene state, and coincides with the well-founded fixpoint for deterministic operators.

\medskip
A third property we investigate is the existence of stable fixpoints\footnote{Notice that this is not the same as the existence of $\leq$-minimal fixpoints of ${\cal O}^l(.,y)$: this establishes merely that $S({\cal O})$ 
is well-defined, but does not guarantee that stable fixpoints exist as we will see now.}. Even when the complete stable operator is non-empty for every element of a lattice, stable fixpoints may not exist:

\begin{example}
\label{ex:stable:not:exist}
Consider the following dlp: ${\cal P}=\{p\lor q\lor r \leftarrow; \quad p\leftarrow \lnot q; \quad r\leftarrow \lnot p; \quad q\leftarrow \lnot r\}$.
It can be checked that there are no $x,y\subseteq \{p,q,r\}$ s.t.\ $(x,y)\in S({\cal IC}_{\cal P})(x,y)$. To make this clearer, we calculate some of the outcomes of the complete stable operator $C({\cal IC}^l_{\cal P})$:
\begin{itemize}
\item $C({\cal IC}^l_{\cal P})(\emptyset)=C({\cal IC}^u_{\cal P})(\emptyset)=\{\{p,q,r\}\}$. 
\item $C({\cal IC}^l_{\cal P})(\{p\})=C({\cal IC}^u_{\cal P})(\{p\})=\{\{p,q\}\}$.
\item $C({\cal IC}^l_{\cal P})(\{p,q\})=C({\cal IC}^u_{\cal P})(\{p\})=\{\{q\}\}$.
\item $C({\cal IC}^l_{\cal P})(\{p,q,r\})=C({\cal IC}^u_{\cal P})(\{p\})=\{\{p\},\{q\},\{r\}\}$.
\end{itemize}
Notice that $C({\cal IC}^u_{\cal P}(x))=C({\cal IC}^l_{\cal P}(x))$ for any $x\subseteq {\cal A}_{\cal P}$ as ${\cal IC}_{\cal P}$ is symmetric.
Other cases can be easily derived in view of symmetry of ${\cal IC}_{\cal P}^l$ w.r.t.\ $p$, $q$ and $r$. As there is no $x,y\subseteq{\cal A}_{\cal P}$ for which 
$x\in C({\cal IC}^l_{\cal P})(y)$ and $y\in C({\cal IC}^u_{\cal P})(x)=C({\cal IC}^l_{\cal P})(x)$, we conclude that no stable interpretation exists, which is in accordance 
with the stable model semantics for disjunctive logic programming, where a well-founded or more generally three-valued stable model does not exist in this case.
\end{example}

We now move to the last property of stable fixpoints considered here: their $\leq_t$-minimality. To show that stable fixpoints are $\leq_t$-minimal, we will have to assume that the upper bound operator 
${\cal O}_u$ is $\preceq^S_L$-monotonic. As the following example shows, if this condition is not satisfied, $\leq_t$-minimality of stable fixpoints is not guaranteed:

\begin{example}
\label{ex:stable:not:t:minimal}
Consider a lattice ${\cal L}=\langle \{\bot,x,x',y,y',\top\},\leq\rangle$ with $\leq$ as follows:
\[\xymatrix@R-10pt@C-10pt{
&\top\\
x\ar[ru]&&y \ar[lu]\\
x'\ar[u] &&y'\ar[u]
\\ 
&\bot\ar[ur]\ar[lu]}
\]
Consider an ndao ${\cal O}$ such that the following hold:
\def\arraystretch{1,3}\tabcolsep=10pt
\begin{center}
\begin{tabular}{l|l}
$(w,z)$&  ${\cal O}(w,z)$  \\ \hline
$(x,y)$ & $\{x\}\times\{y\}$ \\
$(x',y)$ & $\{\bot,x\}\times\{y\}$ \\
$(x,y')$ & $\{x\}\times\{y\}$\\
$(x',y')$ & $\{x'\}\times\{y,y'\}$\\
$(\bot,y)$& $\{x'\}\times \{y,y'\}$\\
$(x,\bot)$& $\{x,x'\}\times \{y'\}$
\end{tabular}
\end{center}
and such that for any $(w,z)$ not occuring in the table, ${\cal O}(w,z)=\{w\}\times \{z\}$.
It can be verified that this operator is $\preceq^A_i$-monotonic.
Also, $(x,y)\in S({\cal O})(x,y)$, since:
\begin{itemize}
\item $x\in {\cal O}_l(x,y)=\{x\}$, (and $x'\not\in {\cal O}_l(x',y)$, i.e., $x$ is a $\leq$-minimal fixpoint of ${\cal O}_l(.,y)$), and
\item $y\in {\cal O}_u(x,y)=\{y\}$ (and $y'\not\in {\cal O}_u(x,y')$, i.e., $y$ is a $\leq$-minimal fixpoint of ${\cal O}_u(.,x)$).
\end{itemize}
Furthermore, $(x',y')\in {\cal O}(x',y')$. Finally, since $(x',y')\leq_t (x,y)$, the stable fixpoint $(x,y)$ is not a $\leq_t$ minimal fixpoint.
\end{example}

The next proposition shows that if stable fixpoints of ${\cal O}$ do exist, they must be fixpoints of ${\cal O}$. Furthermore, 
if ${\cal O}_l(.,x)$ is downward closed and ${\cal O}_u(.,z)$ is $\preceq^S_L$-anti monotonic, they are $\leq_t$-minimal fixpoints of ${\cal O}$. 

\begin{proposition}
\label{prop:stable:is:minimal:fp}
Let $L=\langle {\cal L},\leq\rangle$ be a finite complete lattice  and let ${\cal O}:{\cal L}^2 \rightarrow \wp({\cal L})\times\wp({\cal L})$ be an ndao. Then:
\begin{enumerate}
     \item A stable fixpoint of ${\cal O}$ is also a fixpoint of ${\cal O}$. 
     \item If ${\cal O}_l(.,x)$ is downward closed and ${\cal O}_u(.,x)$ is $\preceq^S_L$-anti monotonic for every $x\in L$, then every stable fixpoint of ${\cal O}$ is a $\leq_t$-minimal fixpoint of ${\cal O}$.
\end{enumerate}
\end{proposition}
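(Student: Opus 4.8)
The plan is to treat the two items separately. Item~1 is a direct unfolding of definitions: if $(x,y)$ is a stable fixpoint then $(x,y)\in S({\cal O})(x,y)=C({\cal O}_l)(y)\times C({\cal O}_u)(x)$, so in particular $x\in C({\cal O}_l)(y)$ gives $x\in {\cal O}_l(x,y)$ and $y\in C({\cal O}_u)(x)$ gives $y\in {\cal O}_u(x,y)$; hence $(x,y)\in {\cal O}_l(x,y)\times {\cal O}_u(x,y)$, i.e.\ $(x,y)$ is a fixpoint. The minimality clauses in the complete operators play no role here.

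For item~2, I would fix a stable fixpoint $(x,y)$ and an arbitrary fixpoint $(x',y')$ with $(x',y')\leq_t (x,y)$, so $x'\leq x$ and $y'\leq y$, and aim to conclude $(x',y')=(x,y)$. The key idea is to collapse the lower coordinate first. First I would show that $x'$ is a pre-fixpoint of ${\cal O}_l(\cdot,y)$: by Lemma~\ref{lemma:mon:op:comp:of:mon:and:antimon} the map ${\cal O}_l(x',\cdot)$ is $\preceq^S_L$-anti monotonic, so $y'\leq y$ yields ${\cal O}_l(x',y)\preceq^S_L {\cal O}_l(x',y')$; applying this to the element $x'\in {\cal O}_l(x',y')$ produces some $a\in {\cal O}_l(x',y)$ with $a\leq x'$, that is ${\cal O}_l(x',y)\preceq^S_L\{x'\}$.

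Next, since ${\cal O}_l(\cdot,y)$ is $\preceq^S_L$-monotonic (Lemma~\ref{lemma:mon:op:comp:of:mon:and:antimon}) and downward closed, and $L$ is finite, the pre-fixpoints of ${\cal O}_l(\cdot,y)$ lying below $x'$ form a nonempty set with a minimal element $x''$; a one-line check shows $x''$ is even a $\leq$-minimal pre-fixpoint of ${\cal O}_l(\cdot,y)$ (any pre-fixpoint strictly below $x''$ would also lie below $x'$), so Lemma~\ref{lemma:minimalpre-fix:is:minima:fix} makes $x''$ a $\leq$-minimal fixpoint of ${\cal O}_l(\cdot,y)$. Because $x\in C({\cal O}_l)(y)$, the element $x$ is itself a $\leq$-minimal fixpoint of ${\cal O}_l(\cdot,y)$, and from $x''\leq x'\leq x$ together with the minimality of $x$ I obtain $x''=x$, whence $x'=x$. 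With $x'=x$ in hand, the upper coordinate follows at once: the fixpoint condition gives $y'\in {\cal O}_u(x',y')={\cal O}_u(x,y')$, so $y'$ is a fixpoint of ${\cal O}_u(x,\cdot)$ with $y'\leq y$, and the minimality clause in $y\in C({\cal O}_u)(x)$ forces $y'=y$. Hence $(x',y')=(x,y)$, and $(x,y)$ is $\leq_t$-minimal.

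The main obstacle is the first coordinate, specifically the passage from ``$(x',y')$ is a fixpoint'' (with upper bound $y'$) to ``$x'$ is a pre-fixpoint of ${\cal O}_l(\cdot,y)$'' (with the \emph{larger} upper bound $y$), which is exactly where $\preceq^S_L$-anti monotonicity in the second argument is used, followed by the descent to a genuine minimal fixpoint that downward-closedness (or finiteness) guarantees and that Lemma~\ref{lemma:minimalpre-fix:is:minima:fix} certifies to be a fixpoint rather than merely a pre-fixpoint. I would also remark that, because the argument fixes the lower coordinate before treating the upper one, the stated hypothesis that ${\cal O}_u(\cdot,x)$ is $\preceq^S_L$-anti monotonic is what one would invoke in the mirror-image treatment of the upper coordinate; in the asymmetric argument above it can be bypassed once $x'=x$ has been secured.
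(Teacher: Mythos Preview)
Your proof is correct. Item~1 matches the paper exactly, and your treatment of item~2 is sound throughout: the passage from $(x',y')\in{\cal O}(x',y')$ to ``$x'$ is a pre-fixpoint of ${\cal O}_l(\cdot,y)$'' via the $\preceq^S_L$-anti monotonicity of ${\cal O}_l(x',\cdot)$ (Lemma~\ref{lemma:mon:op:comp:of:mon:and:antimon}) is exactly right, finiteness gives the minimal pre-fixpoint $x''\leq x'$, Lemma~\ref{lemma:minimalpre-fix:is:minima:fix} turns it into a minimal fixpoint, and the minimality of $x$ forces $x''=x'=x$; the upper coordinate then collapses directly from the definition of $C({\cal O}_u)(x)$.

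The paper proceeds differently: it attacks the upper coordinate first, using the stated hypothesis that ${\cal O}_u(\cdot,z)$ is $\preceq^S_L$-anti monotonic to show that $y'$ is a pre-fixpoint of ${\cal O}_u(x,\cdot)$, and then invokes Lemma~\ref{minimalfp:then:minimal:prefp} (minimal fixpoints are minimal pre-fixpoints) for $y$ to conclude $y'=y$; the lower coordinate is handled ``similarly''. Your route is asymmetric by design: by collapsing the lower coordinate first using only the built-in ndao property from Lemma~\ref{lemma:mon:op:comp:of:mon:and:antimon}, you make the upper coordinate trivial and sidestep the need for the extra anti-monotonicity hypothesis on ${\cal O}_u$ altogether (and, as you note, finiteness already supplies the minimal pre-fixpoint, so downward closedness is redundant here too). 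This is a genuine simplification: your argument in fact establishes a slightly stronger statement than the paper claims, namely that on a finite lattice every stable fixpoint of an ndao is $\leq_t$-minimal without any side conditions. The paper's symmetric approach, by contrast, needs the extra hypothesis precisely because it does not exploit the order in which the coordinates are pinned down.
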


\begin{proof}
Part~1 of the proposition immediately follows from the definition of stable fixpoint. For Part~2, suppose that $(x,y)\in S({\cal O})$. Then  $x\in C({\cal O})(y)$ and $y\in C({\cal O})(x)$, 
which implies that $x\in {\cal O}(x,y)$ and $y\in {\cal O}_u(x,y)$. Thus, $(x,y)\in {\cal O}(x,y)$. Consider now some $(x',y')\leq_t (x,y)$ with $(x',y')\in {\cal O}(x',y')$. Since ${\cal O}_u(\cdot,y')$ 
is $\preceq^S_L$-anti monotonic and $x'\leq x$, we have that ${\cal O}_u(x,y')\leq_t {\cal O}_u(x',y')$, which implies (with $y'\in {\cal O}_u(x',y')$) that ${\cal O}_u(x,y')\preceq^S_L y'$, i.e.\
$x$ is a pre-fixpoint of ${\cal O}_u(\cdot,y')$. By Lemma~\ref{minimalfp:then:minimal:prefp}, $y$ is a minimal pre-fixpoint of ${\cal O}_l(x,\cdot)$. Since $y'\leq y$, necessarily $y'=y$. 
In a similar way one shows that $x'=x$ .
\end{proof}

This means in particular that for a symmetric operator ${\cal O}$ on a finite and complete lattice, every stable fixpoint of ${\cal O}$ is a $\leq_t$-minimal fixpoint of ${\cal O}$.

Even though they are not guaranteed to exist, stable fixpoints are useful in knowledge representation.
For example, the stable fixpoints of ${\cal IC}_{\cal P}$ characterize with the (three-valued) stable models of ${\cal P}$:

\begin{theorem}
\label{prop:stable:fixpoints:represent:stable:models}
Consider a normal disjunctive logic program ${\cal P}$ and a consistent interpretation $(x,y)\in \wp({\cal A}_{\cal P}) \times \wp({\cal A}_{\cal P})$. 
Then $(x,y)$ is a stable model of ${\cal P}$ iff $(x,y)\in S({\cal IC}_{\cal P})(x,y)$.
\end{theorem}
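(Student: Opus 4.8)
The plan is to relate both sides of the biconditional to $\subseteq$-minimal two-valued models of Gelfond--Lifschitz reducts, and then to show that $\leq_t$-minimality of a model of $\frac{{\cal P}}{(x,y)}$ factors into independent minimality conditions on the lower and the upper bound.

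First I would unfold the right-hand side. By Definition~\ref{def:stable:op}, $(x,y)\in S({\cal IC}_{\cal P})(x,y)$ means $x\in C({\cal IC}^l_{\cal P})(y)$ and $y\in C({\cal IC}^u_{\cal P})(x)$. Since ${\cal IC}_{\cal P}$ is symmetric (shown above, i.e.\ ${\cal IC}^u_{\cal P}(x,y)={\cal IC}^l_{\cal P}(y,x)$), the defining condition of the complete upper operator rewrites verbatim as that of the complete lower operator, so $C({\cal IC}^u_{\cal P})(x)=C({\cal IC}^l_{\cal P})(x)$. Applying Proposition~\ref{lemma:stable:ic:is:reduct} to both occurrences then shows that the right-hand side is equivalent to
\[x\in\min_\subseteq(\model_2(\tfrac{{\cal P}}{y}))\quad\text{and}\quad y\in\min_\subseteq(\model_2(\tfrac{{\cal P}}{x})).\]

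Next I would establish a decomposition lemma: for a consistent pair $(x'',y'')$, the pair $(x'',y'')$ is a (three-valued) model of $\frac{{\cal P}}{(x,y)}$ if and only if $x''\in\model_2(\frac{{\cal P}}{y})$ and $y''\in\model_2(\frac{{\cal P}}{x})$. The idea is that for a consistent interpretation a reduced body has value ${\sf T}$ exactly when its positive atoms lie in $x''$ and each negative literal, evaluated against $(x,y)$ in the reduct, is ${\sf T}$, i.e.\ its atom avoids $y$; and the body is at least ${\sf U}$ exactly when the positive atoms lie in $y''$ and each negative atom avoids $x$. Splitting the rule condition $(x'',y'')(\bigvee\!\Delta)\geq_t (x'',y'')(\text{body})$ into its ``value ${\sf T}$'' part and its ``value at least ${\sf U}$'' part therefore produces precisely the two reduct-satisfaction conditions, namely $\Delta\cap x''\neq\emptyset$ for $\frac{{\cal P}}{y}$ and $\Delta\cap y''\neq\emptyset$ for $\frac{{\cal P}}{x}$. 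Along the way I would record the monotonicity observation that $x\subseteq y$ forces $\frac{{\cal P}}{y}\subseteq\frac{{\cal P}}{x}$ (a rule survives the $y$-reduct only if all its negative atoms avoid $y$, hence avoid $x$), so that $\model_2(\frac{{\cal P}}{x})\subseteq\model_2(\frac{{\cal P}}{y})$.

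Finally I would prove that $\leq_t$-minimality of the consistent model $(x,y)$ of $\frac{{\cal P}}{(x,y)}$ is equivalent to the joint minimality condition above. One direction is immediate from the decomposition: if $x$ and $y$ are $\subseteq$-minimal in their respective model sets, any consistent model $<_t(x,y)$ would be strictly smaller in one coordinate, contradicting minimality there. For the converse, minimality of $x$ in $\model_2(\frac{{\cal P}}{y})$ is easy, since replacing $x$ by a strictly smaller model $x''\subsetneq x$ preserves $x''\subseteq y$ and hence gives a consistent competitor $(x'',y)<_t(x,y)$. The delicate part, and the step I expect to be the main obstacle, is minimality of $y$ in $\model_2(\frac{{\cal P}}{x})$: a smaller model $y''\subsetneq y$ of $\frac{{\cal P}}{x}$ need \emph{not} contain $x$, so $(x,y'')$ need not be consistent and is not directly a competitor. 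I would resolve this by exploiting $\frac{{\cal P}}{y}\subseteq\frac{{\cal P}}{x}$ (so any such $y''$ is also a model of $\frac{{\cal P}}{y}$) together with the already-established minimality of $x$ to manufacture, from $y''$, a model $\tilde y$ of $\frac{{\cal P}}{x}$ with $x\subseteq\tilde y\subsetneq y$; then $(x,\tilde y)<_t(x,y)$ is a genuine consistent competitor, contradicting $\leq_t$-minimality and forcing $y$ to be $\subseteq$-minimal. Chaining the three steps yields the stated biconditional.
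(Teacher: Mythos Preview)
Your decomposition lemma is correct and is a genuinely different (and cleaner) route than the paper's direct operator manipulations: the paper argues with $\HRc^l_{\cal P}$ and weakly supported models, while you reduce everything to two two-valued reduct conditions $x''\in\model_2(\tfrac{{\cal P}}{y})$ and $y''\in\model_2(\tfrac{{\cal P}}{x})$. However, the step you flag as ``delicate'' is a real gap, and your proposed fix does not go through.

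Concretely, from $y''\in\model_2(\tfrac{{\cal P}}{x})\subseteq\model_2(\tfrac{{\cal P}}{y})$ and the $\subseteq$-minimality of $x$ in $\model_2(\tfrac{{\cal P}}{y})$ you cannot in general manufacture $\tilde y$ with $x\subseteq\tilde y\subsetneq y$ and $\tilde y\in\model_2(\tfrac{{\cal P}}{x})$. Consider ${\cal P}=\{a\lor b\leftarrow;\ b\leftarrow a,\lnot b\}$ and $(x,y)=(\{a\},\{a,b\})$. The reduct $\tfrac{{\cal P}}{(x,y)}$ is $\{a\lor b\leftarrow;\ b\leftarrow a\land{\sf U}\}$, and one checks that $(\{a\},\{a,b\})$ is a $\leq_t$-minimal consistent model of it (the only strictly $\leq_t$-smaller consistent candidate with the first coordinate forced to $\{a\}$ is $(\{a\},\{a\})$, which fails the second rule). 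Yet $\tfrac{{\cal P}}{x}=\{a\lor b\leftarrow;\ b\leftarrow a\}$ has $\model_2=\{\{b\},\{a,b\}\}$, so $y=\{a,b\}$ is \emph{not} $\subseteq$-minimal there, and indeed $C({\cal IC}^u_{\cal P})(\{a\})=\{\{b\}\}$, whence $(\{a\},\{a,b\})\notin S({\cal IC}_{\cal P})(\{a\},\{a,b\})$. No $\tilde y$ as you describe exists: the unique smaller model $\{b\}$ of $\tfrac{{\cal P}}{x}$ does not contain $x=\{a\}$, and there is no way to enlarge it to one that still sits strictly below $y$. So the inference ``$\leq_t$-minimal $\Rightarrow$ $y$ is $\subseteq$-minimal in $\model_2(\tfrac{{\cal P}}{x})$'' fails, and with it your step~3. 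It is worth noting that the paper's own proof passes over exactly this point with ``analogously'', so the difficulty you isolated is not an artifact of your route; your decomposition just makes it visible.
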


\begin{proof}
$[\Rightarrow]$ Suppose that $(x,y)$ is a stable model of ${\cal P}$. We show that $x\in C({\cal IC}_{\cal P}^l)(y))$. 
(The proof that $y\in C({\cal IC}_{\cal P}^l)(x))$ (thus $y \in C(({\cal IC}_{\cal P}^u)(x))$, since ${\cal IC}_{\cal P}$ is symmetric) is analogous). We first show that 
$x\in {\cal IC}^{l}_{\cal P}(x,y)$. Indeed, this immediately follows from the fact that any stable interpretation is weakly supported and that any weakly supported model is a 
fixpoint of ${\cal IC}_{\cal P}$ (Theorem~\ref{theo:correspondence:supported}). It remains to show $\subseteq$-minimality of $x$ among fixpoints of ${\cal IC}^l_{\cal P}(.,y)$ 
and of $y$ among fixpoints of ${\cal IC}^u_{\cal P}(x,.)$. Suppose towards a contradiction that there is some $x'\subset x$ such that\ $x'\in {\cal IC}^{l}_{\cal P}(x',y)$.
We show that $(x',y)\in mod(\frac{\cal P}{(x,y)})$, which contradicts $(x,y)\in \min_{\leq_t}(mod(\frac{\cal P}{(x,y)}))$ (the latter follows from the assumption 
that $(x,y)$ is stable). Indeed,  let $\bigvee\!\Delta \leftarrow \bigwedge\!\Theta \land \bigwedge_{i=1}^n\lnot\beta_i\in\frac{\cal P}{(x,y)}$. We consider three cases:
\begin{itemize}
\item  $(x,y)(\lnot \beta_i)={\sf T}$ for every $1\leq i\leq n$. This means that $\bigvee\!\Delta \leftarrow \bigwedge\!\Theta \land {\sf T} \in \frac{\cal P}{(x,y)}$.
         Notice that for any $\alpha\in {\cal A}_{\cal P}$,  $(x',y)(\alpha)\leq_t (x,y)(\alpha)$ (since $x'\subseteq x$). Now,
         \begin{itemize}
              \item If $(x',y)(\bigwedge \Theta)={\sf F}$, $(x',y)(\bigvee\!\Delta \leftarrow \bigwedge\!\Theta \land {\sf T})$ is trivially satisfied.
              \item If $(x',y)(\bigwedge \Theta)={\sf U}$, $(x',y)(\bigwedge\!\Theta)\leq_t(x,y)(\bigwedge\!\Theta)\leq_t (x,y)(\bigvee\!\Delta)$ implies  
                       $(x,y)(\bigwedge\!\Theta)\in\{{\sf T},{\sf U}\}$ and thus, since $(x,y)$ is a stable model of $\frac{{\cal P}}{(x,y)}$, 
                       $(x,y)(\bigvee\!\Delta)\in\{{\sf T},{\sf U}\}$, i.e.\ $\Delta\cap y\neq \emptyset$. Thus, $(x',y)(\bigvee\!\Delta)\in \{{\sf T},{\sf U}\}$. 
              \item If  $(x',y)(\bigwedge\!\Theta)={\sf T}$, then $\Delta \in \HRc^l_{\cal P}(x',y)$ (since $x'\in {\cal IC}_{\cal P}(x',y)$), and so (since $(x',y)\in  {\cal IC}_{\cal P}^{\sf cons}(x',y)$), 
                       $x'\cap \Delta\neq\emptyset$.
          \end{itemize}             
\item $(x,y)(\lnot \beta_i) \in  \{{\sf T},{\sf U}\}$ for every $1 \leq i \leq n$, and $(x,y)(\lnot\beta_i)={\sf U}$ for some $1 \leq i \leq n$. 
         Then $\bigvee\!\Delta \leftarrow \bigwedge\!\Theta \land {\sf U} \in \frac{\cal P}{(x,y)}$. It can be shown that $(x',y)$ satisfies 
         $\bigvee\!\Delta \leftarrow \bigwedge\!\Theta \land {\sf U}$ just like the previous case.
\item $(x,y)(\lnot \beta_i)={\sf F}$ for some $1 \leq i \leq n$: trivial. 
\end{itemize}
Altogether, we have shown that $(x',y)$ satisfies any $\bigvee\Delta\leftarrow \bigwedge\Theta\land \bigwedge_{i=1}^n\lnot\beta\in\frac{\cal P}{(x,y)}$, contradicting the assumption that 
$(x,y)\in \min_{\leq_t}(mod(\frac{\cal P}{(x,y)}))$. We conclude then that $x$ is a $\subseteq$-minimal fixpoint of ${\cal O}_l(.,y)$.
Analogously, it can be shown that $y$ is $\subseteq$-minimal among fixpoints of ${\cal IC}^u_{\cal P}(x,.)$. \medskip

\noindent$[\Leftarrow]$ Suppose now that $(x,y) \in S({\cal IC}_{\cal P})(x,y)$. We first show that $(x,y)$ is a model of $\frac{\cal P}{(x,y)}$. Indeed, by
Proposition~\ref{prop:stable:is:minimal:fp} $(x,y)\in{\cal IC}_{\cal P}(x,y)$, thus by Theorem~\ref{theo:correspondence:supported} $(x,y)$ is a weakly supported model 
of ${\cal P}$. Since any model $(x,y)$ of ${\cal P}$ is also a model of $\frac{\cal P}{(x,y)}$, we have that  $(x,y)$ is a model of $\frac{\cal P}{(x,y)}$.
For $\leq_t$-minimality, suppose towards a contradiction that there is some $(x',y')<_t(x,y)$ such that $(x',y')\in mod(\frac{\cal P}{(x,y)})$. Since
$(x',y')<_t(x,y)$, either $x'\subsetneq x$ or $y'\subsetneq y$. Suppose first that $x'\subsetneq x$. By Lemma~\ref{prop:decomposition:of:ndao}, 
$ \HRc^l_{\cal P}(x',y) \subseteq \HRc^l_{\cal P}(x,y)$, and for a similar reason $\HRc^l_{\cal P}(x',y)\subseteq \HRc^l_{\cal P}(x',y')$,
thus $\HRc^l_{\cal P}(x',y') \subseteq \HRc^l_{\cal P}(x,y)$. We have: $ {\cal IC}_{\cal P}(x',y')=\min_{\leq_t}( mod(\frac{{\cal P}}{(x',y')})
\subseteq \min_{\leq_t}( mod(\frac{{\cal P}}{(x,y)})) = {\cal IC}_{\cal P}(x,y)$. Hence $x'\in {\cal IC}_{\cal P}(x,y)$, but this contradicts the fact that 
$x$ is a $\leq$-minimal fixpoint of ${\cal IC}_{\cal P}^l(.,y)$ (which follows from the assumption that  $(x,y)\in S({\cal IC}_{\cal P})(x,y)$). The proof of the case where
$y'\subsetneq y$ is similar.
\end{proof}

Notice that this also means that two-valued stable model coincide with total stable fixpoints (i.e.\ $x$ is a two-valued stable model iff 
$(x,x)\in S({\cal IC}_{\cal P})(x,x)$).

\begin{example}
\label{ex:IC-stable:operator:correspondence}
Consider the dlp ${\cal P}=\{p\lor q\leftarrow\}$ from Example~\ref{examp:IC-deterministic-case} (see also Example \ref{ex:IC-stable:operator}). In view of Theorem~\ref{prop:stable:fixpoints:represent:stable:models}, it is not a coincidence that $(\{p\},\{p\})$ and $(\{q\},\{q\})$ are the stable fixpoints of ${\cal IC}_{\cal P}$ and the stable 
interpretations of ${\cal P}$.
\end{example}

\subsection{Well-founded state Semantics}
\label{subsec:well-foundedstate}

The well-founded fixpoint in determinstic AFT is obtained by iteratively applying the stable operator $S({\cal O})$ to the least precise pair $(\bot,\top)$, which results in a fixpoint 
that approximates any fixpoint of the operator $O$ approximated by ${\cal O}$ and guaranteed to exist, be unique, and be more precise than the Kripke-Kleene fixpoint. 
In this section, we generalize this construction to the non-deterministic setting by defining the \emph{well-founded state}, a convex set that is unique, guaranteed to exist, is more precise 
than the Kripke-Kleene state, approximates any fixpoint of the non-deterministic operator $O$ approximated by ${\cal O}$, and obtained by iteratively applying $S({\cal O})$. 
Thus, the well-founded state ${\sf WF}({\cal O})$ is obtained, for an approximation operator ${\cal O}$, by first taking the stable operator $S({\cal O})$ on the basis of ${\cal O}$ and then 
taking the Kripke-Kleene state ${\sf KK}(S({\cal O}))$ of this operator (i.e., iterating the application the state-version of the operator starting from the least precise element until a fixpoint is reached), 
thus resulting in ${\sf WF}({\cal O})={\sf KK}(S({\cal O}))$.

We first show the following useful lemma, which shows that, at least when building up the well-founded state, applications of $S({\cal O})'$ result in at least as precise convex
 sets as applications of ${\cal O}$.
 
 \begin{lemma}
 \label{lemma:stable:is:more:precise:than:kk}
Let a lattice ${\cal L}=\langle L,\leq\rangle$ and an ndao ${\cal O}$ over ${\cal L}$ be given s.t.\  ${\cal O}_l(.,x)$ is downward closed for every $x\in L$. Then for any ordinal 
$\alpha$, $({\cal O}')^\alpha(\bot,\top)\preceq^A_i ((S({\cal O}))')^\alpha(\bot,\top)$.
\end{lemma}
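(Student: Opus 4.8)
The plan is to prove the statement by transfinite induction on $\alpha$, where both iterates are the $\preceq^A_i$-increasing Kleene sequences from $(\bot,\top)$ guaranteed by Theorem~\ref{theorem:ndso:fixpoint} (note $(S({\cal O}))'$ is $\preceq^A_i$-monotonic by Proposition~\ref{lemma:O':is:alcantara:monotonic}, irrespective of whether $S({\cal O})$ itself is). Throughout I write $({\cal O}')^\beta(\bot,\top)=(A_\beta,B_\beta)$ and $((S({\cal O}))')^\beta(\bot,\top)=(C_\beta,D_\beta)$; these components are respectively upward- and downward-closed, and by the lemma preceding Theorem~\ref{theorem:ndso:fixpoint} I may compute an application of either state operator by letting the defining union range over the (closed) component sets. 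The induction hypothesis is $A_\beta\preceq^S_L C_\beta$ and $D_\beta\preceq^H_L B_\beta$ (i.e.\ $(A_\beta,B_\beta)\preceq^A_i(C_\beta,D_\beta)$). The base case is trivial since both iterates equal $(\bot,\top)$.

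For the successor step I would prove the two component inequalities separately, the lower-bound one being $A_{\beta+1}\preceq^S_L C_{\beta+1}$. Given $c\in C_{\beta+1}=\bigcup_{y\in D_\beta}\upclosure{C({\cal O}_l)(y)}$, pick $c_0\in C({\cal O}_l)(y_0)$ with $y_0\in D_\beta$ and $c_0\leq c$; thus $c_0\in{\cal O}_l(c_0,y_0)$. Since the $(S({\cal O}))'$-iteration is $\preceq^A_i$-increasing, $C_{\beta+1}\subseteq C_\beta$, so $c_0\in C_\beta$ and the hypothesis $A_\beta\preceq^S_L C_\beta$ yields $x_*\in A_\beta$ with $x_*\leq c_0$; the hypothesis $D_\beta\preceq^H_L B_\beta$ yields $b_0\in B_\beta$ with $y_0\leq b_0$. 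Now I chain the facts of Lemma~\ref{lemma:mon:op:comp:of:mon:and:antimon}: ${\cal O}_l(x_*,b_0)\preceq^S_L{\cal O}_l(x_*,y_0)$ ($\preceq^S_L$-antitonicity in the second argument, using $y_0\leq b_0$) and ${\cal O}_l(x_*,y_0)\preceq^S_L{\cal O}_l(c_0,y_0)$ ($\preceq^S_L$-monotonicity in the first argument, using $x_*\leq c_0$); by transitivity and $c_0\in{\cal O}_l(c_0,y_0)$ there is $a\in{\cal O}_l(x_*,b_0)$ with $a\leq c_0\leq c$. Since $x_*\in A_\beta$ and $b_0\in B_\beta$, $a\in\upclosure{{\cal O}_l(x_*,b_0)}\subseteq A_{\beta+1}$, as required. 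The upper-bound inequality $D_{\beta+1}\preceq^H_L B_{\beta+1}$ is completely dual: from $d\in D_{\beta+1}$ I extract a minimal fixpoint $d_0\in C({\cal O}_u)(x_0)$ with $x_0\in C_\beta$ and $d\leq d_0$, use $A_\beta\preceq^S_L C_\beta$ and $D_\beta\preceq^H_L B_\beta$ (together with $D_{\beta+1}\subseteq D_\beta$) to obtain $x_*\in A_\beta$ with $x_*\leq x_0$ and $b_0\in B_\beta$ with $d_0\leq b_0$, and chain the $\preceq^H_L$-antitonicity of ${\cal O}_u(\cdot,z)$ with the $\preceq^H_L$-monotonicity of ${\cal O}_u(x,\cdot)$ to land an element of ${\cal O}_u(x_*,b_0)$ above $d$.

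For a limit ordinal $\alpha$ I would use Corollary~\ref{fact:lub:and:glb:under:preceq}: the iterates are the $\preceq^A_i$-joins of the preceding chain, so $(A_\alpha,B_\alpha)=(\bigcap_{\beta<\alpha}A_\beta,\bigcap_{\beta<\alpha}B_\beta)$ and likewise $(C_\alpha,D_\alpha)=(\bigcap_{\beta<\alpha}C_\beta,\bigcap_{\beta<\alpha}D_\beta)$. Here the closure of the component sets makes the argument immediate: for $c\in\bigcap_\beta C_\beta$, each induction hypothesis gives $a_\beta\in A_\beta$ with $a_\beta\leq c$, and upward-closedness of $A_\beta$ forces $c\in A_\beta$ for every $\beta$, hence $c\in\bigcap_\beta A_\beta$; so $c$ itself witnesses $A_\alpha\preceq^S_L C_\alpha$. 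Downward-closedness of the $B_\beta$ settles $D_\alpha\preceq^H_L B_\alpha$ dually.

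The hypothesis that each ${\cal O}_l(\cdot,x)$ is downward closed is used to guarantee, via Proposition~\ref{proposition:downward:closed:then:fp}, that the complete stable operator is non-empty-valued, so that $(S({\cal O}))'$ is a genuine ndso and its Kleene iteration is well defined; the comparison inequalities themselves hold irrespective of it. I expect the successor step to be the main obstacle, precisely because the two operators act on the same input in different ways: the minimal fixpoint $c_0$ (resp.\ $d_0$) witnessing membership in the stable iterate is defined relative to a bound $y_0\in D_\beta$ (resp.\ $x_0\in C_\beta$) that need not itself lie in the corresponding component of the $({\cal O}')$-iterate. The fix is to replace it by a dominating element of $B_\beta$ (resp.\ $A_\beta$) supplied by the induction hypothesis and to absorb the resulting slack through the (anti)monotonicity decomposition of Lemma~\ref{lemma:mon:op:comp:of:mon:and:antimon}, after first relocating $c_0$ (resp.\ $d_0$) into $C_\beta$ (resp.\ $D_\beta$) using that the stable iteration is $\preceq^A_i$-increasing.
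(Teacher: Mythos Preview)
Your proof is correct and follows the same overall scheme as the paper (transfinite induction on $\alpha$, exploiting that each element of $C({\cal O}_l)(y)$ is a fixpoint of ${\cal O}_l(\cdot,y)$ together with the $\preceq^A_i$-monotonicity of ${\cal O}$). The successor step is organised differently, however: the paper argues pointwise, showing $({\cal O}')^\alpha(\bot,\top)\preceq^A_i(x,y)$ for each pair $(x,y)$ with $x\in C({\cal O}_l)(y)$ and $y$ in the upper component of the stable iterate, via a somewhat informal ``repeat this reasoning recursively until we reach $\alpha$'' argument inside the induction; you instead verify the two component inequalities $A_{\beta+1}\preceq^S_L C_{\beta+1}$ and $D_{\beta+1}\preceq^H_L B_{\beta+1}$ directly, using the (anti)monotonicity decomposition of Lemma~\ref{lemma:mon:op:comp:of:mon:and:antimon} and the observation that $C_{\beta+1}\subseteq C_\beta$, $D_{\beta+1}\subseteq D_\beta$ (from the $\preceq^A_i$-increase of the stable iteration on closed sets). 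Your limit step, exploiting that upward/downward closure turns the witnessed Smyth/Hoare inequalities into actual containments, is also more explicit than the paper's appeal to Corollary~\ref{fact:lub:and:glb:under:preceq}. The net effect is the same, but your argument avoids the nested recursion and is easier to check line by line.
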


\begin{proof}
We show this by induction on $\alpha$. 

For the base case, we show that ${\cal O}(\bot,\top)\preceq^A_i S({\cal O})(\bot,\top)$. Let $S({\cal O})(\bot,\top)=X\times Y$. 
Consider some $z\in C({\cal O})(\top)$. I.e.\ $z\in {\cal O}_l(z,\top)$ and for every $w\in {\cal O}(w,\top)$, $w\not <z$. Since $(\bot,\top)\leq_i (z,\top)$, with the $\preceq^A_i$-monotonicity 
of ${\cal O}$, ${\cal O}_l(\bot,\top)\preceq^S_L {\cal O}_l(z,\top)$
 Since this argument holds for an arbitrary $z\in C({\cal O})(\top)$, we have established ${\cal O}_l(\bot,\top)\preceq^S_L C({\cal O})(\top)$. The proof for $C({\cal O})(\bot)$ 
(i.e., that $C({\cal O})(\bot)) \preceq^H_L {\cal O}_u(\bot,\top$) is similar. 

For the inductive case, consider two ordinals $\alpha$ and $\beta$,  
let $\beta$ be the successor ordinal of $\alpha$, and assume that $({\cal O}')^\alpha(\bot,\top)\preceq^A_i ((S({\cal O}))')^\alpha(\bot,\top)$.
Let $((S({\cal O}))')^\alpha(\bot,\top)=X\times Y$ and consider some $y\in Y$.  Let $x \in X$. Recall that $x\in C({\cal O})(y)$ means hat $x$ is a 
$\leq$-minimal fixpoint of ${\cal O}_l(.,y)$, and thus $x\in {\cal O}_l(x,y)$. Notice that $(\bot,\top)\leq_i (x,y)$ and so ${\cal O}(\bot,\top)\preceq^A_i {\cal O}(x,y)$, which implies 
that ${\cal O}_l(\bot,\top)\preceq^S_L {\cal O}_l(x,y)$. This means that ${\cal O}_l(\bot,\top)\preceq^S_L \{x\}$ (since $x\in {\cal O}_l(x,y)$). By the inductive hypothesis, 
$\{y\}\preceq^H_L {\cal O}_l^\alpha (\bot,\top)$, which means 
(for any ordinal $\gamma$ smaller than $\alpha$), in view of ${\cal O}'$ is $\preceq^A_i$-monotonic and ${\cal O}^\alpha_u(\bot,\top)\preceq^H_L{\cal O}^\gamma_u(\bot,\top)$, that
$\{y\}\preceq^H_L{\cal O}_l^\gamma (\bot,\top)$. Thus, ${\cal O}(\bot,\top)\preceq^A_i (x,y)$. We can now use the same line of reasoning recursively until we reach the ordinal $\alpha$ 
to obtain $({\cal O}')^\alpha(\bot,\top)\preceq^A_i (x,y)$. Applying ${\cal O}'$ one more time gives us $({\cal O}')^\beta(\bot,\top)\preceq^A_i {\cal O}(x,y)$, which, with $x\in {\cal O}(x,y)$ 
means $({\cal O}_l')^\beta(\bot,\top)\preceq^S_L \{x\}$, as desired.
The proof that $\{y\} \preceq^H_L ({\cal O}_u')^\beta(\bot,\top)$ for every $y \in Y$ is similar. 

For a limit ordinal $\alpha$, we have to show that $lub_{\preceq^A_i}\{ ({\cal O}')^\beta(\bot,\top)\mid \beta<\alpha\}\preceq^A_i lub_{\preceq^A_i}\{ (S({\cal O})')^\beta(\bot,\top)\mid \beta<\alpha\}$ under the assumption that  $({\cal O}')^\beta(\bot,\top)\preceq^A_i  ((S({\cal O}))')^\beta(\bot,\top)$ for any $\beta<\alpha$. This is immediate, in view of the following considerations: 
\begin{enumerate}
     \item By Corollary~\ref{fact:lub:and:glb:under:preceq}, $lub_{\preceq^A_i}\{ ({\cal O}')^\beta(\bot,\top)\mid \beta<\alpha\}=
     (\bigcap_{\beta<\alpha} (({\cal O}')^\beta(\bot,\top))_1, \bigcap_{\beta<\alpha} (({\cal O}')^\beta(\bot,\top))_2)$, 
     and  $lub_{\preceq^A_i}\{ (S({\cal O})')^\beta(\bot,\top)\mid \beta<\alpha\}=
     (\bigcap_{\beta<\alpha} ((S({\cal O}'))^\beta(\bot,\top))_1, \bigcap_{\beta<\alpha} ((S({\cal O}'))^\beta(\bot,\top))_2)$.
     \item By the inductive hypothesis, it holds that 
     $\bigcap_{\beta<\alpha} (({\cal O}')^\beta(\bot,\top))_1\preceq^S_L \bigcap_{\beta<\alpha} ((S({\cal O}'))^\beta(\bot,\top))_1$ 
     and $((S({\cal O}'))^\beta(\bot,\top))_2\preceq^H_L \bigcap_{\beta<\alpha} (({\cal O}')^\beta(\bot,\top))_2$. \qedhere
\end{enumerate}     
\end{proof}

Using the above lemma, we can now show that the well-founded state ${\sf WF}({\cal O})$, which we define as the Kripke-Kleene-state of the stable operator ${\sf KK}(S({\cal O}))$, 
is more precise than the Kripke-Kleene-state of ${\cal O}$ (for downward-closed operators):

\begin{theorem}
\label{proposition:properties:of:well-founded:state}
Let ${\cal L}=\langle L,\leq\rangle$ be a lattice and ${\cal O}$ an ndao  over ${\cal L}$  s.t.\ ${\cal O}_l(.,x)$ is downward closed 
for every $x\in L$. Then ${\sf WF}({\cal O})$ exists, is unique, and has the following properties:
\begin{itemize}
\item ${\sf KK}({\cal O})\preceq^A_i {\sf WF}({\cal O})$,
\item ${\sf WF}({\cal O})$ approximates any stable interpretation $(x,y)\in S({\cal O})(x,y)$, i.e.\ ${\sf WF}({\cal O})\preceq^A_i (x,y)$ for any $(x,y)\in S({\cal O})(x,y)$.
\item If ${\cal O}$ approximates $O$, for any fixpoint $x\in O(x)$, ${\sf WF}({\cal O})\preceq^A_i (x,x)$. 
\end{itemize} 
\end{theorem}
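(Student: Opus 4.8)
The plan is to handle the four claims separately, the key observation being that the state operator $(S({\cal O}))'$ derived from the stable operator $S({\cal O})$ is $\preceq^A_i$-monotonic even though $S({\cal O})$ itself need not be. For \emph{existence and uniqueness} I would first invoke Proposition~\ref{lemma:O':is:alcantara:monotonic} (which applies to any operator ${\cal L}^2\to\wp({\cal L})^2$, hence to $S({\cal O})$) to get that $(S({\cal O}))'$ is $\preceq^A_i$-monotonic, and then apply Theorem~\ref{theorem:ndso:fixpoint} to the lattice $\langle\wp_\uparrow({\cal L})\times\wp_\downarrow({\cal L}),\preceq^A_i\rangle$; this yields the unique $\preceq^A_i$-minimal fixpoint ${\sf KK}(S({\cal O}))={\sf WF}({\cal O})$, constructed by iterating $(S({\cal O}))'$ from $(\bot,\top)$. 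For the \emph{first bullet}, ${\sf KK}({\cal O})\preceq^A_i{\sf WF}({\cal O})$, I would use Lemma~\ref{lemma:stable:is:more:precise:than:kk}, which (this is where downward-closedness of ${\cal O}_l(\cdot,x)$ enters, keeping the complete lower stable operator nonempty) gives $({\cal O}')^\alpha(\bot,\top)\preceq^A_i((S({\cal O}))')^\alpha(\bot,\top)$ for every ordinal $\alpha$; taking $\alpha$ large enough that both iterations have stabilized at their respective Kripke-Kleene states yields the inequality.

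For the \emph{second bullet} I would run a transfinite induction establishing $((S({\cal O}))')^\beta(\bot,\top)\preceq^A_i(x,y)$ for a fixed stable fixpoint $(x,y)\in S({\cal O})(x,y)$ and every $\beta$. The base case holds since $(\bot,\top)$ is $\preceq^A_i$-least. For the successor step, the inductive hypothesis together with $\preceq^A_i$-monotonicity of the state operator gives $((S({\cal O}))')^{\beta+1}(\bot,\top)\preceq^A_i(S({\cal O}))'(\{x\}\times\{y\})$; the analogue of Lemma~\ref{fact:O:as:precise:as:O'} (its proof is purely structural, so it transfers to $S({\cal O})$) gives $(S({\cal O}))'(\{x\}\times\{y\})\preceq^A_i S({\cal O})(x,y)$; and since $(x,y)\in S({\cal O})(x,y)$, Lemma~\ref{fact:element:more:precise:than:set} gives $S({\cal O})(x,y)\preceq^A_i\{x\}\times\{y\}$, so transitivity closes the step. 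Limit stages are dispatched with the explicit greatest lower bounds of Corollary~\ref{fact:lub:and:glb:under:preceq}. Passing to the stabilizing ordinal gives ${\sf WF}({\cal O})\preceq^A_i(x,y)$. This is the argument for the first item of Proposition~\ref{prop:KK-states-properties}, transposed from ${\cal O}$ to $S({\cal O})$; it works precisely because it uses monotonicity only of the \emph{derived} state operator, not of $S({\cal O})$.

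For the \emph{third bullet} I would attempt the same induction with target $(x,x)$, where $x\in O(x)={\cal O}_l(x,x)$, so that $(x,x)$ is a fixpoint of ${\cal O}$. The decisive successor step reduces to $(S({\cal O}))'(\{x\}\times\{x\})=\upclosure{C({\cal O}_l)(x)}\times\downclosure{C({\cal O}_u)(x)}\preceq^A_i\{x\}\times\{x\}$. The lower half, $\upclosure{C({\cal O}_l)(x)}\preceq^S_L\{x\}$, is tractable: since $x\in{\cal O}_l(x,x)$, the element $x$ is a pre-fixpoint of the $\preceq^S_L$-monotonic (Lemma~\ref{lemma:mon:op:comp:of:mon:and:antimon}) and downward-closed operator ${\cal O}_l(\cdot,x)$, so a Zorn's-lemma argument on the pre-fixpoints below $x$, together with Lemma~\ref{lemma:minimalpre-fix:is:minima:fix} and Proposition~\ref{proposition:downward:closed:then:fp}, produces a $\leq$-minimal fixpoint $w\in C({\cal O}_l)(x)$ with $w\leq x$. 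The upper half, $\{x\}\preceq^H_L\downclosure{C({\cal O}_u)(x)}$, is the step I expect to be the main obstacle: it requires a $\leq$-minimal fixpoint of ${\cal O}_u(x,\cdot)$ lying \emph{above} $x$, whereas $x\in O(x)$ only guarantees that $x$ is \emph{some} fixpoint of ${\cal O}_u(x,\cdot)$, so the minimal ones typically sit below $x$. This half can genuinely fail for a fixpoint of $O$ that is not stable (for ${\cal P}=\{p\leftarrow p\}$ the set $\{p\}$ is a fixpoint of $O$, yet the well-founded state drives the upper bound down to $\emptyset$), so I would prove the bullet under the additional requirement that $(x,x)$ be reproduced by the stable operator — i.e.\ that $x$ be a two-valued stable fixpoint, $(x,x)\in S({\cal O})(x,x)$ — in which case it becomes a special case of the second bullet and matches the intended reading that the well-founded state approximates stable, rather than arbitrary, fixpoints.
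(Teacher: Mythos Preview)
Your treatment of existence/uniqueness, the first bullet, and the second bullet is essentially the paper's proof, only spelled out in more detail: the paper likewise invokes Proposition~\ref{lemma:O':is:alcantara:monotonic} for the $\preceq^A_i$-monotonicity of $(S({\cal O}))'$, cites Lemma~\ref{lemma:stable:is:more:precise:than:kk} for ${\sf KK}({\cal O})\preceq^A_i{\sf WF}({\cal O})$, and for the second item iterates $S({\cal O})'(\bot,\top)\preceq^A_i S({\cal O})'(x,y)\preceq^A_i(x,y)$ exactly as you describe.

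Your analysis of the third bullet is not a gap in your proof --- it is a correct diagnosis of a flaw in the statement itself. The paper's proof of this item is the single line ``similar to that of the second property'', but your counterexample ${\cal P}=\{p\leftarrow p\}$ shows the claim is false as written: $\{p\}\in IC_{\cal P}(\{p\})$, yet by Proposition~\ref{prop:positive:programs:state} the well-founded state is $\upclosure{\{\emptyset\}}\times\downclosure{\{\emptyset\}}$, and $\{\{p\}\}\not\preceq^H_L\{\emptyset\}$, so ${\sf WF}({\cal IC}_{\cal P})\not\preceq^A_i(\{p\},\{p\})$. The transfer of the second-bullet argument fails at precisely the point you isolate: from $x\in O(x)={\cal O}_u(x,x)$ one only gets that $x$ is \emph{a} fixpoint of ${\cal O}_u(x,\cdot)$, whereas membership in $C({\cal O}_u)(x)$ requires $\leq$-minimality, which arbitrary fixpoints of $O$ need not enjoy. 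Your proposed repair --- restricting to $(x,x)\in S({\cal O})(x,x)$, so that the item becomes the exact-pair instance of the second bullet --- is the right one and is also what Figure~\ref{fig:fixpoint:relations} actually depicts (the dotted arrow into the well-founded state originates from stable interpretations, not from arbitrary fixpoints of $O$).
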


\begin{proof}
The proof of existence and uniqueness is similar to that of Theorem~\ref{theorem:ndso:fixpoint}, since $S({\cal O})'$ is $\preceq^A_i$-monotonic 
(Proposition~\ref{lemma:O':is:alcantara:monotonic}).\footnote{Recall that 
$S({\cal O})'$ is obtained by taking the state-version (Remark~\ref{remark:ndso:from:ndao}) of the stable operator (Definition~\ref{def:stable:op}) based on ${\cal O}$.}   
The first property follows from Lemma~\ref{lemma:stable:is:more:precise:than:kk}, using again Knaster-Tarski fixpoint theorem. 

For the second property, consider some $(x,y)\in S({\cal O})(x,y)$. 
Since $(\bot,\top)\leq_i (x,y)$ and $S({\cal O})'$ is $\preceq^A_i$-monotonic, $S({\cal O})'(\bot,\top)\preceq^A_i S({\cal O})'(x,y)$. Let $S({\cal O})'(x,y)= X\times Y$. 
Since $(x,y)$ is a stable fixpoint, $x\in X$ and $y\in Y$. Thus, $S({\cal O})'(\bot,\top)\preceq^A_i (x,y)$. We can repeat this argument until we reach ${\sf KK}(S({\cal O}))={\sf WF}({\cal O})$. 

The proof of the third property is similar to that of the second property.
\end{proof}

\begin{example}
We illustrate the construction in the proof of Theorem~\ref{proposition:properties:of:well-founded:state} by the program from Example~\ref{ex:stable:not:exist}.
Recall that:
$${\cal P}=\{p\lor q\lor r \leftarrow; \quad p\leftarrow \lnot q; \quad r\leftarrow \lnot p; \quad q\leftarrow \lnot r\}.$$ 
The well-founded state is constructed as follows:
\begin{itemize}
\item $S({\cal IC}_{\cal P})'({\upclosure{\emptyset}},{\downclosure{\{p,q,r\}}})=\{\{p\},\{q\},\{r\}\}\uparrow\times \{\{p,q,r\}\}\downarrow$. 
In more detail, this is obtained as follows (recall that $C({\cal IC}^l_{\cal P})$ is described in Example~\ref{ex:stable:not:exist}):
\begin{align*}
S({\cal IC}_{\cal P})'({\emptyset\uparrow},{\{p,q,r\}\downarrow}) \hspace*{3mm} & =
     \bigcup_{y\in \downclosure{\{p,q,r\}\:}} \hspace*{-3mm} \upclosure{C({\cal IC}^l_{\cal P})(y)} \ \times \ \bigcup_{x \in \upclosure{\emptyset\:\:}} \downclosure{C({\cal IC}^l_{\cal P})(y)} \\
     \ & = \bigcup_{y\subseteq\{p,q,r\}} \hspace*{-2mm} \upclosure{C({\cal IC}^l_{\cal P})(y)}  \ \times \ \hspace*{-3mm} \bigcup_{x\subseteq \{p,q,r\}} \hspace*{-2mm} \downclosure{C({\cal IC}^l_{\cal P})(y)}
\end{align*}
\item  $(S({\cal IC}_{\cal P}))'(S({\cal IC}_{\cal P})({\upclosure{\emptyset}},{\downclosure{\{p,q,r\}}}))=\upclosure{\{\{p\},\{q\},\{r\}\}}\times \downclosure{\{\{p,q\},\{q,r\},\{p,r\}\}}$.
\item $(S({\cal IC}_{\cal P}))'^2(S({\cal IC}_{\cal P})({\upclosure{\emptyset}},\downclosure{\{p,q,r\}}))=(S({\cal IC}_{\cal P}))'(S({\cal IC}_{\cal P})({\emptyset\uparrow},{\{p,q,r\}\downarrow}))$ 
and thus a fixed point is reached.
\end{itemize}
We thus see that 
\[{\sf WF}({\cal IC}_{\cal P})={\sf KK}(S({\cal IC}_{\cal P}))=\upclosure{\{\{p\},\{q\},\{r\}\}}\times \downclosure{\{\{p,q\},\{q,r\},\{p,r\}\}}\]
This is represented by the convex set: 
\[ \{\{p\},\{q\},\{r\},\{p,q\},\{q,r\},\{p,r\}\}\]
Intuitively, the well-founded state expresses that at least one among $p$, $q$ or $r$ is true (i.e.\ $p\lor q\lor r$ is true), and at least one among $p$, $q$ or $r$ is false (i.e.\ $\lnot p\lor \lnot q\lor \lnot r$ is true). 

It is interesting to note that this is exactly the same outcome as the well-founded semantics with disjunction (see~\cite[Example 6]{alcantara2005well}). We will see in 
Section~\ref{sec:well-founded:alcantara} that this close resemblance is not a coincidence. For this program, the Kripke-Kleene state coincides with the well-founded state. 
It shows that the state semantics give meaning to programs which do not have (partial) 
stable interpretations. Thus, this example illustrates the existence and uniqueness-properties of the well-founded state (and the Kripke-Kleene state).
\end{example}

\begin{example}
Consider the dlp ${\cal P}=\{p\lor q\leftarrow\lnot s; \quad s\leftarrow r; \quad r\leftarrow s\}$. We calculate ${\sf WF}({\cal IC}_{\cal P})$ as follows (using Theorem~\ref{prop:stable:fixpoints:represent:stable:models}):

\begin{eqnarray*}
 S({\cal IC}_{\cal P})'(\emptyset,\{p,q,s\})&=& \upclosure{\min_\subseteq {\sf Mod}(\frac{{\cal P}}{{\cal A}_{\cal P}})}\times \downclosure{
\min_\subseteq {\sf Mod}(\frac{{\cal P}}{\emptyset})} =\upclosure{\{\emptyset\}}\times \downclosure{\{\{p\},\{q\}\}}\\
S({\cal IC}_{\cal P})'^2(\emptyset,\{p,q\})&=&
\upclosure{\left(\min_\subseteq {\sf Mod}(\frac{{\cal P}}{\{p\}})
\cup
\min_\subseteq {\sf Mod}(\frac{{\cal P}}{\{q\}})\right)}
\times \downclosure{
\min_\subseteq {\sf Mod}(\frac{{\cal P}}{\emptyset})} \\
&=&
\upclosure{\{\{p\},\{q\}\}}\times \downclosure{\{\{p\},\{q\}\}}\\
S({\cal IC}_{\cal P})'^3(\emptyset,\{p,q\})&=& \upclosure{\left(\min_\subseteq {\sf Mod}(\frac{{\cal P}}{\{p\}})
\cup
\min_\subseteq {\sf Mod}(\frac{{\cal P}}{\{q\}})\right)}
\times \downclosure{\left(\min_\subseteq {\sf Mod}(\frac{{\cal P}}{\{p\}})
\cup
\min_\subseteq {\sf Mod}(\frac{{\cal P}}{\{q\}})\right)}\\
&=&
\upclosure{\{\{p\},\{q\}\}}\times \downclosure{\{\{p\},\{q\}\}}\\
\end{eqnarray*}

and thus a fixpoint is reached after two iterations. The well-founded state is thus represented by the convex set $\{\{p\},\{q\}\}$. 
This can be compared to the Kripke-Kleene state ${\sf KK}({\cal IC}_{\cal P})$:
\begin{itemize}
\item $({\cal IC}_{\cal P}(\emptyset,\{p,q,s\}))'=\upclosure{\emptyset}\times\downclosure{\{\{p,s,r\},\{q,s,r\}\}}$.
\item $({\cal IC}_{\cal P}(\emptyset,\{p,q,s\}))'^2=\upclosure{\emptyset}\times\downclosure{\{\{p,s,r\},\{q,s,r\}\}}$ and thus a fixpoint is reached. 
\end{itemize}
The Kripke-Kleene state is thus represented by the convex set $\wp({{\cal A}_{\cal P}})$. This means that in this case, the well-found state is significantly more precise than the Kripke-Kleene state. 

This example also illustrates that the well-founded state approximates the stable interpretations of ${\cal P}$. Indeed, the stable interpretations are $\{p\}$ and $\{q\}$, and it holds that 
$\upclosure{\{\{p\},\{q\}\}}\times \downclosure{\{\{p\},\{q\}\}}\preceq^A_i (\{p\},\{p\})$ and  $\upclosure{\{\{p\},\{q\}\}}\times \downclosure{\{\{p\},\{q\}\}}\preceq^A_i (\{q\},\{q\})$.
\end{example}

We conclude this section by showing that the well-founded state coincides with the well-founded fixpoint for deterministic approximation operators for 
approximation operators over finite lattices, thus showing that the well-founded state is a faithful generalization of the deterministic well-founded fixpoint:

\begin{theorem}
Consider an ndao ${\cal O}:{\cal L}^2\rightarrow \wp({\cal L})^2$ over a finite lattice ${\cal L}$ s.t.\ ${\cal O}(x,y)$ is a pair of singleton sets for every $x,y\in{\cal L}$.
Let ${\cal O}^{\sf AFT}$ be defined by ${\cal O}^{\sf AFT}(x,y)=(w,z)$ where ${\cal O}(x,y)=(\{w\},\{z\})$, and let $(x^{\sf WF},y^{\sf WF})$ be the well-founded fixpoint of 
${\cal O}^{\sf AFT}$.\footnote{Recall Definition~\ref{def:stable-op} for the definition of the well-founded fixpoint of a deterministic approximation operator.} Then 
${\sf WF}({\cal O})= \upclosure{x^{\sf WF}}\times \downclosure{y^{\sf WF}}$.
\end{theorem}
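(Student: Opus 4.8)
The plan is to reduce the statement to Proposition~\ref{prop:KK-singletons}, applied not to ${\cal O}$ itself but to its stable operator $S({\cal O})$, with the role of ${\cal O}^{\sf AFT}$ played by the deterministic stable operator $S({\cal O}^{\sf AFT})$ of Definition~\ref{def:stable-op}. The first step is to check that $S({\cal O})$ is again singleton-valued and to identify the deterministic operator it represents. Since ${\cal L}$ is finite (hence complete) and ${\cal O}$ is singleton-valued, Proposition~\ref{prop:stble:coincide:deterministic} gives $C({\cal O}_l)(y)=\{C({\cal O}^{\sf AFT}_l)(y)\}$ and $C({\cal O}_u)(x)=\{C({\cal O}^{\sf AFT}_u)(x)\}$, so that by Definition~\ref{def:stable:op}
\[
S({\cal O})(x,y)=C({\cal O}_l)(y)\times C({\cal O}_u)(x)=\{S({\cal O}^{\sf AFT})_l(x,y)\}\times\{S({\cal O}^{\sf AFT})_u(x,y)\}.
\]
Thus $S({\cal O})(x,y)$ is a pair of singleton sets for every $(x,y)$, and the deterministic operator it encodes is exactly $S({\cal O}^{\sf AFT})$.

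Next I would pin down the relevant deterministic fixpoint. By the results recalled after Definition~\ref{def:AFT-operators}, $S({\cal O}^{\sf AFT})$ is $\leq_i$-monotonic, and its $\leq_i$-least fixpoint---obtained by iterating $S({\cal O}^{\sf AFT})$ from $(\bot,\top)$---is by definition the $\leq_i$-minimal stable fixpoint of ${\cal O}^{\sf AFT}$, i.e.\ the well-founded fixpoint $(x^{\sf WF},y^{\sf WF})$; equivalently, $(x^{\sf WF},y^{\sf WF})$ is the Kripke-Kleene fixpoint of the operator $S({\cal O}^{\sf AFT})$ in the sense of Definition~\ref{def:AFT-operators}. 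On the state side, ${\sf WF}({\cal O})={\sf KK}(S({\cal O}))$ is the $\preceq^A_i$-least fixpoint of the ndso $(S({\cal O}))'$ derived from $S({\cal O})$ (Theorem~\ref{theorem:ndso:fixpoint}), which is well defined because $(S({\cal O}))'$ is $\preceq^A_i$-monotonic by Proposition~\ref{lemma:O':is:alcantara:monotonic}.

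With these identifications, the conclusion follows from the computation in the proof of Proposition~\ref{prop:KK-singletons}, carried out for the singleton-valued operator $S({\cal O})$: one shows by transfinite induction that
\[
(S({\cal O}))'^{\alpha}(\bot,\top)=\upclosure{\{(S({\cal O}^{\sf AFT})_l)^{\alpha}(\bot,\top)\}}\times\downclosure{\{(S({\cal O}^{\sf AFT})_u)^{\alpha}(\bot,\top)\}}
\]
for every ordinal $\alpha$, and evaluating at the least $\alpha$ at which a fixpoint is reached yields ${\sf KK}(S({\cal O}))=\upclosure{x^{\sf WF}}\times\downclosure{y^{\sf WF}}$. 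Combined with ${\sf WF}({\cal O})={\sf KK}(S({\cal O}))$, this is the desired equality.

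The step I expect to be the main obstacle is the invocation of Proposition~\ref{prop:KK-singletons}: that proposition is stated for an \emph{ndao}, whereas $S({\cal O})$ need not be an ndao---the deterministic stable operator is not guaranteed to be \exactcompliant, and indeed $S$ fails to be $\preceq^A_i$-monotonic in general, as shown earlier in this section. The way around this is to observe that exactness of the input operator enters the proof of Proposition~\ref{prop:KK-singletons} only to guarantee that the associated deterministic operator admits a Kripke-Kleene fixpoint; here that existence is instead supplied by the $\leq_i$-monotonicity of $S({\cal O}^{\sf AFT})$ together with completeness of ${\cal L}$ via Knaster--Tarski. The remainder of that proof uses only $\leq_i$-monotonicity of the associated deterministic operator (to argue that ${\cal O}_l(x',y')\subseteq\upclosure{{\cal O}_l(x,y)}$ whenever $x'\in\upclosure{x}$ and $y'\in\downclosure{y}$, and dually for the upper bound), a property $S({\cal O}^{\sf AFT})$ enjoys. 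Hence the argument transfers verbatim to $S({\cal O})$, and one re-runs it rather than citing the statement as a black box.
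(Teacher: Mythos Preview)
Your proposal is correct and takes a somewhat different route from the paper. The paper argues the equality as two $\preceq^A_i$-inequalities: one direction comes for free from Theorem~\ref{proposition:properties:of:well-founded:state} (since $(x^{\sf WF},y^{\sf WF})$ is a stable fixpoint of ${\cal O}$ by Proposition~\ref{prop:stble:coincide:deterministic}, the well-founded state approximates it), and the reverse inequality $\upclosure{x^{\sf WF}}\times\downclosure{y^{\sf WF}}\preceq^A_i {\sf WF}({\cal O})$ is shown by an explicit induction on the iterates, using $\leq_i$-monotonicity of $S({\cal O}^{\sf AFT})$; equality then follows by antisymmetry of $\preceq^A_i$ on $\wp_\uparrow({\cal L})\times\wp_\downarrow({\cal L})$. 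You instead establish the equality of the two iteration sequences directly, by observing that $S({\cal O})$ is again singleton-valued and encodes $S({\cal O}^{\sf AFT})$ (Proposition~\ref{prop:stble:coincide:deterministic}), and then re-running the argument of Proposition~\ref{prop:KK-singletons} for $S({\cal O})$. Your identification of the obstacle is exactly right: $S({\cal O})$ need not be \exactcompliant, so Proposition~\ref{prop:KK-singletons} cannot be cited as a black box; but the only place its proof uses the ndao hypothesis is to guarantee a Kripke--Kleene fixpoint of the underlying deterministic operator, and here that is supplied instead by the known $\leq_i$-monotonicity of $S({\cal O}^{\sf AFT})$. Your route is a bit more modular and avoids invoking Theorem~\ref{proposition:properties:of:well-founded:state} (and its downward-closedness hypothesis, which is automatic here only because ${\cal L}$ is finite); the paper's route, on the other hand, reuses already-proved approximation properties of the well-founded state and so makes the connection to those results explicit.
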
 

\begin{proof}
Since the well-founded fixpoint of ${\cal O}^{\sf AFT}$ is a fixpoint of $S({\cal O})$, by Proposition \ref{prop:stble:coincide:deterministic} $(x^{\sf WF},y^{\sf WF})$ is 
a stable fixpoint of $S({\cal O})$. By the second item of Theorem~\ref{proposition:properties:of:well-founded:state}, ${\sf WF}({\cal O})\preceq^A_i (x^{\sf WF},y^{\sf WF})$. 
We now show that $\upclosure{x^{\sf WF}}\times\downclosure{y^{\sf WF}}\preceq^A_i {\sf WF}({\cal O})$ by induction on the number of iterations for reaching a fixpoint. 
For the base case notice that, again by Proposition~\ref{prop:stble:coincide:deterministic}, 
$\upclosure{\{S({\cal O}_l^{\sf AFT})(.,\top)\}}\times \downclosure{ \{S({\cal O}_u^{\sf AFT})(\bot,.)\}}= S({\cal O})'(\bot,\top)$.
For the inductive case, suppose that for some $i\in\mathbb{N}$, $(x_i,y_i)$ corresponds to the result of applying $i$ times $S({\cal O})$ to $(\bot,\top)$ and suppose that 
$\upclosure{\{x_i\}}\times \downclosure{\{y_i\}}\preceq^A_i  S({\cal O})'^i(\bot,\top)$. This means that there are some $x',y'\in {\cal L}$ where $x'$ occurs in the first component 
of $S({\cal O})'^i(\bot,\top)$ and $y'$ occurs in the second component of $S({\cal O})'^i(\bot,\top)$ s.t.\ $x_i \leq x'$ and $y'\leq y_i$, i.e.\ $(x_i,y_i)\leq_i (x',y')$. 
Since $S({\cal O}^{\sf AFT})$ is $\leq_i$-monotonic (see~\cite[Proposition 20]{denecker2000approximations}), $S({\cal O}^{\sf AFT})(x_i,y_i)\leq_i S({\cal O}^{\sf AFT})(x',y')$. 
By definition, 
\begin{align*}
S({\cal O})'^{i+1}(\bot,\top)&=& \hspace{-15mm} \bigcup_{(x,y)\in  S({\cal O})'^i(\bot,\top)}& \upclosure{C({\cal O}_l)(y)}\times \downclosure{ C({\cal O}_u)(x)} \\
&=& \hspace{-15mm} \bigcup_{(x,y)\in  S({\cal O})'^i(\bot,\top)} &\upclosure{\{C({\cal O}^{\sf AFT}_l)(y)\}}\times \downclosure{\{ C({\cal O}^{\sf AFT}_u)(x)\}}
\end{align*}
and thus $(S({\cal O}^{\sf AFT}))^{i+1}(x',y') \preceq^A_i S({\cal O})'^{i+1}(\bot,\top)$. Hence, with a slight abuse of the notations, we have shown that 
$S({\cal O}^{\sf AFT})(x_i,y_i)\preceq^A_i S({\cal O})'^{i+1}(\bot,\top)$. 
\end{proof}

An interesting property of the well-founded state of ${\cal IC}_{\cal P}$ is that for positive logic programs, the well-founded state coincides with the \emph{minimal\/} 
models of a the logic program.

\begin{proposition}
\label{prop:positive:programs:state}
If ${\cal P}$ is a positive dlp, then ${\sf WF}({\cal IC}_{\cal P})=\upclosure{\min_\subseteq \model({\cal P})}\times \downclosure{\min_\subseteq \model({\cal P})}$.
\end{proposition}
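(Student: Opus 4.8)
The plan is to compute $S({\cal IC}_{\cal P})$ explicitly using positivity, observe that its state version is constant, and then identify the resulting convex set with the minimal models. First I would exploit that for a positive program the GL-reduct is trivial: $\frac{\cal P}{y}={\cal P}$ for every $y$, since there are no negative literals to replace. Hence Proposition~\ref{lemma:stable:ic:is:reduct} gives $C({\cal IC}^l_{\cal P})(y)=\min_\subseteq(\model_2(\frac{\cal P}{y}))=\min_\subseteq(\model_2({\cal P}))$, independently of $y$. Because ${\cal IC}_{\cal P}$ is symmetric, one checks directly from Definition~\ref{def:stable:op} that $C({\cal O}_u)(x)=C({\cal O}_l)(x)$ for any symmetric ${\cal O}$, so also $C({\cal IC}^u_{\cal P})(x)=\min_\subseteq(\model_2({\cal P}))$ for every $x$. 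Writing $M:=\min_\subseteq(\model_2({\cal P}))$ (which is nonempty, as ${\cal A}_{\cal P}\in\model_2({\cal P})$ on the finite lattice $\wp({\cal A}_{\cal P})$), this yields $S({\cal IC}_{\cal P})(x,y)=M\times M$ for all $(x,y)$.

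Next I would feed this into the state construction. By Remark~\ref{remark:ndso:from:ndao} the state version satisfies $S({\cal IC}_{\cal P})'({\bf X})=\bigcup_{(x,y)\in{\bf X}}\upclosure{C({\cal IC}^l_{\cal P})(y)}\times\bigcup_{(x,y)\in{\bf X}}\downclosure{C({\cal IC}^u_{\cal P})(x)}$, which by the previous step equals $\upclosure{M}\times\downclosure{M}$ for every nonempty ${\bf X}$. Thus one application of $S({\cal IC}_{\cal P})'$ to the least state $(\bot,\top)$ already yields $\upclosure{M}\times\downclosure{M}$, and a second application reproduces it (using $M\neq\emptyset$); so $\upclosure{M}\times\downclosure{M}$ is the unique $\preceq^A_i$-minimal fixpoint produced by the iteration of Theorem~\ref{theorem:ndso:fixpoint}. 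Since ${\sf WF}({\cal IC}_{\cal P})={\sf KK}(S({\cal IC}_{\cal P}))$, this establishes ${\sf WF}({\cal IC}_{\cal P})=\upclosure{M}\times\downclosure{M}$.

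The remaining, and main, step is to identify $M=\min_\subseteq(\model_2({\cal P}))$ with $\min_\subseteq\model({\cal P})$, i.e.\ to show that for positive programs the $\leq_t$-minimal three-valued models are exactly the (total interpretations arising from the) $\subseteq$-minimal two-valued models. The key tool is that negation-free formulas are $\leq_t$-monotone in their interpretation: on atoms, $(x,y)(p)$ corresponds to the pair of bits $(\mathbb{1}[p\in x],\mathbb{1}[p\in y])$ and $\leq_t$ is exactly their product order, while $lub_{\leq_t}$ and $glb_{\leq_t}$ preserve monotonicity. Using this I would argue that if $(x,y)\in\model({\cal P})$ with $x\subsetneq y$, then $(x,x)$ is again a model: for any rule $\bigvee\!\Delta\leftarrow\psi$ with $(x,x)(\psi)={\sf T}$, monotonicity forces $(x,y)(\psi)={\sf T}$, so the model $(x,y)$ makes some $\delta\in\Delta\cap x$ true, whence $(x,x)(\bigvee\!\Delta)={\sf T}$; since $(x,x)<_t(x,y)$ this contradicts $\leq_t$-minimality. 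Hence minimal three-valued models are two-valued, and for two-valued models $\leq_t$ restricts to $\subseteq$ on the first component, giving $\{x\mid (x,x)\in\min_{\leq_t}\model({\cal P})\}=\min_\subseteq(\model_2({\cal P}))=M$. Substituting this equality into the convex set computed above yields the claim.

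I expect this last collapse lemma to be the main obstacle: it is where positivity is genuinely used (through $\leq_t$-monotonicity of rule bodies), where the potentially nonstandard statement of the disjunction clause in the evaluation of formulas must be read as ordinary disjunction (as forced by Example~\ref{examp:IC-deterministic-case}), and where one must carefully reconcile the $\subseteq$-minimization over three-valued models with the $\subseteq$-minimization over two-valued models. The operator-theoretic Steps above are comparatively mechanical once Proposition~\ref{lemma:stable:ic:is:reduct} and the constancy of the complete stable operators are in hand.
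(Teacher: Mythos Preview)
Your proposal is correct and follows the same route as the paper: observe that positivity makes the reduct trivial, invoke Proposition~\ref{lemma:stable:ic:is:reduct} to see that $C({\cal IC}^l_{\cal P})(y)$ is constant in $y$ and equal to the minimal models, and conclude that the state iteration stabilises after one step.

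The one place you go beyond the paper is your final ``collapse lemma'' identifying $\min_\subseteq(\model_2({\cal P}))$ with $\min_\subseteq\model({\cal P})$. The paper's proof simply writes $\min_\subseteq(\model({\cal P}))$ in place of the $\min_\subseteq(\model_2(\frac{\cal P}{y}))$ that Proposition~\ref{lemma:stable:ic:is:reduct} actually delivers, without commenting on the passage between two-valued and three-valued models; in context it appears the paper is tacitly reading $\model({\cal P})$ as the two-valued models here (this is also what is needed for $\upclosure{\cdot}$ and $\downclosure{\cdot}$ to be applied to a set of subsets of ${\cal A}_{\cal P}$ rather than to a set of pairs). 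Your extra argument makes this identification explicit and is a reasonable way to close what is otherwise a notational gap.
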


\begin{proof}
Notice first that, since ${\cal P}$ is positive, $\frac{{\cal P}}{y}={\cal P}$ for any $y\subseteq {\cal A}_{\cal P}$ (no rule in ${\cal P}$ contains negative literals in the body, and so no rule is deleted 
or transformed in the construction of $\frac{{\cal P}}{y}$). By Proposition \ref{lemma:stable:ic:is:reduct}, this means that, for any $y\subseteq {\cal A}_{\cal P}$, 
$C({\cal IC}^l_{\cal P})(y)=\min_\subseteq(\model({\cal P}))$. Thus, $S({\cal IC}_{\cal P})'(\emptyset,{\cal A}_{\cal P})=\upclosure{\min_\subseteq \model({\cal P})}\times \downclosure{\min_\subseteq \model({\cal P})}$ 
and a fixpoint is reached at the first iteration of the construction of the well-founded state.
\end{proof}

\subsection{The Relationship with Well-Founded Semantics with Disjunction}
\label{sec:well-founded:alcantara}

As another indication for the usefulness of our constructions, we show that the well-founded state approximates the existing well-founded semantics for disjunctive logic programs, 
which is  the well-founded semantics for disjunction ${\sf WFS}_d$ by Alc{\^a}ntara, Dam{\'a}sio and Pereira~\cite{alcantara2005well}.

Alc{\^a}ntara, Dam{\'a}sio and Pereira~\cite{alcantara2005well} define the well-founded semantics for disjunction ${\sf WFS}_d$ which bears similarities to our well-founded semantics. 
We present this semantics here, adapting the notation and technicalities to our setting.

The basic idea behind the well-founded semantics with disjunction as defined by Alc{\^a}ntara, Dam{\'a}sio and Pereira~\cite{alcantara2005well}
is the following: a sequence of a set of lower bounds and upper bounds approximating the stable models of a disjunctive logic program is constructed iteratively. 
Given a set of lower bounds and a set of upper bounds, 
new (more precise) lower and upper bounds can be obtained by applying the two-valued immediate consequence operator $IC$ to the reducts of the program obtained on the basis of 
the upper bounds and lower bounds (respectively). The lower bounds are thus used to obtain new upper bounds and vice versa.
This iteration leads to more and more precise lower and upper bounds, until a fixpoint is reached. This well-founded semantics is guaranteed to exist, be unique, and coincide 
with the well-founded model for normal logic programs~\cite{alcantara2005well}. 

We now develop this notion in more technical details. The operator $\Gamma_{\cal P}(X)$ is defined as follows:\footnote{Recall that the \emph{two-valued models} of a positive 
program ${\cal P}$ are the sets 
$x\subseteq {\cal A}_{\cal P}$ s.t.\ for every $\bigvee\Delta\leftarrow \phi \in {\cal P}$, $(x,x)(\phi)={\sf T}$ implies $x\cap \Delta\neq\emptyset$ (Footnote~\ref{footnote:2-val mod}).}
\[\Gamma_{\cal P}(X)=\bigcup_{x\in X}\min_\subseteq(\model_2(\frac{{\cal P}}{x}))\]
Instead of just applying the $\Gamma_{\cal P}$-operator to the lower bound and closing it downwards to obtain a new upper bound, a form of closed world reasoning is performed 
in~\cite{alcantara2005well} by applying $\Gamma_{\cal P}$ to the interpretations in the lower bound not containing any atoms not occurring in any upper bound. 
Formally, this is done by defining the set ${\frak F}(Y)$ that contains all atoms not occurring in any upper bound $y\in Y$, and the set ${\frak T}_Y(X)$ that contains 
all the sets in $X$ without any atoms in ${\frak F}(Y)$.

\begin{definition}%
Given $X,Y\subseteq {\cal A}_{\cal P}$, we define:
\begin{itemize}
\item ${\frak F}(Y)=\{ \alpha\in {\cal A}_{\cal P}\mid \alpha\not\in \bigcup Y\}$.
\item ${\frak T}_Y(X)=\{x\in X\mid x\cap {\frak F}(Y)=\emptyset\}$.
\end{itemize}
\end{definition}

We are now ready to define the $\Phi$-operator which is the basis of the well-founded semantics for disjunction~\cite{alcantara2005well}, obtained by taking as set of new lower bounds 
all models of reducts of some upper bound, and as new upper bounds all models of reducts of some lower bound in ${\frak T}_Y(X)$.

\begin{definition}
Let some disjunctive logic program ${\cal P}$ and some $X,Y\subseteq \wp({\cal A}_{\cal P})$ be given. Then:
\[ \Phi_{\cal P}(X,Y)=\Gamma_{\cal P}(Y)\uparrow\times \Gamma_{\cal P}({\frak T}_Y(X))\downarrow.\]
The \emph{well-founded semantics for disjunction of a program ${\cal P}$}, denoted ${\sf WFS}_d({\cal P})$ is defined as the least fixpoint of $\Phi_{\cal P}$, obtained by applying 
$\Phi_{\cal P}$ iteratively to $\{\emptyset\}\times \{{\cal A}_{\cal P}\}$.
\end{definition}

We can show that our well-founded semantics is an approximation of the ${\sf WFS}_d$. It really is an approximation, since we do not apply closed world reasoning in constructing the lower 
bound (but one could). 

\begin{theorem}
\label{prop:well:founded:state:represents:alcantara:almost}
For any disjunctive logic program ${\cal P}$, ${\sf WF}({\cal IC}_{\cal P})\preceq^A_i {\sf WFS}_d({\cal P})$.
\end{theorem}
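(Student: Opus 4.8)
The plan is to run the Kleene iteration that builds ${\sf WF}({\cal IC}_{\cal P}) = {\sf KK}(S({\cal IC}_{\cal P}))$ in lock-step with the iteration that builds ${\sf WFS}_d({\cal P})$, and to show the former stays $\preceq^A_i$-below the latter at every stage. Write $W_n = (S({\cal IC}_{\cal P})')^n(\bot,\top)$ and $V_n = \Phi_{\cal P}^n(\{\emptyset\},\{{\cal A}_{\cal P}\})$. Since ${\cal P}$ is a finite set of rules, the underlying lattice $\langle\wp({\cal A}_{\cal P}),\subseteq\rangle$ is finite, so both sequences stabilise after finitely many steps at ${\sf WF}({\cal IC}_{\cal P})$ and ${\sf WFS}_d({\cal P})$ respectively. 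It therefore suffices to prove $W_n \preceq^A_i V_n$ for every $n\in\mathbb{N}$ and then instantiate at a common stage $N$ beyond both stabilisation points; this sidesteps any limit-ordinal bookkeeping.

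The crucial preliminary step is to bring $S({\cal IC}_{\cal P})'$ into a form directly comparable with $\Phi_{\cal P}$. Reading a pair of sets $(X,Y)$ as the set of pairs $X\times Y$, unfolding the state version of the stable operator (Remark~\ref{remark:ndso:from:ndao}), using $C({\cal IC}_{\cal P}^l)(y)=\min_\subseteq(\model_2(\frac{{\cal P}}{y}))$ (Proposition~\ref{lemma:stable:ic:is:reduct}) together with the symmetry of ${\cal IC}_{\cal P}$ (so that $C({\cal IC}_{\cal P}^u)(x)=C({\cal IC}_{\cal P}^l)(x)=\min_\subseteq(\model_2(\frac{{\cal P}}{x}))$), and that closure commutes with union, I obtain $S({\cal IC}_{\cal P})'(X,Y)=\upclosure{\Gamma_{\cal P}(Y)}\times\downclosure{\Gamma_{\cal P}(X)}$. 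Comparing this with $\Phi_{\cal P}(X,Y)=\upclosure{\Gamma_{\cal P}(Y)}\times\downclosure{\Gamma_{\cal P}({\frak T}_Y(X))}$ shows the two operators agree on the lower-bound component and differ on the upper-bound component only through the extra ``closed-world'' restriction ${\frak T}_Y$. Because ${\frak T}_Y(X)\subseteq X$ by definition, we get $\Gamma_{\cal P}({\frak T}_Y(X))\subseteq\Gamma_{\cal P}(X)$, hence $\downclosure{\Gamma_{\cal P}({\frak T}_Y(X))}\subseteq\downclosure{\Gamma_{\cal P}(X)}$, which immediately yields $\downclosure{\Gamma_{\cal P}({\frak T}_Y(X))}\preceq^H_L\downclosure{\Gamma_{\cal P}(X)}$. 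With reflexivity of $\preceq^S_L$ on the identical lower components, this establishes the pointwise inequality $S({\cal IC}_{\cal P})'(X,Y)\preceq^A_i\Phi_{\cal P}(X,Y)$ for every state $(X,Y)$.

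The induction then closes cleanly. For the base case, both $W_0$ and $V_0$ equal, up to the closures that are irrelevant for $\preceq^A_i$ (Lemma~\ref{lem:S-H-equivalence}), the pair $(\wp({\cal A}_{\cal P}),\wp({\cal A}_{\cal P}))$, so $W_0\preceq^A_i V_0$. For the step, assume $W_n\preceq^A_i V_n$. By $\preceq^A_i$-monotonicity of $S({\cal IC}_{\cal P})'$ (Proposition~\ref{lemma:O':is:alcantara:monotonic}) we get $W_{n+1}=S({\cal IC}_{\cal P})'(W_n)\preceq^A_i S({\cal IC}_{\cal P})'(V_n)$; by the pointwise inequality above, $S({\cal IC}_{\cal P})'(V_n)\preceq^A_i\Phi_{\cal P}(V_n)=V_{n+1}$; and transitivity of $\preceq^A_i$ (inherited from the preorders $\preceq^S_L,\preceq^H_L$, Remark~\ref{rem:properties:of:smyth}) gives $W_{n+1}\preceq^A_i V_{n+1}$. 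Instantiating at a stage $N$ where both iterations have reached their fixpoints yields ${\sf WF}({\cal IC}_{\cal P})=W_N\preceq^A_i V_N={\sf WFS}_d({\cal P})$.

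The main obstacle I anticipate is the bookkeeping in the preliminary computation rather than any deep argument: one must check that $C({\cal IC}_{\cal P}^u)(x)=\min_\subseteq(\model_2(\frac{{\cal P}}{x}))$ genuinely follows from Proposition~\ref{lemma:stable:ic:is:reduct} via symmetry, that all the sets involved are non-empty (e.g.\ ${\cal A}_{\cal P}$ always being a two-valued model of every positive reduct, so that $\bigcup_{y\in Y}\upclosure{C({\cal IC}_{\cal P}^l)(y)}$ really ranges over all of $Y$ and the closures do not collapse), and that the ``pair of sets as set of pairs'' reading is applied consistently when unfolding $S({\cal IC}_{\cal P})'$. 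Everything else—the monotonicity of $S({\cal IC}_{\cal P})'$, termination on the finite lattice, and the one-line inclusion ${\frak T}_Y(X)\subseteq X$ that drives the whole comparison—is either already available in the paper or immediate.
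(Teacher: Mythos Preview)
Your proposal is correct and follows essentially the same approach as the paper: identify $\Gamma_{\cal P}$ with the complete operator via Proposition~\ref{lemma:stable:ic:is:reduct} (plus symmetry of ${\cal IC}_{\cal P}$ for the upper component), and then use the inclusion ${\frak T}_Y(X)\subseteq X$ to obtain the $\preceq^H_L$-comparison on the upper bound. The paper records exactly these two observations and then says the theorem ``immediately follows''; you have simply made the implicit stage-by-stage induction explicit, using $\preceq^A_i$-monotonicity of the state operator and transitivity to carry the comparison through to the fixpoint.
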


\begin{proof}
By Proposition  \ref{lemma:stable:ic:is:reduct}, $\Gamma_{\cal P}(X)=\bigcup_{x\in X}C({\cal IC}^l_{\cal P})(x)$ for any $X\subseteq 2^{{\cal A}_{\cal P}}$. Furthermore, as  for any 
$X,Y\subseteq {\cal A}_{\cal P}$, ${\frak T}_Y(X)\subseteq X$, it holds that $\bigcup_{x\in{\frak T}_Y(X)} S({\cal IC}_{\cal P})(x)\preceq^H_L \bigcup_{x\in X} S({\cal IC}_{\cal P})(x)$. 
The proposition now immediately follows from these two observations.
\end{proof}

The following example (taken from~\cite{alcantara2005well}) shows that ${\sf WFS}_d({\cal P})$ can give rise to a strictly more precise approximation than that of ${\sf WF}({\cal IC}_{\cal P})$:

\begin{example}
Consider the logic program ${\cal P}=\{p\lor q\leftarrow; q\leftarrow \lnot r\}$. We first calculate ${\sf WF}({\cal IC}_{\cal P})$ as follows:
\begin{eqnarray*}
S({\cal IC}_{\cal P})(\upclosure{\{\emptyset\}},\downclosure{\{p,q,r\}})&=&
\upclosure{\min_\subseteq\left(\bigcup_{x\subseteq \{p,q,r\}}\model_2(\frac{{\cal P}}{x})\right)}\times \downclosure{\min_\subseteq\left(\bigcup_{x\subseteq \{p,q,r\}}\model_2(\frac{{\cal P}}{x})\right)}\\ 
&=& \upclosure{\{\{p\},\{q\}\}}\times \downclosure{\{\{p\},\{q\}\}}\\
S^2({\cal IC}_{\cal P})(\upclosure{\{\emptyset\}},\downclosure{\{p,q,r\}})&=&\upclosure{\min_\subseteq\left(\model_2(\frac{{\cal P}}{\{p\}})\cup \model_2(\frac{{\cal P}}{\{q\}})\right)}\times \downclosure{ \min_\subseteq\left(\bigcup_{x\subseteq \{p,q,r\}}\model_2(\frac{{\cal P}}{x})\right)}\\
&=&\upclosure{\{\{q\}\}}\times \downclosure{\{\{p\},\{q\}\}}\\
S^3({\cal IC}_{\cal P})(\upclosure{\{\emptyset\}},\downclosure{\{p,q,r\}})&=&
\upclosure{\min_\subseteq\left(\model_2(\frac{{\cal P}}{\{q\}})\cup \model_2(\frac{{\cal P}}{\{q\}})\right)}\times \downclosure{ \min_\subseteq\left(\bigcup_{x\subseteq \{p,q,r\}}\model_2(\frac{{\cal P}}{x})\right)}\\
&=&\upclosure{\{\{q\}\}}\times \downclosure{\{\{p\},\{q\}\}}\\
\end{eqnarray*}
and so a fixpoint is reached after the second iteration. The well-founded state for this program thus corresponds to the convex set $\{\{q\},\{p\},\{q\}\}$.

It can be observed that $\frac{{\cal P}}{x}=\{p\lor q\leftarrow\}$ for any $\{r\}\subseteq x\subseteq {\cal A}_{\cal P}$. In particular, as there are some $x\in \upclosure{\{p\},\{q\}}$ s.t.\ $r\in x$, 
this explains why $\{p\}$ is part of the upper bound of ${\sf WF}({\cal IC}_{\cal P})$. It is exactly this kind of behaviour that the filtering out lower bounds that have elements not occuring in any upper 
bound in ${\frak T}_Y(X)$ tries to avoid. Indeed, ${\sf WFS}_d({\cal P})$ is built up as follows:
\begin{eqnarray*}
\Phi_{\cal P}(\{\emptyset\},\{\{p,q,r\}\})&=&S({\cal IC}_{\cal P})(\upclosure{\{\emptyset\}},\downclosure{\{p,q,r\}})\\
&=&\upclosure{\{\{p\},\{q\}\}}\times \downclosure{\{\{p\},\{q\}\}}\\
\Phi^2_{\cal P}(\{\emptyset\},\{\{p,q,r\}\})&=&
\upclosure{\min_\subseteq\left(\model_2(\frac{{\cal P}}{\{p\}})\cup \model_2(\frac{{\cal P}}{\{q\}})\right)}\times
 \downclosure{ \min_\subseteq\left(\model_2(\frac{{\cal P}}{\{p\}})\cup \model_2(\frac{{\cal P}}{\{q\}})\cup  \model_2(\frac{{\cal P}}{\{q,r\}})\right)}\\
 &=&\upclosure{\{\{p\}\}}\times \downclosure{\{\{p\}\}}\\
 \Phi^3_{\cal P}(\{\emptyset\},\{\{p,q,r\}\})&=&
\upclosure{\min_\subseteq\left(\model_2(\frac{{\cal P}}{\{p\}})\cup \model_2(\frac{{\cal P}}{\{q\}})\right)}\times
 \downclosure{ \min_\subseteq\left(\model_2(\frac{{\cal P}}{\{p\}})\cup \model_2(\frac{{\cal P}}{\{q\}})\cup  \model_2(\frac{{\cal P}}{\{q,r\}})\right)}\\
 &=&\upclosure{\{\{p\}\}}\times \downclosure{\{\{p\}\}}
\end{eqnarray*}
The upper bound of $\Phi^2_{\cal P}(\{\emptyset\},\{\{p,q,r\}\})$ can be seen to hold in view of  ${\frak T}_{\downclosure{\{\{p\},\{q\}\}}}( \upclosure{\{\{p\},\{q\}\}})=\{\{p\},\{q\},\{p,q\}\}$. After two iterations, a fixed point is reached, and thus we see that ${\sf WFS}_d({\cal P})$ corresponds to the convex set $\{\{p\}\}$, which is a more precise approximation than  ${\sf WF}({\cal IC}_{\cal P})$. Formally:
\[ {\sf WF}({\cal IC}_{\cal P})=\upclosure{\{\{q\}\}}\times \downclosure{\{\{p\},\{q\}\}}\prec_i^A  {\sf WFS}_d({\cal P}) =\upclosure{\{\{p\}\}}\times \downclosure{\{\{p\}\}}.\]
\end{example}
We leave the investigation of techniques to make the well-founded fixpoint more precise, such as the technique using ${\frak T}_Y(X)$, for further work.}

\paragraph{Summary}
Let's summarize the results in Section~\ref{sec:stable:semantics}. We first defined and studied stable operators (Section \ref{subsec:stable:interpretation:sem}), establishing which properties carry over 
(under certain conditions) from the deterministic to the non-deterministic setting, and showing the usefulness of stable fixpoints in disjunctive logic programming.  Then (Section~\ref{subsec:well-foundedstate}), 
we introduced and studied the well-founded state, proving its existence, uniqueness, and showing that it is more precise than the Kripke-Kleene state and that it approximates any fixpoint of the 
approximated operator. Finally (Section~\ref{sec:well-founded:alcantara}), we have shown that the well-founded state is useful for knowledge representation, as it is closely related to the well-founded semantics for disjunction~\cite{alcantara2005well}.

\section{Related Work}
\label{sec:related:work}

The starting point of this work is the approximation fixpoint theory (for deterministic operator), as introduced by  Denecker, Marek and Truszczy{\'n}ski~\cite{denecker2000approximations},
followed by a series of papers \cite{antic2013hex, bogaerts2019weighted, bogaerts2015grounded, charalambidis2018approximation, denecker2012approximation, denecker2002ultimate,
denecker2003uniform,  HA2020argumentative, strass2015analyzing}. As indicated previously (see, e.g., Remark~\ref{remark:deterministic}), this work generalizes AFT in 
the sense that all the operators and fixpoints defined in this paper coincide with the respective counterparts for deterministic operators.

This paper (extending and improving our paper in~\cite{DBLP:conf/kr/HeyninckA21}) is also inspired by the work of Pelov and Truszczy\'nski~\cite{pelov2004semantics},  
 which extends approximation fixpoint theory to dealing with non-deterministic operators. Their work provides a representation theorem, in terms of non-deterministic AFT, 
 of specific two-valued semantics for disjunctive logic programs (namely, the two-valued stable semantics and two-valued weakly supported and supported models). 
 We have compared our work to that of Pelov and Truszczy\'nski~\cite{pelov2004semantics} in Section~\ref{sec:intro}.

To the best of our knowledge, the only setting with a similar unifying potential that has been applied to non-deterministic or disjunctive reasoning is
\emph{equilibrium logic\/}~\cite{pearce2006equilibrium}. The similarities between equilibrium logic and AFT have been noted in~\cite{denecker2012approximation}, 
where it was indicated that equilibrium semantics are defined for a larger class of logic programs than those that are represented by AFT, a limitation of AFT which
we have overcome in this paper. Furthermore, defining three-valued stable and well-founded semantics is not possible in standard equilibrium logic, but requires an extension
known as \emph{partial equilibrium logic\/}~\cite{cabalar2006analysing,cabalar2007partial}, which can be seen as a six-valued semantics.  In contrast, the well-founded semantics 
{\em is defined\/} in AFT using the same operator used to define the stable semantics. That being said, in future work we plan to compare in more detail the well-founded semantics 
for DLP obtained on the basis of partial equilibrium logic and the well-founded semantics obtained in this work.

\section{Summary, Conclusion, and Future Work}
\label{sec:conc}

This paper contains a full  approximation fixpoint theory to non-deterministic operators. We introduced deterministic operators,
their non-deterministic approximation operators, and the various fixpoint semantics, namely, the Kripke-Kleene interpretation and state semantics, the stable interpretation semantics 
and the well-founded state semantics. The properties of these semantics and their representation of disjunctive logic programming semantics is summarized in Table~\ref{tab:summary}. 
The relation between these fixpoint semantics is summarized in Figure~\ref{fig:fixpoint:relations}.

\begin{table}[h]
{\centering
\renewcommand{\arraystretch}{1.3}
\setlength{\tabcolsep}{4.5pt}
\begin{tabular}{llcccll}
Name & Definition & Exist & Unique & $\leq_t$-min.& Result & DLP-representation \\ \hline
KK interp.\  & $\lfp({\cal O})$ & $\times$ & $\times$ & $\times$ &   & weakly supported (Th.\ref{theo:correspondence:supported})  \\
KK state & $\lfp({\cal O}')$ & $\checkmark$ & $\checkmark$ & $\times$& Th.\ref{theorem:ndso:fixpoint} & \\
Stable interp.\  & ${\rm fp}(S({\cal O}))$ &  $\times$ & $\times$ & $\checkmark$& Prop.\ref{prop:stable:is:minimal:fp}& stable models (Th.\ref{prop:stable:fixpoints:represent:stable:models}) \\
WF state & $\lfp(S({\cal O})')$ & $\checkmark$ & $\checkmark$ & $\times$ & Th.\ref{proposition:properties:of:well-founded:state}& WF sem.\ with disjunct.\ (Th.\ref{prop:well:founded:state:represents:alcantara:almost})  \\ 
& &&&&& Min.\ mod.\ of positive dlps (Prop.\ref{prop:positive:programs:state})
\end{tabular}}
\caption{Approximating operators and their properties.}
\label{tab:summary}
\end{table}

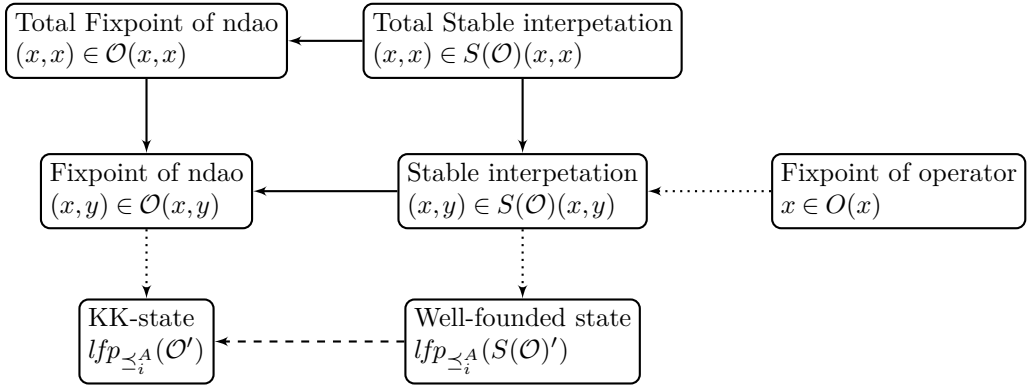
\begin{figure}[h]\centering
 \begin{tikzpicture}[thick,scale=1, every vertex/.style={scale=1}]
\tikzset{vertex/.style = {shape=rectangle,draw,minimum size=1.5em,rounded corners=.8ex,scale=1}}
\tikzset{edge/.style = {->,> = latex'}}

\node[vertex] (KKS)  at (0,0) [draw, align=left] {KK-state\\ $l\!f\!p_{\preceq^A_i}({\cal O}')$};
\node[vertex] (WFS)  at (5,0) [draw, align=left] {Well-founded state\\ $l\!f\!p_{\preceq^A_i}(S({\cal O})')$};
\node[vertex] (KKI)  at (0,2) [draw, align=left] {Fixpoint of ndao \\$(x,y)\in {\cal O}(x,y)$};
\node[vertex] (2KKI)  at (0,4) [draw, align=left] {Total Fixpoint  of ndao \\$(x,x)\in{\cal O}(x,x)$};
\node[vertex] (SI)  at (5,2) [draw, align=left] {Stable interpetation\ \\$(x,y)\in S({\cal O})(x,y)$};
\node[vertex] (2SI)  at (5,4) [draw, align=left] {Total Stable interpetation\ \\$(x,x)\in S({\cal O})(x,x)$};
\node[vertex] (O)  at (10,2) [draw, align=left] {Fixpoint of operator \\ $x\in O(x)$};

\draw[edge, dotted] (KKI) to (KKS);
\draw[edge, dashed] (WFS) to (KKS);
\draw[edge, dotted] (SI) to (WFS);
\draw[edge] (SI) to (KKI);
\draw[edge] (2SI) to (SI);
\draw[edge, dotted] (O) to (SI);
\draw[edge] (2KKI) to (KKI);
\draw[edge] (2SI) to (2KKI);

\end{tikzpicture}
\caption{Relations between various fixpoints introduced in this paper. The arrows have the following meaning: full arrows mean that every instance of the first block 
(i.e., at the outgoing end of the arrow) is an instance of the second block (at the ingoing end of the arrow). Dotted arrows mean that all instances of 
the first block are approximated by the second block. Dashed arrows mean that every instance of the first block is more precise than every instance of the second block. 
The relations are shown in~Proposition \ref{prop:KK-states-properties} and Theorem~\ref{proposition:properties:of:well-founded:state}.}
\label{fig:fixpoint:relations}
\end{figure}

This work also
allows to generalize the results in~\cite{antic2013hex,pelov2004semantics}, which provide further approximation operators for disjunctive logic programs with aggregates or external atoms, to further semantics of disjunctive logic programs, thus answering an open question in 
these works. 

The advantage of studying non-deterministic operators
is thus at least twofold:
\begin{enumerate}
   \item allowing to define a family of semantics 
           for non-monotonic reasoning with disjunctive information,
  \item clarifying similarities and differences between semantics stemming from the use of different operators.
\end{enumerate}

The introduction of disjunctive information in AFT points to a wealth of further research, such as defining three-valued and well-founded semantics for various disjunctive 
nonmonotonic formalisms and studying on the basis of which operators various well-founded semantics for DLP  can be  represented in our framework. For example, our framework 
can potentially be used for defining three-valued and well-founded semantics for propositional theories~\cite{truszczynski2010reducts}, disjunctive logic programs with 
recursive aggregates~\cite{faber2004recursive}, logic programs with aggregates in the head~\cite{gelfond2014vicious}, logic programs with forks~\cite{aguado2019forgetting}, 
and disjunctive default logics~\cite{Gelfond_et-al_PKRR_1991,bonevac2018defaulting}.

Non-deterministic approximation fixpoint theory has already been applied in order to obtain a generalization of abstract dialectical frameworks (ADFs) to \emph{conditional abstract dialectical frameworks}. In a nutshell, abstract dialectical frameworks consist of sets of arguments or atoms, in which every atom $a$ is assigned a Boolean acceptance condition $C_a$, which 
codifies under which conditions an atom can be accepted. Heyninck and co-authors~\cite{DBLP:conf/aaai/HeyninckTKRS22} generalized ADFs as to allow for acceptance conditions to be 
assigned to any, i.e.\ possibly non-atomic, formulas. This necessitated the generalization of the so-called $\Gamma$-operator to a non-deterministic operator, which was done in the 
framework of non-deterministic approximation fixpoint theory as presented here. 

Our
framework lays the ground for the generalization 
of various interesting concepts introduced (or adapted) to AFT, such as ultimate approximations \cite{denecker2002ultimate},  grounded fixpoints 
\cite{bogaerts2015grounded}, strong equivalence \cite{truszczynski2006strong}, stratification \cite{vennekens2006splitting} and argumentative representations 
\cite{HA2020argumentative} to a non-deterministic setting. Extensions to DLP with negations in the rules' heads and corresponding 4-valued semantics~\cite{SI95} 
can also be considered.

Another issue that is worth some consideration is a study of the complexity of computing different types of fixpoints of non-deterministic 
approximation operators, in the style of Strass and Wallner \cite{strass2015analyzing}.

\section*{Acknowledgements}
We thank the reviewers of \cite{DBLP:conf/kr/HeyninckA21} for their helpful remarks. 
The first author is supported by the German National Science Foundation (DFG-project KE-1413/11-1). He worked on this paper while affiliated with the Vrije Universiteit Brussel, the 
Technische Universität Dortmund, the University of Cape Town and CAIR South-Africa, in addition to his current affiliation.
The second author is supported by the Israel Science Foundation (Grant~550/19).
The first and last author are supported by the Flemish Government in the ``Onderzoeksprogramma Artificiële
Intelligentie (AI) Vlaanderen'' programme, and by the FWO Flanders project G0B2221N.

\bibliographystyle{plain}  %
\bibliography{nondetAFT}

\end{document}